\documentclass[english, 10pt]{article}
\usepackage[T1]{fontenc}
\usepackage[latin9]{inputenc}
\usepackage{lmodern}
\usepackage{geometry}
\geometry{verbose,tmargin=1in,bmargin=1in,lmargin=1in,rmargin=1in}
\usepackage{amsthm}
\usepackage{amsmath}
\usepackage{amssymb}
\usepackage{bm}
\usepackage{esint}
\usepackage{natbib}
\usepackage{authblk}
\usepackage{color}
\usepackage[dvipsnames]{xcolor}
\usepackage{caption}
\usepackage{subcaption}
\usepackage{wrapfig}
\usepackage{graphicx}
\usepackage{array}
\usepackage{multirow}
\usepackage{setspace}
\usepackage{algorithm}
\usepackage{algorithmicx}
\usepackage{algpseudocode}
\usepackage[pdftex, colorlinks=true, citecolor=MidnightBlue, linkcolor=MidnightBlue, urlcolor=MidnightBlue]{hyperref}
\usepackage{setspace}
\makeatletter
  \theoremstyle{plain}
  \newtheorem{Theorem}{\protect\theoremname}
  \theoremstyle{plain}
  \newtheorem{proposition}{\protect\propositionname}
  \theoremstyle{plain}
  \newtheorem*{prop*}{\protect\propositionname}
  \theoremstyle{plain}
  \newtheorem{lemma}{\protect\lemmaname}
  \theoremstyle{definition}
  
  \theoremstyle{plain}
  \newtheorem{corollary}{\protect\corollaryname}
  \theoremstyle{plain}
  \newtheorem{example}{\protect\examplename}
 \theoremstyle{remark}

\theoremstyle{plain}
\newtheorem{fact}{\protect\factname}
\theoremstyle{plain}
\newtheorem{assumption}{\protect\assumptionname}
\theoremstyle{plain}

\newtheoremstyle{PropositionNum}
        {\topsep}{\topsep}              
        {\itshape}                      
        {}                              
        {\bfseries}                     
        {.}                             
        { }                             
        {\thmname{#1}\thmnote{ \bfseries #3}}
\theoremstyle{PropositionNum}
\newtheorem{propn}{Proposition}

\newtheoremstyle{LemmaNum}
        {\topsep}{\topsep}              
        {\itshape}                      
        {}                              
        {\bfseries}                     
        {.}                             
        { }                             
        {\thmname{#1}\thmnote{ \bfseries #3}}
\theoremstyle{LemmaNum}
\newtheorem{lemman}{Lemma}

\newtheoremstyle{DefinitionNum}
        {\topsep}{\topsep}              
        {\itshape}                      
        {}                              
        {\bfseries}                     
        {.}                             
        { }                             
        {\thmname{#1}\thmnote{ \bfseries #3}}
\theoremstyle{DefinitionNum}

\makeatother

\newcommand{\E}{\mathbf{E}}
\newcommand{\N}{\mathbb{N}}
\newcommand{\Prob}{\mathbf{P}}
\newcommand{\hist}{\mathcal{F}_{n-1}}

\newcommand{\thetabf}{\bm \theta}
\newcommand{\alphabf}{\bm \alpha}
\newcommand{\psibf}{\bm \psi}
\newcommand{\ybf}{\bm Y}
\usepackage{babel}
  \providecommand{\assumptionname}{Assumption}
  \providecommand{\definitionname}{Definition}
  \providecommand{\lemmaname}{Lemma}
  \providecommand{\propositionname}{Proposition}
\providecommand{\corollaryname}{Corollary}
  \providecommand{\examplename}{Example}
\providecommand{\factname}{Fact}
\providecommand{\conditionname}{Condition}
\providecommand{\theoremname}{Theorem}
\begin{document}

\newcommand{\drcomment}[1]{\noindent{\textcolor{red}{\textbf{ DR:} \textsf{#1}}}}

\title{Simple Bayesian Algorithms for Best-Arm Identification}
\author{Daniel Russo}


%


\maketitle
\begin{abstract}
This paper considers the optimal adaptive allocation of measurement effort for identifying the best among a finite set of options or designs.  An experimenter sequentially chooses designs to measure and observes noisy signals of their quality with the goal of confidently identifying the best design  after a small number of measurements. This paper proposes three simple and intuitive Bayesian algorithms for adaptively allocating measurement effort, and formalizes a sense in which these seemingly naive rules are the best possible. One proposal is top-two probability sampling, which computes the two designs with the highest posterior probability of being optimal, and then randomizes to select among these two. One is a variant of top-two sampling which considers not only the probability a design is optimal, but the expected amount by which its quality exceeds that of other designs. The final algorithm is a modified version of Thompson sampling that is tailored for identifying the best design.

We prove that these simple algorithms satisfy a sharp optimality property. In a frequentist setting where the true quality of the designs is fixed, one hopes the posterior definitively identifies the optimal design, in the sense that that the posterior probability assigned to the event that some other design is optimal converges to zero as measurements are collected. We show that under the proposed algorithms this convergence occurs at an exponential rate, and the corresponding exponent is the best possible among all allocation rules. It should be highlighted that the proposed algorithms depend on a single tuning parameter, which determines the probability used when randomizing among the top-two designs. Attaining the optimal rate of posterior convergence requires either that this parameter is set optimally or is tuned adaptively toward the optimal value. The paper goes further, characterizing the exponent attained on any problem instance and for any value of the tunable parameter. This exponent is interpreted as being optimal among a constrained class of allocation rules. Finally, considerable robustness to this parameter is established through numerical experiments and theoretical results. When  this parameter is set to 1/2, the exponent attained is within a factor of 2 of best possible across all problem instances.

\end{abstract}

\section{Introduction}
This paper considers the optimal adaptive allocation of measurement effort in order to identify the best among a finite set of options or designs.  An experimenter sequentially chooses designs to measure and observes independent noisy signals of their quality. The goal is to allocate measurement effort intelligently so that the best design can be identified confidently after a small number of measurements. Just as the multi-armed bandit problem crystallizes the tradeoff between exploration and exploitation in sequential decision-making, this ``pure--exploration'' problem crystallizes the challenge of efficiently gathering information before committing to a final decision. It serves as a fundamental abstraction of issues faced in many practical settings. For example:
\begin{itemize}
	\item \emph{Efficient A/B/C Testing}: An e-commerce platform is considering a change to its website and would like to identify the best performing candidate among many potential new designs. To do this, the platform runs an experiment, displaying different designs to different users who visit the site. How should the platform decide what percentage of traffic to allocate to each  website design?
	\item \emph{Simulation Optimization}: An engineer would like to identify the best performing aircraft design among several proposals. She has access to a realistic simulator through which she can assess the quality of the designs, but each simulation trial is very time consuming and produces only noisy output. How should she allocate simulation effort among the designs?
	\item \emph{Design of Clinical Trials}: A medical research organization would like to find the most effective treatment out of several promising candidates. They run a clinical trail in which they experiment with the treatments. The results of the study may influence practice for many years to come, and so it is worth reaching a definitive conclusion. At the same time, clinical trails are extremely expensive, and careful experimentation can help to mitigate the associated costs.\footnote{Interpreted the context of clinical trials, this paper's results are stated in terms of the number of patients required to reach a confided conclusion of the best treatment. However, we will see that optimal rules from this perspective also allocate fewer patients to very poor treatments, potentially leading to more ethical trials \citep{berry2004bayesian}.} Multi-armed bandit models of clinical trails date back to \cite{thompson1933}, but bandit algorithms lack statistical power in detecting the best treatment at the end of the trial \citep{villar2015multi}. Can we develop adaptive rules with better performance?
\end{itemize}
We study Bayesian algorithms for adaptively allocating measurement effort. Each begins with a prior distribution over the unknown quality of the designs. The experimenter learns as measurements are gathered, and beliefs are updated to form a posterior distribution. This posterior distribution gives a principled mechanism for reasoning about the uncertain quality of designs, and for assessing the probability any given design is optimal. By formulating this problem as a Markov decision process whose state-space tracks posterior beliefs about the true quality of each design, dynamic programming could in principle be used to optimize many natural measures of performance. Unfortunately, computing or even storing an optimal policy is usually infeasible due to the curse of dimensionality. Instead, this work \emph{proposes three simple and intuitive rules for adaptively allocating measurement effort, and by characterizing fundamental limits on the performance of any algorithm, formalizes a sense in which these seemingly na\"{i}ve rules are the best possible.}

The first algorithm we propose is called \emph{top--two probability sampling}. It computes at each time-step the two designs with the highest posterior probability of being optimal. It then randomly chooses among them, selecting the design that appears most likely to be optimal with some fixed probability, and selecting the second most likely otherwise. Beliefs are updated as observations are collected, so the top-two designs change over time. The long run fraction of measurement effort allocated to each design depends on the true quality of the designs, and the distribution of observation noise. \emph{Top--two value sampling} proceeds in a similar manner, but in selecting the top-two designs it considers not only the probability a design is optimal, but the expected amount by which its quality exceeds that of other designs. The final algorithm we propose is a top-two sampling version of the \emph{Thompson sampling} algorithm for multi-armed bandits. Thompson sampling has attracted a great deal of recent interest in both academia and industry \citep{googleanalytics, tang2013automatic, graepel2010web, chapelle2011empirical, agrawal2012analysis, kaufmann2012thompson, gopalan2014thompson, russo2014learning}, but it is designed to maximize the cumulative reward earned while sampling. As a result, in the long run it allocates almost all effort to measuring the estimated-best design, and requires a huge number of total measurements to certify that none of the alternative designs offer better performance. We introduce a natural top-two variant of Thompson sampling that avoids this issue and as a result offers vastly superior performance for the best-arm identification problem. 

Remarkably, these simple heuristic algorithms satisfy a strong optimality property. Our analysis focuses on frequentist consistency and rate convergence of the posterior distribution, and therefore takes place in a setting where the true quality of the designs is fixed, but unknown to the experimenter. One hopes that as measurements are collected the posterior distribution definitively identifies the true best design, in the sense that the posterior probability assigned to the event that some other design is optimal converges to zero. We show that under the proposed algorithms, this convergence occurs at an exponential rate, characterize the exponent attained for each problem instance, and relate this to the best possible exponent among allocation rules. 

To make a precise statement, it is important to highlight that the top-two algorithms described above depend on a tunable parameter; each method identifies the top-two designs and then flips a biased coin to decide which of these to sample. The paper's theoretical results offer a fairly complete characterization of the asymptotic performance of these algorithms, and are summarized more precisely below. 
\begin{enumerate}
\item {\bf Optimality with tuning:} For any problem instance and any choice of tuning parameter, the proposed top-two algorithms attain an exponential rate of posterior convergence. This exponent is carefully characterized. If the tuning parameter is stet optimally, the exponent is optimal among all possible adaptive allocation rules. Moreover, it is possible to attain this rate of convergence by adaptively adjusting the tuning parameter. 
\item {\bf Robustness with an unbiased coin:} Uniformly across problem instances, the exponent attained by top--two sampling with an unbiased coin is within a factor of two of what could be attained by an optimal allocation rule. This robustness is further validated through numerical experiments: across fourteen problem instances top-two Thompson sampling with an unbiased coin offers similar performance to a version of top-two Thompson sampling that is applied with the best tuning parameter for that particular problem setting. 

\item {\bf Optimality among a restricted class of allocation rules for any tuning parameter:} To simplify the discussion, imagine top--two sampling is applied with an unbiased coin. Then, as the number of measurements tends to infinity, exactly half of measurement effort is allocated to the best design. Now, consider any possible adaptive allocation rule, which, like top-two sampling, allocates half of measurement effort to the true best design asymptotically. There is no problem instance for which this alternative algorithm attains an exponential rate of posterior convergence exceeding that of the proposed top-two sampling algorithms. An analogous result applies when a biased coin is used. 
\end{enumerate}
It is worth elaborating on the third result described above, as it is the main insight that prompted this paper. We face the problem of adaptively allocating measurements among $k$ competing designs. We can imagine decomposing this problem into two parts: first the experimenter chooses which fraction of measurements to dedicate to what is believed to be the best design, and second, given this choice, she chooses how to adaptively allocate remaining measurements among the $k-1$ competing designs. Roughly speaking, this paper shows that the allocation among the remaining $k-1$ designs is handled automatically and optimally by very simple top-two sampling algorithms. This offers substantial new insight into the structure of best arm identification problems and effectively reduces the problem to the choice of a single tuning parameter--the bias of the coin used by the top-two sampling algorithms. The paper establishes a surprising degree of robustness to this tuning parameter, and shows it is possible to attain a fully optimal exponent by setting it adaptively. However, the proposed tuning method is complex, spoiling some of the elegance of the top-two sampling algorithms. The search for simpler methods stands as an interesting open question.

\subsection{Main Contributions}
This paper makes both algorithmic and theoretical contributions. On the algorithmic side, we develop three new adaptive measurement rules. The top-two Thompson sampling rule, in particular, could have an immediate impact in application areas where Thompson sampling is already in use. For example, there are various reports of Thompson sampling being used in A/B testing \citep{googleanalytics} and in clinical trials \citep{berry2004bayesian}. But practitioners in these domains typically hope to commit to a decision after definitive period of experimentation, and top-two Thompson sampling can greatly reduce the number of measurements required to do so. In addition, because of their simplicity, the proposed allocation rules can be easily adapted to treat problems beyond the scope of this paper's problem formulation. See Section \ref{sec: conclusion} for examples. 

The paper also makes several theoretical contributions. Most importantly, it is of broad scientific interest to understand when very simple measurement strategies are the best possible. This paper provides sharp links between these top-two sampling rules and the limits of performance under any adaptive algorithm. In establishing these results, we exactly characterize the optimal rate of posterior convergence attainable by an adaptive algorithm, and provide interpretable bounds on this rate when measurement distributions are sub-Gaussian. The analysis also provides several intermediate results which may be of independent interest, including establishing consistency and exponential rates of convergence for posterior distributions with non-conjugate priors and under adaptive measurement rules. It should be highlighted, however, that the results do require some strong regularity properties on the prior distribution, and in particular only apply to priors defined over a compact set.

\subsection{Related Literature}
\paragraph{Sequential Bayesian Best-Arm Identification.} There is a sophisticated literature on algorithms for Bayesian multi-armed bandit problems. In discounted bandit problems with independent arms, Gittins indices characterize the Bayes optimal policy \citep{jones1972dynamic, gittins1979bandit}. Moreover, a variety of simpler Bayesian allocation rules have been developed, including Bayesian upper-confidence bound algorithms \citep{kaufmann2012bayesian, srinivas2012information, kaufmann2016bayesian}, Thompson sampling \citep{agrawal2012analysis, Korda2013Thompson,  gopalan2014thompson, johnson2015online}, information-directed sampling \citep{russo2014learning}, the knowledge gradient \citep{ryzhov2012knowledge}, and optimistic Gittins indices \citep{gutin2016optimistic}. These heuristic algorithms can be applied effectively to complicated learning problems beyond the specialized settings in which the Gittins index theorem holds, have been shown to have strong performance in simulation, and have theoretical performance guarantees. In several cases, they are known to attain sharp asymptotic limits on the performance of any adaptive algorithm due to \cite{lai1985asymptotically}.

The pure-exploration problem studied in this paper is not nearly as well understood. Recent work has cast this problem in a decision-theoretic framework \citep{chick2009economic}. However, because the conditions required for the Gittins index theorem do not hold, computing an optimal policy via dynamic programming is generally infeasible due to the curse of dimensionality. Papers in this area typically focus on problems with Gaussian observations and priors. They   formulate simpler problems that can be solved exactly -- like a problem where only a single measurement can be gathered \citep{gupta1996bayesian, frazier2008knowledge, chick2010sequential} or a continuous-time problem with only two alternatives \citep{chick2012sequential} -- and then extend those solutions heuristically to build measurement and stopping rules in more general settings.

For problems with Gaussian priors and noise distributions,  the expected-improvement (EI) algorithm is a popular Bayesian approach to sequential information-gathering.  Interesting recent work by \cite{ryzhov2016convergence} studies the long run distribution of measurement effort allocated by the expected-improvement and shows this is related to the optimal computing budget allocation of \cite{chen2000simulation}. This contribution is very similar in spirit to this paper, as it relates the long-run behavior of a simple Bayesian measurement strategy to a notion of an approximately optimal allocation.  Unfortunately, EI cannot match the performance guarantees in this paper. In fact, under EI the posterior converges only at a polynomial rate, instead of the exponential rate attained by the algorithms proposed here and by the OCBA. See appendix \ref{sec: EI} for a more precise discussion. 

\paragraph{Classical Ranking and Selection.} 
The problem of identifying the best system has been studied for many decades under the names \emph{ranking and selection} or \emph{ordinal optimization}. A full review of this literature is beyond the scope of this article. See \cite{kim2006selecting}, \cite{kim2007recent} or \cite{hong2015discrete} for thorough reviews. Part of this literature focuses on a problem called subset-selection, where the goal is not to identify the best-design, but to find a fairly small subset of designs that is guaranteed to contain the best design. Beginning with \cite{bechhofer1954single}, many papers have focused on an indifference zone formulation, where, for user-specified $\epsilon,\delta>0$, the goal is to guarantee with probability at least $1-\delta$ the algorithm returns the true arm mean as long as no suboptimal arm is within $\epsilon$ of optimal. Assuming measurement noise is Gaussian with known variance $\sigma^2$, one can guarantee this indifference-zone criterion by gathering $O\left((\sigma k/\epsilon^2)\log(k/\delta)\right)$ total measurements, divided equally among the $k$ designs, and then returning the design with highest empirical-mean. For the case of unknown variances, \cite{rinott1978two} proposes a two stage procedure, where the first stage is used to estimate the variance of each population, and the number of samples collected from each design in the second stage is scaled by its estimated standard deviation. 
In the machine learning literature, \cite{even2002pac} studies the number of samples required by algorithms delivering $\epsilon$--PAC guarantees. Such algorithms are sometimes said to ensure a specified \emph{probability of good selection} in the terminology of the simulation optimization literature, a strictly stronger guarantee than an indifference zone guarantee \citep{ni2017efficient}.  \cite{even2002pac} show that when measurement noise is uniformly bounded, an $\epsilon$--PAC guarantee is satisfied by a sequential elimination strategy that uses only $O\left((k/\epsilon^2)\log(1/\delta)\right)$ samples on average. \cite{mannor2004sample} provide a matching lower bound. Similar to minimax bounds, this shows the upper bound of \citet{even2002pac} is tight, up to a constant factor, for a certain worst case problem instance. Indifference zone formulations of ranking and selection problems remains an area of active research. See for example \cite{fan2016indifference} and some of the references therein. 

Since \cite{paulson1964sequential}, many authors have sought to reduce the number of samples required on easier problem instances by designing algorithms that sequentially eliminate arms once they are determined to be suboptimal with high confidence. See the recent work of  \cite{frazier2014fully} and the references therein. However, in a sense described below, \cite{jennison1982asymptotically} show formally that there are problems with Gaussian observations where any sequential-elimination algorithm will require substantially more samples than optimal adaptive allocation rules. See Section \ref{sec: conclusion} for modified top-two sampling algorithms designed for an indifference zone criterion.

\paragraph{The asymptotic complexity of best-arm identification.}
We described attainable rates of performance on a worst-case problem instance characterized by \cite{even2002pac} and \cite{mannor2004sample}. A great deal of work has sought ``problem dependent'' bounds, which reveal that the best-arm can be identified more rapidly when the true problem instance is easier. This is the case, for example, when some arms are of very low quality, and can be distinguished from the best using a small number of measurements. Asymptotic measures of the complexity of best-arm identification appear to have been derived independently in  statistics \citep{chernoff1959sequential, jennison1982asymptotically}, simulation optimization \citep{glynn2004large}, and, concurrently with this paper, in the machine learning literature \citep{garivier2016optimal}. Each of these papers studies a slightly different objective, but each captures a notion of the number of samples required to identify the best-arm as a function of the problem instance -- i.e. as a function the number of designs, each design's true quality, and the distribution of measurement noise.

\cite{glynn2004large} build on the optimal-computing-budget allocation (OCBA) of \cite{chen2000simulation} to provide a rigorous large-deviations derivation of the optimal fixed allocation. In particular, assuming the design with the highest empirical mean is returned, there is a fixed allocation under which the probability of incorrect selection decays exponentially, and the exponent is optimal under all fixed-allocation rules.  The setting studied by this paper is often called the ``fixed-budget'' setting in the recent multi-armed bandit literature. Unfortunately, it may be difficult to implement the allocation in \cite{glynn2004large} without additional prior knowledge. Later work by \cite{glynn2015ordinal} provides a substantial discussion of this issue. 


This paper was highly influenced by a classic paper by \cite{chernoff1959sequential} on the sequential design of experiments for binary hypothesis testing. Chernoff's asymptotic derivations give great insight into best-arm identification, which can be formulated as a multiple-hypothesis testing problem with sequentially chosen experiments, but surprisingly this connection does not seem to be discussed in the literature. Chernoff looks at a different scaling than \cite{glynn2004large}. Instead of taking the budget of available measurements to infinity, he allows the algorithm to stop and declare the hypothesis true or false at any time, but takes the cost of gathering measurements to zero while the cost of an incorrect terminal decision stays fixed. He constructs rules that minimize expected total costs in this limit. Chernoff makes restrictive technical assumptions, some of which have been removed in subsequent work \citep{albert1961sequential,kiefer1963asymptotically,keener1984second,nitinawarat2013controlled,naghshvar2013active}.

\cite{jennison1982asymptotically} study an indifference zone formulation of the problem of identifying the best-design. Like  \cite{chernoff1959sequential}, they allow the algorithm to stop and return an estimate of the best-arm at any time, but rather than penalize incorrect decisions, they require that the probability correct selection (PCS) exceeds $1-\delta>0$ for every problem instance. Intuitively, the expected number of samples required by an algorithm satisfying this PCS constraint must tend to infinity as $\delta \to 0$. In the case of Gaussian measurement noise, \cite{jennison1982asymptotically} characterize the optimal asymptotic scaling of expected number of samples  in this limit. The recent multi-armed bandit literature refers to this  formulation as the ``fixed-confidence'' setting.    

A large body of work in the recent machine learning literature 
has sought to characterize various notions of the complexity of best-arm identification \citep{even2002pac, mannor2004sample, audibert2010best, gabillon2012best, karnin2013almost, jamieson2014best}.  However, upper and lower bounds match only up to constant or logarithmic factors, and only for particular hard problem instances. Substantial progress was presented by \cite{kaufmann2013information} and \cite{kaufmann2014complexity}, who seek to exactly characterize the asymptotic complexity of identifying the best arm in both the fixed-budget and fixed-confidence settings. Still, the upper and lower bounds presented there do not match. A short abstract of the current paper appeared in the 2016 Conference on Learning Theory. In the same conference, independent work by \cite{garivier2016optimal} provided matching upper and lower bounds on the complexity of identifying the best arm in the ``fixed-confidence'' setting. Like the present paper, but unlike \cite{jennison1982asymptotically}, these results apply whenever observation distributions are in the exponential family and do not require an indifference zone. 

The current paper looks at a different measure.  We study a frequentist setting in which the true quality of each design is fixed, and characterize the rate of posterior convergence attainable for each problem instance. We also describe, as a function of the problem instance, the long-run fraction  of measurement effort allocated to each design by any algorithm attaining this rate of convergence. These asymptotic limits turn out to be closely related to some of the aforementioned results. In particular, the optimal exponent given in Subsection \ref{subsec: optimal allocation}  mirrors the complexity measure of \cite{chernoff1959sequential}. In the same subsection, this exponent is then simplified into a form that mirrors one derived by \cite{glynn2004large}, and, for Gaussian distributions, one derived by \cite{jennison1982asymptotically}.

\paragraph{Optimal Budget Allocations.}
While the complexity measure we derive is similar to past work, the proposed algorithms differ substantially. The allocation rules proposed by \cite{chernoff1959sequential}, \cite{jennison1982asymptotically} and \cite{glynn2004large} are essentially developed as a means of proving certain rates are attainable asymptotically. To derive these policies, the authors begin with a thought experiment: assuming the experimenter actually knew the true quality of every arm, what proportion of measurements should she allocate to each arm in order to gather the most definitive evidence concerning the identity of the optimal arm. One approach to constructing such rules in practice is to use some fraction of samples to estimate the arm means and then apply the asymptotically optimal sampling proportions assuming these estimates to be correct. Such an approach dates back to at least the work of \cite{kiefer1963asymptotically}, which followed Chernoff's work on the sequential design of experiments. 

Early authors made a point to highlight limitations of such an approach. \cite{jennison1982asymptotically} writes their proposed procedures ``typically...do not have good small sample size properties. A better procedure would have several stages and a more sophisticated sampling rule.'' In a 1975 review of the sequential design of experiments,  \cite{chernoff1975approaches} notes that asymptotic approaches to the optimal sequential design of experiments had been fairly successful in circumventing the need to compute Bayesian optimal designs via dynamic programming,  but ``the approach is very coarse for moderate sample size problems.'' He writes that two-stage procedures of \cite{kiefer1963asymptotically}, ``sidestep the issue of how to experiment in the early stages,'' while constructing the optimal allocations based on point estimates  ``treats estimates of $\thetabf$ based on a few observations with as much respect as that based on many observations.'' 

Closely related to these approaches is a large body of work on optimal computing budget allocations (OCBA) \citep{chen2000simulation}. Most of this literature studies problems with Gaussian observations. They derive an approximation to the optimal sampling proportions presented in \cite{chernoff1959sequential}, \cite{jennison1982asymptotically} and \cite{glynn2004large}, which appears to simplify computation. This allocation is often stated to be optimal as the number of arms grows large; more rigorous results to this effect are established in interesting work by \cite{pasupathy2015stochastically}, who shows that the sampling ratios of the OCBA coincide with those of \cite{glynn2004large} in the limit of a sequence of problem instances in which the number of arms tends to infinity but all suboptimal arms' means are bounded away from optimal by a fixed constant. Optimal budget allocations have been extended in various directions, for example to address Bayesian expected loss objectives \citep{chick2001new}, the problem of identifying an optimum subject to stochastic constraints \citep{hunter2013optimal}, and the problem of identifying the top $m$ alternatives \citep{chen2008efficient}. See \cite{chen2015ranking} for a more thorough review. 

In this paper, we study simple adaptive allocation rules which, ostensibly, have no relation to the asymptotic calculations used to derive these optimal budget allocations. The main insight is that these simple algorithms automatically adapt their measurement effort in such a way that their long run behavior is deeply linked to the ratios suggested in the work of \cite{chernoff1959sequential}, \cite{jennison1982asymptotically}, and \cite{glynn2004large}. A major advantage of top-two sampling algorithms, however, is that asymptotic analysis is used only to give insight into the algorithms, and any approximations have no impact on their practical performance. A suite of experiments in Section \ref{sec: further experiments} suggest the approach can substantially outperform the optimal allocations derived from asymptotic theory.

\section{Problem Formulation}
Consider the problem of efficiently identifying the best among a finite set of designs based on noisy sequential measurements of their quality. At each time $n \in \mathbb{N}$, a decision-maker chooses to measure the design $I_n \in \{1,...,k\}$, and observes a measurement $Y_{n,I_n}$. The measurement $Y_{n,i}\in\mathbb{R}$ associated with design $i$ and time $n$ is drawn from a fixed, unknown, probability distribution, and the  vector $\ybf_n \triangleq (Y_{n,1},...,Y_{n, k})$ is drawn independently across time. The decision-maker chooses a \emph{policy}, or \emph{adaptive allocation rule}, which is a (possibly randomized) rule for choosing a design $I_n$ to measure as a function of past observations $I_1, Y_{1, I_1}, ... I_{n-1}, Y_{n-1, I_{n-1}}$. The goal is to efficiently identify the design with highest mean.

We will restrict attention to problems where measurement distributions are in the canonical one dimensional exponential family. The marginal distribution of the outcome $Y_{n,i}$ has density $p(y | \theta_i^*)$ with respect to a base measure $\nu$, where $\theta_i^* \in \mathbb{R}$ is an unknown parameter associated with design $i$. This density takes the form
\begin{equation} \label{eq: exponential family density}
p(y| \theta) = b(y)\exp\{ \theta T(y) - A(\theta)\} \qquad \theta \in \mathbb{R}
\end{equation}
where $b$, $T$, and $A$ are known functions, and $A(\theta)$ is assumed to be twice differentiable. We will assume that $T$ is a strictly increasing function so that  $\mu(\theta) \triangleq \intop y p(y|\theta) d\nu(y)$ is a strictly increasing function of $\theta$. Many common distributions can be written in this form, including Bernoulli, normal (with known variance), Poisson, exponential, chi-squared, and Pareto (with known minimal value).

Throughout the paper, $\thetabf^* \triangleq \left(\theta_1^*,...,\theta_k^* \right)$ will denote the unknown true parameter vector, and $\thetabf$ and $\thetabf'$ will be used to denote possible alternative parameter vectors. Let $I^* = \arg\max_{1\leq i \leq k} \theta^*_i$ denote the unknown best design. We will assume throughout that $\theta^*_i \neq \theta^*_j$ for $i\neq j$ so that $I^*$ is unique, although this can be relaxed by considering an indifference zone formulation where the goal is to identify an $\epsilon$--optimal design, for some specified tolerance level $\epsilon>0$.




\paragraph{Prior and Posterior Distributions.}
The policies studied in this paper make use of a prior distribution $\Pi_{1}$ over a set of possible parameters $\Theta$ that contains $\thetabf^*$. Based on a sequence of observations $(I_1, Y_{1, I_1},...,I_{n-1}, Y_{n-1, I_{n-1}})$, beliefs are updated to attain a posterior distribution $\Pi_n$. We assume $\Pi_1$ has density $\pi_1$ with respect to Lebesgue measure. In this case,  the posterior distribution $\Pi_n$ has corresponding density
\begin{equation}\label{eq: posterior density}
\pi_{n}(\thetabf) = \frac{ \pi_{1}(\thetabf) L_{n-1}(\thetabf) }{\intop_{\Theta} \pi_{1}(\thetabf') L_{n-1}(\thetabf') d\thetabf' } \quad n\geq 2,
\end{equation}
where
\[
L_{n-1}(\thetabf) = \prod_{l=1}^{n-1} p(Y_{l, I_l} | \theta_{I_l})
\]
is the likelihood function. While this formulation enforces some technical restrictions to facilitate theoretical analysis, it allows for very general prior distributions, and in particular allows for the quality of different designs to be correlated under the priors.

\paragraph{Optimal Action Probabilities.}

Let
\[
\Theta_i \triangleq \left\{ \thetabf \in \Theta \bigg| \theta_i > \underset{j \neq i}{\max} \theta_j  \right\}
\]
denote the set of parameters under which design $i$ is optimal, and let
\begin{equation}\label{eq: optimal action prob}
\alpha_{n,i}\triangleq \Pi_{n}(\Theta_i) = \intop_{\Theta_i} \pi_n(\thetabf) d\thetabf
\end{equation}
denote the posterior probability assigned to the event that action $i$ is optimal. Our analysis will focus on
$\Pi_{n}( \Theta_{I^*}^{c}) = 1-\alpha_{I^*}$,
which is the posterior probability assigned to the event that an action other than $I^*$ is optimal.  The next section will introduce policies under which $\Pi_{n}( \Theta_{I^*}^{c}) \to 0$ as $n \to \infty$, and the rate of convergence is essentially optimal.

\paragraph{Further Notation.}
Before proceeding, we introduce some further notation. Let $\mathcal{F}_{n}$ denote the sigma algebra generated by
$(I_1, Y_{1, I_1}, ... I_{n}, Y_{n, I_{n}})$. For all $i \in \{1,...,k\}$ and $n \in \N$, define
\[
\psi_{n, i} \triangleq \Prob(I_n = i |  \hist) \qquad \Psi_{n,i} \triangleq \sum_{\ell=1}^{n} \psi_{\ell,i} \qquad  \overline{\psi}_{n,i} \triangleq n^{-1}\Psi_{n, i} .
\]
Each of these measures the effort allocated to design $i$ up to time $n$.

\section{Algorithms}

This section proposes three algorithms for allocating measurement effort. Each depends on a tuning parameter $\beta>0$, which will sometimes be set to a default value of $1/2$. Each algorithm is based on the same high level principle. At every time step, each algorithm computes an estimate $\hat{I} \in \{1,...,k\}$ of the optimal design, and measures that with probability $\beta$. Otherwise, we consider a counterfactual: in the (possibly unlikely) event that $\hat{I}$ is not the best design, which alternative $\hat{J}\neq \hat{I}$ is most likely to be the best design? With probability $1-\beta$, the algorithm measures the alternative $\hat{J}$. The algorithms differ in how they compute $\hat{I}$ and $\hat{J}$. The most computationally efficient is the modified version of Thompson sampling, under which $\hat{I}$ and and $\hat{J}$ are themselves randomly sampled from a probability distribution.

We will see that asymptotically all three algorithms allocate fraction $\beta$ of measurement effort to measuring the estimated-best design, and the remaining fraction to gathering evidence about alternatives. The algorithms adjust how measurement effort is divided among these alternative designs as evidence is gathered, allocating less effort to measuring clearly inferior designs and greater effort to measuring designs that are more difficult to distinguish from the best.

\subsection{Top-Two Probability Sampling (TTPS)}
With probability $\beta$, the top-two probability sampling (TTPS) policy plays the action $\hat{I}_{n} = \arg\max_{i} \alpha_{n,i}$ which, under the posterior, is most likely to be optimal. When the algorithm does not play $\hat{I}_n$, it plays the most likely alternative $\hat{J}_n= \arg\max_{j \neq \hat{I}_n} \alpha_{n,j}$, which is the action that is second most likely to be optimal under the posterior. Put differently, the algorithm sets $\psi_{n, \hat{I}_n}=\beta$, and $\psi_{n, \hat{J}_n} = 1-\beta$.

\subsection{Top-Two Value Sampling (TTVS)}
We now propose a variant of top-two sampling that considers not only the probability a design is optimal, but the expected amount by which its quality exceeds that of other designs. In particular, we will define below a measure $V_{n,i}$ of the value of design $i$ under the posterior distribution at time $n$. Top-two value sampling computes the top-two designs under this measure: $\hat{I}_n = \arg\max_{i} V_{n,i}$ and $\hat{J}_n = \arg\max_{j \neq \hat{I}_n} V_{n,j}$. It then plays the top design $\hat{I}_n$ with probability $\beta$ and the best alternative $\hat{J}_n$ otherwise. As observations are gathered, beliefs are updated and so the top two designs change over time. The measure of value $V_{n,i}$ is defined below.

The definition of TTVS depends on a choice of (utility) function $u: \theta \mapsto \mathbb{R}$, which encodes a measure of the value of discovering a design with quality $\theta_i$. Two natural choices of $u$ are $u(\theta)=\theta$ and $u(\theta) = \mu(\theta)$.  The paper's theoretical results allow $u$ to be a general function, but we assume that it is \emph{continuous} and \emph{strictly increasing}.
For a given choice of $u$, and any $i \in \{1,...,k\}$, the function
\[
v_{i}(\thetabf) = \max_{j} u(\theta_j) - \max_{j\neq i} u(\theta_j) = \left\{\begin{array}{lr}
0 & \text{if } \thetabf \notin \Theta_i\\
u(\theta_i) - \max_{j\neq i}u(\theta_j) & \text{if } \thetabf \in \Theta_i
\end{array}\right.
\]
provides a measure of the value of design $i$ when the true parameter is $\thetabf$. It captures the improvement in decision quality due to design $i$'s inclusion in the choice set. Let
\begin{equation}\label{eq: def of V}
V_{n,i} = \intop_{\Theta} v_{i}(\thetabf) \pi_n(\thetabf) d\thetabf = \intop_{\Theta_i} v_{i}(\thetabf) \pi_n(\thetabf) d\thetabf
\end{equation}
denote the expected value of $v_{i}(\thetabf)$ under the posterior distribution at time $n$. This can be viewed as the option-value of design $i$: it is the expected additional value of having the option to choose design $i$ when it is revealed to be the best design. Note that the integral \eqref{eq: def of V} defining $V_{n,i}$ is a weighted version of the integral defining $\alpha_{n,i}$. The paper will formalize a sense in which $V_{n,i}$ and $\alpha_{n,i}$ are asymptotically equivalent as $n\to \infty$, and as a result the asymptotic analysis of top-two value sampling essentially reduces to the analysis of top-two probability sampling.

\subsection{Thompson Sampling}\label{subsec: TS}
Thompson sampling is an old and popular heuristic for multi-armed problems. The algorithm simply samples actions according to the posterior probability they are optimal. In particular, it selects action $i$ with probability $\psi_{n,i}= \alpha_{n,i}$, where $\alpha_{n,i}$ denotes the probability action $i$ is optimal under under a parameter drawn from the posterior distribution.

Thompson sampling can have very poor asymptotic performance for the best arm identification problem. Intuitively, this is because once it estimates that a particular arm is the best with reasonably high probability, it selects that arm in almost all periods at the expense of refining its knowledge of other arms. If $\alpha_{n,i} =.95$, then the algorithm will only select an action other than $i$ roughly once every 20 periods, greatly extending the time it takes until $\alpha_{n,i}> .99$. This insight can be made formal; our results imply that Thompson sampling attains a only attains a polynomial, rather exponential, rate of posterior convergence.  A similar reasoning applies to other multi-armed bandit algorithms.  The work of \citet{bubeck2009pure} shows formally that algorithms satisfying regret bounds of order $\log(n)$ are necessarily far from optimal for the problem of identifying the best arm. 

With this in mind, it is natural to consider a modification of Thompson sampling that simply restricts the algorithm from sampling the same action too frequently. One version of this idea is proposed below.

\subsection{Top-Two Thompson Sampling (TTTS)}
This section proposes top-two Thompson sampling (TTTS), which modifies standard Thompson sampling by adding a re-sampling step. As with TTPS and TTVS, this algorithm depends on a tuning parameter $\beta>0$ that will sometimes be set to a default value of $1/2$.

As in Thompson sampling, at time $n$, the algorithm samples a design $I \sim {\bm \alpha_{n}}$. Design $I$ is measured with probability $\beta$, but, in order to prevent the algorithm from exclusively focusing on one action, with probability $1-\beta$, an alternative design is measured. To generate this alternative, the algorithm continues sampling designs $J \sim {\bm \alpha_{n}}$ until the first time $J\neq I$. This can be viewed as a top-two sampling algorithm, where the top-two are chosen by executing Thompson sampling until two distinct designs are drawn. 

Under top-two Thompson sampling, the probability of measuring design $i$ at time $n$ is 
\[
\psi_{n,i} = \alpha_{n,i}\left(\beta + (1-\beta) \sum_{j\neq i} \frac{\alpha_{n,j}}{1-\alpha_{n,j}}\right).
\]

This expression simplifies as the algorithm definitively identifies the best design. As $\alpha_{n,I^{*}}\to 1$, $\psi_{n,I*}\to \beta$, and for each $i\neq I^*$, 
\[ 
\frac{\psi_{n,i}}{1-\psi_{n,I^*}} \sim \frac{\alpha_{n,i}}{1-\alpha_{n,I^*}}.
\] 
In this limit, the true best design is sampled with probability $\beta$.  The probability $i$ is sampled given $I^*$ is not is equal to the posterior probability $i$ is optimal given $I^*$ is not. 

\subsection{Computing and Sampling According to Optimal Action Probabilities}
Here we provide some insight into how to efficiently implement the proposed top-two rules in important problem classes. We begin with top-two Thompson sampling, which is often the easiest to implement. Note that given an ability to sample from $\Pi_{n}$, it is easy to sample from the posterior distribution over the optimal design $\alphabf_n$. In particular, if $\hat{\thetabf}\sim \Pi_{n}$ is drawn randomly from the posterior, then $\arg\max_{i} \hat{\theta}_i$ is a random sample from ${\bm \alpha}_n$. Either through the choice of conjugate prior distributions, or through the use of Markov chain Monte Carlo, it is possible to efficiently sample from the posterior for many interesting models.  Algorithm \ref{alg: TTTS}  shows how to directly sample an action according to TTTS by sampling from the posterior distribution. It is worth highlighting that this algorithm does not require computing or approximating the distribution ${\bm \alpha}_n$.

\begin{algorithm}
	\caption{Top-Two Thompson Sampling $(\beta)$}\label{alg: TTTS}
	\begin{algorithmic}[1]
		\State{Sample $\hat{\thetabf} \sim \Pi_n$ and set $I \leftarrow \arg\max_{i} \hat{\theta}_{i}$} \Comment{Apply Thompson sampling}
		\State Sample $B \sim {\rm Bernoulli}(\beta)$
		\If{$B =1$ } \Comment{Occurs with probability $\beta$. }
		\State{Play $I$}
		\Else
		\Repeat
		\State{Sample $\hat{\thetabf} \sim \Pi_n$ and set $J \leftarrow \arg\max_j \hat{\theta}_j$} \Comment{Repeat Thompson sampling}
		\Until{$J\neq I$}
		\State{Play $J$}
		\EndIf
	\end{algorithmic}
\end{algorithm}

The optimal action probabilities $\alpha_{n,i}$ and values $V_{n,i}$ are defined by $k$-dimensional integrals, which may be difficult to compute in general even if the posterior $\Pi_n$ has a closed form. Algorithm \ref{alg: SampleApprox} shows how to approximate $\alpha_{n,i}$ and $V_{n,i}$ using samples $\thetabf^{1}\ldots \thetabf^M$, which enables efficient approximations to TTPS and TTVS whenever posterior samples can be efficiently generated. 

\begin{algorithm}[H]
	\caption{$\text{SampleApprox}(K, M, u, \thetabf^1,\ldots,\thetabf^M)$}\label{alg: SampleApprox}
	\begin{algorithmic}[1]
		\State $\mathcal{S}_{i}\leftarrow \{m | i=\arg\max_j \theta^m_j \} \qquad \forall i\in\{1,..,K\}$
		\State $\hat{\alpha}_i \leftarrow |\mathcal{S}_i|/m \qquad \forall i\in \{1,..K\}$
		\State $\hat{V}_i \leftarrow M^{-1}\sum_{m\in \mathcal{S}_i} \left( u(\theta^m_i)-\max_{j\neq i}u(\theta^m_j)\right) \qquad \forall i\in\{1,..,K\}$\\
		\Return $\hat{\bm \alpha}, \hat{\bm V}$
	\end{algorithmic}
\end{algorithm}

Thankfully, the computation of $\alpha_{n,i}$ and $V_{n,i}$ simplifies when the algorithm begins with an independent prior over the qualities $\theta_1,...\theta_k$ of the $k$ designs. To understand this fact, suppose $X_1,...,X_k\in \mathbb{R}$ are independently distributed and continuous random variables. Then 
\begin{equation}\label{eq: optimal probability integral}
\Prob(X_1 = \max_{i} X_i) = \intop_{x\in \mathbb{R}} f_1(x) \prod_{j=2}^{k} F_{j}(x) dx
\end{equation}
where $f_1$ denotes the PDF of $X_1$ and $F_2,...,F_K$ are the CDFs of $X_2,..,X_k$. In particular, $\Prob(X_1 = \max_{i} X_i)$  can be computed by solving a 1-dimensional integral. Based on this insight, Appendix \ref{sec: implementation of TTVS} provides an efficient implementation of TTPS and TTVS for a problem with independent Beta priors and binary observations. That implementation approximates integrals like  \eqref{eq: optimal probability integral} using quadrature with $n$ points, and has the time and space complexity that scale as $O(kn)$.

%
%
%
%
%
%

\section{A Numerical Experiment}
Some of the paper's main insights are reflected in a simple numerical experiment. Consider a problem where observations are binary $Y_{n,i}\in \{0,1\}$, and the unknown vector
$
\thetabf^* = (.1, .2, .3, .4, .5)
$
defines the true success probability of each design. Each algorithm begins with an independent uniform prior over the components of $\thetabf^*$. The experiment compares the performance of top-two probability sampling (TTPS), top-two value sampling (TTVS)\footnote{TTVS is executed with the utility function $u(\theta)=\theta$}, and top-two Thompson sampling (TTTS) with $\beta=1/2$ against Thompson sampling and a uniform allocation rule which allocates equal measurement effort ($\psi_{n,i}=1/5)$ to each design. The uniform allocation is a natural point of comparison, since is the most commonly used strategy in practice.

Figure \ref{fig: measurements required} displays the average number of measurements required for the posterior to reach a given confidence level. In particular, the experiment tracks the first time when $\max_{i} \alpha_{n,i} \geq c$ for various confidence levels $c\in (0,1)$. Figure \ref{fig: measurements required} displays the average number of measurements required for each algorithm to reach each fixed confidence level, where the average was taken over 100 trials in Panel (a) and 500 in Panel (b). Even for this simple problem with five designs, the proposed algorithms can reach the same confidence level using fewer than half the measurements required by a uniform allocation rule. While all the top-two rules attain the same asymptotic rate of convergence, we can see that top-two probability sampling is slightly outperformed in this experiment. Panel (a) compares Thompson sampling to Top-Two Thompson sampling. TS appears to reach low confidence levels as rapidly as top-two TS, but as suggested in Subsection \ref{subsec: TS}, is very slow to reach high levels of confidence. It requires over than 60\% more measurements to reach confidence .95 and over 250\% more measurements to reach confidence .99. TS requires an onerous number of measurements to reach confidence .999, and so we omit this experiment. 

%

\begin{figure}[h!]
	\centering
	\begin{subfigure}{.5\textwidth}
		\includegraphics[width=1\linewidth]{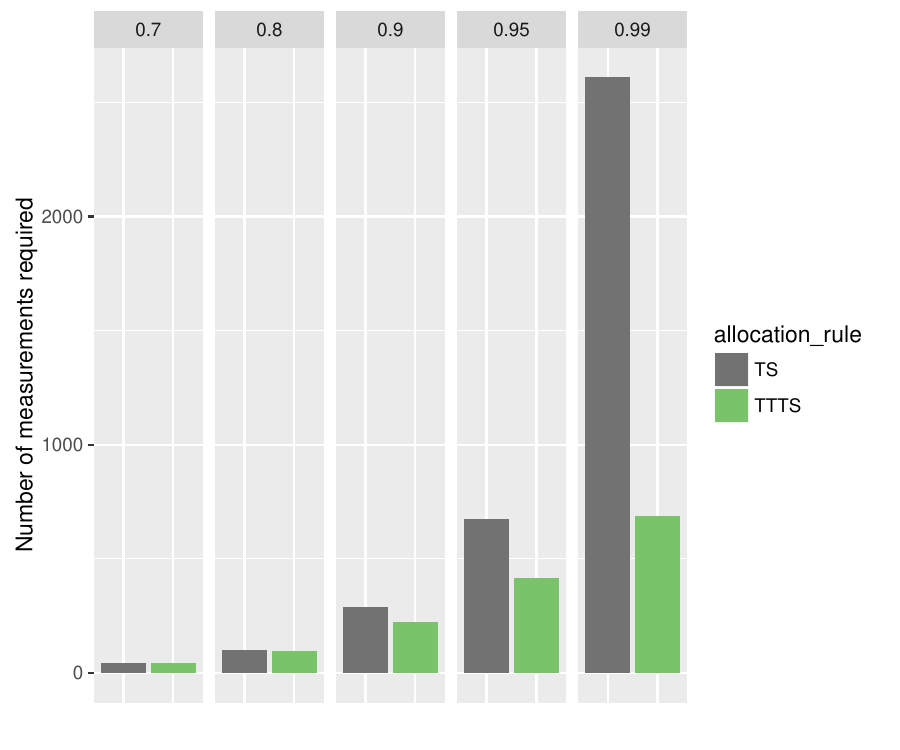}
		\caption{TS vs Top-Two TS.}
		\label{fig:sub1}
	\end{subfigure}%
	\begin{subfigure}{.5\textwidth}
		\includegraphics[width=1\linewidth]{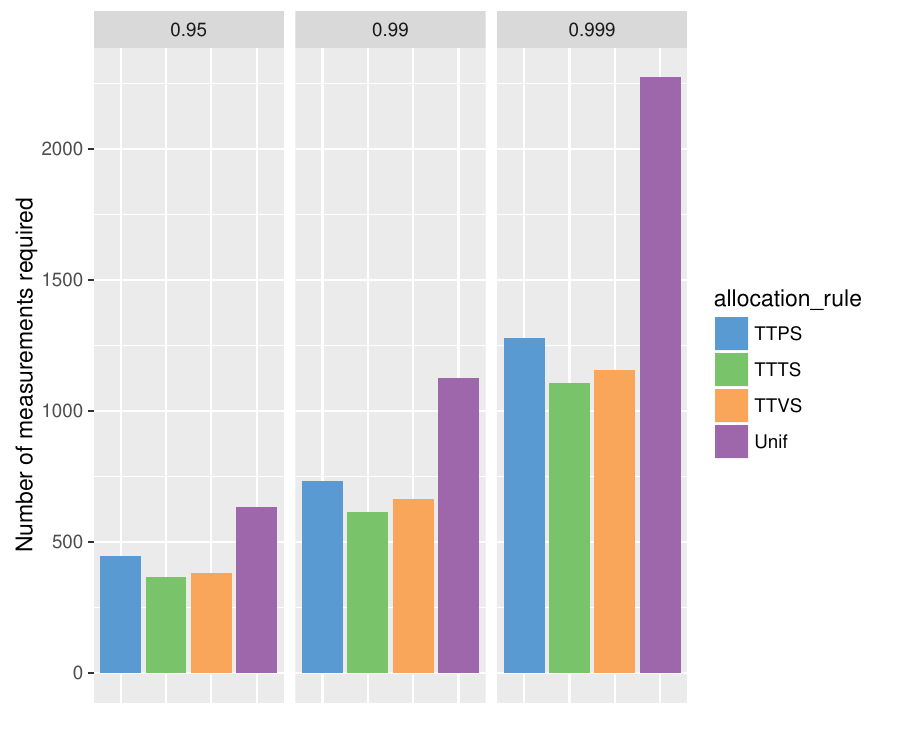}
		\caption{Comparison with uniform allocation.}
		\label{fig:sub2}
	\end{subfigure}
	\caption{Number of measurements required to reach given confidence level.}
	\label{fig: measurements required}
\end{figure}

Figure \ref{fig: distribution at termination} provides insight into how the proposed algorithms differ from the uniform allocation. It displays the distribution of measurements and posterior beliefs at the first time when a confidence level of .999 is reached. Again, all results are averaged across 500 trials. Panel (a) displays the average number of measurements collected from each design. It is striking that although TTTS, TTPS, and TTVS seem quite different, they all settle on essentially the same distribution of measurement effort. Because $\beta= 1/2$, roughly one half of the measurements are collected from $I^*=5$. Moreover, fewer measurements are collected from designs that are farther from optimal, and most of the remaining half of measurement effort is allocated to design 4. Notice that using the same number of  noisy samples it is much more difficult certify that $\theta^*_4 < \theta^*_5$ than that $\theta^*_{1}< \theta^*_5$, both because $\theta^*_4$ is closer to $\theta^*_5$, and because observations from a Bernoulli distribution with parameter .4 have higher variance than under a Bernoulli distribution with parameter .1.

Panel (b) investigates the posterior probability $\alpha_{n,i}$ assigned to the event that design $i$ is optimal. To make the insights more transparent, these are plotted on log-scale, where the value $\log(1/\alpha_{n,i})$ can roughly be interpreted as the magnitude of evidence that alternative $i$ is not optimal. By using an \emph{equal allocation} of measurement effort across the designs,  the uniform sampling rule gathers an enormous amount of evidence to rule out design 1, but an order of magnitude less evidence to rule out design 4. Instead of allocating measurement effort equally across the alternatives, TTTS, TTPS, and TTVS appear to exactly adjust measurement effort to gather \emph{equal evidence} that each of the first four designs is not optimal.

Intuitively, in the long run each of the proposed algorithms will allocate measurement effort to design 5--the true best design--and to whichever other designs could most plausibly be optimal. If too much measurement effort has been allocated to a particular design, then the posterior will indicate that it is clearly suboptimal, and effort will be allocated elsewhere until a similar amount of evidence has been gathered about other designs. In this way, measurement effort is automatically adjusted to the appropriate level.


%

\begin{figure}[h!]
	\centering
	\begin{subfigure}{.5\textwidth}
		\includegraphics[width=1\linewidth]{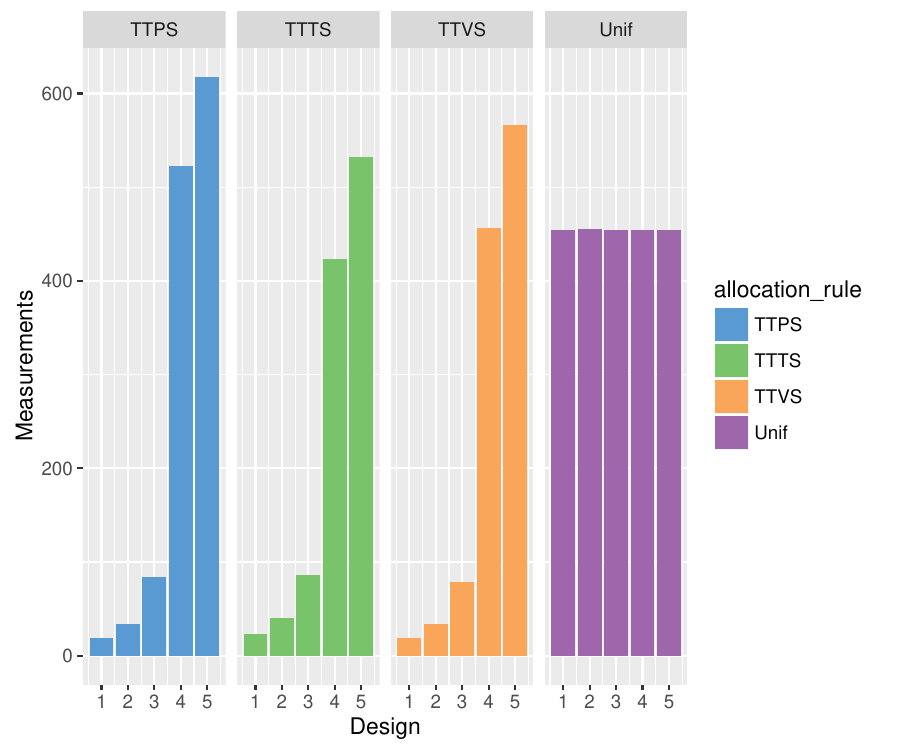}
		\caption{Measurements collected of each design.}
		\label{fig:sub1}
	\end{subfigure}%
	\begin{subfigure}{.5\textwidth}
		\includegraphics[width=1\linewidth]{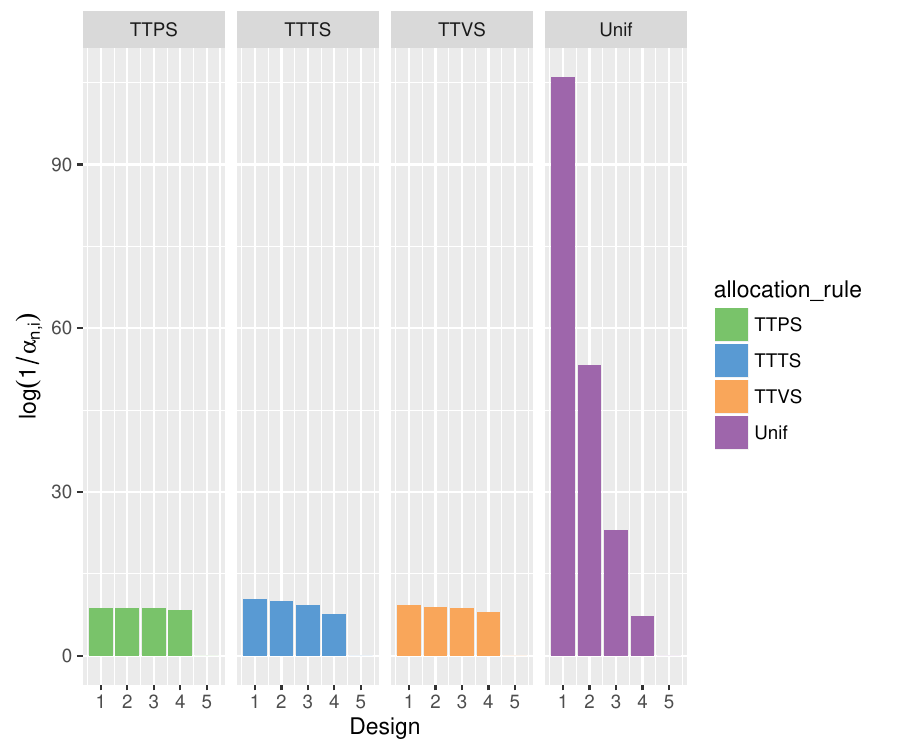}
		\caption{Value of $\log(1/\alpha_{n,i})$ for each design $i$.}
		\label{fig:sub2}
	\end{subfigure}
	\caption{Distribution of measurements and posterior beliefs at termination.}
	\label{fig: distribution at termination}
\end{figure}

\section{Main Theoretical Results}
Our main theoretical results concern the frequentist consistency and rate of convergence of the posterior distribution.
Recall that
\[
\Pi_{n}(\Theta_{I^*}^{c}) = \sum_{i\neq I^*} \alpha_{n,i}
\]
captures the posterior mass assigned to the event that an action other than $I^*$ is optimal. One hopes that $
\Pi_{n}(\Theta_{I^*}^{c}) \to 0 $ as the number of observations $n$ tends to infinity, so that the posterior distribution converges on the truth. We will show that under the TTTS, TTPS, and TTVS allocation rules, $ \Pi_{n}(\Theta_{I^*}^{c})$ converges to zero an exponential rate
 and that the exponent governing the rate of convergence is nearly the best possible.

To facilitate theoretical analysis, we will make three additional boundedness assumptions, which are assumed throughout all formal proofs. This rules out some cases of interest, such the use of multivariate Gaussian prior. However, we otherwise allow for quite general correlated priors, expressed in terms of a density over a compact set. This stands in contrast, for example, to previous analyses of Thompson sampling, which tpyically rely heavily on the use of independent conjugate priors. Assumption \ref{assumption: main} is used only in establishing certain asymptotic results concerning the rate of posterior concentration. Analogous results are easily established for certain unbounded conjugate priors\footnote{See for example \cite{qin2017improving}, which is a follow up to the current paper.}, but the author still has not identified the right technical conditions that generalize these results.   
\begin{assumption}\label{assumption: main}The parameter space is a bounded open hyper-rectangle $\Theta = (\underline{\theta}, \overline{\theta})^k$, the prior density is uniformly bounded with
	\[
	0< \inf_{\thetabf \in \Theta} \pi_{1}(\thetabf) < \sup_{\thetabf \in \Theta} \pi_{1}(\thetabf)< \infty,
	\]
	and the log-partition function has bounded first derivative with
	$
	\sup_{\theta \in [\underline{\theta}, \overline{\theta}]} | A'(\theta)| < \infty. $
\end{assumption}


The paper's main results, as stated in the next theorem, characterize the rate of posterior convergence under the proposed algorithms, formalize a sense in which this is the fastest possible rate, and bound the impact of the tuning parameter $\beta\in (0,1)$. The statement depends on distribution-dependent constants $\Gamma^{*}_{\beta}>0$ and $\Gamma^*>0$ that are presented here but will be more explicitly characterized in Section \ref{sec: analysis}.

The first part of the theorem shows that there is an exponent $\Gamma^*>0$ such that $\Pi_{n}(\Theta_{I^*}^{c})$ cannot converge to zero at a rate faster than $e^{-n\Gamma^*}$ under any allocation rule, and shows that TTPS, TTVS and TTTS attain this optimal rate of convergence when the tuning parameter $\beta$ is set optimally. This optimal exponent is shown to equal 
\[
\Gamma^* = \max_{\psibf} \min_{\thetabf \in \Theta_{I^*}^{c}} \sum_{i=1}^{k}\psi_i d(\theta^*_i || \theta_i), 
\]
where $d(\theta_i || \theta'_i )$ denotes the Kullback Leibler divergence between the observations distributions $p(y| \theta_i)$ and $p(y| \theta'_i)$. This can be viewed as the value of a game between two players. An experimenter first chooses a probability distribution over arms $\psibf$ determining the frequency with which arms are measured. An adversary then chooses the worst case configuration of arm means, selecting an alternative $\thetabf=(\theta_1,\ldots, \theta_k)$ that is hard to distinguish from $\thetabf^*$ under the measurement allocation $\psibf$, but under which the arm $I^*$ is no longer optimal. 

The remainder of the theorem investigates the role of the tuning parameter $\beta \in (0,1)$. Part 2 shows that there is an exponent $\Gamma^*_{\beta}>0$ such that $\Pi_{n}(\Theta_{I^*}^{c})\to 0$ at rate $e^{-n\Gamma^*_{\beta}}$ under TTPS, TTVS, or TTTS with parameter $\beta$, and this is shown to be optimal among a restricted class of allocation rules. In particular,
 we observe that  $\beta$ controls the fraction of measurement effort allocated to the true best design $I^*$, in the sense that $\overline{\psi}_{n,I^*} \to  \beta$ as $n \to \infty$ under each of the proposed algorithms. These algorithms attain the error exponent
 \[
 \Gamma^*_{\beta} = \max_{\psibf: \psi_{I^*}=\beta} \min_{\thetabf \in \Theta_{I^*}^{c}} \sum_{i=1}^{k}\psi_i d(\theta^*_i || \theta_i), 
 \]
which is analogous to $\Gamma^*$ except that the experimenter is constrained to measure the true best arm with fraction $\beta$ of measurement effort. A lower bound shows this exponent is optimal among a constrained class: precisely, on any sample path on which an adaptive algorithm allocates a faction $\beta$ of overall effort to measuring $I^*$, the posterior cannot converge at rate faster than $e^{-n\Gamma^*_{\beta}}$. In this sense, while a tuning parameter controls the long-run measurement effort allocated to the true best design, TTPS, TTVS, and TTTS all automatically adjust how remaining the measurement effort is allocated among the $k-1$ suboptimal designs in an asymptotically optimal manner. 

The final part of the theorem shows that the constrained exponent $\Gamma_{\beta}^*$ is close to the largest possible exponent $\Gamma^*$ whenever $\beta$ is close to the optimal value. The choice of $\beta=1/2$ is particularly robust: $\Gamma^*_{1/2}$ is never more than a factor of 2 away from the optimal exponent.

\begin{Theorem}\label{thm: main thm}
	There exist constants $\{\Gamma^*_{\beta} >0 : \beta \in (0,1) \}$ such that $\Gamma^* = \max_{\beta} \Gamma^*_{\beta}$ exists, $\beta^* = \arg\max_{\beta} \Gamma^*_{\beta}$ is unique, and the following properties are satisfied with probability 1:
\end{Theorem}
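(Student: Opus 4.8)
The plan is to reduce all of the stated claims to a single variational formula for the exponents, and then to prove a matching achievability result for the three algorithms together with an information-theoretic impossibility result. Writing $d(\theta,\theta')$ for the Kullback--Leibler divergence between $p(\cdot\mid\theta)$ and $p(\cdot\mid\theta')$, the candidate exponent for ruling out a fixed suboptimal design $i\neq I^*$ under a limiting allocation in which $I^*$ receives fraction $\psi_{I^*}$ of effort and $i$ receives fraction $\psi_i$ is the generalized Chernoff quantity
\[
C_i(\psi_{I^*},\psi_i)=\inf_{x\in(\underline{\theta},\overline{\theta})}\big[\psi_{I^*}\,d(\theta^*_{I^*},x)+\psi_i\,d(\theta^*_i,x)\big].
\]
I would then \emph{define}
\[
\Gamma^*_\beta=\max_{\psi:\,\psi_{I^*}=\beta}\ \min_{i\neq I^*}C_i(\psi_{I^*},\psi_i),\qquad \Gamma^*=\max_{\beta\in(0,1)}\Gamma^*_\beta.
\]
Since each $C_i$ is an infimum of functions affine in $\psi$, it is concave and positively homogeneous of degree one, so $\psi\mapsto\min_i C_i$ is concave; existence of the maximizers follows by compactness, $\beta\mapsto\Gamma^*_\beta$ is concave (so $\beta^*$ is unique once strict concavity from the exponential-family structure is invoked), and the optimal interior allocation is pinned down by the equalization condition $C_i=\Gamma^*_\beta$ for every suboptimal $i$ --- exactly the ``equal evidence'' pattern seen in the numerical experiment.

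For achievability I would separate a \emph{posterior rate lemma} from an \emph{allocation-convergence lemma}. The rate lemma should say that whenever the empirical allocation converges, $\overline{\psi}_n\to\psi$, the posterior obeys
\[
-\tfrac1n\log\Pi_n(\Theta_{I^*}^c)\ \longrightarrow\ \min_{i\neq I^*}C_i(\psi_{I^*},\psi_i).
\]
I would prove this by a Laplace-type analysis of the posterior density \eqref{eq: posterior density}: first establish frequentist consistency, showing the posterior concentrates on a shrinking neighborhood of $\thetabf^*$, using Assumption~\ref{assumption: main} to bound the prior away from $0$ and $\infty$ and to control the log-partition function; then estimate $\alpha_{n,i}=\Pi_n(\Theta_i)$ by the exponential rate at which the likelihood ratio between $\thetabf^*$ and the most favorable alternative in $\Theta_i$ decays, which under the realized allocation is precisely $C_i$. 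Because $\Pi_n(\Theta_{I^*}^c)=\sum_{i\neq I^*}\alpha_{n,i}$, the slowest term dominates and yields the minimum. The combination of adaptive sampling with a non-conjugate, possibly correlated prior is the source of difficulty here, forcing uniform-over-$\Theta$ likelihood-ratio estimates in place of closed-form posterior manipulations.

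The heart of the argument --- and the step I expect to be the main obstacle --- is showing that TTPS, TTVS, and TTTS each drive $\overline{\psi}_n$ to the allocation $\psi^*_\beta$ solving the max--min problem at level $\beta$. Two facts are needed. First, $\overline{\psi}_{n,I^*}\to\beta$: each rule plays its estimated-best design with probability $\beta$, and by consistency the estimate equals $I^*$ for all large $n$. Second, the remaining fraction $1-\beta$ must be split so as to equalize the $C_i$. The mechanism is self-correcting feedback: if a suboptimal design $i$ has been under-sampled relative to $\psi^*_\beta$, then by the rate lemma its posterior probability $\alpha_{n,i}$ decays too slowly, and since the conditional sampling probabilities of each rule track the normalized $\alpha_{n,j}$ over $j\neq I^*$, the rule samples $i$ more often, pushing $\overline{\psi}_n$ back toward $\psi^*_\beta$. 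I would formalize this as a Lyapunov / stochastic-approximation argument identifying the equalizing allocation as the unique stable fixed point of the induced dynamics and ruling out oscillation. Handling the three rules in one stroke first requires the claimed asymptotic equivalence of $V_{n,i}$ and $\alpha_{n,i}$ (so TTVS reduces to TTPS) and the stated simplification of the TTTS sampling probabilities as $\alpha_{n,I^*}\to1$.

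Finally I would establish the lower bounds and the robustness claims. For impossibility, a change-of-measure / large-deviations argument shows that under any rule with $\liminf_n\overline{\psi}_{n,I^*}\ge\beta$ there is some alternative $i$ whose data cannot separate $\thetabf^*$ from a perturbation that makes $i$ optimal faster than rate $C_i$, so $\Pi_n(\Theta_{I^*}^c)$ cannot decay faster than $\min_i C_i\le\Gamma^*_\beta$; maximizing over $\beta$ gives the unconditional bound $\Gamma^*$. The last part is then purely analytic: concavity and the optimality of $\beta^*$ give $\Gamma^*_\beta\to\Gamma^*$ as $\beta\to\beta^*$, and the factor-of-two guarantee follows from the homogeneity and monotonicity of $C_i$ --- keeping the optimal suboptimal-proportions $q_i$ from $\psi^*$ but reallocating half the effort to $I^*$ yields a feasible allocation at $\beta=\tfrac12$ with $C_i(\tfrac12,\tfrac12 q_i)=\tfrac12 C_i(1,q_i)\ge\tfrac12 C_i(\beta^*,\psi^*_i)=\tfrac12\Gamma^*$, so $\Gamma^*_{1/2}\ge\tfrac12\Gamma^*$.
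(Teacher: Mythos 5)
Your proposal is correct in substance and follows the same architecture as the paper: the same Chernoff quantities $C_i$, the same max--min definitions of $\Gamma^*_{\beta}$ and $\Gamma^*$, the equal-evidence characterization of the optimal constrained allocation, the reduction of TTVS to TTPS via $V_{n,i}\doteq\alpha_{n,i}$, and the homogeneity-plus-monotonicity scaling for part 3 (your $\beta=\tfrac12$ computation is exactly Lemma~\ref{lem: relating the constrained exponent to the unconstrained} specialized to the ratio $r=2$, and it generalizes verbatim to arbitrary $\beta$). Two components differ in technique, and the comparison is instructive. First, for the convergence $\overline{\psibf}_n\to\psibf^{\beta}$ you reach for a Lyapunov/stochastic-approximation formalization of the self-correcting feedback; the paper formalizes the identical mechanism more elementarily (Lemmas~\ref{lem: sufficient condition for optimality}, \ref{lem: Over-allocation implies negligible probability}, \ref{lem: implications of infinite sampling}): if $\overline{\psi}_{n,j}\geq\psi^{\beta}_j+\delta$ then $\alpha_{n,j}/\max_{i\neq I^*}\alpha_{n,i}$ is exponentially small, so the cumulative effort spent on over-allocated designs is summable, and this alone forces convergence --- no ODE limit, stability analysis, or fixed-point argument is needed. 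Second, for the impossibility direction you invoke a change-of-measure argument, partly because your rate lemma is stated only under the hypothesis $\overline{\psibf}_n\to\psibf$, which an arbitrary adaptive rule need not satisfy. The paper sidesteps both issues by proving the two-sided logarithmic equivalence of Proposition~\ref{prop: posterior concentration} with the $n$-dependent exponent $\inf_{\thetabf\in\tilde{\Theta}}D_{\overline{\psibf}_n}(\thetabf^*\,||\,\thetabf)$, valid for every adaptive rule; the upper bound on the achievable exponent then follows from the pointwise-in-$n$ observation that $\min_{i\neq I^*}C_i(\overline{\psi}_{n,I^*},\overline{\psi}_{n,i})\leq\Gamma^*$ (and $\leq\Gamma^*_{\beta}+o(1)$ when $\overline{\psi}_{n,I^*}\to\beta$), with no change of measure at all. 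Your patch can be made to work --- it amounts to re-proving the anti-concentration half of Proposition~\ref{prop: posterior concentration} --- but it is heavier than necessary. One loose end on the preamble claims: concavity of $\beta\mapsto\Gamma^*_{\beta}$ does not by itself yield uniqueness of $\beta^*$, and the strict concavity you invoke from ``the exponential-family structure'' would need an actual argument rather than an assertion.
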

\begin{enumerate}
	\item Under TTTS, TTPS, or TTVS with parameter $\beta^*$,
	\[
	\lim_{n\to \infty} \,-\frac{1}{n} \log \Pi_{n}(\Theta_{I^*}^{c})
	= \Gamma^*.
	\]
	Under any adaptive allocation rule,
	\[
	\underset{n\to \infty}{\lim\sup} \, - \frac{1}{n} \log \Pi_{n}(\Theta_{I^*}^{c})
	\leq \Gamma^*. \]
	\item Under TTTS, TTPS, or TTVS with parameter $\beta \in (0,1)$,
	\[
	\lim_{n\to \infty} \,-\frac{1}{n} \log \Pi_{n}(\Theta_{I^*}^{c})
	= \Gamma^*_{\beta} \quad \text{and} \quad \lim_{n\to\infty} \overline{\psibf}_{n, I^*} = \beta. \]
	Under any adaptive allocation rule,
	\[
	\underset{n\to \infty}{\lim\sup} \, - \frac{1}{n} \log \Pi_{n}(\Theta_{I^*}^{c})
	\leq \Gamma^*_{\beta} \quad \text{on any sample path with} \quad
	\lim_{n\to \infty} \overline{\psibf}_{n, I^*} = \beta. \]
	
	\item $\Gamma^*\leq 2 \Gamma^*_{\frac{1}{2}} $ and
	\[
	\frac{\Gamma^*}{\Gamma^*_{\beta}}  \leq \max\left\{\frac{\beta^*}{\beta}, \frac{1-\beta^*}{1-\beta} \right\}.
	\]
\end{enumerate}
This theorem is established in a sequence of results in Section \ref{sec: analysis}. The lower bounds in parts 1 and 2 are given respectively in Propositions \ref{prop: optimal allocation} and \ref{prop: optimal constrained allocation}. Proposition \ref{prop: TS converges to optimal allocation} shows the top-two rules attain these optimal exponents. Part 3 is stated as Lemma \ref{lem: relating the constrained exponent to the unconstrained} in Section \ref{sec: analysis}.

\subsection{An upper bound on the error exponent}
Before proceeding, we will state an upper bound on the error exponent when $\beta=1/2$ that is closely related to complexity terms that have appeared in the literature on best--arm identification (e.g. \cite{audibert2010best}). This bound depends on the gaps between the means of the different observation distributions.

We say that a real valued random variable $X$ is $\sigma$--sub--Gaussian if
$ \E \left[\exp\{\lambda (X- \E[X])\}\right] \leq  \exp\left\{ \frac{\lambda^2 \sigma^2}{2 }  \right\}$
so that the moment generating function of $X-\E[X]$ is dominated by that of a zero mean Gaussian random variable with variance $\sigma^2$. Gaussian random variables are sub-Gaussian, as are uniformly bounded random variables. The next result applies to both Bernoulli and Gaussian distributions, as each can be parameterized with sufficient statistic $T(y)=y$.
\begin{proposition}\label{prop: subgaussian bound}
	Suppose the exponential family distribution is parameterized with $T(y)=y$ and that each $\theta \in [\underline{\theta}, \overline{\theta}]$, if $Y\sim p(y|\theta)$, then $Y$ is sub-Gaussian with parameter $\sigma.$
	Then
	\[
	\Gamma^*_{\frac{1}{2}} \geq \frac{1}{16 \sigma^2 \sum_{i\neq I^*} \Delta_i^{-2}}
	\]
	where for each $i\in \{1,...,k\}$,
	\[
	\Delta_i = \E[ Y_{n,I^*}] - \E [ Y_{n,i}]
	\]
	is the difference between the mean under $\theta_{I^*}^*$ and the mean under $\theta_i^*$.
\end{proposition}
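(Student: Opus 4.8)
The plan is to use the variational (max--min) characterization of $\Gamma^*_\beta$ that will be established in Section \ref{sec: analysis}, where $\Gamma^*_\beta = \max_{\psibf}\min_{i\neq I^*} C_i(\beta,\psi_i)$, the maximum being over limiting allocation vectors $\psibf$ with $\psi_{I^*}=\beta$ and $\sum_j\psi_j=1$, and where $C_i(\beta,\psi_i)=\min_{z}\left[\beta\, d(\theta^*_{I^*},z)+\psi_i\, d(\theta^*_i,z)\right]$ is the rate at which the posterior rules out design $i$ when $I^*$ and $i$ receive long-run effort $\beta$ and $\psi_i$. Here $d(\theta,z)$ is the Kullback--Leibler divergence from $p(\cdot\mid\theta)$ to $p(\cdot\mid z)$, and the inner minimization is over the common crossing value $z$ at which $i$ and $I^*$ are tied. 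Since $\Gamma^*_{1/2}$ is itself a maximum over allocations, it suffices to exhibit one feasible allocation and lower bound the resulting $\min_i C_i$. I would take $\psi_{I^*}=1/2$ and split the remaining mass in proportion to $\Delta_i^{-2}$, i.e. $\psi_i=\tfrac12\cdot\frac{\Delta_i^{-2}}{\sum_{j\neq I^*}\Delta_j^{-2}}$ for $i\neq I^*$.

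The only place the sub-Gaussian hypothesis enters is a quadratic \emph{lower} bound on each divergence in terms of the squared mean gap,
\[
d(\theta^*_j,z)\;\geq\;\frac{\left(\mu(\theta^*_j)-\mu(z)\right)^2}{2\sigma^2}.
\]
Because sub-Gaussianity is an \emph{upper} bound on a moment generating function while I need a \emph{lower} bound on a divergence, the natural tool is the Gibbs (Donsker--Varadhan) inequality $d(\theta^*_j,z)\geq \lambda\,\E_{\theta^*_j}[Y]-\log\E_z[e^{\lambda Y}]$. Since every $p(\cdot\mid z)$ with $z\in[\underline{\theta},\overline{\theta}]$ is $\sigma$-sub-Gaussian with mean $\mu(z)$, the log-moment term is at most $\lambda\mu(z)+\lambda^2\sigma^2/2$; using $T(y)=y$ so that $\E_{\theta^*_j}[Y]=\mu(\theta^*_j)$, and optimizing $\lambda$, gives the displayed bound.

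Substituting this into $C_i$ and then \emph{relaxing} the inner minimization from the crossing values $z$ to all real means $w$ only decreases the value, so $C_i(\tfrac12,\psi_i)\geq \tfrac{1}{2\sigma^2}\min_w\left[\tfrac12(\mu(\theta^*_{I^*})-w)^2+\psi_i(\mu(\theta^*_i)-w)^2\right]$. The quadratic minimization evaluates to $\Delta_i^2/(1/\beta+1/\psi_i)$, giving $C_i(\tfrac12,\psi_i)\geq \frac{\Delta_i^2}{2\sigma^2(2+1/\psi_i)}$. Since $\psi_i\leq 1/2$ forces $2\leq 1/\psi_i$, the denominator is at most $4\sigma^2/\psi_i$, so $C_i\geq \Delta_i^2\psi_i/(4\sigma^2)$; inserting $\psi_i=\tfrac12\Delta_i^{-2}/\sum_{j\neq I^*}\Delta_j^{-2}$ makes every $C_i$ at least $\frac{1}{8\sigma^2\sum_{j\neq I^*}\Delta_j^{-2}}$, which lower bounds $\min_i C_i$ and hence $\Gamma^*_{1/2}$. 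This in particular exceeds the stated $\frac{1}{16\sigma^2\sum_{i\neq I^*}\Delta_i^{-2}}$; the gap reflects only that the argument above is tighter than necessary (a convenient cruder accounting of the log-MGF bound reproduces the constant $16$).

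The step I expect to require the most care is the divergence lower bound: one must apply the Gibbs inequality in the correct KL direction (truth-to-alternative) and, crucially, invoke sub-Gaussianity of \emph{every} member of the family over $[\underline{\theta},\overline{\theta}]$ rather than only of the realized designs, since the crossing value $z$ is a counterfactual parameter lying strictly between $\theta^*_i$ and $\theta^*_{I^*}$. The remaining steps -- invoking the max--min characterization, relaxing the crossing constraint, the quadratic minimization, and the choice $\psi_i\propto\Delta_i^{-2}$ -- are routine.
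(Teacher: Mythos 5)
Your proposal is correct and follows essentially the same route as the paper's proof: the max--min characterization $\Gamma^*_{1/2}=\max_{\psi:\psi_{I^*}=1/2}\min_{i\neq I^*}C_i(1/2,\psi_i)$, the sub-Gaussian KL lower bound $d(\theta\,||\,\theta')\geq\left(\mu(\theta)-\mu(\theta')\right)^2/(2\sigma^2)$ derived from the Gibbs/Donsker--Varadhan variational formula (the paper's Fact 2 and Corollary 3, which likewise require sub-Gaussianity of the counterfactual crossing parameter $\overline{\theta}\in[\theta^*_i,\theta^*_{I^*}]$), and the same allocation $\psi_{I^*}=1/2$, $\psi_i\propto\Delta_i^{-2}$. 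The only difference is in handling the inner minimization: the paper discards the term $\tfrac12 d(\theta^*_{I^*}||\overline{\theta})$ and uses that the minimizing mean lies at least $\Delta_i/2$ above $\mu_i$ when $\psi_i\leq 1/2$, whereas you keep both terms and solve the quadratic surrogate exactly, which is why you obtain the slightly sharper constant $8$ in place of the stated $16$ -- a harmless strengthening, not a discrepancy.
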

This shows that
$\Pi_{n}(\Theta_{I^*}^{c})$ decays at asymptotic rate faster than $\exp\{-\frac{ n \min_{i} \Delta_i^2}{ 16 k \sigma^2}\},$ so convergence is rapid when there is a large gap between the means of different designs. In fact, Proposition \ref{prop: subgaussian bound} replaces the dependence on $(1/k)$ times the smallest gap $\Delta_i$ with a dependence on $
\left( \sum_{i=2}^{k} \Delta_i^{-2} \right)^{-1}
$, which captures the average inverse gap.
This rate is attained only by an intelligent adaptive algorithm which allocates more measurement effort to designs that are nearly optimal and less to designs that are clearly suboptimal. In fact, the next result shows that the asymptotic performance of uniform allocation rule  depends only on the smallest gap $\min_{i\neq I^*} \Delta_i^2,$ and therefore even if some designs could be quickly ruled out, the algorithm can't leverage this to attain a faster rate of convergence.
\begin{proposition}\label{prop: convergence rate of uniform allocation}
	If $Y_{n,I^*} \sim \mathcal{N}(0,\sigma^2)$ and $Y_{n,i}\sim \mathcal{N}(- \Delta_i, \sigma^2)$ for each $i\neq I^*$,
	\[
	\lim_{n\to \infty} \,-\frac{1}{n} \log \Pi_{n}(\Theta_{I^*}^{c}) =  \frac{-n \min_{i} \Delta_i^2}{4k\sigma^2} 
	\]
	under a uniform allocation rule which sets $\psi_{n,i} = 1/k$ for each $i$ and $n$.
\end{proposition}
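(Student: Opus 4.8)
The plan is to exploit the boundedness of the prior to reduce the analysis to an exact Gaussian computation, and then to estimate the posterior probability of each wrong hypothesis by a Laplace/tail argument. First I would note that, by Assumption \ref{assumption: main}, the prior density satisfies $c_1\le\pi_1\le c_2$ for constants $0<c_1\le c_2<\infty$, so for every measurable $E\subseteq\Theta$ the posterior $\Pi_n(E)$ lies within the factor $[c_1/c_2,\,c_2/c_1]$ of the flat--prior posterior $\tilde\Pi_n(E)=\int_E L_{n-1}/\int_\Theta L_{n-1}$. Since we only care about $-\tfrac1n\log\Pi_n(\Theta_{I^*}^c)$, these constants contribute an $O(1/n)$ term and are irrelevant to the exponent; in particular the correlation structure of the prior plays no role. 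Writing everything in mean coordinates $\mu_j=\mu(\theta_j)$ (a linear change of variables for $T(y)=y$ Gaussians, since $A''\equiv\sigma^2$), the flat--prior posterior is the truncation to $\Theta$ of a product of independent Gaussians $\mu_j\sim\mathcal{N}(\bar Y_{n,j},\,\sigma^2/m_{n,j})$, where $m_{n,j}$ is the number of times design $j$ has been measured and $\bar Y_{n,j}$ the corresponding sample mean. By the strong law, $m_{n,j}/n\to 1/k$ and $\bar Y_{n,j}\to\mu_j^*=-\Delta_j$ almost surely, with $\Delta_{I^*}=0$.

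With this reduction in hand, I would write $\Pi_n(\Theta_{I^*}^c)=\sum_{i\neq I^*}\alpha_{n,i}$ and argue that the slowest--decaying term governs the exponent. For the upper bound on each $\alpha_{n,i}$ (lower bound on its rate) I use $\alpha_{n,i}\le\Pi_n(\mu_i\ge\mu_{I^*})$. Up to the truncation, $\mu_i-\mu_{I^*}$ is posterior--Gaussian with mean $\bar Y_{n,i}-\bar Y_{n,I^*}\to-\Delta_i$ and variance $\sigma^2(1/m_{n,i}+1/m_{n,I^*})\approx 2k\sigma^2/n$, so $\Pi_n(\mu_i\ge\mu_{I^*})=\Phi\!\big(-(\bar Y_{n,I^*}-\bar Y_{n,i})/\sqrt{\sigma^2(1/m_{n,i}+1/m_{n,I^*})}\big)$. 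Using $-\log\Phi(-x)=\tfrac12 x^2+O(\log x)$ and dividing by $n$, the $O(\log n)$ prefactor and the truncation (which changes the integrals by at most constant factors, since the relevant mass sits at the interior point $\mu_i=\mu_{I^*}=-\Delta_i/2$) both vanish in the limit, giving
\[
\liminf_{n\to\infty}\Big(-\tfrac1n\log\alpha_{n,i}\Big)\ \ge\ \frac{\Delta_i^2}{4k\sigma^2}\qquad\text{a.s.}
\]
Equivalently this is the two--point minimization $\min_c\tfrac{1}{2k\sigma^2}\big[(c+\Delta_i)^2+c^2\big]$, solved at $c=-\Delta_i/2$.

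For the matching lower bound it suffices to treat the closest competitor $i^\dagger=\arg\min_{i}\Delta_i$, for which I would exhibit a favorable event carrying the full rate: fix small $\epsilon>0$ and consider $\{\mu_{i^\dagger}\in(-\tfrac{\Delta_{i^\dagger}}2,-\tfrac{\Delta_{i^\dagger}}2+\epsilon),\ \mu_{I^*}\in(-\tfrac{\Delta_{i^\dagger}}2-\epsilon,-\tfrac{\Delta_{i^\dagger}}2)\}$ intersected with $\{\mu_j<-\tfrac{\Delta_{i^\dagger}}2\ \forall j\neq i^\dagger,I^*\}$. On this event $\mu_{i^\dagger}$ is the maximum, so it lies in $\Theta_{i^\dagger}$; the first two coordinates cost $e^{-n\Delta_{i^\dagger}^2/(4k\sigma^2)(1+o(1))}$ while the remaining coordinates sit near their sample means $-\Delta_j\le-\Delta_{i^\dagger}<-\Delta_{i^\dagger}/2$ and contribute only a constant factor (here I use that $i^\dagger$ has the smallest gap, so $\Delta_j>\Delta_{i^\dagger}/2$ and no other design interferes). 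This yields $\limsup_n(-\tfrac1n\log\alpha_{n,i^\dagger})\le\Delta_{i^\dagger}^2/(4k\sigma^2)$. Combining with the previous display and the sandwich $\alpha_{n,i^\dagger}\le\Pi_n(\Theta_{I^*}^c)\le (k-1)\max_i\alpha_{n,i}$ gives
\[
\lim_{n\to\infty}-\tfrac1n\log\Pi_n(\Theta_{I^*}^c)=\min_{i\neq I^*}\frac{\Delta_i^2}{4k\sigma^2}=\frac{\min_i\Delta_i^2}{4k\sigma^2}\qquad\text{a.s.}
\]

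The routine parts are the strong law for $m_{n,j}$ and $\bar Y_{n,j}$ together with the Gaussian tail asymptotics. The hard part, which I expect to be the main obstacle, is making the Laplace step uniform enough to deliver an almost--sure limit: one must control the $o(1)$ in the exponent simultaneously over the random centering $\bar Y_{n,j}$ and the random counts $m_{n,j}$, and verify that truncating the product Gaussian to the rectangle $\Theta$ (and the constant--factor prior) never affects the exponent. If the analysis of Section \ref{sec: analysis} already supplies a general formula for the exponent of a fixed allocation converging to proportions $\psibf$, the cleanest route is simply to instantiate it at $\psi_j\equiv 1/k$ and evaluate the resulting Gaussian rate function, recovering the same two--point minimization and the constant $1/(4k\sigma^2)$.
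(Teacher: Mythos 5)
Your proposal is correct, but it takes a genuinely different route from the paper. The paper's own proof is a three-line corollary of its general machinery: since the uniform rule is a fixed allocation, Proposition \ref{prop: optimal allocation} (which rests on the posterior large-deviations result, Proposition \ref{prop: posterior concentration}) immediately gives $\Pi_n(\Theta_{I^*}^c) \doteq \exp\{-n \min_{i\neq I^*} C_i(k^{-1},k^{-1})\}$, and then Lemma \ref{lem: solves minimum over theta bar} together with the Gaussian KL formula $d(\theta\,||\,\theta') = (\theta-\theta')^2/(2\sigma^2)$ evaluates $C_i(k^{-1},k^{-1}) = \Delta_i^2/(4k\sigma^2)$ --- exactly the two-point minimization $\min_x\, k^{-1}d(\theta^*_{I^*}||x) + k^{-1}d(\theta^*_i||x)$ that you solved by hand at the midpoint $x = -\Delta_i/2$. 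You instead rebuild the result from scratch: bounded prior reduces to a flat prior, Gaussian conjugacy makes the flat-prior posterior an exact truncated product of Gaussians $\mathcal{N}(\bar Y_{n,j},\sigma^2/m_{n,j})$, and then explicit Gaussian tail asymptotics give the upper bound on each $\alpha_{n,i}$ while a constructed favorable event near the saddle point gives the matching lower bound for the closest competitor. Notably, the ``hard part'' you flag --- uniformity of the Laplace step over the random counts and centerings --- is precisely the content of the paper's Proposition \ref{prop: posterior concentration}; in your route it largely dissolves because the posterior is exactly Gaussian (no approximation is needed, only the SLLN for $m_{n,j}$ and $\bar Y_{n,j}$ and the truncation-normalization tending to one). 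What each approach buys: the paper's proof is short and reuses infrastructure needed elsewhere; yours is self-contained and elementary, makes the Gaussian structure transparent, and would stand even without the general large-deviations theory, though it is tied to the Gaussian case (which is all the proposition claims). Your closing remark --- that the cleanest route is to instantiate the general fixed-allocation formula at $\psi_j \equiv 1/k$ --- is exactly what the paper does; as a side note, the paper's own final display drops the factor $k^{-1}$ carried in the intermediate step, a typo, since the correct constant is the $\Delta_i^2/(4k\sigma^2)$ you obtained.
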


\subsection{Consistent Tuning of $\beta$}
Our previous results show that if the top-two sampling algorithms are applied with the optimal problem dependent tuning parameter $\beta^* = \arg\max_{\beta} \Gamma^*$, then these algorithms attain the optimal rate of posterior convergence $e^{-\Gamma^* n}$. Unfortunately $\beta^*$ is is typically unknown, and so we  also  investigate robustness to the choice of $\beta$, both in theory as in Theorem \ref{thm: main thm} above, and in simulation experiments presented in Section \ref{sec: further experiments}. Still, a natural question is whether this tuning parameter can be adjusted in a dynamic fashion to converge on $\beta^*$.  We begin the study of such extensions in this subsection. 

First, note that it is easy to extend the definition of each top to sampling algorithm so that they use an adaptive sequence of tuning parameters $(\beta_n : n\in \mathbb{N})$. For example, top-two probability sampling identifies $\hat{I}_{n}=\arg\max_{i} \alpha_{n,i}$ and $\hat{J}_n = \arg\max_{j\neq \hat{I}_n} \alpha_{n,j}$ and then chooses among these with respective probabilities $\psi_{n, \hat{I}_n} = \beta_n$ and  $\psi_{n, \hat{J}_n} = 1- \beta_n$. The next lemma confirms that, if applied with such a sequence of tuning parameters such that $\beta_{n} \to \beta^*$, the top-two sampling algorithms attain the optimal convergence rate $e^{-n \Gamma_{\beta^*}}$.

\begin{proposition}\label{prop: optimal rate with adaptive tuning}
	Suppose TTTS, TTVS, TTPS are applied with an adaptive sequence of tuning parameters $(\beta_{n} : n \in \mathbb{N})$ where for each $n$, $\beta_{n}$ is $\hist$ measurable. Then, with probability 1, on any sample path on which $\beta_n \to \beta^*$, 
	\[
	\Pi_{n}(\Theta_{I^*}^{c}) \doteq e^{-n \Gamma^*}.
	\]
\end{proposition}

The next lemma confirms that such consistent tuning is possible. The method for tuning $\beta$, presented Algorithm \ref{alg: TTTS tuned}, simply solves numerically for the optimal value of $\beta$ assuming that the true values of the parameters $(\theta_1, \ldots \theta_k)$ are given by their respective posterior means. 

Unfortunately, this tuning method is complex, spoiling some of elegance of the top-two sampling algorithms. A significant open question is whether simpler methods for adapting $\beta$ could be adopted. 

\begin{lemma}\label{lem : consistency of adaptive tuning}
	Under TTTS, TTPS, or TTVS with an adaptive sequence of tuning parameters $(\beta_{n} : n \in \mathbb{N})$ adjusted according to Algorithm \ref{alg: TTTS tuned}, $\beta_{n}\to \beta^*$ almost surely. Therefore $\Pi_{n}(\Theta_{I^*}^{c}) \doteq e^{-n \Gamma^*}$.
\end{lemma}

\begin{algorithm}
	\caption{Top-Two Sampling with  $\beta$-Tuning}\label{alg: TTTS tuned}
	\begin{algorithmic}[1]
		\State{Input $\kappa \geq 2$, $\hat{\beta} \in (0,1)$.}
		\State{Set counter $\ell=1$}
		\For{$n \in  \{1,2,3,4,\ldots \}  $}
		\State{Sample $I_{n} \sim {\rm TopTwo}(\pi_{n}, \hat{\beta}) $} 
		\State{Measure $I_n$ and observe $Y_{n, I_n}$}
		\State{Update play-count $S_{n+1, I_n} \leftarrow S_{n, I_n}+1 $}
		\State{Update posterior $\pi_{n+1}(\thetabf)  \propto \pi_{n}(\thetabf)p\left( Y_{n, I_n} \mid \theta_{I_n} \right)$ }
		\If{ $\min_{i}S_{n,i} \geq \kappa^{\ell} $} 
		\State{ $\ell \leftarrow \ell+1 $ } 
		\State{Compute Posterior mean $\hat{\thetabf}\leftarrow \intop_{\Theta} \thetabf  \pi_{n+1}(\thetabf)d\thetabf $ }
		\If{$\arg\max_{i} \hat{\theta}_{i}$ is unique}
		\State{Estimate best arm $\hat{I}\leftarrow \arg\max_{i} \hat{\theta}_{i} $ }
		\State{Estimate best allocation $\hat{\psibf} \leftarrow \arg\max_{\psibf} \min_{\thetabf \in \Theta_{\hat{I}}^{c}} D_{\psi}(\hat{\thetabf} || \thetabf)$ }
		\State{$\hat{\beta} \leftarrow \hat{\psi}_{\hat{I}} $}
		\EndIf 
		\EndIf
		\EndFor 
	\end{algorithmic}
\end{algorithm}

\section{Analysis}\label{sec: analysis}

\subsection{Asymptotic Notation.}
To simplify the presentation, it is helpful to introduce additional asymptotic notation. We say two sequences $a_n$ and $b_n$ taking values in $\mathbb{R}$ are \emph{logarithmically equivalent}, denoted by $a_n \doteq b_n$,  if $\frac{1}{n}\log(\frac{a_n}{b_n}) \to 0$  as $n\to \infty$. This notation means that $a_n$ and $b_n$ are \emph{equal up to first order in the exponent}. With this notation, Theorem \ref{thm: main thm} implies the top-two sampling rules with parameter $\beta$ attain the convergence rate $\Pi_{n}(\Theta^c_{I^*})\doteq e^{-n\Gamma^*_{\beta}}$. 
This is an equivalence relation, in the sense that if $a_n \doteq b_n$ and $b_n \doteq c_n$ then $a_n \doteq c_n$. Note that $
a_n + b_n \doteq \max\{a_n, b_n\}$,
so that the sequence with the largest exponent dominates. In addition for any positive constant $c$,
$c a_n \doteq a_n$, so that constant multiples of sequences are equal up to first order in the exponent. When applied to sequences of random variables, these relations are understood to apply almost surely. 

It is natural to wonder whether the proposed algorithms asymptotically minimize expressions like $\sum_{i\neq I^*} (\theta^*_{I^*}- \theta_i) \alpha_{n,i}$, which account for how far some designs are from optimal. We note in passing, that
\[ 
\sum_{i\neq I^*} c_i \alpha_{n,i} \doteq \max_{i\neq I^*} \alpha_{n,i}
\]
for any positive costs $c_{i}>0$, and so any such performance measures are equal to first order in the exponent. Similar observations have been used to justify the study of the probability of incorrect selection, rather than notions of the expected cost of an incorrect decision \citep{glynn2004large, audibert2010best}. 

\subsection{Posterior Consistency}
The next proposition provides a consistency and anti-consistency result for the posterior distribution. The first part says that if design $i$ receives infinite measurement effort, the marginal posterior distribution of its quality concentrates around the true value $\theta^*_i$. The second part says that when restricted to designs that are not measured infinitely often, the posterior does not concentrate around any value. The posterior converges to the truth as infinite evidence is collected, but nothing can be ruled out with certainty based on finite evidence.

\begin{proposition}\label{prop: posterior consistency}
	With probability 1, for any $i \in \{1,..,k\}$ if $\Psi_{n,i} \to \infty$, then, for all $\epsilon>0$
	\[
	\Pi_{n}(\{ \thetabf \in \Theta | \theta_i \notin (\theta^*_{i}-\epsilon, \theta^*_{i}+\epsilon ) \} ) \to 0.
	\]
	If $\mathcal{I} = \{i \in \{1,...,k\} | \lim_{n\to \infty} \Psi_{n,i}<\infty \}$ is nonempty, then
	\[
	\inf_{n\in\mathbb{N}} \, \Pi_{n}(\{ \thetabf \in \Theta | \theta_i \in (\theta_{i}', \theta_{i}'') \,\, \forall i \in \mathcal{I} \} ) > 0
	\]
	for any collections of open intervals $
	(\theta_i', \theta_i'') \subset (\underline{\theta}, \overline{\theta})$ ranging over $i \in \mathcal{I}$.
	\end{proposition}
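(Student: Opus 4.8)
The plan is to exploit two structural features of the model. First, since observations of design $i$ enter the likelihood only through $\theta_i$, the likelihood factorizes across coordinates: $L_{n-1}(\thetabf)=\prod_{j}\ell_{n,j}(\theta_j)$. Second, Assumption~\ref{assumption: main} bounds the prior density above and below by positive constants. Together these let me bound $\pi_1$ by constants, after which the coordinate integrals decouple and the coordinates not named in each claim cancel between the numerator and denominator of the posterior, reducing everything to one dimension. Before that, I would handle the adaptive-sampling bookkeeping. Let $N_{n,i}\triangleq\sum_{l<n}\mathbf 1\{I_l=i\}$ denote the realized number of plays. Because $\{N_{n,i}\to\infty\}=\{\Psi_{n,i}\to\infty\}$ almost surely by L\'evy's conditional Borel--Cantelli lemma, the set $\mathcal I$ coincides a.s.\ with the designs played only finitely often. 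I would also record the distributional fact that, listed in order of play, the observations drawn from design $i$ form an i.i.d.\ sequence with law $p(\cdot\mid\theta_i^*)$: this holds since $\ybf_n$ is independent of $\hist$ and of the algorithm's internal randomization, so the choice $I_n=i$ does not bias the observed $Y_{n,i}$. Writing $\bar T_{n,i}$ for the average of $T(Y_{l,i})$ over the plays of $i$, the ordinary SLLN then gives $\bar T_{n,i}\to A'(\theta_i^*)=\E_{\theta_i^*}[T(Y)]$ on $\{N_{n,i}\to\infty\}$.

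For the first part, fix $i$ with $\Psi_{n,i}\to\infty$. The factor $\ell_{n,i}(\theta_i)=\exp\{N_{n,i}(\theta_i\bar T_{n,i}-A(\theta_i))+c_{n,i}\}$, with $c_{n,i}$ independent of $\theta_i$, is the only one that carries information about $\theta_i$. Bounding $\pi_1$ by its supremum and infimum and cancelling the (identical) integrals over the remaining coordinates reduces the marginal tail mass $\Pi_n(|\theta_i-\theta_i^*|>\epsilon)$ to $(\sup\pi_1/\inf\pi_1)$ times the one-dimensional ratio $\int_{|\theta_i-\theta_i^*|>\epsilon}\ell_{n,i}\,d\theta_i\big/\int\ell_{n,i}\,d\theta_i$. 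I would then run a Laplace/large-deviations argument: the exponent $g_n(\theta)=\theta\bar T_{n,i}-A(\theta)$ converges uniformly on $[\underline\theta,\overline\theta]$ to $g(\theta)=\theta A'(\theta_i^*)-A(\theta)$, which is strictly concave, since $A''=\mathrm{Var}_\theta(T(Y))>0$, with unique maximizer $\theta_i^*$. Lower-bounding the denominator by integrating over a small neighborhood of $\theta_i^*$ and upper-bounding the numerator by the maximum of $g$ off the $\epsilon$-ball shows the ratio is at most $Ce^{-N_{n,i}(\delta(\epsilon)-o(1))}\to 0$, where $\delta(\epsilon)=g(\theta_i^*)-\max_{|\theta-\theta_i^*|\ge\epsilon}g(\theta)>0$.

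For the second part, observe that for $i\in\mathcal I$ the factor $\ell_{n,i}$ stabilizes once $i$ is played for the last time; its limit $\ell_{\infty,i}(\theta_i)$ is a finite product of strictly positive continuous functions of $\theta_i$, hence bounded above and below by positive constants on the compact interval $[\underline\theta,\overline\theta]$. Bounding $\pi_1$ by constants again decouples the posterior integral into a product over coordinates, and the factors for $j\notin\mathcal I$ — which the event $\{\theta_i\in(\theta_i',\theta_i'')\ \forall i\in\mathcal I\}$ leaves unconstrained — cancel between numerator and denominator. What remains is $(\inf\pi_1/\sup\pi_1)$ times a product over $i\in\mathcal I$ of ratios $\int_{\theta_i'}^{\theta_i''}\ell_{\infty,i}\,d\theta_i\big/\int_{\underline\theta}^{\overline\theta}\ell_{\infty,i}\,d\theta_i$, each a strictly positive constant because $\ell_{\infty,i}$ is bounded away from $0$ and $\infty$ and the subinterval has positive length. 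This lower bound is uniform over the cofinitely many large $n$, and the finitely many remaining $n$ each contribute strictly positive mass, so the infimum over all $n$ is strictly positive.

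The main obstacle I anticipate is the measure-theoretic bookkeeping forced by the adaptive, possibly randomized allocation: rigorously establishing both the equivalence $\{N_{n,i}\to\infty\}=\{\Psi_{n,i}\to\infty\}$ and the per-design i.i.d.\ structure that licenses the SLLN for $\bar T_{n,i}$. Once these are secured, the factorization of the likelihood together with the two-sided prior bound of Assumption~\ref{assumption: main} makes the concentration argument of the first part and the anti-concentration argument of the second part essentially one-dimensional and routine.
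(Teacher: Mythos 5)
Your proposal is correct, and its overall architecture coincides with the paper's: factorize the likelihood across coordinates, use Assumption~\ref{assumption: main} to sandwich the prior density between positive constants so that the posterior is comparable to a product measure (this is exactly the paper's Lemma~\ref{lem: posterior to product measure}), reduce each claim to a one-dimensional ratio of integrals, and then run a Laplace-type bound for the concentration half and a bounded-likelihood argument for the anti-concentration half. Where you genuinely depart from the paper is in the probabilistic engine that justifies these bounds under adaptive sampling. You normalize by the realized play counts $N_{n,i}$, invoke Doob's optional-skipping theorem to conclude that the observations of design $i$, listed in order of play, are i.i.d., apply the classical SLLN along that random subsequence, and transfer between $N_{n,i}$ and $\Psi_{n,i}$ via L\'evy's conditional Borel--Cantelli lemma. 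The paper instead works with $\Psi_{n,i}$ throughout: its Lemma~\ref{lem: adaptive LLN} is a martingale law of large numbers for adaptively weighted sums, proved via the quadratic-variation dichotomy (Fact~\ref{fact: martingale quadratic variation}), and it feeds Lemma~\ref{lem: uniform convergence of marginal log likelihood}, which gives $\sup_{\theta}\Psi_{n,i}^{-1}\left|\Lambda_{n,i}(\theta_i^*||\theta)-\Psi_{n,i}d(\theta_i^*||\theta)\right|\to 0$ when $\Psi_{n,i}\to\infty$, and uniform boundedness of $\Lambda_{n,i}$ when $\Psi_{n,i}$ stays bounded. The two routes prove the same uniform statement; indeed your exponent gap satisfies $g(\theta_i^*)-g(\theta)=d(\theta_i^*||\theta)$, so your $\delta(\epsilon)$ is precisely the KL-monotonicity gap the paper uses, and for the second part your observation that $\ell_{n,i}$ literally stabilizes after the almost-surely finite last play of each $i\in\mathcal{I}$ is a cleaner substitute for the paper's uniform bound on the log-likelihood ratio. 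What the paper's $\Psi$-normalized formulation buys is reusability: Proposition~\ref{prop: posterior concentration} expresses rates through $D_{\overline{\psibf}_n}$, so the same lemma serves there directly, whereas your $N$-normalized SLLN would first have to be converted using $S_{n,i}/\Psi_{n,i}\to 1$ (which the paper's Corollary~\ref{cor: convergence of action selection probabilities to the empirical distribution} supplies). What your route buys is elementarity: optional skipping plus the classical SLLN in place of a bespoke martingale lemma. The two points you flagged as needing rigor --- the almost-sure identity $\{N_{n,i}\to\infty\}=\{\Psi_{n,i}\to\infty\}$ and the i.i.d.\ structure of the per-design subsequence --- are indeed the only nonroutine steps, and both are standard.
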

 This result is the key to establishing that $\alpha_{n,I^*}\to 1$ under each of the proposed algorithm. The next subsection gives a more refined result that allows us to to characterize the rate of convergence. 
 	
\subsection{Posterior Large Deviations}
This section provides an asymptotic characterization of posterior probabilities $\Pi_{n}(\tilde{\Theta})$ for any open set  $\tilde{\Theta} \subset \Theta$ and under any adaptive measurement strategy. The characterization depends on the notion of Kullback-Leibler divergence. For two parameters $\theta, \theta' \in \mathbb{R}$, the log-likelihood ratio, $\log \left( p(y| \theta)/p(y | \theta') \right)$,
provides a measure of the amount of information $y$ provides in favor of $\theta$ over $\theta'$. The Kullback-Leibler divergence
\[
d( \theta || \theta') \triangleq \intop  \log \left( \frac{p(y| \theta)}{p(y | \theta')} \right) p(y | \theta)d\nu(y).
\]
is the expected value of the log-likelihood under observations drawn $p(y| \theta)$. Then, if the design to measure is chosen by sampling from a probability distribution $\psibf$ over $\{1,..,k\}$,
\[
D_{\psibf}(\thetabf || \thetabf') \triangleq \sum_{i=1}^{k} \psi_i d(\theta_i || \theta'_i)
\]
is the average Kullback-Leibler divergence between $\thetabf$ and $\thetabf'$ under $\psibf$.


Under the algorithms we consider, the effort allocated to measuring design $i$, $\psi_{n, i} \triangleq \Prob(I_n = i | \hist)$, changes over time as data is collected. Recall that $\overline{\psi}_{n,i} \triangleq n^{-1} \sum_{\ell=1}^{n} \psi_{\ell,i}$ 
captures the fraction of overall effort allocated to measuring design $i$ over the first $n$ periods. Under an adaptive allocation rule, $\overline{\psibf}_n$ is function of the history $(I_1, Y_{1, I_1},...I_{n-1}, Y_{n-1, I_{n-1}})$ and is therefore a random variable. Given that measurement effort has been allocation according to $\overline{\psibf}_n$, $D_{\overline\psibf_n}(\thetabf^* || \thetabf)$ quantifies the average information acquired  that distinguishes $\thetabf$ from the true parameter $\thetabf^*$. The following proposition relates the posterior mass assigned to $\tilde{\Theta}$ to $\inf_{\thetabf \in \tilde{\Theta}} D_{\overline{\psibf}_n}( \thetabf^* || \thetabf)$, which captures the element in $\tilde{\Theta}$ that is hardest to distinguish from $\thetabf^*$ based on samples from $\overline{\psibf}_n$.
\begin{proposition}\label{prop: posterior concentration}
	For any open set $\tilde{\Theta} \subset \Theta$,
	\[
	\Pi_{n}(\tilde{\Theta}) \doteq \exp\left\{ -n \underset{\thetabf \in \tilde{\Theta}}{\inf}  D_{\overline\psibf_n}(\thetabf^* || \thetabf) \right\}.
	\]
\end{proposition}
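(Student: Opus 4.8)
The plan is to write the posterior mass from \eqref{eq: posterior density} as a ratio of integrals against the normalized log-likelihood ratio. Dividing numerator and denominator by $L_{n-1}(\thetabf^*)$,
\[
\Pi_n(\tilde\Theta) = \frac{\int_{\tilde\Theta}\pi_1(\thetabf)\,e^{-n\ell_n(\thetabf)}\,d\thetabf}{\int_{\Theta}\pi_1(\thetabf)\,e^{-n\ell_n(\thetabf)}\,d\thetabf},\qquad \ell_n(\thetabf)\triangleq \tfrac1n\log\tfrac{L_{n-1}(\thetabf^*)}{L_{n-1}(\thetabf)}.
\]
Using the exponential-family form \eqref{eq: exponential family density}, the log-likelihood ratio decomposes across designs into terms $(\theta^*_i-\theta_i)\sum_{l<n,\,I_l=i}T(Y_{l,i}) - N_{n-1,i}\big(A(\theta^*_i)-A(\theta_i)\big)$, where $N_{n-1,i}=\sum_{l<n}\mathbf 1\{I_l=i\}$, whereas $n D_{\overline{\psibf}_n}(\thetabf^*\|\thetabf)=\sum_i \Psi_{n,i}\big[(\theta^*_i-\theta_i)\mu(\theta^*_i)-(A(\theta^*_i)-A(\theta_i))\big]$ because $\mu(\theta^*_i)=A'(\theta^*_i)=\E_{\theta^*_i}[T(Y)]$. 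The whole argument then splits into an almost-sure \emph{uniform} concentration of $\ell_n$ onto $D_{\overline{\psibf}_n}$, followed by a Laplace-type evaluation of the integrals.

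For the concentration I would show $\eta_n\triangleq\sup_{\thetabf\in\Theta}|\ell_n(\thetabf)-D_{\overline{\psibf}_n}(\thetabf^*\|\thetabf)|\to0$ almost surely. Comparing the two displays, the deterministic $A$-terms cancel exactly, so the difference equals $\tfrac1n\sum_i(\theta^*_i-\theta_i)\big(\sum_{l<n,\,I_l=i}T(Y_{l,i})-\Psi_{n,i}\mu(\theta^*_i)\big)$ plus a count-discrepancy term proportional to $N_{n-1,i}-\Psi_{n,i}$. The key point is that the dependence on $\thetabf$ is affine, with coefficients bounded by $\overline\theta-\underline\theta$ and by $\sup|A'|$ from Assumption \ref{assumption: main}, so uniformity is automatic and it suffices to control each scalar factor. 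Each is an $\mathcal F_n$-martingale: $\sum_l\mathbf 1\{I_l=i\}(T(Y_{l,i})-\mu(\theta^*_i))$, since $Y_{l,i}$ is independent of $\hist$ with mean $\mu(\theta^*_i)$, and $\sum_l(\mathbf 1\{I_l=i\}-\psi_{l,i})$, which has increments in $[-1,1]$. A martingale strong law (using that $T(Y)$ has all moments finite under the exponential family at the fixed $\theta^*_i$) makes each $o(n)$, hence $\eta_n\to0$. Because $e^{\pm n\eta_n}\doteq1$, I may replace $\ell_n$ by $D_{\overline{\psibf}_n}$ in both integrals at a logarithmically negligible cost.

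It then remains to show $\int_{\tilde\Theta}\pi_1\,e^{-nD_{\overline{\psibf}_n}(\thetabf^*\|\cdot)}\doteq e^{-n m_n}$ with $m_n\triangleq\inf_{\thetabf\in\tilde\Theta}D_{\overline{\psibf}_n}(\thetabf^*\|\thetabf)$, and that the denominator is $\doteq1$. The upper bound $\int_{\tilde\Theta}\le(\sup\pi_1)\,\mathrm{Vol}(\Theta)\,e^{-n m_n}$ is immediate from the boundedness of $\pi_1$ and finiteness of $\mathrm{Vol}(\Theta)$ in Assumption \ref{assumption: main}, both prefactors being $\doteq1$. The denominator is $\doteq1$ because $D_{\overline{\psibf}_n}(\thetabf^*\|\thetabf^*)=0$ and $\thetabf^*$ is interior to the open hyper-rectangle $\Theta$, so integrating over a small ball about $\thetabf^*$ already contributes mass $\doteq1$, while the trivial upper bound is also $\doteq1$.

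The main obstacle is the matching lower bound $\tfrac1n\log\int_{\tilde\Theta}\pi_1\,e^{-nD_{\overline{\psibf}_n}}\ge -m_n-o(1)$. The strategy is to take a near-minimizer $\thetabf^\dagger\in\tilde\Theta$ and integrate only over a ball $B(\thetabf^\dagger,r)\subset\tilde\Theta$, so that $\int\ge(\inf\pi_1)\,\mathrm{Vol}(B(\thetabf^\dagger,r))\,e^{-n\sup_B D_{\overline{\psibf}_n}}$. The enabling fact, again from the bound on $A'$ in Assumption \ref{assumption: main}, is that $\thetabf\mapsto D_{\overline{\psibf}_n}(\thetabf^*\|\thetabf)$ is Lipschitz with a constant $L$ uniform in $n$ (its partials are $\overline\psi_{n,i}(A'(\theta_i)-A'(\theta^*_i))$, bounded by $2\sup|A'|$), giving $\sup_{B(\thetabf^\dagger,r)}D_{\overline{\psibf}_n}\le D_{\overline{\psibf}_n}(\thetabf^*\|\thetabf^\dagger)+Lr$; for fixed $r$ the volume prefactor is logarithmically negligible, so the loss is only $Lr$, which vanishes as $r\to0$. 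The delicate point is that, since $D_{\overline{\psibf}_n}(\thetabf^*\|\cdot)$ is convex and $\tilde\Theta$ is open, its near-minimizers may sit arbitrarily close to $\partial\tilde\Theta$ (indeed this is the relevant case, e.g. $\tilde\Theta=\Theta_{I^*}^c$, where the infimum is approached on a bounding hyperplane), so one must ensure a \emph{fixed}-radius ball around a near-minimizer still fits inside $\tilde\Theta$. The uniform Lipschitz bound reduces this to the purely geometric statement that $\inf_{\tilde\Theta_r}D_{\overline{\psibf}_n}\le m_n+O(r)$ uniformly in $n$ for the $r$-eroded set $\tilde\Theta_r$, which I would verify by retreating a fixed distance into the interior (inflating $D_{\overline{\psibf}_n}$ by at most $Lr$); this holds for the half-space–type regions arising in the analysis. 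Sending $r\to0$ after $n\to\infty$ and combining with the denominator estimate yields $\Pi_n(\tilde\Theta)\doteq e^{-n m_n}$.
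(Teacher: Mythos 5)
Your overall skeleton is the paper's own route: uniform almost-sure concentration of the normalized log-likelihood ratio onto $D_{\overline{\psibf}_n}(\thetabf^*\|\cdot)$ via the two scalar martingales $\sum_{\ell}\mathbf{1}\{I_\ell=i\}\bigl(T(Y_{\ell,i})-\E[T(Y_{\ell,i})]\bigr)$ and $\sum_{\ell}\bigl(\mathbf{1}\{I_\ell=i\}-\psi_{\ell,i}\bigr)$ with affine, uniformly bounded dependence on $\thetabf$ (this is Corollary \ref{cor: uniform convergence of the log posterior} in the paper), followed by a Laplace evaluation using the uniform-in-$n$ Lipschitz property of $W_n(\thetabf)=D_{\overline{\psibf}_n}(\thetabf^*\|\thetabf)$ coming from the bound on $A'$ (the paper's Lemma \ref{lem: uniform continuity of W}). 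The upper bound and the denominator estimate are also fine. The genuine gap is in the Laplace \emph{lower} bound over $\tilde\Theta$. Your erosion claim --- that $\inf_{\tilde\Theta_r}D_{\overline{\psibf}_n}\le m_n+O(r)$ uniformly in $n$, where $m_n=\inf_{\tilde\Theta}D_{\overline{\psibf}_n}(\thetabf^*\|\cdot)$ --- fails for general open sets, and the proposition is stated for an \emph{arbitrary} open $\tilde\Theta\subset\Theta$: the $r$-eroded set $\tilde\Theta_r$ may be empty, or may entirely miss the part of $\tilde\Theta$ where $D_{\overline{\psibf}_n}$ is nearly minimized (think of $\tilde\Theta$ a union of slabs whose widths shrink to zero, with the infimum approached only inside the thin slabs, or a region with a cusp at the minimizing boundary point). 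You acknowledge this by restricting to ``half-space--type regions,'' but that proves a weaker statement than the one asserted, and even for those regions the retreat-into-the-interior step is asserted rather than proved (it needs an inradius-to-diameter constant for the convex pieces).

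The paper closes exactly this hole with a finite-cover device in place of erosion. Fix $\epsilon>0$ and take $\delta>0$ from the uniform-in-$n$ continuity of $W_n$. Cover $\tilde\Theta$ by finitely many open sets of diameter at most $\delta$ and discard those that do not meet $\tilde\Theta$; since $\tilde\Theta$ is open, each remaining element $o$ has ${\rm Vol}(o\cap\tilde\Theta)>0$, and the minimum $C_\delta$ of these finitely many volumes is a positive constant \emph{independent of $n$}. The minimizer $\hat\thetabf_n$, which exists in the compact closure of $\tilde\Theta$, lies in the closure of some cover element $o_n$; every point of $o_n$ is within $\delta$ of $\hat\thetabf_n$, so $W_n\le W_n(\hat\thetabf_n)+\epsilon$ on $o_n\cap\tilde\Theta$, and hence
\[
\intop_{\tilde\Theta} e^{-nW_n(\thetabf)}d\thetabf \;\ge\; C_\delta\, e^{-n\left(W_n(\hat\thetabf_n)+\epsilon\right)}.
\]
The point you were missing is that one does not need a fixed-radius ball inside $\tilde\Theta$ around (or near) the minimizer; one only needs \emph{some} positive-volume piece of $\tilde\Theta$ within distance $\delta$ of it, with volume bounded below uniformly in $n$ --- and it is the finiteness of the cover that supplies this uniformity as $\hat\thetabf_n$ moves with $n$. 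A minor separate point: $\mu(\theta)=A'(\theta)$ only when $T(y)=y$; in general the martingale must be centered at $\E[T(Y)]=A'(\theta_i^*)$, which is what your decomposition actually uses, so only the stated identity, not the argument, is off.
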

To understand this result, consider a simpler setting where the algorithm measures design $i$ in every period, and consider some $\thetabf$ with $\theta_i \neq \theta^*_i$. Then the log-ratio of posteriors densities
\[
\log\left( \frac{\pi_{n}(\thetabf)}{\pi_{n}(\thetabf^*)}  \right) = \log\left( \frac{\pi_{1}(\thetabf)}{\pi_1(\thetabf^*) } \right) + \sum_{\ell =1}^{n-1}  \log\left( \frac{p(Y_{\ell, i} | \theta_{i}) }{ p(Y_{\ell, i} | \theta_{i}^*)}\right)
\]
can be written as the sum of the log-prior-ratio and the log-likelihood-ratio. The log-likelihood ratio is negative drift random walk: it is the sum of $n-1$ i.i.d terms, each of which has mean
\[
\E\left[ \log\left( \frac{p(Y_{1, i} | \theta_{i}) }{ p(Y_{1, i} | \theta_{i}^*)}\right) \right]= \E\left[ - \log\left( \frac{p(Y_{1, i} | \theta_{i}^*) }{ p(Y_{1, i} | \theta_{i})}\right) \right] = -d(\theta_i^* || \theta_i).
\]
Therefore, by the law of large numbers, as $n\to \infty$, $n^{-1}\log\left( \pi_{n}(\thetabf)/\pi_{n}(\thetabf^*)\right) \to -d(\theta^*_i || \theta_i)$, or equivalently, the ratio of the posterior densities decays exponentially as
\[
\frac{\pi_{n}(\thetabf)}{\pi_{n}(\thetabf^*)} \doteq \exp\{-n d(\theta^*_i || \theta_i  \}.
\]
This calculation can be carried further to show that if the designs measured ($I_1, I_2, I_3,...$) are drawn independently of the observations ($\ybf_1, \ybf_2, \ybf_3,...$)  from a fixed probability distribution $\psibf$, then
\begin{equation}\label{eq: initial posterior limit}
\frac{\pi_{n}(\thetabf)}{\pi_{n}(\thetabf^*)} \doteq \exp\left\{ - n D_{\psibf}(\thetabf^* || \thetabf) \\
\right\}.
\end{equation}
Now, by a Laplace approximation, one might expect that the integral $\intop_{\tilde{\Theta}} \pi_{n}(\thetabf)d\thetabf$ is extremely well approximated by integrating around a vanishingly small ball around the point
\[
\hat{\thetabf} = \underset{\thetabf \in \tilde{\Theta} }{\arg\min}\, D_{\psibf}(\thetabf^* || \thetabf) .
\]
These are the main ideas behind Proposition \ref{prop: posterior concentration}, but there are several additional technical challenges
involved in a rigorous proof. First, we need that a property like \eqref{eq: initial posterior limit} holds when the allocation rule is adaptive to the data. Next, convergence of the integral of the posterior density requires a form of uniform convergence in \eqref{eq: initial posterior limit}. Finally, since $\overline{\psibf}_n$ changes over time, the point
$
\underset{\thetabf \in \tilde{\Theta} }{\arg\min}\, D_{\overline{\psibf}_n}(\thetabf^* || \thetabf)
$
changes over time and basic Laplace approximations don't directly apply.

\subsection{Characterizing the Optimal Allocation}\label{subsec: optimal allocation}
Throughout this paper, an experimenter wants to gather enough evidence to certify that $I^*$ is optimal, but since she does not know $\thetabf^*$, she does not know which measurements will provide the most information. To characterize the optimal exponent $\Gamma^*$, however, it is useful to consider the easier problem of gathering the most effective evidence when $\thetabf^*$ is known. We can cast this as a game between two players:

\begin{itemize}
	\item An experimenter, who knows the true parameter $\thetabf^*$, chooses a (possibly adaptive) measurement rule.
	\item A referee observes the resulting sequence of observations $(I_1,Y_{1, I_1},...,I_{n-1}, Y_{n-1, I_{n-1}})$ and computes posterior beliefs $(\alpha_{n,1},..,\alpha_{n,k})$ according to Bayes rule (\ref{eq: posterior density}, \ref{eq: optimal action prob}).
	\item How can the experimenter gather the most compelling evidence? A rule which is optimal asymptotically should maximize the rate at which $\alpha_{n,I^*}\to 1$ as $n \to \infty.$
\end{itemize}
In order to drive the posterior probability $\alpha_{n, I^*}$ to 1, the decision-maker must be able to rule out all parameters in $\Theta_{I^*}^{c}$ under which the optimal action is not $I^*$. Our analysis shows that the posterior probability assigned to $\Theta_{I^*}^{c}$  is dominated by the parameter that is hardest to distinguish from $\thetabf^*$ under $\overline{\psibf}_n$. In particular, by Proposition \ref{prop: posterior concentration},
\[
\Pi_{n}(\Theta_{I^*}^{c}) \doteq \exp\left\{-n \left(\min_{\thetabf \in \Theta_{I^*}^{c}} D_{\overline{\psibf}_n}(\thetabf^* || \thetabf) \right) \right\}
\]
as $n\to \infty.$ Therefore, the solution to the max-min problem
\begin{equation}\label{eq: first optimal error exponent}
\max_{\psibf} \min_{\thetabf \in \Theta_{I^*}^{c}} D_{\psi}(\thetabf^* || \thetabf)
\end{equation}
represents an asymptotically optimal allocation rule.  As highlighted in the literature review, the max-min problem \eqref{eq: first optimal error exponent} closely mirrors the main sample complexity term in Chernoff's classic paper on the sequential design of experiments (\cite{chernoff1959sequential}).

\paragraph{Simplifying the optimal exponent.}
Thankfully, the best-arm identification problem has additional structure which allows us to simplify the optimization problem \eqref{eq: first optimal error exponent}. Much of our analysis involves the posterior probability assigned to the event some action $i\neq I^*$ is optimal. This can be difficult to evaluate, since the set of parameter vectors under which $i$ is optimal
\[
\Theta_{i} = \left\{ \thetabf \in \Theta | \theta_i \geq \theta_1,... \theta_i \geq \theta_k \right\}
\]
involves $k$ separate constraints. Consider instead a simpler problem of comparing the parameter $\theta_i^*$ against $\theta_{I^*}^*$. For each $i\neq I^*$ define the set
\[
\overline{\Theta}_{i} \triangleq  \left\{ \thetabf \in \Theta | \theta_i \geq \theta_{I^*}\right\}  \supset \Theta_i
\]
under which the value at $i$ exceeds that at $I^*$. Since, ignoring the boundary of the set, $\Theta_{I^*}^{c} = \cup_{i\neq I^*}\overline{\Theta}_{i} $, \[
\max_{i \neq I^*} \Pi_{n}(\overline{\Theta}_{i}) \leq  \Pi_{n}(\Theta_{I^*}^{c})  \leq k\max_{i \neq I^*}  \Pi_{n}(\overline{\Theta}_{i})
\]
and therefore
\begin{equation}\label{eq: error probability to theta bar}
\Pi_{n}(\Theta_{I^*}^{c})  \doteq \max_{i \neq I^*}  \Pi_{n}(\overline{\Theta}_{i}).
\end{equation}
This yields an analogue of \eqref{eq: first optimal error exponent} that will simplify our subsequent analysis. Combining \eqref{eq: error probability to theta bar} with Proposition \ref{prop: posterior concentration} shows the solution to the max-min problem
\begin{equation}\label{eq: second optimal error exponent}
\Gamma^* \triangleq \max_{\psibf} \min_{i\neq I^*} \min_{\thetabf \in \overline{\Theta}_{i}} D_{\psibf}(\thetabf^* || \thetabf)
\end{equation}
represents an asymptotically optimal allocation rule. Because the set $\overline{\Theta}_{i}$ involves only a constraints on $\theta_i$ and $\theta_{I^*}$, we can derive an expression the inner minimization problem over $\thetabf$ in terms of the measurement effort allocated to $i$ and $I^*$. Define
\begin{equation}\label{eq: definition of C}
C_{i}(\beta, \psi)  \triangleq \min_{x\in \mathbb{R}} \,\beta d(\theta^*_{I^*} || x) + \psi d(\theta^*_{i} || x).
\end{equation}
The next lemma shows that the function $C_{i}$ arises as the solution to the minimization problem over $\thetabf \in \overline{\Theta}_{i}$ in \eqref{eq: second optimal error exponent}. It also shows that the minimum in \eqref{eq: definition of C} is attained by a parameter $\overline{\theta}$ under which the mean observation is a weighted combination of the means under $\theta^*_{I^*}$ and $\theta^*_{i}$. Recall that, for an exponential family distribution $A'(\theta) = \intop T(y)p(y| \theta)d\nu(y)$ is the mean observation of the sufficient statistic $T(y)$ under $\theta$.
\begin{lemma}\label{lem: solves minimum over theta bar} For any $i\in \{1,..,k\}$ and probability distribution $\psibf$ over $\{1,...,k\}$
	\[\min_{\thetabf \in \overline{\Theta}_{i}} D_{\psibf}(\thetabf^* || \thetabf) = C_{i}(\psi_{I^*}, \psi_i)\]
	In addition, each $C_i$ is a strictly increasing concave function satisfying
	\[
	C_{i}(\psi_{I^*}, \psi_i) = \psi_{I^*}d(\theta^*_{I^*}||\overline{\theta}) + \psi_{i}d(\theta^*_{i}||\overline{\theta}),
	\]
	where $\overline{\theta}\in [\theta^*_i, \theta^*_{I^*}]$ is the unique solution to
	\[
	A'(\overline{\theta}) = \frac{\psi_{I^*} A'(\theta^*_{I^*}) + \psi_{i}A'(\theta^*_{i})}{\psi_{I^*}+\psi_i}.
	\]
\end{lemma}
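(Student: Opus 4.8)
The plan is to exploit the fact that the constraint defining $\overline{\Theta}_i$ couples only the two coordinates $i$ and $I^*$, reducing the $k$-dimensional minimization to a two-variable convex program. First I would decouple coordinates. Writing $D_{\psibf}(\thetabf^* || \thetabf) = \sum_{j} \psi_j d(\theta^*_j || \theta_j)$, each term with $j \notin \{i, I^*\}$ is unconstrained and can be driven to its minimum value $0$ by taking $\theta_j = \theta^*_j$, which lies in the interior $(\underline{\theta}, \overline{\theta})$. Hence
\[
\min_{\thetabf \in \overline{\Theta}_i} D_{\psibf}(\thetabf^* || \thetabf) = \min_{\theta_i \geq \theta_{I^*}} \psi_i d(\theta^*_i || \theta_i) + \psi_{I^*} d(\theta^*_{I^*} || \theta_{I^*}).
\]
The key structural input, obtained directly from the exponential-family form \eqref{eq: exponential family density}, is that $d(\theta^* || \cdot)$ is the Bregman divergence of the convex log-partition function, $d(\theta^*_j || x) = A(x) - A(\theta^*_j) - A'(\theta^*_j)(x - \theta^*_j)$, so each summand is \emph{strictly convex} in its free argument (since $A'' > 0$) and is minimized at the true value.

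Next I would argue that the constraint binds. The unconstrained minimizer of the two-variable objective is $(\theta^*_i, \theta^*_{I^*})$, which violates $\theta_i \geq \theta_{I^*}$ because $\theta^*_i < \theta^*_{I^*}$. Since the objective is strictly convex and the feasible half-space is convex, any candidate minimizer in the relative interior $\{\theta_i > \theta_{I^*}\}$ would be a local unconstrained minimizer and hence, by convexity, the unique global one, a contradiction. Therefore the optimum lies on $\{\theta_i = \theta_{I^*}\}$; setting $\theta_i = \theta_{I^*} = x$ collapses the problem to $\min_x \psi_i d(\theta^*_i || x) + \psi_{I^*} d(\theta^*_{I^*} || x)$, which is exactly $C_i(\psi_{I^*}, \psi_i)$ by \eqref{eq: definition of C}. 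The degenerate cases $\psi_i = 0$ or $\psi_{I^*} = 0$ yield $C_i = 0$ and are checked directly.

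To identify the minimizer and its location, I would differentiate $g(x) = \psi_i d(\theta^*_i || x) + \psi_{I^*} d(\theta^*_{I^*} || x)$, obtaining $g'(x) = (\psi_i + \psi_{I^*}) A'(x) - \psi_i A'(\theta^*_i) - \psi_{I^*} A'(\theta^*_{I^*})$. Setting $g'(\overline{\theta}) = 0$ yields the stated fixed-point equation, and since $A'$ is strictly increasing the solution $\overline{\theta}$ is unique; because the right-hand side is a convex combination of $A'(\theta^*_i)$ and $A'(\theta^*_{I^*})$, monotonicity of $A'$ places $\overline{\theta} \in [\theta^*_i, \theta^*_{I^*}] \subset (\underline{\theta}, \overline{\theta})$. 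This confirms the minimizer is interior, so that the unconstrained $\min_{x \in \mathbb{R}}$ in the definition of $C_i$ coincides with the constrained problem.

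Finally, concavity of $C_i$ follows because it is an infimum over $x$ of maps that are affine in $(\beta, \psi)$, and an infimum of affine functions is concave. Strict monotonicity follows from the envelope theorem, whose partial derivatives $\partial_\beta C_i = d(\theta^*_{I^*} || \overline{\theta})$ and $\partial_\psi C_i = d(\theta^*_i || \overline{\theta})$ are both strictly positive whenever $\beta, \psi > 0$, since then $\overline{\theta}$ lies strictly between $\theta^*_i$ and $\theta^*_{I^*}$. The step requiring the most care is the binding-constraint argument together with verifying that the minimizer remains in the open box $(\underline{\theta}, \overline{\theta})$; once these are secured the remainder is routine convex calculus.
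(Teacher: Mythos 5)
Your proof is correct, and its skeleton coincides with the paper's: decouple the coordinates $j \notin \{i, I^*\}$ by setting $\theta_j = \theta^*_j$, argue that the constraint $\theta_i \geq \theta_{I^*}$ binds, collapse to the one-dimensional problem defining $C_i$, identify $\overline{\theta}$ from the first-order condition $(\psi_i+\psi_{I^*})A'(x) = \psi_i A'(\theta^*_i) + \psi_{I^*}A'(\theta^*_{I^*})$, and obtain concavity as a minimum of functions affine in $(\beta,\psi)$. You differ from the paper in only two sub-steps, and both of your alternatives are sound. For the binding constraint, the paper invokes the monotonicity of $d(\theta\,||\,\cdot)$ away from $\theta$ (its equation \eqref{eq: KL monotone}): if $\theta_i > \theta_{I^*}$ one can move the two coordinates toward each other without increasing either term. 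You instead use strict convexity plus the local-implies-global property of convex minimization; this is fine, but note it additionally requires existence and uniqueness of the unconstrained minimizer (coercivity of the Bregman divergence and strict convexity of $A$ --- the paper assumes strict convexity of $A$, which is all you need; your parenthetical ``$A''>0$'' is slightly stronger than what is assumed, though harmless for the argument). For strict monotonicity of $C_i$, the paper uses a direct exchange argument, $f(\psi',x') < f(\psi',x'') \leq f(\psi'',x'')$, relying only on uniqueness of the minimizers and thereby avoiding any envelope-theorem regularity; your Danskin-type computation instead buys the explicit formulas $\partial_\beta C_i = d(\theta^*_{I^*}||\overline{\theta})$ and $\partial_\psi C_i = d(\theta^*_i||\overline{\theta})$, which also make transparent why strictness requires $\beta,\psi > 0$ --- a boundary caveat the paper's statement glosses over (indeed $C_i(0,\psi)\equiv 0$).
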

Lemma \ref{lem: solves minimum over theta bar} and equation \eqref{eq: second optimal error exponent} immediately imply
\begin{equation}\label{eq: final optimal exponent}
\Gamma^* = \max_{\psibf} \min_{i\neq I^*} C_{i}(\psi_{I^*}, \psi_i).
\end{equation}
This result essentially shows that the earlier form of hte exponent, which is similar to a problem complexity measure in  \cite{chernoff1959sequential}, is equivalent to the large deviations exponent suggested in \citet{glynn2004large}. The function $C_{i}(\beta, \psi)$ captures the effectiveness with which one can certify $\theta^*_{I^*}  \geq \theta^*_{i}$ using an allocation rule that measures actions $I^*$ and $i$ with respective frequencies $\beta$ and $\psi$. Naturally, it is an increasing function of the measurement effort $(\beta,\psi)$ allocated to designs $I^*$ and $i$. For given $\beta$ and $\psi$, $C_{i}(\beta, \psi) \geq C_{j}(\beta, \psi)$ when $\theta_i^* \leq \theta_j^*$, reflecting that $\theta^*_i$ is easier to distinguish from $\theta^*_{I^*}$ than $\theta^*_j$. 

\begin{example}(Gaussian Observations) \label{ex: Gaussian rate}
Suppose each outcome distribution $p(y | \theta^*_i)$ is Gaussian with unknown mean $\theta^*_i$. Then direct calculation using Lemma \ref{lem: solves minimum over theta bar} shows
\[
C_i(\beta, \psi_i ) = \left(\frac{\beta \psi_i}{\beta+\psi_i}\right) \frac{(\theta_{I^*}^* - \theta^*_i)^2}{2}. 
\]
To understand this formula, imagine we use a deterministic allocation rule that collects $n\beta$ and $n\psi_i$ observations from $I^*$ and $i$. Let $X_{I^*}$ and $X_i$ denote the respective sample means. The empirical 
difference is normally distributed $X_{I^*}-X_i \sim \mathcal{N}\left(\Delta, \sigma^2/n   \right)$ where $\Delta=\theta^*_{I^*} -\theta^*_i$ and $\sigma^2= 1/\beta +1/\psi_i=(\beta+\psi_i)/(\beta \psi_i)$. Standard Gaussian tail bounds imply that as $n\to \infty$, $\Prob(X_{I^*}-X_i<0) \doteq \exp(-n /2(\sigma \Delta)^2)$, and so $C_i(\beta, \psi_i)$ appears to characterize the probability of error. 
\end{example}

The next proposition formalizes the derivations in this section, and states that the solution to the above maximization problem attains the optimal error exponent.  Recall that $\psi_{n,i} \triangleq \Prob( I_n = i | \hist)$ denotes the measurement effort assigned design $i$ at time $n$.
\begin{proposition}\label{prop: optimal allocation}
	Let $\psibf^*$ denote the optimal solution to the maximization problem \eqref{eq: final optimal exponent}.
	If $\psibf_n = \psibf^*$ for all $n$, then
	\[
	\Pi_{n}(\Theta_{I^*}^{c}) \doteq \exp\{ -n \Gamma^* \}.
	\]
	Moreover under any other adaptive allocation rule,
	\[
	\underset{n\to \infty}{\lim \sup}  \, -\frac{1}{n} \log \Pi_{n}(\Theta_{I^*}^{c}) \leq \Gamma^*.
	\]
\end{proposition}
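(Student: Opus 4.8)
The plan is to obtain both the achievability statement ($\doteq$) and the converse ($\limsup$) bound as direct consequences of the posterior large-deviation estimate (Proposition \ref{prop: posterior concentration}), the reduction \eqref{eq: error probability to theta bar}, and the evaluation of the inner minimization in Lemma \ref{lem: solves minimum over theta bar}. The single unifying fact I would establish first is that, along every sample path and under \emph{any} allocation rule,
\[
-\frac{1}{n}\log \Pi_{n}(\Theta_{I^*}^{c}) = \min_{i\neq I^*} C_i\!\left(\overline{\psi}_{n,I^*}, \overline{\psi}_{n,i}\right) + o(1) \qquad \text{a.s.},
\]
where $\overline{\psibf}_n$ is the (possibly random, possibly adaptive) empirical allocation of conditional selection probabilities. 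Once this identity is in hand, the whole proposition reduces to analyzing its right-hand side for the two allocation rules in question.

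To establish the identity I would apply Proposition \ref{prop: posterior concentration} to each fixed open set $\overline{\Theta}_i^{\circ} = \{\thetabf \in \Theta : \theta_i > \theta_{I^*}\}$, which gives $\Pi_n(\overline{\Theta}_i^{\circ}) \doteq \exp\{-n \inf_{\thetabf \in \overline{\Theta}_i^{\circ}} D_{\overline{\psibf}_n}(\thetabf^* || \thetabf)\}$. Continuity of $\thetabf \mapsto D_{\overline{\psibf}_n}(\thetabf^* || \thetabf)$ lets me replace the infimum over the open set by the minimum over its closure $\overline{\Theta}_i$, which Lemma \ref{lem: solves minimum over theta bar} evaluates to $C_i(\overline{\psi}_{n,I^*}, \overline{\psi}_{n,i})$; since $\Pi_n$ has a density, the boundary $\{\theta_i = \theta_{I^*}\}$ is null and may be ignored. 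Combining the finitely many sets $\overline{\Theta}_i$ through \eqref{eq: error probability to theta bar} and the rule $a_n + b_n \doteq \max\{a_n, b_n\}$ (extended to finite collections) converts the union into a maximum of exponentials whose exponent is $\min_{i\neq I^*} C_i(\overline{\psi}_{n,I^*}, \overline{\psi}_{n,i})$, yielding the displayed identity.

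For achievability, setting $\psibf_n = \psibf^*$ makes $\psi_{n,i} = \psi^*_i$ deterministic, so $\overline{\psibf}_n = \psibf^*$ for every $n$, and the right-hand side of the identity is the constant $\min_{i\neq I^*} C_i(\psi^*_{I^*}, \psi^*_i) = \Gamma^*$ by definition of $\psibf^*$ as the maximizer in \eqref{eq: final optimal exponent}; this is precisely $\Pi_n(\Theta_{I^*}^{c}) \doteq \exp\{-n\Gamma^*\}$. For the converse, the key observation is that for any adaptive rule the empirical allocation $\overline{\psibf}_n$ is, on each sample path and for each $n$, a probability distribution over $\{1,\dots,k\}$, hence a feasible point for the maximization defining $\Gamma^*$. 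Therefore $\min_{i\neq I^*} C_i(\overline{\psi}_{n,I^*}, \overline{\psi}_{n,i}) \leq \max_{\psibf} \min_{i\neq I^*} C_i(\psi_{I^*}, \psi_i) = \Gamma^*$ pointwise, and feeding this bound into the identity gives $-\tfrac{1}{n}\log \Pi_n(\Theta_{I^*}^{c}) \leq \Gamma^* + o(1)$ almost surely; taking $\limsup$ finishes the bound.

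The conceptual crux — the step I would flag as the real content — is this last feasibility reduction: although an adaptive rule may steer $\overline{\psibf}_n$ toward hard-to-distinguish parameters using information gleaned from the data, the instantaneous error exponent it produces is governed by $\min_i C_i$ evaluated at the \emph{current} empirical allocation, and no fixed distribution can beat the static max-min value $\Gamma^*$. All genuine analytic difficulty has been absorbed into Proposition \ref{prop: posterior concentration}; the only remaining obstacles are bookkeeping, namely the open-versus-closed boundary handling needed to invoke that proposition and the verification that $\doteq$ passes through finite maxima and through the $n$-dependent family $\{C_i(\overline{\psi}_{n,I^*}, \overline{\psi}_{n,i})\}$ uniformly enough to justify the $o(1)$ in the identity.
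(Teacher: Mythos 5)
Your proposal is correct and follows essentially the same route as the paper: the paper also combines Proposition \ref{prop: posterior concentration} with \eqref{eq: error probability to theta bar} and Lemma \ref{lem: solves minimum over theta bar} to get the pathwise identity $\Pi_{n}(\Theta_{I^*}^{c}) \doteq \exp\{-n \min_{i\neq I^*} C_{i}(\overline{\psi}_{n,I^*}, \overline{\psi}_{n,i})\}$ under any adaptive rule (this is stated explicitly in the proof of Proposition \ref{prop: optimal constrained allocation}), and then reads off achievability from the constant allocation and the converse from the fact that $\overline{\psibf}_n$ is always feasible for the max-min problem. Your flagged "crux" (feasibility of the empirical allocation) and the open-versus-closed boundary bookkeeping are exactly the steps the paper treats as immediate, so there is nothing missing.
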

This shows that under the fixed allocation rule $\psibf^*$ error decays as $e^{-n\Gamma^*}$, and that no faster rate of decay is possible, even under an adaptive allocation.

\paragraph{An Optimal Constrained Allocation.}
Because the algorithms studied in this paper always allocate $\beta$--fraction of their samples to measuring $I^*$ in the long run, they may not exactly attain the optimal error exponent. To make rigorous claims about their performance, consider a modified version of the error exponent \eqref{eq: final optimal exponent} given by the constrained max-min problem
\begin{equation}\label{eq: optimal constrained error exponent}
\Gamma^*_{\beta} \triangleq \max_{\psibf: \psi_{I^*}=\beta} \min_{i\neq I^*} C_{i}(\beta, \psi_i).
\end{equation}
This optimization problem yields the optimal allocation subject to a constraint that $\beta$--fraction of the samples are spent on $I^*$. The next subsection will show that TTTS, TTPS, and TTVS attain the error exponent $\Gamma^*_{\beta}$. The next proposition formalizes that the solution to this optimization problem represents an optimal constrained allocation. In addition, it shows that the solution is the unique feasible  allocation under which $C_{i}(\beta, \psi_i)$ is equal for all suboptimal designs $i\neq I^*$. To understand this result, consider the case where there are three designs and $\theta^*_1 > \theta^*_2 > \theta^*_3$. If $\psi_2=\psi_3$, then $C_2(\beta, \psi_2)< C_{3}(\beta, \psi_3)$, reflecting that it is more difficult to certify that $\theta^*_2 \leq \theta^*_{I^*}$ than $\theta^*_3 \leq \theta^*_{I^*}$. The next proposition shows it is optimal to decrease $\psi_2$ and increase $\psi_1$, until the point when $C_{2}(\beta, \psi_2)=C_{3}(\beta, \psi_3)$. Instead of allocating \emph{equal measurement effort} to each alternative, it is optimal to adjust measurement effort to gather \emph{equal evidence} to rule out each suboptimal alternative. The results in this proposition are closely related to those in \citet{glynn2004large}, in which large deviations rate functions take the place of the functions $C_i$.

\begin{proposition}\label{prop: optimal constrained allocation}
	The solution to the optimization problem \eqref{eq: optimal constrained error exponent} is the unique allocation $\psibf^*$  satisfying $\psi^*_{I^*}=\beta$ and
	\[
	C_{i}(\beta , \psi_i) = C_{j}(\beta, \psi_j) \qquad \forall \, i,j \neq I^*.
	\]
	If $\psibf_n = \psibf^*$ for all $n$, then
	\[
	\Pi_{n}(\Theta_{I^*}^{c}) \doteq \exp\{ -n \Gamma^*_{\beta} \}.
	\]
	Moreover under any other adaptive allocation rule, if $\overline{\psi}_{n, I^*}\to\beta$ then
	\[
	\underset{n\to \infty}{\lim \sup}  \, -\frac{1}{n} \log \Pi_{n}(\Theta_{I^*}^{c}) \leq \Gamma^*_\beta
	\]
	almost surely.
\end{proposition}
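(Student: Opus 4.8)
The plan is to prove the three claims in order, with the third (the almost-sure upper bound on the exponent under an arbitrary adaptive rule) being the main technical step. For the first claim, I would establish existence of an optimizer of \eqref{eq: optimal constrained error exponent} by continuity of $\psibf \mapsto \min_{i\neq I^*} C_i(\beta, \psi_i)$ on the compact feasible set $\{\psibf : \psi_{I^*}=\beta,\ \psi_i \geq 0,\ \sum_{i\neq I^*}\psi_i = 1-\beta\}$, and then argue that any optimizer must equalize the $C_i$. The key facts are that each $C_i(\beta, \cdot)$ is strictly increasing and concave (Lemma \ref{lem: solves minimum over theta bar}) with $C_i(\beta,0)=0$, so that allocating every suboptimal design positive effort already yields a strictly positive value and hence $\Gamma^*_\beta>0$. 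If an optimizer $\psibf^*$ did not equalize, the set $T$ of designs with $C_i(\beta, \psi^*_i) > \Gamma^*_\beta$ would be nonempty and all such designs would carry strictly positive effort; shifting an infinitesimal amount of effort from $T$ to the argmin set strictly raises the objective, a contradiction. Uniqueness then follows from monotonicity: for each level $\gamma$, strict monotonicity of $C_i(\beta,\cdot)$ defines a unique $\psi_i(\gamma)$ solving $C_i(\beta,\psi_i(\gamma))=\gamma$, and $\gamma \mapsto \sum_{i\neq I^*}\psi_i(\gamma)$ is continuous and strictly increasing, running from $0$ up to $+\infty$ as $\gamma$ approaches $\min_{i\neq I^*}\beta\, d(\theta^*_{I^*}\|\theta^*_i)$, so a unique $\gamma$ makes the total effort equal $1-\beta$.

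For the second claim, I would apply the results already in hand to the constant allocation $\overline\psibf_n = \psibf^*$. Combining \eqref{eq: error probability to theta bar}, Proposition \ref{prop: posterior concentration} applied to the interior of each $\overline\Theta_i$ (the boundary $\{\theta_i = \theta_{I^*}\}$ has Lebesgue measure zero, hence zero posterior mass, so it does not affect $\Pi_n(\overline\Theta_i)$), and Lemma \ref{lem: solves minimum over theta bar}, I obtain $\Pi_n(\overline\Theta_i) \doteq \exp\{-n C_i(\beta, \psi^*_i)\}$. Taking the maximum over $i\neq I^*$ and using $\Pi_n(\Theta_{I^*}^c)\doteq \max_{i\neq I^*}\Pi_n(\overline\Theta_i)$ gives $\Pi_n(\Theta_{I^*}^c)\doteq \exp\{-n\min_{i\neq I^*} C_i(\beta,\psi^*_i)\} = \exp\{-n\Gamma^*_\beta\}$ by part one.

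For the third claim, the same chain of reasoning applies, but now with the data-dependent, time-varying empirical allocation $\overline\psibf_n$ inside Proposition \ref{prop: posterior concentration}, giving almost surely $-\frac{1}{n}\log\Pi_n(\Theta_{I^*}^c) = \Gamma_n + o(1)$, where $\Gamma_n \triangleq \min_{i\neq I^*} C_i(\overline\psi_{n,I^*}, \overline\psi_{n,i})$. It then suffices to show $\limsup_n \Gamma_n \leq \Gamma^*_\beta$. I would argue by compactness: the vectors $\overline\psibf_n$ lie in the probability simplex, so along any subsequence there is a further convergent subsequence $\overline\psibf_{n_k}\to\psibf'$; since $\overline\psi_{n,I^*}\to\beta$ by hypothesis and each $\overline\psibf_n$ sums to one, the limit satisfies $\psi'_{I^*}=\beta$ and $\sum_{i\neq I^*}\psi'_i = 1-\beta$, so $\psibf'$ is feasible for \eqref{eq: optimal constrained error exponent}. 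The map $(\beta,\psi)\mapsto C_i(\beta,\psi)$ is jointly continuous, being a pointwise minimum of functions affine in $(\beta,\psi)$, so $\Gamma_{n_k}\to \min_{i\neq I^*}C_i(\beta,\psi'_i)\leq \Gamma^*_\beta$ by definition of the constrained max-min. Since every subsequence admits such a sub-subsequence with limit at most $\Gamma^*_\beta$, we conclude $\limsup_n\Gamma_n\leq\Gamma^*_\beta$, which is the claim.

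The main obstacle is the third claim, and specifically transferring the logarithmic equivalence of Proposition \ref{prop: posterior concentration} — which governs $\Pi_n(\tilde\Theta)$ through the contemporaneous empirical allocation $\overline\psibf_n$ — into a statement about the $\limsup$ of the exponent when $\overline\psibf_n$ need never converge. The compactness-plus-continuity argument circumvents the lack of convergence, but it relies on the reduction to the pairwise quantities $C_i$ via \eqref{eq: error probability to theta bar} and Lemma \ref{lem: solves minimum over theta bar}, and on the harmless treatment of the non-open set $\Theta_{I^*}^c$ through the measure-zero boundaries of the $\overline\Theta_i$; care is needed to ensure the $o(1)$ error term is well enough behaved that it does not interfere with passing to limits along subsequences.
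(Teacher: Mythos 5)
Your proposal is correct and follows essentially the same route as the paper: existence plus an exchange argument forcing equalization of the $C_i$, uniqueness from strict monotonicity, and the two convergence claims obtained by combining Proposition \ref{prop: posterior concentration} with Lemma \ref{lem: solves minimum over theta bar} to get $\Pi_n(\Theta_{I^*}^c) \doteq \exp\{-n \min_{i\neq I^*} C_i(\overline{\psi}_{n,I^*},\overline{\psi}_{n,i})\}$ under any adaptive rule, followed by continuity of the $C_i$ and the definition of $\Gamma^*_\beta$ as a constrained maximum. The only deviations are cosmetic and sound: you prove uniqueness via the strictly increasing level map $\gamma \mapsto \sum_{i\neq I^*}\psi_i(\gamma)$ rather than the paper's direct monotonicity contradiction, and you handle the $\limsup$ by subsequence compactness where the paper invokes continuity directly --- your version is in fact slightly more careful, since it cleanly deals with the fact that $\sum_{i\neq I^*}\overline{\psi}_{n,i}$ may exceed $1-\beta$ at finite $n$ when $\overline{\psi}_{n,I^*}<\beta$.
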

The following lemma relates the constrained exponent $\Gamma^*_{\beta}$ to $\Gamma^*$.
\begin{lemma}\label{lem: relating the constrained exponent to the unconstrained} For $\beta^* = \arg\max_{\beta} \Gamma^*_{\beta}$ and any $\beta \in (0,1)$,
	\[
	\frac{\Gamma^*}{\Gamma^*_{\beta}}  \leq \max\left\{\frac{\beta^*}{\beta}, \frac{1-\beta^*}{1-\beta} \right\}.
	\]
	Therefore $\Gamma^* \leq 2\Gamma^*_{1/2}$.
\end{lemma}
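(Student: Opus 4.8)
The plan is to exploit two structural properties of the functions $C_i$ and then transport the optimizer of the unconstrained problem \eqref{eq: final optimal exponent} into a feasible point for the constrained problem \eqref{eq: optimal constrained error exponent}. The first property is that each $C_i$ is \emph{positively homogeneous of degree one}: from the definition \eqref{eq: definition of C}, $C_i(t\beta, t\psi) = \min_x [\, t\beta\, d(\theta^*_{I^*}\|x) + t\psi\, d(\theta^*_i\|x)\,] = t\, C_i(\beta,\psi)$ for every $t>0$, since scaling both coefficients scales the inner objective pointwise in $x$ and leaves the minimizer unchanged. The second property, recorded in Lemma \ref{lem: solves minimum over theta bar}, is that $C_i$ is (strictly) increasing in each argument; this is immediate from the definition because $d(\theta^*\|x)\geq 0$, so enlarging either coefficient raises the inner objective for every $x$.

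Next I would fix $\beta\in(0,1)$ and let $\psibf^*$ be the maximizer attaining $\Gamma^* = \Gamma^*_{\beta^*}$, so that $\psi^*_{I^*} = \beta^*$ and $\min_{i\neq I^*} C_i(\beta^*, \psi^*_i) = \Gamma^*$. I would build a candidate allocation $\tilde\psibf$ feasible for \eqref{eq: optimal constrained error exponent} by setting $\tilde\psi_{I^*} = \beta$ and rescaling the suboptimal coordinates proportionally, $\tilde\psi_i = \tfrac{1-\beta}{1-\beta^*}\psi^*_i$ for $i\neq I^*$; a short check shows $\sum_i \tilde\psi_i = \beta + \tfrac{1-\beta}{1-\beta^*}(1-\beta^*)=1$, so $\tilde\psibf$ is a probability vector with $\tilde\psi_{I^*}=\beta$. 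Writing $c = \min\{\beta/\beta^*,\, (1-\beta)/(1-\beta^*)\}$, the point $(c\beta^*, c\psi^*_i)$ is coordinatewise dominated by $(\beta, \tilde\psi_i)$, since $c\beta^* \leq \beta$ and $c\psi^*_i \leq \tfrac{1-\beta}{1-\beta^*}\psi^*_i = \tilde\psi_i$. Monotonicity and homogeneity then give, for each $i\neq I^*$,
\[
C_i(\beta, \tilde\psi_i) \;\geq\; C_i(c\beta^*, c\psi^*_i) \;=\; c\, C_i(\beta^*, \psi^*_i) \;\geq\; c\,\Gamma^*.
\]
Taking the minimum over $i\neq I^*$ and using that $\tilde\psibf$ is feasible for \eqref{eq: optimal constrained error exponent} yields $\Gamma^*_\beta \geq c\,\Gamma^*$, i.e. $\Gamma^*/\Gamma^*_\beta \leq 1/c = \max\{\beta^*/\beta,\,(1-\beta^*)/(1-\beta)\}$, which is the claimed bound. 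Specializing to $\beta = 1/2$ gives $\Gamma^*/\Gamma^*_{1/2} \leq 2\max\{\beta^*, 1-\beta^*\} \leq 2$, hence $\Gamma^* \leq 2\Gamma^*_{1/2}$.

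I expect the only real subtlety to be recognizing the degree-one homogeneity of $C_i$ and choosing the correct proportional rescaling of $\tilde\psibf$ (rescaling only the suboptimal coordinates, while pinning $\tilde\psi_{I^*}=\beta$); once these are in place the argument reduces to a one-line monotonicity estimate. I do not anticipate a genuine obstacle, since both homogeneity and monotonicity follow directly from the variational definition \eqref{eq: definition of C} together with Lemma \ref{lem: solves minimum over theta bar}, and no appeal to the concavity or to the equalizing characterization of Proposition \ref{prop: optimal constrained allocation} is required.
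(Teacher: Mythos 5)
Your proof is correct and takes essentially the same route as the paper's: both arguments hinge on the degree-one homogeneity of the exponent in the allocation vector together with (weak) monotonicity of the $C_i$, and both lose exactly the factor $\max\{\beta^*/\beta,\,(1-\beta^*)/(1-\beta)\}$ through a proportional rescaling. The paper packages this via an auxiliary function $f(\psibf)=\min_{i\neq I^*}C_i(\psi_{I^*},\psi_i)$ and a partial maximization $g(c_1,c_2)$ that is shown to be non-decreasing and homogeneous, whereas you rescale the $\beta^*$-optimal allocation directly into a feasible point for the $\beta$-constrained problem; the mathematical content is identical.
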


\subsection{Convergence of Top-Two Algorithms} \label{subsec: optimal adaptive allocation}
Instead of attempting to directly solve the optimization problem \eqref{eq: final optimal exponent}, this paper focuses on simple and intuitive sequential strategies. These algorithms have the potential to explore much more intelligently in early stages, as they carefully measure and reason about uncertainty. While they ostensibly have no connection to the derivations earlier in this section, we establish that remarkably all three automatically converge to the unknown optimal allocation. This is shown formally in the next result.

We are now ready to establish the paper's main claim, which shows that TTTS, TTPS, and TTVS each attain the error exponent $\Gamma^*_{\beta}$.
\begin{proposition}\label{prop: TS converges to optimal allocation}
	Under the TTTS, TTPS, or TTVS algorithm with parameter $\beta>0$, $\overline{\psibf}_{n} \to \psibf^{\beta}$, where $\psibf^\beta$ is the unique allocation with $\psi^{\beta}_{I^*}=\beta$ satisfying
	\[
	C_{i}(\beta, \psi^\beta_i)=C_{j}(\beta, \psi^\beta_j) \qquad  \forall i,j\neq I^*.
	\]
	Therefore, 
	\[
	\Pi_{n}(\Theta_{I^*}^{c}) \doteq e^{-n \Gamma^*_{\beta}}.
	\]
\end{proposition}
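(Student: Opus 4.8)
The plan is to prove the allocation-convergence claim $\overline{\psibf}_n \to \psibf^\beta$ first, and then read off the rate $\Pi_n(\Theta_{I^*}^c)\doteq e^{-n\Gamma^*_\beta}$ as an essentially immediate corollary. The workhorse throughout is Proposition~\ref{prop: posterior concentration} together with \eqref{eq: error probability to theta bar} and Lemma~\ref{lem: solves minimum over theta bar}, which give the almost-sure logarithmic equivalence
\[
\alpha_{n,i} \;\doteq\; \Pi_n(\overline{\Theta}_i) \;\doteq\; \exp\bigl\{ -n\, C_i(\overline{\psi}_{n,I^*}, \overline{\psi}_{n,i}) \bigr\}, \qquad i \neq I^*.
\]
The interpretation is that the posterior probability that suboptimal arm $i$ is optimal decays at a rate governed by $C_i$ evaluated at the current empirical allocation. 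Each top-two rule, when it does not play the estimated best arm, plays the suboptimal arm that currently looks most likely to be optimal, i.e.\ (asymptotically) the arm with the smallest accumulated evidence $C_i(\overline{\psi}_{n,I^*},\overline{\psi}_{n,i})$. Since $C_i$ is strictly increasing in its second argument (Lemma~\ref{lem: solves minimum over theta bar}), sampling the least-ruled-out arm raises its $C_i$ and lowers its posterior optimality probability; this is a self-correcting dynamic whose only rest point is the allocation at which all $C_i$ are equalized. Uniqueness of that rest point is exactly Proposition~\ref{prop: optimal constrained allocation}.

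I would carry this out in the following order, treating TTPS first. First, establish that every arm is measured infinitely often, $\Psi_{n,i}\to\infty$ for all $i$, arguing by contradiction: if $\mathcal{I}$ is the (nonempty) set of finitely-measured arms, the anti-concentration half of Proposition~\ref{prop: posterior consistency} keeps $\alpha_{n,i}$ bounded away from zero for some $i\in\mathcal{I}$, while the consistency half forces the complementary arms' posteriors to concentrate; this makes the under-measured arm appear among the top two infinitely often, contradicting $i\in\mathcal{I}$. Second, with all arms measured infinitely often, the consistency half of Proposition~\ref{prop: posterior consistency} gives $\alpha_{n,I^*}\to 1$; hence $\hat I_n = I^*$ for all large $n$, so $\psi_{n,I^*}=\beta$ eventually and $\overline{\psi}_{n,I^*}\to\beta$. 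Third (the crux), show $\overline{\psi}_{n,i}\to\psi^\beta_i$ for the suboptimal arms. Because $\overline{\psibf}_n$ lives in a compact simplex, it suffices to show every convergent subsequence has limit $\psibf^\beta$. Along such a subsequence with limit $\psibf^\infty$ (necessarily $\psi^\infty_{I^*}=\beta$), suppose the $C_i(\beta,\psi^\infty_i)$ are not all equal and let $i$ attain the strict minimum. Then by the logarithmic equivalence above $i$ is eventually the selected alternative $\hat J_n$, so its effort accumulates at rate $1-\beta$, its empirical frequency must increase, and by strict monotonicity of $C_i$ its evidence rises and eventually exceeds another arm's value; this instability is incompatible with convergence to $\psibf^\infty$ unless all the $C_i$ coincide at the limit. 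The uniqueness in Proposition~\ref{prop: optimal constrained allocation} then pins the limit down to $\psibf^\beta$.

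For TTTS and TTVS I would reduce to the TTPS analysis. Under TTTS the alternative is drawn with probability proportional to $\alpha_{n,j}/(1-\alpha_{n,j})$, which for $j\neq I^*$ is asymptotically proportional to $\alpha_{n,j}$; because these decay at exponentially separated rates whenever the $C_j$ differ, the sum over $j\neq I^*$ is dominated by the arm with the smallest $C_j$, so TTTS concentrates its alternative-effort on the same least-ruled-out arm and the identical equalization argument applies, while $\psi_{n,I^*}\to\beta$ follows directly from the closed form once $\alpha_{n,I^*}\to1$. Under TTVS I would invoke the asymptotic equivalence $V_{n,i}\doteq\alpha_{n,i}$ associated with \eqref{eq: def of V}: the weight $v_i$ is bounded and vanishes only linearly near the dominating boundary point $\theta_i=\theta_{I^*}$, so the weighting in \eqref{eq: def of V} alters only sub-exponential factors and leaves the first-order rate unchanged. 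Consequently $\arg\max_{j\neq\hat I_n}V_{n,j}$ eventually coincides with $\arg\max_{j\neq\hat I_n}\alpha_{n,j}$, and the TTPS argument transfers verbatim. Finally, once $\overline{\psibf}_n\to\psibf^\beta$, continuity of the $C_i$ gives $\min_{i\neq I^*} C_i(\overline{\psi}_{n,I^*},\overline{\psi}_{n,i})\to \Gamma^*_\beta$, and combining \eqref{eq: error probability to theta bar} with Proposition~\ref{prop: posterior concentration} and Lemma~\ref{lem: solves minimum over theta bar} yields $\Pi_n(\Theta_{I^*}^c)\doteq e^{-n\Gamma^*_\beta}$.

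The hard part will be the third step, the equalization of the suboptimal allocation. The delicacy is that the identity $\alpha_{n,i}\doteq\exp\{-nC_i\}$ controls only the first-order exponent, so comparing $\alpha_{n,i}$ across arms to decide which is actually selected is clean only when the limiting exponents are strictly separated --- precisely the regime used in the subsequence contradiction --- but becomes subtle near the fixed point where the exponents coincide. One must argue that the greedy, instantaneous selection cannot persistently over- or under-allocate without violating the first-order rates, and then convert this into a statement about the Cesàro average $\overline{\psibf}_n$ through a careful tracking argument rather than a naive subsequence argument. A secondary technical point is to run the analysis uniformly in $n$ and to control the simplex boundary (e.g.\ that no suboptimal arm's frequency is driven to $0$), which is exactly where the strict monotonicity and concavity of $C_i$ from Lemma~\ref{lem: solves minimum over theta bar}, together with the uniqueness from Proposition~\ref{prop: optimal constrained allocation}, do the essential work.
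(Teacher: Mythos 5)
Your overall skeleton matches the paper's: your first two steps (every design is sampled infinitely often via the anti-concentration half of Proposition \ref{prop: posterior consistency}, hence $\alpha_{n,I^*}\to 1$, hence $\hat{I}_n=I^*$ eventually and $\overline{\psi}_{n,I^*}\to\beta$), your reductions of TTTS and TTVS to the TTPS-style argument (via the bound $\psi_{n,i}\leq \alpha_{n,i}/\alpha_{n,\hat{J}_n}$ and via Lemma \ref{lem: value and probability are log equivalent}), and your final passage from $\overline{\psibf}_n\to\psibf^\beta$ to the rate via continuity of the $C_i$ and Proposition \ref{prop: optimal constrained allocation} are all essentially what the paper does. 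The genuine gap is exactly where you flag it: the equalization step. The subsequence argument as you state it does not close. If $\psibf^\infty$ is a subsequential limit at which the $C_i(\beta,\psi^\infty_i)$ are not all equal, showing that the minimizing arm $i$ is selected at the times $n_k$ (and in windows of length $o(n_k)$ around them) only shows that $\overline{\psi}_{n,i}$ rises shortly after those times; since $\overline{\psibf}_{n+1}-\overline{\psibf}_{n}\to 0$, the limit set of the Ces\`aro averages is a connected set that could, a priori, contain such non-equalized points, with the trajectory drifting away and returning. Turning ``local instability of a non-equalized rest point'' into ``the limit set is the singleton $\{\psibf^\beta\}$'' is precisely the stochastic-approximation-style tracking argument that you acknowledge is needed but do not supply, so the crux of the proposition is asserted rather than proven.

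The paper closes this gap with two lemmas that sidestep the fixed-point region entirely, and this is the idea your proposal is missing. First, Lemma \ref{lem: Over-allocation implies negligible probability} compares an over-allocated arm not against the other arms' current exponents (which, as you note, is delicate near equalization) but against the fixed benchmark $\Gamma^*_{\beta}$: once $\overline{\psi}_{n,I^*}\to\beta$, Proposition \ref{prop: optimal constrained allocation} forces $\max_{i\neq I^*}\alpha_{n,i}$ to decay no faster than $e^{-n\Gamma^*_{\beta}}$ on every such sample path, while $\alpha_{n,j}\leq \Pi_{n}(\overline{\Theta}_j)\doteq \exp\{-nC_{j}(\beta,\overline{\psi}_{n,j})\}$ together with strict monotonicity of $C_j$ gives $C_{j}(\beta,\overline{\psi}_{n,j})>\Gamma^*_{\beta}+\delta'$ whenever $\overline{\psi}_{n,j}\geq\psi^{\beta}_{j}+\delta$; hence the ratio $\alpha_{n,j}/\max_{i\neq I^*}\alpha_{n,i}$ is exponentially small at any time at which $j$ is over-allocated, so an over-allocated arm receives selection probability that is exponentially small (TTTS) or exactly zero (TTPS, TTVS). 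Second, Lemma \ref{lem: sufficient condition for optimality} is a purely deterministic Ces\`aro argument: the summability $\sum_{n}\psi_{n,j}\mathbf{1}(\overline{\psi}_{n,j}\geq\psi^{\beta}_{j}+\delta)<\infty$ for all $j\neq I^*$ and $\delta>0$, together with $\overline{\psi}_{n,I^*}\to\beta$, already implies $\overline{\psibf}_{n}\to\psibf^{\beta}$, since finite effort while over-allocated rules out both $\liminf$ and $\limsup$ violations and the constraint $\sum_{i}\overline{\psi}_{n,i}=1$ then pins down the limit. Replacing your subsequence contradiction with this one-sided ``over-allocation implies negligible effort'' criterion is what completes the proof.
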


To understand this result, imagine that $n$ is very large, and $\overline{\psi}_{n,I^*} \approx\beta$. If the algorithm has allocated too much measurement effort to a suboptimal action $i$, with $\overline{\psi}_{n,i} > \psi_{i}^\beta + \delta$ for a constant $\delta>0$,  then it must have allocated too little measurement effort to at least one other suboptimal design $j\neq i$. Since much less evidence has been gathered about $j$ than $i$, we expect $\alpha_{j,n} >> \alpha_{j,i}$. When this occurs, TTTS, TTPS and TTVS essentially never sample action $i$ until the average effort $\overline{\psi}_{n,i}$ allocated to design $i$ dips back down toward $\psi_{i}^\beta$. This seems to suggest that the algorithm cannot allocate too much effort to any alternative, but that in turn implies that it never allocates too little effort to measuring any alternative.

\subsection{Asymptotics of the Value Measure}
The proof for top-two value sampling relies on the following lemma, which shows that the posterior value of any suboptimal design is logarithmically equivalent to its probability of being optimal. 
\begin{lemma}\label{lem: value and probability are log equivalent}
	For any $i\neq I^*$, $V_{n,i} \doteq \alpha_{n,i}$
\end{lemma}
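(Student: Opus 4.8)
The plan is to sandwich $V_{n,i}$ between two quantities that are each logarithmically equivalent to $\alpha_{n,i}$, and to apply the posterior large-deviations estimate of Proposition~\ref{prop: posterior concentration}. Since $\Theta=(\underline\theta,\overline\theta)^k$ is bounded and $u$ is continuous, the integrand $v_i(\thetabf)$ is bounded on $\Theta_i$ by $L:=u(\overline\theta)-u(\underline\theta)$, which immediately gives the easy upper bound $V_{n,i}\le L\,\alpha_{n,i}$ and hence $\limsup_n \tfrac1n\log(V_{n,i}/\alpha_{n,i})\le 0$. For the matching lower bound, fix $\epsilon>0$ and let $\Theta_i^\epsilon:=\{\thetabf\in\Theta : v_i(\thetabf)>\epsilon\}$, which is open. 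On $\Theta_i^\epsilon$ the integrand is at least $\epsilon$, so $V_{n,i}\ge \epsilon\,\Pi_n(\Theta_i^\epsilon)$. It therefore suffices to show that, after letting $\epsilon\to0$, $\Pi_n(\Theta_i^\epsilon)$ is logarithmically no smaller than $\alpha_{n,i}=\Pi_n(\Theta_i)$.

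Applying Proposition~\ref{prop: posterior concentration} to the open sets $\Theta_i$ and $\Theta_i^\epsilon$ gives $\alpha_{n,i}\doteq \exp\{-n\,\inf_{\thetabf\in\Theta_i} D_{\overline{\psibf}_n}(\thetabf^*\|\thetabf)\}$ and $\Pi_n(\Theta_i^\epsilon)\doteq \exp\{-n\,\inf_{\thetabf\in\Theta_i^\epsilon} D_{\overline{\psibf}_n}(\thetabf^*\|\thetabf)\}$. Because $\overline{\psibf}_n$ wanders in the probability simplex as $n$ grows (indeed its limit is exactly what this lemma is used to pin down for TTVS), the quantity to control is the worst-case gap between these two infima,
\[
\eta(\epsilon) := \sup_{\psibf}\Big( \inf_{\thetabf\in\Theta_i^\epsilon} D_{\psibf}(\thetabf^*\|\thetabf) - \inf_{\thetabf\in\Theta_i} D_{\psibf}(\thetabf^*\|\thetabf) \Big),
\]
where the supremum ranges over all distributions $\psibf$ on $\{1,\dots,k\}$. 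Once I show $\eta(\epsilon)\to0$ as $\epsilon\to0$, combining the two displays yields $\limsup_n \tfrac1n\log(\alpha_{n,i}/\Pi_n(\Theta_i^\epsilon))\le\eta(\epsilon)$, and then $V_{n,i}\ge\epsilon\,\Pi_n(\Theta_i^\epsilon)$ gives $\limsup_n \tfrac1n\log(\alpha_{n,i}/V_{n,i})\le \eta(\epsilon)$; letting $\epsilon\to0$ closes the lower bound.

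The heart of the argument, and the step I expect to be the main obstacle, is establishing $\eta(\epsilon)\to0$ uniformly over all $\psibf$. I would split this into a Lipschitz estimate and a purely geometric estimate. For the former, the exponential-family computation $\partial_{\theta_j} d(\theta_j^*\|\theta_j)=A'(\theta_j)-A'(\theta_j^*)$ together with the bounded-derivative part of Assumption~\ref{assumption: main} shows that $\thetabf\mapsto D_{\psibf}(\thetabf^*\|\thetabf)=\sum_j \psi_j\, d(\theta_j^*\|\theta_j)$ is Lipschitz with a constant $K:=2\sup_{\theta\in[\underline\theta,\overline\theta]}|A'(\theta)|$ that does \emph{not} depend on $\psibf$. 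Perturbing a minimizer of $D_\psibf$ over $\mathrm{cl}(\Theta_i)$ to a nearby point of $\Theta_i^\epsilon$ then yields $\eta(\epsilon)\le K\cdot \sup_{\thetabf\in\mathrm{cl}(\Theta_i)}\mathrm{dist}(\thetabf,\Theta_i^\epsilon)$, reducing matters to the $\psibf$-free claim
\[
\sup_{\thetabf\in\mathrm{cl}(\Theta_i)}\mathrm{dist}(\thetabf,\Theta_i^\epsilon)\longrightarrow 0 \qquad (\epsilon\to0).
\]
Here the strict monotonicity of $u$ is essential: it gives $\{v_i>0\}=\Theta_i$, so the open sets $\Theta_i^\epsilon$ increase to $\Theta_i$ as $\epsilon\downarrow0$. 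The displayed uniform statement then follows from a compactness argument: if it failed, a convergent subsequence of near-worst points $\thetabf_m\to\thetabf_\infty\in\mathrm{cl}(\Theta_i)$ would force $v_i\le0$ on an entire relative neighborhood of $\thetabf_\infty$, contradicting $\thetabf_\infty\in\mathrm{cl}\{v_i>0\}$. This is precisely the delicate boundary region, where $v_i$ vanishes and $V_{n,i}$ could \emph{a priori} be exponentially smaller than $\alpha_{n,i}$; the argument controls it uniformly in the allocation, which is exactly what the wandering of $\overline{\psibf}_n$ demands. Assembling the upper and lower bounds gives $\lim_n \tfrac1n\log(V_{n,i}/\alpha_{n,i})=0$, i.e.\ $V_{n,i}\doteq\alpha_{n,i}$.
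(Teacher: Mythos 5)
Your proof is correct and takes essentially the same route as the paper's: the same trivial upper bound $V_{n,i}\le\left(u(\overline{\theta})-u(\underline{\theta})\right)\alpha_{n,i}$, and a lower bound obtained by restricting the integral to a margin subset of $\Theta_i$ on which $v_i$ is bounded below, applying Proposition \ref{prop: posterior concentration} to both open sets, and invoking the allocation-uniform Lipschitz continuity of $\thetabf\mapsto D_{\psibf}(\thetabf^*||\thetabf)$ (the paper's Lemma \ref{lem: uniform continuity of W}, which you re-derive with constant $2\sup_{\theta}|A'(\theta)|$) to make the gap between the two exponents vanish. The only cosmetic difference is your choice of margin set: the paper uses $\{\thetabf : \theta_i\ge\max_{j\ne i}\theta_j+\delta\}$, on which $v_i\ge C_\delta>0$ by strict monotonicity of $u$, whereas you use the superlevel set $\{v_i>\epsilon\}$ together with a compactness argument for its approximation of $\Theta_i$.
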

Note that by this lemma, 
\[
\Pi_{n}(\Theta^c_{I^*}) = \sum_{i\neq I^*} \alpha_{n,i} \doteq \sum_{i\neq I^*} V_{n,i}, 
\]
and so all of the asymptotic results in this could be reformulated as statements concerning the value assigned to suboptimal alternatives under the posterior. 

The lemma is not so surprising, as $V_{n,i} = \intop_{\Theta_i} v_{i}(\thetabf)\pi_{n}(\thetabf)d\thetabf$ differs from $\alpha_{n,i}=\intop_{\Theta_i} \pi_{n}(\thetabf)d\thetabf$ only because of the function $v_{i}(\thetabf)$. The $\pi_{n}(\thetabf)$ term dominates this integral as $n\to \infty$, since it tends to zero at an exponential rate in $n$ whereas $v_{i}(\thetabf)$ is a fixed function of $n$.

\section{Further Simulation Experiments}\label{sec: further experiments}

This section presents further simulation results. The focus is not on competitive benchmarking across the wide array of algorithms that have been proposed by researchers in statistics, operations research and computer science. While this could be enormously valuable, carrying out such experiments in a fair manner has proved challenging, as these algorithms are often derived under differing modeling assumptions and differing problem objectives, as well as with numerous tuning parameters that muddle comparisons. We instead aim here to focus on gaining clear insight into two questions. Namely: 
\begin{enumerate}
	\item How robust is the performance of the proposed top-two sampling algorithm to the choice of tuning parameter? Precisely, across a range of problem instances, how does top-two sampling with the default choice of $\beta=1/2$ compare relative to an omniscient version of the algorithm, which uses the optimal tuning parameter $\beta^*$ for that instance?
	\item How do top-two sampling algorithms, which need to learn and adapt to the long run optimal sampling proportions on each problem instance  $\thetabf^*$, perform relative to an omniscient policy that knows and tracks the ideal sampling proportions $\psibf^*(\thetabf^*)$ on each problem instance? 	
\end{enumerate}
This section presents simulation results across 14 problem settings. To reduce computational burden, as well as simplify the presentation of the results, the section focuses on top-two Thompson sampling and omits the other two variants of top-two sampling.  The results reveal strong performance of top-two Thompson sampling with the ad-hoc choice of tuning parameter $\beta=1/2$. Interesting, this method also consistently, and often substantially, outperforms the oracle policy $\psibf^*(\thetabf^*)$.

Each of the fourteen experiments investigates a different problem setting as described in Table \ref{table: experiment specs} below. The problems are divided between those with binary observations and those with standard Gaussian observation noise . For the binary experiments an independent uniform prior is used, while an independent $N(0,1)$ prior is used for the second experiment. We consider several types of configurations for the arm means. Experiments 10-14 present randomly drawn instances, where each $\theta^*_i$ was sampled independently for standard normal distribution. These were drawn using the numpy.random.normal function with seeds 1,2,3,4 and 5, respectively. In the configurations labeled ``ascending'', the arm means increase from lowest to highest with uniform separation between the arms. The slippage configuration was included specifically to investigate cases where top-two sampling performs poorly. In such settings, an equal allocation across arms attains an exponent that is quite competitive, as there are no very poor arms that can be easily ruled out using fewer samples. In addition, the exponent $\Gamma_{\frac{1}{2}}$ attained by TTTS with $\beta=1/2$ can be farther from the optimal $\Gamma^*$ than under other problem instances. The ratio of exponents $\Gamma^*/ \Gamma_{\frac{1}{2}}$ is displayed for each instance. 
  \begin{table}[]
 	\centering
 	\caption{Experiment Specifications}
 	\label{table: experiment specs}
 	\begin{tabular}{llllll}
 		& Noise    & Configuration & $k$ & True Arm Means $(\theta_1^*, \ldots \theta_k^*)$ & $\Gamma^*/\Gamma_{\frac{1}{2}}$ \\ \hline
 		1          & Binary   & Slippage      & 5    & (0.3, 0.3, 0.3, 0.3, 0.5)                                                                                    & 1.12  \\
 		2          & Binary   & Slippage      & 10   & (0.3, 0.3, 0.3, 0.3, 0.3, 0.3, 0.3, 0.3, 0.3, 0.5)                                                           & 1.26  \\
 		3          & Binary   & Slippage      & 15   & (0.3, 0.3, 0.3, 0.3, \ldots, 0.3, 0.3, 0.3, 0.3, 0.5)                                                             & 1.34  \\
 		4          & Binary   & Ascending     & 5    & (0.1, 0.2, 0.3, 0.4, 0.5)                                                                                    & 1.01  \\
 		5          & Binary   & Ascending     & 10   & (0.05, 0.1, 0.15, 0.2, 0.25, 0.3, 0.35, 0.4, 0.45, 0.5)                                                      & 1.01  \\
 		6          & Gaussian & Ascending     & 5    & (-0.5, -0.25, 0, 0.25, 0.5)                                                                                  & 1.01  \\
 		7          & Gaussian & Ascending     & 10   & (-0.5, -0.5, -0.5, -0.5, -0.5, -0.5, -0.25, 0, 0.25, 0.5)                                                    & 1.03  \\
 		8          & Gaussian & Slippage      & 5    & (0, 0, 0, 0, 0.5)                                                                                            & 1.11  \\
 		9          & Gaussian & Slippage      & 10   & (0, 0, 0, 0, 0, 0, 0, 0, 0, 0.5)                                                                             & 1.25  \\
 		10         & Gaussian & Random        & 10   & (-2.3, -1.1, -0.8, -0.6, -0.5, -0.2, 0.3,  0.9,  1.6,  1.7) & 1.00\\
 		11         & Gaussian & Random        & 10   & (-2.1, -1.8, -1.2, -1.1, -0.9, -0.8, -0.4, -0.1,  0.5,   1.6)   & 1.10   \\
 		12         & Gaussian & Random        & 10   & (-1.9, -0.6, -0.5, -0.4, -0.3, -0.1, -0. ,  0.1,  0.4,  1.8)  & 1.19  \\
 		13         & Gaussian & Random        & 10   & (-1.6, -1.1, -1. , -0.6, -0.4, 0.1,  0.3,  0.5, 0.6,  0.7) & 1.01  \\
 		14         & Gaussian & Random        & 10   & (-0.9, -0.6, -0.3, -0.3, -0.3,  0.1,  0.2,  0.4, 1.6,  2.4)& 1.04 
 	\end{tabular}
 \end{table}
 
Figure \ref{fig: Average Sample Size vs Oracle} displays the average number of measurements required for the posterior to reach a given confidence level. In particular, the experiment tracks the frist time when $\max_{i} \alpha_{n,i} \geq c$ for confidence levels $c=.9$ and $c=.99$. All results are averaged over 400 trials. 

The ``Large deviations oracle,'' labeled ``LD oracle'' in Figure \ref{fig: Average Sample Size vs Oracle}, implements the optimal fixed allocation $\psi^*(\thetabf^*)$ as prescribed by large deviations theory. At each time $n$, the algorithm constructs the target proportions $n \cdot\psi^*(\thetabf^*)$ and plays the arm that is most under-sampled relative to these proportions. For problems with Gaussian noise, the optimal computing budget allocation (OCBA) of  \citet{chen2000simulation} is a widely used approximation to the fixed allocation $\psi^*(\thetabf^*)$. The algorithm labeled OCBA oracle implements the true sampling proportions specified by \citet{chen2000simulation} for each problem instance. We also compare the uniform, or equal allocation, TTTS with tuning parameter $\beta=1/2$ and TTTS Oracle, which is TTTS with the optimal problem dependent tuning parameter $\beta^*$. 

At a high level, there are two key findings from these experiments. In all cases, sample size comparisons refer to the confidence level $c=.99$. 
\begin{enumerate}
	\item Top-two Thompson sampling with tuning parameter 1/2 generally offers similar performance to top-two Thompson sampling with the optimal tuning parameter $\beta^*$. The most significant separation in performance was on slippage configurations, where TTTS with optimal tuning parameter saved up to 15\% of samples on average. On most other instances, using the optimal tuning parameter offered no improvement. 
	\item The large deviations oracle  and the OCBA oracle were consistently, and sometimes dramatically, outperformed. Each one required least 19\% more samples on average than ${\rm TTTS}(1/2)$ for \emph{all 14 experiments}. In their worst experiments, the LD oracle and OCBA oracle used respectively more than 200\% and 300\% the average number of samples used by ${\rm TTTS}(1/2)$.
\end{enumerate}

The second finding may be quite surprising to some readers. There is a quite a large literature that aims to implement optimal large deviations allocations derived in \cite{glynn2004large}, or a simpler approximation to these in the Gaussian case known as the OCBA \citep{chen2000simulation}. Such approaches have also been extended to a number of related problem settings. A major challenge, however, is that the allocations cannot be directly implemented as they require knowledge of the true problem instance $\thetabf^*$. Researchers typically implement an approach that solves for the optimal budget allocation under point estimate $\hat{\thetabf}$ of $\thetabf^*$, aiming to converge to the prescribed optimal sampling proportions as rapidly as possible. Here, we instead compete against an oracle that knows and carefully follows the asymptotically optimal sampling proportions for each problem instance. Even these oracle policies are significantly outperformed by Top-two Thompson with the an ad-hoc choice of tuning parameter. 

It is an open question to formalize the reasons for this empirical finding. It is worth offering some possible intuition, however. First, the oracle allocations are based on a number of approximations, including tail approximations to the posterior of each arm and certain union bounds or Bonferonni approximations. By contrast, Thompson sampling uses exact samples from the posterior distribution, and may more accurately reflect uncertainty in early stages. Second, even if the oracle allocations know the true-arm means, they do not adapt in response to unusual observations. Thompson sampling, on the other hand is fully adaptive, and can gather fewer samples from an arm if early samples provide strong evidence that arm is suboptimal. Some of the benefits of adaptivity are suggested in \cite{}.

\begin{figure}[h!]
	\centering
	\begin{subfigure}{.5\textwidth}
		\includegraphics[width=1\linewidth]{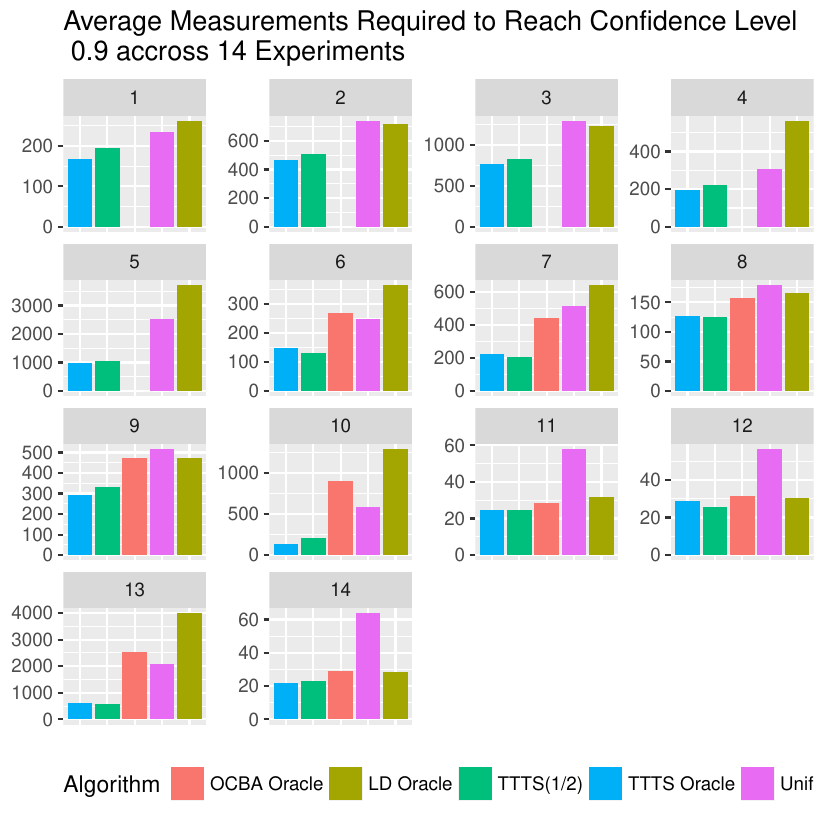}
		\label{fig:sub1}
	\end{subfigure}%
	\begin{subfigure}{.5\textwidth}
		\includegraphics[width=1\linewidth]{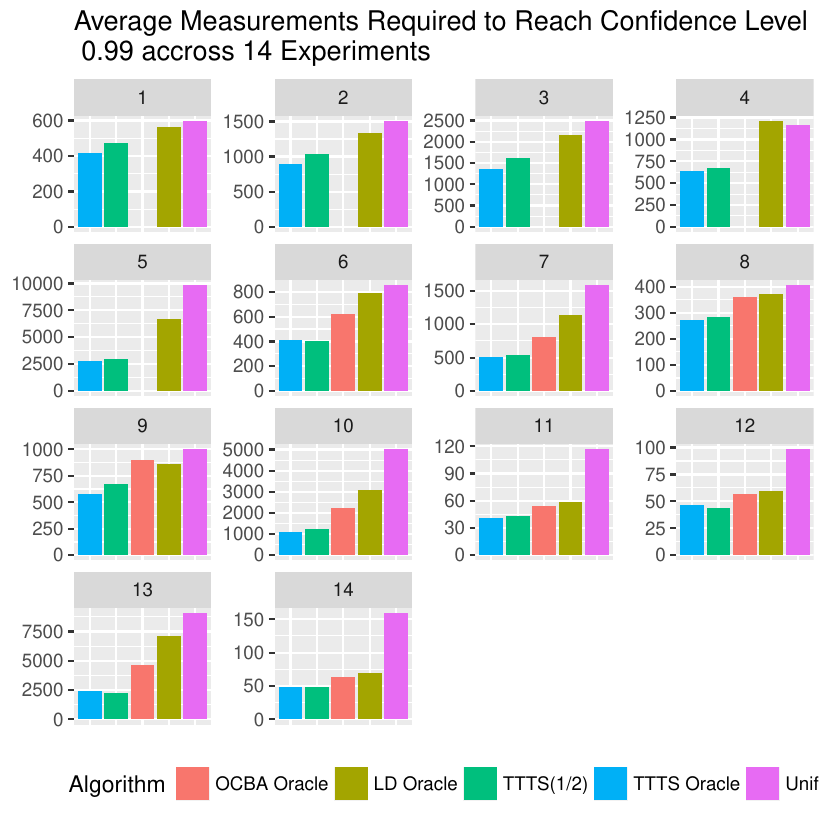}
		\label{fig:sub2}
	\end{subfigure}
	\caption{Average sample size required to reach confidence relative to ``oracle'' allocations.} 
	\label{fig: Average Sample Size vs Oracle}
\end{figure}

\section{Extensions and Open Problems}\label{sec: conclusion}
This paper studies efficient adaptive allocation of measurement effort for identifying the best among a finite set of options or designs. We propose three simple Bayesian algorithms. Each is a variant of what we call top-two sampling, which, at each time-step, measures one of the two designs that appear most promising given current evidence. Surprisingly, these seemingly naive algorithms are shown to satisfy a strong asymptotic optimality property. 

Top two sampling appears to be a general design principle that can be extended to address a variety of problems beyond to the scope of this paper. To spur research in this area, we briefly discuss a number of extensions and open questions below.


\paragraph{Top-Two Sampling Via Constrained MAP Estimation.}
Here we present a version of top-two sampling that uses MAP estimation.  This can simplify computations, as MAP estimates can be computed without solving for the normalizing constant of the posterior density $\pi_{n}(\thetabf)$. Consider the following procedure for selecting a design at time $n$:
\begin{enumerate}
	\item Compute $\hat{\thetabf} \in \arg\max_{\thetabf \in \Theta} \pi_{n}(\thetabf)$ and set $\hat{I}_n = \arg\max_{i} \hat{\theta}_i$. 
	\item Compute $\hat{\thetabf}' \in \arg\max_{\thetabf \in \Theta^c_{\hat{I}_n}} \pi_{n}(\thetabf)$ and set $\hat{J}_n = \arg\max_{i} \hat{\theta}'_i$. 
	\item Play $(\hat{I}_n, \hat{J}_n)$ with respective probabilities $(\beta, 1-\beta)$. 
\end{enumerate}
The first step uses MAP estimation to make a prediction $\hat{I}_n$ of the best design, while the second uses constrained MAP estimation to identify the alternative design that is most likely to be optimal when $\hat{I}_n$ is not. 
Many of the asymptotic calculations in the previous section appear to extend to this algorithm, but proving this formally is left as an open problem.

\paragraph{Indifference Zone Criterion.} Suppose our goal is to confidently identify an $\epsilon$--optimal arm, for a user specified indifference parameter $\epsilon>0$. Much of the paper investigates the set of parameters $\Theta_{i}$ under which arm $i$ is optimal, and studies the rate at which $\Pi_{n}(\Theta_{I^*}) \to 1$. Now, let us instead consider the set of parameters
\[
\Theta_{\epsilon, i}= \{\thetabf | \theta_i \geq  \max_{j} \theta_{j} - \epsilon \}
\]
under which $i$ is $\epsilon$--optimal. It is easy to develop a variety of modified top-two sampling rules under which $\max_{i} \Pi_{n}(\Theta_{\epsilon,i}) \to 1$ rapidly. For example, we can extend TTPS as follows: set $\hat{I}_n = \arg\max_{i} \Pi_{n}(\Theta_{\epsilon,i})$. Define $\hat{J}_n=\arg\max_{j\neq \hat{I}_n} \Pi_{n}(\thetabf | \theta_j = \max_{i}\theta_i\,\, \& \,\, \theta_j> \theta_{\hat{I}_n} + \epsilon   ) $ to be the alternative design that is most likely to be optimal and offer an $\epsilon$--improvement over $\hat{I}_n$. A top-two Thompson sampling approach might instead continue sampling $\thetabf \sim \Pi_n$ until $\max_{i} \theta_i > \theta_{\hat{I}_n}+\epsilon$ and then set $J_{n} = \arg\max_{i} \theta_i$. 

\paragraph{Top $m$--arm identification.} Suppose now that our goal is to identify the top  $m < k$ designs. Consider choosing a design to measure at time $n$ by the following steps: 
\begin{enumerate}
	\item Sample $\thetabf \sim \Pi_n$ and compute the top $m$ designs under $\thetabf$. 
	\item Continue sampling $\thetabf' \sim \Pi_n$ until the top $m$ designs under $\thetabf'$ differ from those under $\thetabf$. 
	\item Identify the set of designs that are in the top $m$ under $\thetabf$ or under $\thetabf'$, but not under both. Choose a design to measure by sampling one uniformly at random from this set. 
\end{enumerate} 
This is the natural extension of top-two Thompson sampling to the top-$m$ arm problem. In fact, when $m=1$, this is exactly TTTS with $\beta=1/2$. I conjecture that like the case where $m=1$, this algorithm attains a rate of posterior convergence within a factor of 2 of optimal for general $m$. The optimal exponent for this problem can be calculated by mirroring the steps in Subsection \ref{subsec: optimal allocation}. 
 
\paragraph{Extremely Correlated Designs.}
While our results apply in the case of correlated priors, the proposed algorithms may be wasteful when there are a large number of designs whose qualities are extremely correlated. As an example, consider an extension of our techniques to a pure-exploration variant of a linear bandit problem. Here we associate each action $i$ with a feature vector $x_i \in \mathbb{R}^d$ and seek an action that maximizes $x_i^T \theta$. The vector $\theta \in \mathbb{R}^d$ is unknown, but we begin with a prior $\theta\sim N(0,I)$ and see noisy observations of $x_i ^T \theta$ whenever action $i$ is selected. To apply top-two sampling to this problem, we should modify the algorithm's second step. For example, under top-two Thompson sampling, we usually begin drawing a design according to $\hat{i}\sim \alphabf_n$, and then continue drawing designs $\hat{j}\sim \alphabf_n$ until $\hat{i}\neq \hat{j}$. These are played with respective probabilities $(\beta,1-\beta)$. But even if $\hat{i}\neq \hat{j}$, their features may be nearly identical. A more natural extension of top-two Thompson sampling would modify the second step, and continue sampling $\hat{j}\sim \alphabf_n$, until a sufficiently different action is drawn -- for example until the angle between $x_{\hat{j}}$ and $x_{\hat{i}}$ exceeds a threshold. 

\paragraph{Tuning $\beta$.}
The most glaring gap in this work may be arbitrary choice of tuning parameter $\beta$. Optimal asymptotic rates can be attained by adjusting this parameter over time by solving for an optimal allocation as in \eqref{eq: final optimal exponent}. It is an open problem to instead develop simple algorithms that set $\beta$ automatically through value of information calculations, or avoid the need for such a parameter altogether. 

\paragraph{Adaptive Stopping.} This paper proposed only an allocation rule, which determines the sequence of measurements to draw, but this can be coupled with a rule that determines when to stop sampling. One natural stopping rule in a Bayesian framework is to stop when $\max_{i}\alpha_{n,i}>1-\delta$ for some $\delta>0$. Let $\tau_{\delta}$ be a random variable indicating the stopping time under constraint $\delta$. Since 
$1-\max_{i}\alpha_{n,i} \doteq e^{-n\Gamma^*_\beta}$ under top-two sampling, our results imply that for each sample path $\tau_{\delta} \sim \Gamma^*_{\beta} \log(1/\delta)$ as $\delta \to 0$. It is natural to conjecture that  $\E[\tau_{\delta}] \sim \Gamma^*_{\beta} \log(1/\delta)$ as well. This closely mirrors optimal results in \cite{chernoff1959sequential, jennison1982asymptotically} and \cite{kaufmann2016bayesian}. Does this rule also yield a frequentist probability of incorrect selection that is $O(\delta)$ as $\delta\to 0$? More generally, an open problem is to show that when combined with an appropriate stopping rule, top-two sampling schemes nearly minimize the expected number of samples $\E[\tau_{\delta}]$  as in \cite{jennison1982asymptotically} or \cite{kaufmann2016bayesian}.

\setlength{\bibsep}{8pt}
{
\singlespacing
\bibliography{references}

\begin{thebibliography}{67}
\providecommand{\natexlab}[1]{#1}
\providecommand{\url}[1]{\texttt{#1}}
\expandafter\ifx\csname urlstyle\endcsname\relax
  \providecommand{\doi}[1]{doi: #1}\else
  \providecommand{\doi}{doi: \begingroup \urlstyle{rm}\Url}\fi

\bibitem[Agrawal and Goyal(2012)]{agrawal2012analysis}
S.~Agrawal and N.~Goyal.
\newblock Analysis of {T}hompson sampling for the multi-armed bandit problem.
\newblock In \emph{Proceedings of the 21st Annual Conference on Learning Theory
  (COLT)}, 2012.

\bibitem[Albert(1961)]{albert1961sequential}
A.~E. Albert.
\newblock The sequential design of experiments for infinitely many states of
  nature.
\newblock \emph{The Annals of Mathematical Statistics}, pages 774--799, 1961.

\bibitem[Audibert and Bubeck(2010)]{audibert2010best}
J.-Y. Audibert and S.~Bubeck.
\newblock Best arm identification in multi-armed bandits.
\newblock In \emph{COLT-23th Conference on Learning Theory-2010}, pages 13--p,
  2010.

\bibitem[Bechhofer(1954)]{bechhofer1954single}
R.~E. Bechhofer.
\newblock A single-sample multiple decision procedure for ranking means of
  normal populations with known variances.
\newblock \emph{The Annals of Mathematical Statistics}, pages 16--39, 1954.

\bibitem[Berry(2004)]{berry2004bayesian}
D.~A. Berry.
\newblock Bayesian statistics and the efficiency and ethics of clinical trials.
\newblock \emph{Statistical Science}, pages 175--187, 2004.

\bibitem[Boyd and Vandenberghe(2004)]{boyd2004convex}
S.P. Boyd and L.~Vandenberghe.
\newblock \emph{Convex optimization}.
\newblock Cambridge university press, 2004.

\bibitem[Bubeck et~al.(2009)Bubeck, Munos, and Stoltz]{bubeck2009pure}
S.~Bubeck, R.~Munos, and G.~Stoltz.
\newblock Pure exploration in multi-armed bandits problems.
\newblock In \emph{Algorithmic Learning Theory}, pages 23--37. Springer, 2009.

\bibitem[Chapelle and Li(2011)]{chapelle2011empirical}
O.~Chapelle and L.~Li.
\newblock An empirical evaluation of {T}hompson sampling.
\newblock In \emph{Neural Information Processing Systems (NIPS)}, 2011.

\bibitem[Chen et~al.(2000)Chen, Lin, Y{\"u}cesan, and
  Chick]{chen2000simulation}
C.-H. Chen, J.~Lin, E.~Y{\"u}cesan, and S.~E. Chick.
\newblock Simulation budget allocation for further enhancing the efficiency of
  ordinal optimization.
\newblock \emph{Discrete Event Dynamic Systems}, 10\penalty0 (3):\penalty0
  251--270, 2000.

\bibitem[Chen et~al.(2008)Chen, He, Fu, and Lee]{chen2008efficient}
Chun-Hung Chen, Donghai He, Michael Fu, and Loo~Hay Lee.
\newblock Efficient simulation budget allocation for selecting an optimal
  subset.
\newblock \emph{INFORMS Journal on Computing}, 20\penalty0 (4):\penalty0
  579--595, 2008.

\bibitem[Chen et~al.(2015)Chen, Chick, Lee, and Pujowidianto]{chen2015ranking}
Chun-Hung Chen, Stephen~E Chick, Loo~Hay Lee, and Nugroho~A Pujowidianto.
\newblock Ranking and selection: efficient simulation budget allocation.
\newblock In \emph{Handbook of Simulation Optimization}, pages 45--80.
  Springer, 2015.

\bibitem[Chernoff(1959)]{chernoff1959sequential}
H.~Chernoff.
\newblock Sequential design of experiments.
\newblock \emph{The Annals of Mathematical Statistics}, pages 755--770, 1959.

\bibitem[Chernoff(1975)]{chernoff1975approaches}
H.~Chernoff.
\newblock Approaches in sequential design of experiments.
\newblock \emph{A Survey of Statistical Design and Linear Models (Edited by J.
  N. Srivastava)}, pages 67--90, 1975.

\bibitem[Chick and Frazier(2012)]{chick2012sequential}
S.~E. Chick and P.~Frazier.
\newblock Sequential sampling with economics of selection procedures.
\newblock \emph{Management Science}, 58\penalty0 (3):\penalty0 550--569, 2012.

\bibitem[Chick and Gans(2009)]{chick2009economic}
S.~E. Chick and N.~Gans.
\newblock Economic analysis of simulation selection problems.
\newblock \emph{Management Science}, 55\penalty0 (3):\penalty0 421--437, 2009.

\bibitem[Chick et~al.(2010)Chick, Branke, and Schmidt]{chick2010sequential}
S.~E. Chick, J.~Branke, and C.~Schmidt.
\newblock Sequential sampling to myopically maximize the expected value of
  information.
\newblock \emph{INFORMS Journal on Computing}, 22\penalty0 (1):\penalty0
  71--80, 2010.

\bibitem[Chick and Inoue(2001)]{chick2001new}
Stephen~E Chick and Koichiro Inoue.
\newblock New two-stage and sequential procedures for selecting the best
  simulated system.
\newblock \emph{Operations Research}, 49\penalty0 (5):\penalty0 732--743, 2001.

\bibitem[Even-Dar et~al.(2002)Even-Dar, Mannor, and Mansour]{even2002pac}
E.~Even-Dar, S.~Mannor, and Y.~Mansour.
\newblock Pac bounds for multi-armed bandit and markov decision processes.
\newblock In \emph{Computational Learning Theory}, pages 255--270. Springer,
  2002.

\bibitem[Fan et~al.(2016)Fan, Hong, and Nelson]{fan2016indifference}
Weiwei Fan, L~Jeff Hong, and Barry~L Nelson.
\newblock Indifference-zone-free selection of the best.
\newblock \emph{Operations Research}, 64\penalty0 (6):\penalty0 1499--1514,
  2016.

\bibitem[Frazier(2014)]{frazier2014fully}
P.~I. Frazier.
\newblock A fully sequential elimination procedure for indifference-zone
  ranking and selection with tight bounds on probability of correct selection.
\newblock \emph{Operations Research}, 62\penalty0 (4):\penalty0 926--942, 2014.

\bibitem[Frazier et~al.(2008)Frazier, Powell, and
  Dayanik]{frazier2008knowledge}
P.I. Frazier, W.B. Powell, and S.~Dayanik.
\newblock A knowledge-gradient policy for sequential information collection.
\newblock \emph{SIAM Journal on Control and Optimization}, 47\penalty0
  (5):\penalty0 2410--2439, 2008.

\bibitem[Gabillon et~al.(2012)Gabillon, Ghavamzadeh, and
  Lazaric]{gabillon2012best}
V.~Gabillon, M.~Ghavamzadeh, and A.~Lazaric.
\newblock Best arm identification: A unified approach to fixed budget and fixed
  confidence.
\newblock In \emph{Advances in Neural Information Processing Systems}, pages
  3212--3220, 2012.

\bibitem[Garivier and Kaufmann(2016)]{garivier2016optimal}
Aur{\'e}lien Garivier and Emilie Kaufmann.
\newblock Optimal best arm identification with fixed confidence.
\newblock In \emph{Conference on Learning Theory (COLT)}, 2016.

\bibitem[Gittins(1979)]{gittins1979bandit}
J.C. Gittins.
\newblock Bandit processes and dynamic allocation indices.
\newblock \emph{J. R. Statist. Soc. B}, {\bf 41}:\penalty0 148--164, 1979.

\bibitem[Gittins and Jones(1974)]{jones1972dynamic}
J.C. Gittins and D.M. Jones.
\newblock A dynamic allocation index for the sequential design of experiments.
\newblock In J.~Gani, editor, \emph{Progress in Statistics}, pages 241--266.
  North-Holland, Amsterdam, NL, 1974.

\bibitem[Glynn and Juneja(2004)]{glynn2004large}
P.~Glynn and S.~Juneja.
\newblock A large deviations perspective on ordinal optimization.
\newblock In \emph{Simulation Conference, 2004. Proceedings of the 2004
  Winter}, volume~1. IEEE, 2004.

\bibitem[Glynn and Juneja(2015)]{glynn2015ordinal}
P.~Glynn and S.~Juneja.
\newblock Ordinal optimization-empirical large deviations rate estimators, and
  stochastic multi-armed bandits.
\newblock \emph{arXiv preprint arXiv:1507.04564}, 2015.

\bibitem[Gopalan et~al.(2014)Gopalan, Mannor, and Mansour]{gopalan2014thompson}
A.~Gopalan, S.~Mannor, and Y.~Mansour.
\newblock Thompson sampling for complex online problems.
\newblock In \emph{Proceedings of The 31st International Conference on Machine
  Learning}, pages 100--108, 2014.

\bibitem[Graepel et~al.(2010)Graepel, Candela, Borchert, and
  Herbrich]{graepel2010web}
T.~Graepel, J.Q. Candela, T.~Borchert, and R.~Herbrich.
\newblock Web-scale {B}ayesian click-through rate prediction for sponsored
  search advertising in {M}icrosoft's {B}ing search engine.
\newblock In \emph{Proceedings of the 27th International Conference on Machine
  Learning (ICML-10)}, pages 13--20, 2010.

\bibitem[Gray(2011)]{gray2011entropy}
R.M. Gray.
\newblock \emph{Entropy and information theory}.
\newblock Springer, 2011.

\bibitem[Gupta and Miescke(1996)]{gupta1996bayesian}
S.~S. Gupta and K.~J. Miescke.
\newblock Bayesian look ahead one-stage sampling allocations for selection of
  the best population.
\newblock \emph{Journal of statistical planning and inference}, 54\penalty0
  (2):\penalty0 229--244, 1996.

\bibitem[Gutin and Farias(2016)]{gutin2016optimistic}
E.~Gutin and V.~Farias.
\newblock Optimistic gittins indices.
\newblock In \emph{Advances In Neural Information Processing Systems}, pages
  2737--2745, 2016.

\bibitem[Hong et~al.(2015)Hong, Nelson, and Xu]{hong2015discrete}
L~Jeff Hong, Barry~L Nelson, and Jie Xu.
\newblock Discrete optimization via simulation.
\newblock In \emph{Handbook of simulation optimization}, pages 9--44. Springer,
  2015.

\bibitem[Hunter and Pasupathy(2013)]{hunter2013optimal}
Susan~R Hunter and Raghu Pasupathy.
\newblock Optimal sampling laws for stochastically constrained simulation
  optimization on finite sets.
\newblock \emph{INFORMS Journal on Computing}, 25\penalty0 (3):\penalty0
  527--542, 2013.

\bibitem[Jamieson and Nowak(2014)]{jamieson2014best}
K.~Jamieson and R.~Nowak.
\newblock Best-arm identification algorithms for multi-armed bandits in the
  fixed confidence setting.
\newblock In \emph{Information Sciences and Systems (CISS), 2014 48th Annual
  Conference on}, pages 1--6. IEEE, 2014.

\bibitem[Jennison et~al.(1982)Jennison, Johnstone, and
  Turnbull]{jennison1982asymptotically}
C.~Jennison, I.~M. Johnstone, and B.~W. Turnbull.
\newblock Asymptotically optimal procedures for sequential adaptive selection
  of the best of several normal means.
\newblock \emph{Statistical decision theory and related topics III},
  2:\penalty0 55--86, 1982.

\bibitem[Johnson et~al.(2015)Johnson, Simchi-Levi, and Wang]{johnson2015online}
K.~Johnson, D.~Simchi-Levi, and H.~Wang.
\newblock Online network revenue management using thompson sampling.
\newblock \emph{Available at SSRN}, 2015.

\bibitem[Karnin et~al.(2013)Karnin, Koren, and Somekh]{karnin2013almost}
Z.~Karnin, T.~Koren, and O.~Somekh.
\newblock Almost optimal exploration in multi-armed bandits.
\newblock In \emph{Proceedings of the 30th International Conference on Machine
  Learning (ICML-13)}, pages 1238--1246, 2013.

\bibitem[Kauffmann et~al.(2012)Kauffmann, Korda, and
  Munos]{kaufmann2012thompson}
E.~Kauffmann, N.~Korda, and R.~Munos.
\newblock {T}hompson sampling: an asymptotically optimal finite time analysis.
\newblock In \emph{International Conference on Algorithmic Learning Theory},
  2012.

\bibitem[Kaufmann(2016)]{kaufmann2016bayesian}
E.~Kaufmann.
\newblock On bayesian index policies for sequential resource allocation.
\newblock \emph{arXiv preprint arXiv:1601.01190}, 2016.

\bibitem[Kaufmann and Kalyanakrishnan(2013)]{kaufmann2013information}
E.~Kaufmann and S.~Kalyanakrishnan.
\newblock Information complexity in bandit subset selection.
\newblock In \emph{Conference on Learning Theory}, pages 228--251, 2013.

\bibitem[Kaufmann et~al.(2012)Kaufmann, Capp{\'e}, and
  Garivier]{kaufmann2012bayesian}
E.~Kaufmann, O.~Capp{\'e}, and A.~Garivier.
\newblock On {B}ayesian upper confidence bounds for bandit problems.
\newblock In \emph{Conference on Artificial Intelligence and Statistics
  (AISTATS)}, 2012.

\bibitem[Kaufmann et~al.(2014)Kaufmann, Capp{\'e}, and
  Garivier]{kaufmann2014complexity}
E.~Kaufmann, O.~Capp{\'e}, and A.~Garivier.
\newblock On the complexity of best arm identification in multi-armed bandit
  models.
\newblock \emph{arXiv preprint arXiv:1407.4443}, 2014.

\bibitem[Keener(1984)]{keener1984second}
R.~Keener.
\newblock Second order efficiency in the sequential design of experiments.
\newblock \emph{The Annals of Statistics}, pages 510--532, 1984.

\bibitem[Kiefer and Sacks(1963)]{kiefer1963asymptotically}
J.~Kiefer and J.~Sacks.
\newblock Asymptotically optimum sequential inference and design.
\newblock \emph{The Annals of Mathematical Statistics}, pages 705--750, 1963.

\bibitem[Kim and Nelson(2006)]{kim2006selecting}
S.-H. Kim and B.~L. Nelson.
\newblock Selecting the best system.
\newblock \emph{Handbooks in operations research and management science},
  13:\penalty0 501--534, 2006.

\bibitem[Kim and Nelson(2007)]{kim2007recent}
S.-H. Kim and B.~L. Nelson.
\newblock Recent advances in ranking and selection.
\newblock In \emph{Proceedings of the 39th conference on Winter simulation: 40
  years! The best is yet to come}, pages 162--172. IEEE Press, 2007.

\bibitem[Korda et~al.(2013)Korda, Kaufmann, and Munos]{Korda2013Thompson}
N.~Korda, E.~Kaufmann, and R.~Munos.
\newblock {T}hompson sampling for one-dimensional exponential family bandits.
\newblock In \emph{Advances in Neural Information Processing Systems}, 2013.

\bibitem[Lai and Robbins(1985)]{lai1985asymptotically}
T.L. Lai and H.~Robbins.
\newblock Asymptotically efficient adaptive allocation rules.
\newblock \emph{Advances in applied mathematics}, 6\penalty0 (1):\penalty0
  4--22, 1985.

\bibitem[Mannor and Tsitsiklis(2004)]{mannor2004sample}
S.~Mannor and J.~N. Tsitsiklis.
\newblock The sample complexity of exploration in the multi-armed bandit
  problem.
\newblock \emph{The Journal of Machine Learning Research}, 5:\penalty0
  623--648, 2004.

\bibitem[Naghshvar et~al.(2013)Naghshvar, Javidi, et~al.]{naghshvar2013active}
M.~Naghshvar, T.~Javidi, et~al.
\newblock Active sequential hypothesis testing.
\newblock \emph{The Annals of Statistics}, 41\penalty0 (6):\penalty0
  2703--2738, 2013.

\bibitem[Ni et~al.(2017)Ni, Ciocan, Henderson, and Hunter]{ni2017efficient}
Eric~C Ni, Dragos~F Ciocan, Shane~G Henderson, and Susan~R Hunter.
\newblock Efficient ranking and selection in parallel computing environments.
\newblock \emph{Operations Research}, 65\penalty0 (3):\penalty0 821--836, 2017.

\bibitem[Nitinawarat et~al.(2013)Nitinawarat, Atia, and
  Veeravalli]{nitinawarat2013controlled}
S.~Nitinawarat, G.~K Atia, and V.~V. Veeravalli.
\newblock Controlled sensing for multihypothesis testing.
\newblock \emph{IEEE Transactions on Automatic Control}, 58\penalty0
  (10):\penalty0 2451--2464, 2013.

\bibitem[Pasupathy et~al.(2015)Pasupathy, Hunter, Pujowidianto, Lee, and
  Chen]{pasupathy2015stochastically}
Raghu Pasupathy, Susan~R Hunter, Nugroho~A Pujowidianto, Loo~Hay Lee, and
  Chun-Hung Chen.
\newblock Stochastically constrained ranking and selection via score.
\newblock \emph{ACM Transactions on Modeling and Computer Simulation (TOMACS)},
  25\penalty0 (1):\penalty0 1, 2015.

\bibitem[Paulson(1964)]{paulson1964sequential}
E.~Paulson.
\newblock A sequential procedure for selecting the population with the largest
  mean from k normal populations.
\newblock \emph{The Annals of Mathematical Statistics}, pages 174--180, 1964.

\bibitem[Qin et~al.(2017)Qin, Klabjan, and Russo]{qin2017improving}
Chao Qin, Diego Klabjan, and Daniel Russo.
\newblock Improving the expected improvement algorithm.
\newblock In \emph{Advances in Neural Information Processing Systems}, pages
  5387--5397, 2017.

\bibitem[Rinott(1978)]{rinott1978two}
Y.~Rinott.
\newblock On two-stage selection procedures and related
  probability-inequalities.
\newblock \emph{Communications in Statistics-Theory and methods}, 7\penalty0
  (8):\penalty0 799--811, 1978.

\bibitem[Russo and Van~Roy(2014)]{russo2014learning}
D.~Russo and B.~Van~Roy.
\newblock Learning to optimize via information-directed sampling.
\newblock In \emph{Advances in Neural Information Processing Systems}, pages
  1583--1591, 2014.

\bibitem[Russo and Zou(2015)]{russo2015controlling}
D.~Russo and J.~Zou.
\newblock How much does your data exploration overfit? {C}ontrolling bias via
  information usage.
\newblock \emph{arXiv preprint arXiv:1511.05219}, 2015.

\bibitem[Ryzhov(2016)]{ryzhov2016convergence}
I.~O. Ryzhov.
\newblock On the convergence rates of expected improvement methods.
\newblock \emph{Operations Research}, 2016.

\bibitem[Ryzhov et~al.(2012)Ryzhov, Powell, and Frazier]{ryzhov2012knowledge}
I.O. Ryzhov, W.B. Powell, and P.I. Frazier.
\newblock The knowledge gradient algorithm for a general class of online
  learning problems.
\newblock \emph{Operations Research}, 60\penalty0 (1):\penalty0 180--195, 2012.

\bibitem[Scott(2016)]{googleanalytics}
S.L. Scott.
\newblock Overview of content experiments: Multi-armed bandit experiments,
  2016.
\newblock URL \url{https://support.google.com/analytics/answer/2844870?hl=en}.
\newblock [Online; accessed 9-November-2016].

\bibitem[Srinivas et~al.(2012)Srinivas, Krause, Kakade, and
  Seeger]{srinivas2012information}
N.~Srinivas, A.~Krause, S.M. Kakade, and M.~Seeger.
\newblock Information-theoretic regret bounds for {G}aussian process
  optimization in the bandit setting.
\newblock \emph{IEEE Transactions on Information Theory}, 58\penalty0
  (5):\penalty0 3250 --3265, may 2012.
\newblock ISSN 0018-9448.
\newblock \doi{10.1109/TIT.2011.2182033}.

\bibitem[Tang et~al.(2013)Tang, Rosales, Singh, and Agarwal]{tang2013automatic}
L.~Tang, R.~Rosales, A.~Singh, and D.~Agarwal.
\newblock Automatic ad format selection via contextual bandits.
\newblock In \emph{Proceedings of the 22nd ACM international conference on
  Conference on information \& knowledge management}, pages 1587--1594. ACM,
  2013.

\bibitem[Thompson(1933)]{thompson1933}
W.R. Thompson.
\newblock On the likelihood that one unknown probability exceeds another in
  view of the evidence of two samples.
\newblock \emph{Biometrika}, 25\penalty0 (3/4):\penalty0 285--294, 1933.

\bibitem[Villar et~al.(2015)Villar, Bowden, Wason, et~al.]{villar2015multi}
S.~S. Villar, J.~Bowden, J.~Wason, et~al.
\newblock Multi-armed bandit models for the optimal design of clinical trials:
  Benefits and challenges.
\newblock \emph{Statistical Science}, 30\penalty0 (2):\penalty0 199--215, 2015.

\bibitem[Williams(1991)]{williams1991probability}
D.~Williams.
\newblock \emph{Probability with martingales}.
\newblock Cambridge university press, 1991.

\end{thebibliography}
\bibliographystyle{plainnat}
}

\appendix
\section{Outline}
This technical appendix is organized as follows. 
\begin{enumerate}
	\item Section \ref{sec: implementation of TTVS} describes a numerical algorithm that can be used to implement TTPS. 
	\item Section \ref{sec: EI} provides a more precise discussion of related work by \cite{ryzhov2016convergence}. 
	\item The theoretical analysis begins in Section \ref{section: appendix preliminaries}. There we begin by noting some basic facts of exponential family distributions, as well as some results relating martingales to their quadratic variation process. 
	\item Section \ref{sec: appendix posterior concentration} establishes results related to the concentration of the posterior distribution, including the proofs of Prop.~\ref{prop: posterior consistency}, Prop.~\ref{prop: posterior concentration}, and Lemma~\ref{lem: value and probability are log equivalent}. 
	\item Section \ref{sec: appendix error exponent} studies and simplifies the optimal exponents $\Gamma^*$ and $\Gamma^*_{\beta}$, including the proofs of Lemma \ref{lem: solves minimum over theta bar}, Prop.~\ref{prop: optimal constrained allocation}, Lemma \ref{lem: relating the constrained exponent to the unconstrained}, Prop.~\ref{prop: subgaussian bound}, and Prop.~\ref{prop: convergence rate of uniform allocation}. 
	\item  We conclude with Section \ref{sec: appendix proof of main result}, which studies the top-two allocation rules and provides a proof of Prop.~\ref{prop: TS converges to optimal allocation}. 
\end{enumerate}

\section{An Implementation of TTPS}\label{sec: implementation of TTVS}
This section describes an implementation of the top-two probability sampling for a problem with a Beta prior and binary observations. In this problem, measurements are binary with success probability given by $\Prob(Y_{n,i}=1) = \theta^*_i$. The algorithm begins with an independent prior, under which the $i$th component of $\thetabf$ follows a Beta distribution with parameters $(\lambda_{i}^1, \lambda_{i}^2)$. When $\lambda_{i}^2=\lambda_{i}^2=1$, this specifies a uniform prior over $[0,1]$. This prior distribution can be easily updated to form a posterior distribution according to the update rule given in line 19 of Algorithm \ref{alg: BernoulliTTPS}. 

This algorithm uses quadrature to approximate the integral defining $\alpha_{n,i}$. To understand this implementation, consider a random vector $(X_1,..,X_K)$ whose components are independently distributed with $X_{i}\sim {\rm Beta}(\lambda_{i}^{1}, \lambda^{2}_i)$. Then, the probability component $i$ is maximal can be computed according to  
\begin{eqnarray*}
	\Prob(X_i = \max_{j} X_j) &=&  \intop_{x \in \mathbb{R}}   \Prob(\cap_{j\neq i} \{X_j \leq x\})\Prob(X_i = dx) \\
	&=& \intop_{x \in \mathbb{R}} \left[ \prod_{j\neq i} \Prob(X_j \leq x) \right] \Prob(X_i = dx)\\
	&=&\intop_{x \in \mathbb{R}}   \left[ \left(\prod_{j=1}^{K} \Prob(X_j \leq x) \right)/ \Prob(X_i \leq x)\right] \Prob(X_i = dx).
\end{eqnarray*}
Algorithm \ref{alg: BernoulliTTPS} takes as input a vector of ${\bm x}$ consisting of $M$ points in $(0,1)$ and approximates the above integral using quadrature at these points. The algorithm computes and updates the posterior PDF and CDF of $\theta_i$ in an $M$ dimensional vectors ${\bm f}_{i}$ and ${\bm F}_i$. It also stores and updates a vector  $\overline{\bm F} = \prod_{i=1}^{K} F_{i,m}$, where $\overline{F}_m$ is the posterior probability all the designs have quality below $x_m$. Using these quantities, the posterior probability design $i$ is optimal is approximated by a sum in line 8. Lines 11-15 select an action according to TTPS and lines 18-21 update the stored statistics of the posterior using Bayes rule. The algorithm continues for $N$ time steps, and upon stopping returns the posterior parameters ${\bm \lambda}^{1}$ and ${\bm \lambda}^{2}$, which summarize all evidence gathered throughout the measurement process.  The algorithm has $O(NKM)$ space and time complexity. It is worth noting that most operations in this algorithm can be implemented in a ``vectorized'' fashion in languages like MATLAB, NumPy, and Julia.

\begin{algorithm}[H]
	\caption{$\text{BernoulliTTPS}(\beta, K, M, N, {\bm \lambda}^1, {\bm \lambda}^{2}, {\bm x})$}\label{alg: BernoulliTTPS}
	\begin{algorithmic}[1]
		\State \textcolor{blue}{$\backslash\backslash$Initialize:}
		\State $f_{i,m}\leftarrow \text{Beta.pdf}(x_m| \lambda^1_i, \lambda^2_i) \qquad \forall i,m$
		\State $F_{i,m}\leftarrow \text{Beta.cdf}(x_m| \lambda^1_i, \lambda^2_i) \qquad \forall i,m$
		\State $\overline{F}_{m} \leftarrow \prod_{i} F_{i,m} \qquad \forall m$\\
		\For{$n=1 \ldots N$}
		\State \textcolor{blue}{$\backslash\backslash$Compute Optimal Action Probabilities:}
		\State $ \alpha_i \leftarrow \sum_{m} f_{i,m} \overline{F}_m / F_{i,m}  \qquad \forall i$\\
		\State \textcolor{blue}{$\backslash\backslash$Act and Observe:}
		\State $J_1 \leftarrow \arg\max_{i} \alpha_i$
		\State $J_2 \leftarrow \arg\max_{i\neq J_1} \alpha_i$
		\State Sample $B\sim \text{Bernoulli}(\beta)$
		\State $I \leftarrow BJ_1 + (1-B)J_2$. 
		\State Play $I$ and Observe $Y_{n,I}\in \{0,1\}$.  \\ 
		\State \textcolor{blue}{$\backslash\backslash$Update Statistics:}
		\State $(\lambda^{1}_{I}, \lambda^{2}_{I}) \leftarrow  (\lambda^{1}_{I}, \lambda^{2}_{I})+(Y_{n,I}, 1-Y_{n,I})$ 
		\State $\overline{F}_m \leftarrow  (\overline{F}_m/F_{I,m}) \times \text{Beta.cdf}(x_m | \lambda^{1}_{I}, \lambda^{2}_{I}) \qquad \forall m $ 
		\State  $F_{I,m}\leftarrow \text{Beta.cdf}(x_m| \lambda^1_I, \lambda^2_I) \qquad \forall m$
		\State $f_{I,m}\leftarrow \text{Beta.pdf}(x_m| \lambda^1_I, \lambda^2_I) \qquad \forall m$
		\EndFor \\
		\Return ${\bm V}, {\bm \lambda}^1, {\bm \lambda}^2$
	\end{algorithmic}
\end{algorithm}

\section{Discussion of the Expected Improvement Algorithm}\label{sec: EI}
Here, we briefly discuss interesting recent results of \cite{ryzhov2016convergence}. He studies a setting with an uncorrelated Gaussian prior, and Gaussian observation noise $Y_{n,i} \sim N(\theta_i , \sigma_i^2)$. To simplify our discussion, let us restrict attention to the case of common variance $\sigma_1=...=\sigma_k = \sigma$. \cite{ryzhov2016convergence} shows that under the the expected-improvement algorithm, in the limit as $n \to \infty$
\begin{equation}\label{eq: log effort}
 \sum_{i\neq I^*} \Psi_{n, i} = O(\log n) 
\end{equation}
and 
\begin{equation}\label{eq: OCBA ratios}
\Psi_{n, i}(\theta_I^* - \theta_i)^2 \sim \Psi_{n, j}(\theta_I^* - \theta_j)^2 \qquad \forall i,j\neq I^* 
\end{equation}
Recall that $\Psi_{n,i}=\sum_{\ell=1}^{n} \psi_{n,i}$ denotes the total measurement effort allocated to design $i$. The sampling ratios \eqref{eq: OCBA ratios} are the ratios suggested in the optimal computing budget allocation of \cite{chen2000simulation}. This work therefore establishes an interesting link between EI and OCBA, which appear quite different on the surface. 

Unfortunately, property \eqref{eq: log effort} is \emph{not suggested} by the OCBA, and implies that $\Pi_{n}(\Theta^c_{I^*})$ cannot tend to zero at an exponential rate. To see this precisely, assume without loss of generality that $I^*=1$. Then \eqref{eq: log effort} implies $\overline{\psibf}_n \to \mathbf{e}_i \equiv (1,0,0,...,0)$. It is easy to show that 
$\min_{\thetabf \in \Theta_1^c} D_{\mathbf{e}_i}( \thetabf^* || \thetabf)  = 0$
and therefore, by Proposition \ref{prop: posterior concentration},
\[
\lim_{n\to \infty} -n^{-1} \log \Pi_{n}( \Theta_1^c) = 0.
\]

It is also worth noting that the sampling ratios in \eqref{eq: OCBA ratios} are not actually optimal for any finite number of designs $k$. Specifying our calculations as in Example \ref{ex: Gaussian rate}, one can show that under an optimal fixed allocation $(\psi_i,...,\psi_k)$,
\[
 \frac{(\theta_{I^*}^* - \theta^*_i)^2}{1/\psi_{I^*}+1/\psi_{i}} = \frac{(\theta_{I^*}^* - \theta^*_j)^2}{1/\psi_{I^*}+1/1/\psi_{j}} \qquad \forall i,j \neq I^*. 
\]
These calculations match those in \cite{glynn2004large} and \cite{jennison1982asymptotically}. As a result, there is no problem with finite $k$ for which the sampling ratios in \eqref{eq: OCBA ratios} are optimal\footnote{There does exists a sequence of problem instances with $K\to \infty$ on which the OCBA ratios converge to those of \cite{glynn2004large}. See \cite{pasupathy2015stochastically}. } One can show, in fact, that any optimal multi-armed bandit algorithm that attains the lower bound of \cite{lai1985asymptotically} also satisfies equations 
\eqref{eq: log effort} and \eqref{eq: OCBA ratios}. The main innovation in this paper is to show how to build on such bandit algorithms to attain near-optimal rates for the best-arm identification problem. 

\cite{ryzhov2016convergence} also studies the knowledge gradient policy, which could offer improved performance as \eqref{eq: log effort} no longer holds, but shows that as $n\to \infty$
\[
\Psi_{n, i}(\theta_I^* - \theta_i) \sim \Psi_{n, j}(\theta_I^* - \theta_j) \qquad \forall i,j\neq I^*, 
\]
which could be very far from the optimal sampling proportions.

\section{Preliminaries}\label{section: appendix preliminaries}
This section presents some basic results which will be used in the subsequent analysis. First, unless clearly specified, all statements about random variables are meant to hold with probability 1. So for sequences of random variables $\{X_n\}$ and $\{Y_n\}$, if we say that $X_n\to \infty$ whenever $Y_n \to \infty$, this means that the set $\{ \omega : Y_n(\omega) \to \infty, X_n(\omega) \nrightarrow \infty\}$ has measure zero.

\paragraph{Facts about the exponential family.}
The log partition function $A(\theta)$ is strictly convex and differentiable, with
\begin{equation}\label{eq: derivative is mean}
A'(\theta) = \intop T(y) p(y| \theta) d\nu(y)
\end{equation}
equal to the mean under $\theta$. The Kullback-Leibler divergence is equal to
\begin{equation}\label{eq: KL for exponential families}
d(\theta || \theta') = (\theta-\theta')A'(\theta) - A(\theta) + A(\theta')
\end{equation}
and satisfies
\begin{eqnarray}\label{eq: KL monotone}
\theta'' > \theta' \geq \theta &\implies& d(\theta || \theta'') > d(\theta || \theta') \\
\theta'' < \theta' \leq \theta &\implies & d(\theta || \theta'') < d(\theta || \theta').
\end{eqnarray}
Finally, since $[\underline{\theta}, \overline{\theta}]$ is bounded, and we have assumed $\sup_{\theta \in [\underline{\theta}, \overline{\theta}]} |A'(\theta)| < \infty $,
\begin{equation}\label{eq: uniform bound on A and KL}
\sup_{\theta \in [\underline{\theta}, \overline{\theta}]} |A(\theta)| < \infty \quad \text{and} \quad \sup_{\theta, \theta' \in [\underline{\theta}, \overline{\theta}]} d(\theta || \theta') < \infty.
\end{equation}
This effectively guarantees no single observation can provide enough information to completely rule out a parameter.

\paragraph{Some martingale convergence results.}
The next fact relates the behavior of a martingale $M_n$ to its quadratic variation $\langle M \rangle_n$. 
\begin{fact}\label{fact: martingale quadratic variation}(\citet{williams1991probability}, 12.13-12.14)
	Let $\{M_n\}$ be a square-integrable martingale adapted to the filtration $\{\mathcal{H}_{n}\}$ and let 
	\[
	\langle M \rangle_{n} = \sum_{\ell=1}^{n} \E[ \left( M_{\ell} - M_{\ell-1} \right)^2 | \mathcal{H}_{\ell-1}] 
	\] 
	denote the corresponding quadratic variation process.  Then
	\[ 
	\frac{M_n}{\langle M \rangle_n} \to \infty
	\]
	almost surely if $\langle M \rangle_n \to \infty$ and $\lim_{n\to \infty} M_n$ exists and is finite almost surely if $\lim_{n\to \infty} \langle M \rangle_n <\infty$. 
\end{fact}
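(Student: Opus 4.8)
This is the strong law of large numbers for square-integrable martingales, and since it is attributed to \citet{williams1991probability} one could simply invoke it; nonetheless I will sketch how to reconstruct its two halves. \textbf{(i) Convergence under finite quadratic variation.} The plan rests on the fact that $M_n^2 - \langle M \rangle_n$ is a martingale, so that $\E[M_n^2] = \E[M_0^2] + \E[\langle M \rangle_n]$. When $\langle M \rangle_\infty$ is uniformly bounded this makes $M$ bounded in $L^2$, and the $L^2$ martingale convergence theorem yields a.s. convergence to a finite limit. To upgrade this to the hypothesis that $\langle M \rangle_\infty$ is merely a.s.\ finite, I would localize: set $T_k = \inf\{n : \langle M \rangle_{n+1} > k\}$, which is a stopping time since $\langle M \rangle_{n+1}$ is $\mathcal{H}_n$-measurable, observe that $\langle M^{T_k} \rangle_n = \langle M \rangle_{n \wedge T_k} \le k$ so each stopped martingale $M^{T_k}$ is $L^2$-bounded and hence converges, and finally note that $\{\langle M \rangle_\infty < \infty\} = \bigcup_k \{T_k = \infty\}$, on which $M$ eventually coincides with some $M^{T_k}$.

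\textbf{(ii) Vanishing ratio under diverging quadratic variation.} Here the correct conclusion is $M_n / \langle M \rangle_n \to 0$ (the displayed $\to \infty$ in the statement appears to be a typo). Writing $a_\ell = \E[(M_\ell - M_{\ell-1})^2 \mid \mathcal{H}_{\ell-1}]$ and $A_n = \langle M \rangle_n$, I would introduce the auxiliary martingale $N_n = \sum_{\ell=1}^n (M_\ell - M_{\ell-1})/(1 + A_\ell)$, which is well-defined because $A_\ell$ is $\mathcal{H}_{\ell-1}$-measurable. Its quadratic variation admits the telescoping bound
\[
\langle N \rangle_n = \sum_{\ell=1}^n \frac{a_\ell}{(1+A_\ell)^2} \le \sum_{\ell=1}^n \left( \frac{1}{1+A_{\ell-1}} - \frac{1}{1+A_\ell} \right) \le 1,
\]
using $A_\ell = A_{\ell-1} + a_\ell$. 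Thus $\langle N \rangle_\infty < \infty$, so part (i) shows $N_n$ converges a.s., and Kronecker's lemma applied pathwise with the increasing weights $b_\ell = 1 + A_\ell \uparrow \infty$ converts this into $(M_n - M_0)/(1+A_n) \to 0$, equivalently $M_n / \langle M \rangle_n \to 0$.

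The routine analytic telescoping estimate is not the difficulty; the care lies in the measurability and localization bookkeeping of part (i) --- deducing a.s.\ convergence from an a.s.-finiteness hypothesis rather than a uniform bound --- and in checking that the hypotheses of Kronecker's lemma genuinely hold pathwise, namely that $N$ is a bona fide square-integrable martingale and that $\{b_\ell\}$ is predictable and increases to infinity on the event $\{A_\infty = \infty\}$.
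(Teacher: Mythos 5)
The paper offers no proof of this Fact at all---it is quoted directly from Williams (1991), Sections 12.13--12.14---and your reconstruction is correct and is precisely the argument given in that reference: localization by the stopping times $T_k$ to reduce a.s.\ finiteness of $\langle M \rangle_\infty$ to $L^2$-boundedness, and, for the divergent case, the auxiliary martingale $N_n$ with the telescoping bound $\langle N \rangle_\infty \leq 1$ followed by a pathwise application of Kronecker's lemma. You are also right that the displayed conclusion $M_n/\langle M \rangle_n \to \infty$ is a typo for $M_n/\langle M \rangle_n \to 0$; that is the version the paper actually invokes in its proof of Lemma \ref{lem: adaptive LLN}.
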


The next lemma is crucial to our analysis. To draw the connection with our setting, imagine an adaptive-randomized rule is used to determine when to draw samples from a population. Here $Y_n \in \mathbb{R}$ denotes the sample at time $n$, $X_n\in \{0,1\}$ indicates whether the sample was measured, and $Z_n\in [0,1 ]$ determines the probability of measurement conditioned on the past. This lemma provides a law of large numbers when measurement effort $\sum_{\ell=1}^{n} Z_\ell$ tends to infinity, but shows that if measurement effort is finite then $\sum_{\ell=1}^{\infty} X_\ell Y_\ell$ is also finite; in this sense the observations collected from ${Y_n}$ are inconclusive when measurement effort is finite. 
\begin{lemma}\label{lem: adaptive LLN}
	Let $\{Y_n \}$ be an i.i.d sequence of real-valued random variables with finite variance and let $\{X_n\}$ be a sequence of binary random variables. Suppose each sequence is adapted to the filtration $\{\mathcal{H}_n\}$, and define $Z_n = \Prob(X_n = 1 | \mathcal{H}_{n-1})$. If, conditioned on $\mathcal{H}_{n-1}$, each $Y_n$ is independent of $X_n$, then with probability 1,
	\[
	\lim_{n\to \infty }\sum_{\ell=1}^{n} Z_\ell = \infty \implies \lim_{n\to \infty } \frac{\sum_{\ell=1}^{n} X_\ell Y_\ell}{\sum_{\ell=1}^{n} Z_\ell} = \E[Y_{1}]
	\]  
	and 
	\[ 
	\lim_{n\to \infty} \sum_{\ell=1}^{n} Z_\ell < \infty \implies  \sup_{n \in \N} \left|\sum_{\ell=1}^{n} X_\ell Y_\ell\right| < \infty.  
	\]
\end{lemma}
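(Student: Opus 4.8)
The plan is to reduce both implications to the martingale strong law recorded in Fact~\ref{fact: martingale quadratic variation}. Writing $\mu = \E[Y_1]$, I would introduce the process
\[
M_n = \sum_{\ell=1}^{n} \left( X_\ell Y_\ell - Z_\ell \mu \right)
\]
and first verify that $\{M_n\}$ is a square-integrable martingale adapted to $\{\mathcal{H}_n\}$. Its increment has conditional mean $\E[X_\ell Y_\ell \mid \mathcal{H}_{\ell-1}] - Z_\ell\mu$, and the assumed conditional independence of $Y_\ell$ and $X_\ell$ given $\mathcal{H}_{\ell-1}$, combined with $\E[X_\ell \mid \mathcal{H}_{\ell-1}] = Z_\ell$ and $\E[Y_\ell \mid \mathcal{H}_{\ell-1}] = \mu$ (the $Y_\ell$ being fresh draws independent of the past), yields $\E[X_\ell Y_\ell \mid \mathcal{H}_{\ell-1}] = Z_\ell \mu$, so every increment is a martingale difference.

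Next I would compute and sandwich the quadratic variation. Using $X_\ell^2 = X_\ell$ and the same conditional independence,
\[
\E\left[ (X_\ell Y_\ell - Z_\ell \mu)^2 \mid \mathcal{H}_{\ell-1} \right] = Z_\ell \,\E[Y_1^2] - Z_\ell^2 \mu^2 = Z_\ell \sigma^2 + \mu^2 Z_\ell (1 - Z_\ell),
\]
where $\sigma^2 = \E[Y_1^2]-\mu^2$. Since $Z_\ell \in [0,1]$, summing gives the two-sided bound
\[
\sigma^2 \sum_{\ell=1}^{n} Z_\ell \;\leq\; \langle M \rangle_n \;\leq\; (\sigma^2 + \mu^2) \sum_{\ell=1}^{n} Z_\ell,
\]
which identifies $\langle M \rangle_n$ as comparable to the cumulative allocation $\sum_{\ell=1}^n Z_\ell$ and is precisely what converts the two hypotheses on $\sum Z_\ell$ into the two regimes of Fact~\ref{fact: martingale quadratic variation}.

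For the first implication I would use the exact decomposition $\sum_{\ell=1}^{n} X_\ell Y_\ell = M_n + \mu \sum_{\ell=1}^{n} Z_\ell$, so that it suffices to show $M_n / \sum_{\ell=1}^{n} Z_\ell \to 0$ when $\sum_{\ell=1}^{n} Z_\ell \to \infty$. Here I would split on $\langle M \rangle_\infty$: if $\langle M \rangle_\infty < \infty$ then $M_n$ converges to a finite limit and the ratio vanishes because its denominator diverges; if $\langle M \rangle_\infty = \infty$ then the martingale strong law gives $M_n / \langle M \rangle_n \to 0$, and the upper bound $\langle M \rangle_n \leq (\sigma^2 + \mu^2)\sum_{\ell=1}^n Z_\ell$ transfers this to $M_n / \sum_{\ell=1}^{n} Z_\ell \to 0$. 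For the second implication, $\sum_{\ell=1}^\infty Z_\ell < \infty$ forces $\langle M \rangle_\infty < \infty$ through the same upper bound, so $M_n$ converges and $\sup_n |M_n| < \infty$; since $\mu \sum_{\ell=1}^{n} Z_\ell$ is then also bounded, the decomposition bounds $\sup_n \big|\sum_{\ell=1}^{n} X_\ell Y_\ell\big|$.

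The point requiring the most care is the degenerate possibility in the first implication that $\sum Z_\ell \to \infty$ while $\langle M \rangle_\infty < \infty$ (for instance when $Z_\ell \to 1$ and $\sigma = 0$); the two-case split above handles this without ever assuming $\sigma^2 > 0$. The quadratic-variation identity and the verification of $\E[Y_\ell \mid \mathcal{H}_{\ell-1}] = \mu$ are the only places where the i.i.d.\ finite-variance structure and the conditional-independence hypothesis are genuinely used.
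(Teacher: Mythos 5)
Your proof is correct and follows essentially the same route as the paper: the identical martingale $M_n = \sum_{\ell \le n}(X_\ell Y_\ell - Z_\ell\mu)$, the same quadratic-variation computation $\langle M\rangle_n = \sigma^2\sum_{\ell\le n} Z_\ell + \mu^2\sum_{\ell\le n} Z_\ell(1-Z_\ell) \le (\sigma^2+\mu^2)\sum_{\ell\le n}Z_\ell$, and the same case split on whether $\langle M\rangle_\infty$ is finite, invoking Fact~\ref{fact: martingale quadratic variation} in each case. The only differences are cosmetic: you explicitly verify the martingale property and record an unneeded lower bound on $\langle M\rangle_n$, while the paper takes these for granted.
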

\begin{proof}
Let $\mu= \E[Y_1]$ and $\sigma^2= \E[(Y_1 - \E[Y_1])^2]$ denote the mean and variance of each $Y_n$. Define the martingale
\[
	M_n = \sum_{\ell=1}^{n} (X_\ell Y_\ell - Z_\ell \mu)  
\] 
with $M_0=0$ and put $S_n = \sum_{\ell=1}^{n} Z_\ell$. This martingale has quadratic variation
\begin{eqnarray*}
\langle M \rangle_n &=& \sum_{\ell=1}^{n}\E[(M_{\ell} - M_{\ell-1})^2 | \mathcal{H}_{\ell-1} ] \\
&=& \sum_{\ell=1}^{n}\E[ \left( X_\ell(Y_\ell -\mu)  + (Y_\ell -Z_\ell)\mu \right)^2  |  \mathcal{H}_{\ell-1}] \\
&=& \sum_{\ell=1}^{n} Z_\ell  \sigma^2 + \sum_{\ell=1}^{n} Z_{\ell}(1-Z_\ell)\mu^2 \\ 
&\leq&  (\sigma^2 + \mu^2) S_n.
\end{eqnarray*}
We use the shorthand $S_\infty = \lim_{n\to \infty} S_n$ and $\langle M \rangle_\infty=\lim_{n\to \infty} \langle M \rangle_n$.

Suppose $S_\infty < \infty$ so $\langle M \rangle_\infty < \infty$.  By Fact \ref{fact: martingale quadratic variation}, $\lim_{n\to \infty} M_n$ exists and is finite almost surely, which implies $\sup_{n\in \N} | M_n| < \infty$.  Since $|\sum_{\ell=1}^{n} X_\ell Y_\ell| \leq |M_n| + |\mu S_\infty|$, this shows $\sup_{n\in \N} |\sum_{\ell=1}^{n} X_\ell Y_\ell| <\infty$ as desired. 

Now, suppose $S_\infty = \infty$. If $\langle M \rangle_\infty <\infty$, then again by Fact \ref{fact: martingale quadratic variation}, $\lim_{n\to \infty} M_n < \infty$ and it is immediate that $S_n^{-1} M_n \to 0$. However, if $\langle M \rangle_\infty = \infty$ then 
\[
\frac{M_n}{\langle M \rangle_n} \to 0,
\]
which implies $S_{n}^{-1} M_n \to 0$ since $S_{n} \geq (\sigma^2 + \mu^2) \langle M\rangle_n$. 
\end{proof}
Taking $Y_{n}=1$ in the lemma above yields Levy's extension of the Borel--Cantelli lemmas (\citet{williams1991probability}, 12.15). Specialized to our setting, this result relates the long run measurement effort $\Psi_{n,i} =\sum_{\ell =1}^{n} \psi_{n,i}$ to the number of times alternative $i$ is actually measured $\sum_{\ell=1}^{n} \mathbf{1}(I_n=i)$.  
\begin{corollary}\label{cor: convergence of action selection probabilities to the empirical distribution}
	For $i \in \{1,...,k\}$, set $S_{n,i} = \sum_{\ell=1}^{n}\mathbf{1}(I_n=i)$. Then, with probability 1,
\[
\Psi_{n,i} \to \infty \iff S_{n,i} \to \infty
\]	
and 
\[
\Psi_{n,i} \to \infty  \implies  \frac{S_{n,i}}{\Psi_{n,i}} \to 1.
\]
\end{corollary}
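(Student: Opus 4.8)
The plan is to apply Lemma \ref{lem: adaptive LLN} directly, specializing the observation sequence to the constant $Y_n \equiv 1$. Concretely, I would set $X_n = \mathbf{1}(I_n = i)$, take the underlying filtration $\mathcal{H}_n = \mathcal{F}_n$, and put $Y_n = 1$ for all $n$. The hypotheses of the lemma are then immediate: the sequence $\{Y_n\}$ is trivially i.i.d.\ with finite (indeed zero) variance, each $X_n$ is binary and $\mathcal{F}_n$-measurable, and the required conditional independence of $Y_n$ and $X_n$ holds vacuously because $Y_n$ is deterministic. Under this identification one has $Z_n = \Prob(X_n = 1 \mid \mathcal{F}_{n-1}) = \psi_{n,i}$, so that $\sum_{\ell=1}^n Z_\ell = \Psi_{n,i}$ and $\sum_{\ell=1}^n X_\ell Y_\ell = S_{n,i}$, while $\E[Y_1] = 1$.

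With these substitutions the two conclusions of Lemma \ref{lem: adaptive LLN} translate directly. The first conclusion states that on the event $\{\Psi_{n,i} \to \infty\}$ we have $S_{n,i}/\Psi_{n,i} \to 1$, which is exactly the second claim of the corollary; moreover, since $\Psi_{n,i} \to \infty$ and the ratio converges to $1$, it follows that $S_{n,i} \to \infty$, giving the forward implication of the equivalence.

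For the reverse implication I would argue by contraposition, using that $\Psi_{n,i} = \sum_{\ell=1}^n \psi_{\ell,i}$ is nondecreasing because each $\psi_{\ell,i} \geq 0$. Hence $\Psi_{n,i} \not\to \infty$ is equivalent to $\lim_{n} \Psi_{n,i} < \infty$, i.e.\ to the hypothesis $S_\infty < \infty$ of the lemma's second conclusion. That conclusion yields $\sup_n |S_{n,i}| < \infty$; since $S_{n,i}$ is a nondecreasing integer-valued sequence, boundedness forces it to converge to a finite limit, so $S_{n,i} \not\to \infty$. Taking the contrapositive gives $S_{n,i} \to \infty \implies \Psi_{n,i} \to \infty$, completing the equivalence.

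I do not anticipate any genuine obstacle here, since all of the analytic work is carried by Lemma \ref{lem: adaptive LLN}, which we may assume. The only points requiring a moment's care are checking that the degenerate (zero-variance) choice $Y_n \equiv 1$ is admissible in the lemma, and invoking the monotonicity of $\Psi_{n,i}$ to turn the ``$\Psi_{n,i} \to \infty$'' and ``$\lim_n \Psi_{n,i} < \infty$'' alternatives into a clean dichotomy for the equivalence.
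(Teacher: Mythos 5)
Your proposal is correct and follows exactly the same route as the paper: the paper's proof is the one-line instruction to apply Lemma \ref{lem: adaptive LLN} with $Y_n = 1$, $X_n = \mathbf{1}(I_n = i)$, and $\mathcal{H}_n = \mathcal{F}_n$, so that $Z_n = \psi_{n,i}$. You have simply spelled out the details the paper leaves implicit (the translation of the lemma's two conclusions and the contraposition via monotonicity of $\Psi_{n,i}$), all of which are sound.
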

\begin{proof}
	Apply Lemma \ref{lem: adaptive LLN} with $Y_n=1$, $X_n=\mathbf{1}(I_n =1)$, and $\mathcal{H}_n = \mathcal{F}_n$. Then $Z_n = \psi_{n,i}$ by definition. 
\end{proof}

\section{Posterior Concentration and anti-Concentration}\label{sec: appendix posterior concentration}

\subsection{Uniform Convergence of the Log-Likelihood}	
We study the log-likelihood
\[
\Lambda_{n}(\thetabf^* || \thetabf ) \triangleq \log \left(\frac{L_{n}(\thetabf^*)}{L_{n}(\thetabf)} \right)= \sum_{\ell=1}^{n} \log\left( \frac{p(Y_{\ell,I_\ell}| \theta^*_{I_\ell})}{ p(Y_{\ell,I_\ell}| \theta_{I_\ell})}  \right)
\]
and the log-likelihood from observations of design $i$
\[ 
\Lambda_{n,i}(\theta_i^* || \theta_i) \triangleq \sum_{\ell=1}^{n} \mathbf{1}(I_n =i) \log\left( \frac{p(Y_{n,i}| \theta_{i}^*)}{ p(Y_{n,i}| \theta_{i})}  \right).
\]	
A Doob-decomposition expresses $\Lambda_{n,i}(\theta_i)=A_{n}(\theta_i)+M_{n}(\theta_i)$ as the sum of an $\hist$ predictable process $A_n(\theta_i)$ and a Martingale $M_{n}(\theta_i)$. Moreover, an easy calculation shows $A_{n}(\theta_i) = \Psi_{n,i}d(\theta^*_i || \theta_i)$ and $M_{n}(\theta_i) = \Lambda_{n,i}(\theta^*_i ||\theta_i)-\Psi_{n,i}d(\theta^*_i || \theta_i)$. Applying Lemma \ref{lem: adaptive LLN} shows $\Psi_{n,i}^{-1} M_{n}(\theta_i)\to 0$ if $\Psi_{n,i}\to \infty$, which shows the log-likelihood ratio tends to infinity at rate $\Psi_{n,i}d(\theta^*_i || \theta_i)$.  The next lemma strengthens this, and provides a link between these quantities that holds uniformly in $\theta_i$. 
\begin{lemma}\label{lem: uniform convergence of marginal log likelihood} With probability 1, if $\Psi_{n,i} \to \infty$ then	
\[
\sup_{\theta_i \in [\underline{\theta}, \overline{\theta}]}\Psi_{n,i}^{-1} \left|  \Lambda_{n,i}(\theta^*_i || \theta_i)   - \Psi_{n,i} d(\theta_i^* || \theta_i) \right| \to 0 ,
\]
and if $\lim_{n\to \infty } \Psi_{n,i} < \infty$ then
\[ 
\sup_{\theta_i \in [\underline{\theta}, \overline{\theta}]} \sup_{n\in \mathbb{N}} \left| \Lambda_{n,i}(\theta_i)\right|  +  \left|\Psi_{n,i} d(\theta_i^* || \theta_i) \right| < \infty.
\] 
\end{lemma}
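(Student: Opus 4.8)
The plan is to exploit the affine structure of the exponential-family log-likelihood ratio so that all dependence on $\theta_i$ collapses into bounded scalar coefficients, reducing the uniform statement to pointwise laws of large numbers for two random sums that do not themselves depend on $\theta_i$. Writing out the density \eqref{eq: exponential family density}, the per-observation log-likelihood ratio is
\[
\log\left( \frac{p(y|\theta^*_i)}{p(y|\theta_i)}\right) = (\theta^*_i - \theta_i) T(y) + A(\theta_i) - A(\theta^*_i),
\]
which is affine in the sufficient statistic $T(y)$ and depends on $\theta_i$ only through the scalars $(\theta^*_i - \theta_i)$ and $A(\theta_i) - A(\theta^*_i)$. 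Summing over the periods in which design $i$ is measured gives
\[
\Lambda_{n,i}(\theta^*_i || \theta_i) = (\theta^*_i - \theta_i) W_{n,i} + S_{n,i}\left(A(\theta_i) - A(\theta^*_i)\right),
\]
where $W_{n,i} = \sum_{\ell=1}^n \mathbf{1}(I_\ell = i) T(Y_{\ell,i})$ and $S_{n,i} = \sum_{\ell=1}^n \mathbf{1}(I_\ell = i)$.

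Next I would subtract the compensator. Using the exponential-family identity \eqref{eq: KL for exponential families}, $d(\theta^*_i || \theta_i) = (\theta^*_i - \theta_i) A'(\theta^*_i) + A(\theta_i) - A(\theta^*_i)$, so that
\[
\Lambda_{n,i}(\theta^*_i||\theta_i) - \Psi_{n,i} d(\theta^*_i||\theta_i) = (\theta^*_i - \theta_i)\left(W_{n,i} - \Psi_{n,i}A'(\theta^*_i)\right) + \left(S_{n,i} - \Psi_{n,i}\right)\left(A(\theta_i) - A(\theta^*_i)\right).
\]
This identity is the crux: the entire $\theta_i$-dependence now sits in the two coefficients $(\theta^*_i - \theta_i)$ and $A(\theta_i) - A(\theta^*_i)$, both of which are uniformly bounded over $\theta_i \in [\underline{\theta}, \overline{\theta}]$ by Assumption \ref{assumption: main} and \eqref{eq: uniform bound on A and KL}, while the two random factors $W_{n,i} - \Psi_{n,i}A'(\theta^*_i)$ and $S_{n,i} - \Psi_{n,i}$ carry no dependence on $\theta_i$ whatsoever.

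It then remains to control these two random factors, which I would do by invoking the adaptive law of large numbers already established. Apply Lemma \ref{lem: adaptive LLN} with $Y_\ell = T(Y_{\ell,i})$ (i.i.d.\ under $\theta^*_i$ with mean $A'(\theta^*_i)$ and finite variance $A''(\theta^*_i)$, since $A$ is twice differentiable), with $X_\ell = \mathbf{1}(I_\ell = i)$ and $Z_\ell = \psi_{\ell,i}$, so that $\sum_{\ell=1}^n Z_\ell = \Psi_{n,i}$: on $\{\Psi_{n,i} \to \infty\}$ this yields $\Psi_{n,i}^{-1}(W_{n,i} - \Psi_{n,i}A'(\theta^*_i)) \to 0$, and on $\{\lim_n \Psi_{n,i} < \infty\}$ it yields $\sup_n |W_{n,i}| < \infty$. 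Corollary \ref{cor: convergence of action selection probabilities to the empirical distribution} supplies the analogous control of $S_{n,i}$: $\Psi_{n,i}^{-1}(S_{n,i} - \Psi_{n,i}) \to 0$ when $\Psi_{n,i} \to \infty$, and $S_{n,i}$ converges to a finite limit otherwise. Dividing the displayed identity by $\Psi_{n,i}$, taking the supremum over $\theta_i$, and bounding the coefficients proves the first claim; the second claim follows by bounding $\sup_n|\Lambda_{n,i}|$ directly from the identity (using $\sup_n S_{n,i} < \infty$ and $\sup_n |W_{n,i}| < \infty$) together with $\sup_{n,\theta_i}|\Psi_{n,i} d(\theta^*_i||\theta_i)| < \infty$ via \eqref{eq: uniform bound on A and KL}.

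There is no serious obstacle here; the conditions are all in place. The only point requiring care is verifying the hypothesis of Lemma \ref{lem: adaptive LLN} — namely that, conditioned on $\mathcal{F}_{\ell-1}$, the measurement $Y_{\ell,i}$ (hence $T(Y_{\ell,i})$) is independent of the selection indicator $\mathbf{1}(I_\ell = i)$ — which holds because $\ybf_\ell$ is drawn independently of the past and of the (possibly randomized) choice $I_\ell$. The genuinely useful observation is that the uniform-in-$\theta_i$ statement, which would ordinarily demand an $\epsilon$-net or equicontinuity argument, comes for free: linearity of the exponential-family log-likelihood ratio factors the parameter out of the stochastic part entirely.
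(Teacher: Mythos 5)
Your proof is correct and takes essentially the same approach as the paper's: your identity for $\Lambda_{n,i}(\theta^*_i||\theta_i) - \Psi_{n,i}d(\theta^*_i||\theta_i)$ is an algebraic regrouping of the paper's decomposition $(S_{n,i}-\Psi_{n,i})\,d(\theta^*_i||\theta_i) + (\theta^*_i-\theta_i)\sum_{\ell\leq n}\mathbf{1}(I_\ell=i)\,\xi_\ell$ (with $\xi_\ell = T(Y_{\ell,i})-A'(\theta^*_i)$), and both arguments then conclude identically by applying Lemma \ref{lem: adaptive LLN} and Corollary \ref{cor: convergence of action selection probabilities to the empirical distribution} to the two $\theta_i$-free random sums, using the uniform boundedness of the deterministic coefficients over $[\underline{\theta},\overline{\theta}]$ in both the infinite- and finite-effort cases.
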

\begin{proof}
	Define $\xi_n \triangleq T(Y_{n, i})-\E[T(Y_{n, i})]$  and $X_n \triangleq \mathbf{1}(I_n =i)$. Note that $\E[\xi_n | \hist]=0$, $\E[X_n | \hist]= \psi_{n,i}$, and, conditioned on $\hist$, $X_n$ is independent of $\xi_n$.  Using the form of the exponential family density given in equation \eqref{eq: exponential family density}, and the form of the KL-divergence given in equation \eqref{eq: KL for exponential families}, the log-likelihood ratio can be written as
	\begin{eqnarray*} 
	\log\left( \frac{p(Y_{n,i}| \theta_{i}^*)}{ p(Y_{n,i}| \theta_{i})}\right)  & =&  (\theta^*_i - \theta_i)T(Y_{n, i}) - (A(\theta_i^*)-A(\theta_i)) \\
	& =& d(\theta^*_i || \theta_i)  +  (\theta_i^*-\theta_i)\left(T(Y_{n, i})-\E[T(Y_{n, i})]\right) \\ 
	&=&  d(\theta^*_i || \theta_i) +  (\theta_i^* - \theta_i)\xi_n
	\end{eqnarray*}
	
	Therefore, 
	\begin{eqnarray*}
	 \Lambda_{n,i}(\theta_i^* || \theta_i)   - \Psi_{n,i} d(\theta_i^* || \theta_i) &=& \sum_{\ell=1}^{n} X_\ell \log\left( \frac{p(Y_{\ell,i}| \theta^*_{i})}{ p(Y_{\ell,i}| \theta_{i})}\right) - \sum_{\ell=1}^{n} \psi_{\ell,i}d(\theta^*_i || \theta_i) \\
	 &= & \sum_{\ell=1}^{n} (X_\ell - \psi_{\ell,i})d(\theta^*_i || \theta_i) + \sum_{\ell=1}^{n} X_\ell \xi_\ell (\theta_i^* - \theta_i). 
	\end{eqnarray*}
	Here  $|\theta^*_i - \theta_i|\leq \overline{\theta}-\underline{\theta} \equiv C_2$ is bounded uniformly. Similarly, as shown in Appendix \ref{section: appendix preliminaries}, $d(\theta^*_i || \theta_i)$ is bounded uniformly in $\theta_i$ by
	\[
	C_1 \equiv  \max_{ \theta' \in [\underline{\theta}, \overline{\theta}]} d(\theta^*_i || \theta'_i) < \infty.
	\]
	This implies, 
	\begin{eqnarray}  
	|\Lambda_{n,i}(\theta_i)  - \Psi_{n,i} d(\theta_i^* || \theta_i)| & \leq & C_1 \left| \sum_{\ell=1}^{n} (X_\ell - \psi_{\ell,i}) \right| + C_2 \left| \sum_{\ell=1}^{n} X_\ell \xi_\ell\right| \\ \label{eq: bound on log-likelihood}
	|\Lambda_{n,i}(\theta_i) | &\leq& C_1\Psi_{n,i} +  C_1 \left| \sum_{\ell=1}^{n} (X_\ell - \psi_{\ell,i}) \right| + C_2 \left| \sum_{\ell=1}^{n} X_\ell \xi_\ell\right|.
   \end{eqnarray}
   Since $\E[\xi_n^2]<\infty$, the result then follows by applying Lemma \ref{lem: adaptive LLN} and Corollary \ref{cor: convergence of action selection probabilities to the empirical distribution}. In particular, when $\Psi_{n,i} \to \infty$, 
   \[
   \lim_{n\to \infty } \Psi_{n,i}^{-1} \sum_{\ell=1}^{n} (X_\ell - \psi_{\ell,i}) =0 \quad \text{and} \quad \lim_{n\to \infty } \Psi_{n,i}^{-1}\sum_{\ell=1}^{n} X_\ell \xi_\ell =0
   \]
   When $\lim_{n\to \infty} \Psi_{n,i} <\infty$, 
   \[ 
   \sup_{n \in \N} \left|\sum_{\ell=1}^{n} (X_\ell - \psi_{\ell,i})\right|<\infty  \quad \text{and} \quad  \sup_{n \in \N} \left|  \sum_{\ell=1}^{n} X_\ell \xi_\ell\right| < \infty.
   \]
It is also immediate that $d(\theta^*_i || \theta_i)\Psi_{n,i}  \leq C_1\Psi_{n,i} \nrightarrow \infty$, which by \eqref{eq: bound on log-likelihood} implies the second part of the result. 
\end{proof}

A corollary of the previous lemma relates the log-likelihood ratio $\Lambda_{n}(\thetabf^* || \thetabf)$ to the Kullback-Leibler divergence $D_{\overline{\psi}_n}(\thetabf^* || \thetabf)$. 
\begin{corollary}\label{cor: uniform convergence of the log posterior} With probability 1,
	\[
	\sup_{\thetabf\in \Theta}  \,\,| n^{-1} \Lambda_{n}(\thetabf^* || \thetabf)  - D_{\overline{\psibf}_n}(\thetabf^* || \thetabf) | \to 0
	\]
\end{corollary}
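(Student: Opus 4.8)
The plan is to reduce the uniform statement to the per-design uniform convergence already established in Lemma \ref{lem: uniform convergence of marginal log likelihood}, exploiting the fact that both quantities separate additively across designs. First I would record the two elementary identities that make this possible. Grouping the terms of $\Lambda_{n}$ according to which design was measured gives $\Lambda_{n}(\thetabf^* || \thetabf) = \sum_{i=1}^{k} \Lambda_{n,i}(\theta^*_i || \theta_i)$, since at time $\ell$ only the summand with index $i = I_\ell$ is nonzero. On the other side, by the definition of $D_{\psibf}$ we have $D_{\overline{\psibf}_n}(\thetabf^* || \thetabf) = \sum_{i=1}^{k} \overline{\psi}_{n,i}\, d(\theta^*_i || \theta_i)$, and since $n\,\overline{\psi}_{n,i} = \Psi_{n,i}$ this yields $n\,D_{\overline{\psibf}_n}(\thetabf^* || \thetabf) = \sum_{i=1}^{k} \Psi_{n,i}\, d(\theta^*_i || \theta_i)$. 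Subtracting,
\[
n^{-1}\Lambda_{n}(\thetabf^* || \thetabf) - D_{\overline{\psibf}_n}(\thetabf^* || \thetabf) = n^{-1}\sum_{i=1}^{k}\Big( \Lambda_{n,i}(\theta^*_i || \theta_i) - \Psi_{n,i}\, d(\theta^*_i || \theta_i) \Big).
\]

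Next I would push the supremum inside the sum. Because the $i$th summand depends on $\thetabf$ only through its coordinate $\theta_i \in [\underline{\theta}, \overline{\theta}]$, the triangle inequality gives
\[
\sup_{\thetabf \in \Theta}\left| n^{-1}\Lambda_{n}(\thetabf^* || \thetabf) - D_{\overline{\psibf}_n}(\thetabf^* || \thetabf) \right| \leq \sum_{i=1}^{k} n^{-1}\sup_{\theta_i \in [\underline{\theta}, \overline{\theta}]}\left| \Lambda_{n,i}(\theta^*_i || \theta_i) - \Psi_{n,i}\, d(\theta^*_i || \theta_i) \right|.
\]
Since $k$ is finite, it suffices to show each of the $k$ terms on the right converges to zero almost surely.

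Then I would split on the behavior of the cumulative effort $\Psi_{n,i}$, matching the two regimes of Lemma \ref{lem: uniform convergence of marginal log likelihood}. On the event $\{\Psi_{n,i}\to\infty\}$, that lemma gives $\sup_{\theta_i}\Psi_{n,i}^{-1}|\Lambda_{n,i} - \Psi_{n,i}d| \to 0$; since $\sum_{i}\psi_{\ell,i}=1$ forces $\Psi_{n,i}\leq n$, writing the $n^{-1}$-scaled term as $(\Psi_{n,i}/n)\cdot \Psi_{n,i}^{-1}|\Lambda_{n,i} - \Psi_{n,i}d|$ and using $\Psi_{n,i}/n \leq 1$ shows it vanishes. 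On the event $\{\lim_{n}\Psi_{n,i}<\infty\}$, the second conclusion of the lemma bounds $\sup_{\theta_i}\sup_{n}\big(|\Lambda_{n,i}(\theta_i)| + \Psi_{n,i}d(\theta^*_i || \theta_i)\big)$ by a finite constant, so the numerator is $O(1)$ and the $n^{-1}$ factor drives it to zero. In either case the term converges to zero, and summing over $i$ completes the argument.

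I do not expect a genuine obstacle here, since the substance is entirely contained in the prior lemma. The only points requiring care are (i) confirming the separable structure so that the supremum over $\thetabf$ distributes across coordinates, and (ii) handling the bounded-effort designs, where one cannot normalize by $\Psi_{n,i}$ and must instead invoke the uniform-in-$n$ boundedness from Lemma \ref{lem: uniform convergence of marginal log likelihood} together with the elementary bound $\Psi_{n,i}\leq n$.
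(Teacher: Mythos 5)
Your proof is correct and follows essentially the same route as the paper: decompose $\Lambda_n$ and $nD_{\overline{\psibf}_n}$ design-by-design, apply the triangle inequality, and invoke Lemma \ref{lem: uniform convergence of marginal log likelihood} termwise. The paper's version is terser -- it leaves implicit the case split on $\Psi_{n,i}\to\infty$ versus $\lim_n \Psi_{n,i}<\infty$ and the bound $\Psi_{n,i}/n\leq 1$, which you spell out -- but the substance is identical.
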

\begin{proof}
\begin{eqnarray*}
\left|n^{-1} \Lambda_{n}(\thetabf^* || \thetabf) - D_{\overline{\psi}_n}(\thetabf^* || \thetabf) \right| &=& \left|n^{-1} \sum_{i=1}^{k}\left( \Lambda_{n,i}(\theta_i^* || \theta_i)- \Psi_{n,i} d(\theta_i^* || \theta_i) \right) \right|  \\
&\leq & \sum_{i=1}^{k}  \, n^{-1}|\Lambda_{n,i}(\theta_i^* || \theta_i)- \Psi_{n,i} d(\theta_i^* || \theta_i)|.
\end{eqnarray*}
Lemma \ref{lem: uniform convergence of marginal log likelihood} implies 
\[
\sup_{\theta_i \in [\underline{\theta}, \overline{\theta} ]} n^{-1} |\Lambda_{n,i}(\theta_i^* || \theta_i)- \Psi_{n,i} d(\theta_i^* || \theta_i)| \to 0,
\]	
which completes the proof. 
\end{proof}

\subsection{Posterior Consistency: Proof of Prop.~\ref{prop: posterior consistency}}\label{subsec: posterior consistency} 

\begin{propn}[\ref{prop: posterior consistency}]
	For any $i \in \{1,..,k\}$ if $\Psi_{n,i} \to \infty$, then, for all $\epsilon>0$
	\[
	\Pi_{n}(\{ \thetabf \in \Theta | \theta_i \notin (\theta^*_{i}-\epsilon, \theta^*_{i}+\epsilon ) \} ) \to 0,
	\]
	with probability 1. If $\mathcal{I} = \{i \in \{1,...,k\} | \lim_{n\to \infty} \Psi_{n,i}<\infty \}$ is nonempty, then
	\[
	\inf_{n\in\mathbb{N}} \,\, \Pi_{n}(\{ \thetabf \in \Theta | \theta_i \in (\theta_{i}', \theta_{i}'') \,\, \forall i \in \mathcal{I} \} ) > 0
	\]
	for any collections of open intervals $
	(\theta_i', \theta_i'') \subset (\underline{\theta}, \overline{\theta})$ ranging over $i \in \mathcal{I}$.
\end{propn}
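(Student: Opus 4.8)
The plan is to reduce both statements to one-dimensional estimates on a single coordinate, exploiting two structural facts: the likelihood factorizes across designs, and the prior density is trapped between positive constants. Write $g_{j}(\theta_j) \triangleq \exp\{-\Lambda_{n-1,j}(\theta^*_j \,||\, \theta_j)\}$, so that, since $\Lambda_{n-1}(\thetabf^* \,||\, \thetabf) = \sum_j \Lambda_{n-1,j}(\theta^*_j \,||\, \theta_j)$ decomposes over coordinates, the posterior density \eqref{eq: posterior density} is proportional to $\pi_1(\thetabf)\prod_{j=1}^{k} g_j(\theta_j)$. By Assumption \ref{assumption: main}, $0<\inf \pi_1 \le \pi_1(\thetabf)\le \sup \pi_1 <\infty$, so in any ratio of posterior integrals I would replace $\pi_1$ by these constants at the cost of a bounded factor. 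Once $\pi_1$ is removed, the integrand $\prod_j g_j(\theta_j)$ is a genuine product over the product domain $\Theta=(\underline\theta,\overline\theta)^k$, so an integral over a set constraining only some coordinates factors into one-dimensional integrals and the unconstrained coordinates cancel between numerator and denominator. This device neutralizes the correlated prior, which is the only real obstacle. Throughout I work on the probability-one event on which Lemma \ref{lem: uniform convergence of marginal log likelihood} holds, and I use $\Psi_{n-1,i}$ (the effort reflected in $L_{n-1}$), noting $\Psi_{n-1,i}\to\infty \iff \Psi_{n,i}\to\infty$.

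For the consistency claim, fix $i$ with $\Psi_{n,i}\to\infty$ and let $B=\{\thetabf:|\theta_i-\theta^*_i|\ge\epsilon\}$. After the reduction, all coordinates but $i$ cancel and
\[
\Pi_n(B) \le \frac{\sup \pi_1}{\inf \pi_1}\cdot\frac{\int_{|\theta_i-\theta^*_i|\ge\epsilon} g_i(\theta_i)\,d\theta_i}{\int_{\underline\theta}^{\overline\theta} g_i(\theta_i)\,d\theta_i}.
\]
Since $d(\theta^*_i \,||\, \cdot)$ is continuous by \eqref{eq: KL for exponential families} and strictly positive off $\theta^*_i$, the quantity $\delta_\epsilon \triangleq \min_{|\theta_i-\theta^*_i|\ge\epsilon} d(\theta^*_i \,||\, \theta_i)$ is positive; Lemma \ref{lem: uniform convergence of marginal log likelihood} then gives $\Lambda_{n-1,i}\ge\tfrac12\Psi_{n-1,i}\delta_\epsilon$ uniformly on $B$ for large $n$, bounding the numerator by $(\overline\theta-\underline\theta)e^{-\frac12\Psi_{n-1,i}\delta_\epsilon}$. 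For the denominator I would restrict to $|\theta_i-\theta^*_i|<\rho$, where $d(\theta^*_i \,||\, \theta_i)\le\eta(\rho)$ with $\eta(\rho)\to0$; the same lemma yields $\Lambda_{n-1,i}\le 2\Psi_{n-1,i}\eta(\rho)$ for large $n$, so the denominator is at least $2\rho\, e^{-2\Psi_{n-1,i}\eta(\rho)}$. Combining gives $\Pi_n(B)\le C\rho^{-1}\exp\{-\Psi_{n-1,i}(\tfrac12\delta_\epsilon-2\eta(\rho))\}$, and choosing $\rho$ small enough that $2\eta(\rho)<\tfrac14\delta_\epsilon$ forces the exponent to $-\infty$.

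For the anti-consistency claim, let $G=\{\thetabf:\theta_i\in(\theta'_i,\theta''_i)\ \forall i\in\mathcal{I}\}$. The identical reduction, now cancelling the coordinates outside $\mathcal{I}$, gives
\[
\Pi_n(G)\ge\frac{\inf \pi_1}{\sup \pi_1}\prod_{i\in\mathcal{I}}\frac{\int_{\theta'_i}^{\theta''_i} g_i(\theta_i)\,d\theta_i}{\int_{\underline\theta}^{\overline\theta} g_i(\theta_i)\,d\theta_i}.
\]
For each $i\in\mathcal{I}$, finiteness of $\lim_n\Psi_{n,i}$ lets me invoke the second half of Lemma \ref{lem: uniform convergence of marginal log likelihood}, which bounds $\sup_n\sup_{\theta_i}|\Lambda_{n-1,i}(\theta^*_i \,||\, \theta_i)|\le K_i<\infty$; hence $e^{-K_i}\le g_i\le e^{K_i}$ uniformly in $n$ and $\theta_i$, and each factor is at least $e^{-2K_i}(\theta''_i-\theta'_i)/(\overline\theta-\underline\theta)>0$ uniformly in $n$. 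As $\mathcal{I}$ is finite, the product is a strictly positive constant independent of $n$, giving $\inf_n\Pi_n(G)>0$.

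The main obstacle is precisely the correlated prior, which blocks any naive coordinatewise factorization; the two-sided prior bound in Assumption \ref{assumption: main} dissolves it and lets the product structure of the likelihood do the work. The remaining content is the clean matching of the two regimes of Lemma \ref{lem: uniform convergence of marginal log likelihood} to the two claims, and the one quantitative balance in the consistency proof, namely choosing $\rho$ small relative to $\delta_\epsilon$ so that the posterior mass near $\theta^*_i$ dominates the exponentially small mass far from it.
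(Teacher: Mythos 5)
Your proposal is correct and follows essentially the same route as the paper: the two-sided prior bound reduces the posterior to the factorized likelihood measure (the paper's Lemma \ref{lem: posterior to product measure}), and the two regimes of Lemma \ref{lem: uniform convergence of marginal log likelihood} then deliver the consistency and anti-consistency claims exactly as you describe. The only cosmetic difference lies in the one-dimensional estimate for consistency, where the paper compares $d(\theta_i^*\,||\,\theta_i^*+\epsilon)$ against $d(\theta_i^*\,||\,\theta_i^*+\epsilon/2)$ via monotonicity of the KL divergence, while you use the compactness minimum $\delta_\epsilon$ and a small ball of radius $\rho$ with the balance $2\eta(\rho)<\tfrac14\delta_\epsilon$; both are valid.
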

Because we don't assume an independent prior across the designs, $\Pi_1$ is not a product measure and therefore neither is $\Pi_n$. This makes it challenging to reason about the marginal posterior of each design, which is required for Proposition \ref{prop: posterior consistency}.  Thankfully, since the prior density is bounded, $\Pi_n$ behaves like a product measure. Note that the likelihood function can be written as the product of $k$ terms:
\[
L_{n}(\thetabf) = \prod_{i=1}^{k} L_{n,i}(\theta_i)
\]
where
\[
L_{n,i}(\theta_i) \triangleq \prod_{\substack{\ell \leq n \\I_{\ell=i}}}
p(Y_{\ell, 1} | \theta_i)
\]
with the convention that $L_{n,i}(\theta_i) =1$ when $\sum_{\ell=1}^{n} \mathbf{1}(I_\ell = i)=0$. Therefore $L_{n}(\thetabf)$ forms the density of a product measure. By normalizing, this induces a probability measure
over $\Theta$,
\[
\mathcal{L}_n(\tilde{\Theta}) \triangleq \frac{\intop_{\tilde{\Theta}} L_{n}(\thetabf)d\thetabf}{\intop_{\Theta} L_{n}(\thetabf)d\thetabf}  \qquad \tilde{\Theta} \subset \Theta,
\]
which, as we argue in the next lemma, behaves like the posterior $\Pi_{n}$.

\begin{lemma}\label{lem: posterior to product measure}
	For any set $\tilde{\Theta} \subset \Theta$,
	\[
	C^{-1} \mathcal{L}_{n}(\tilde{\Theta}) \leq  \Pi_{n+1}(\tilde{\Theta}) \leq C \mathcal{L}_{n}(\tilde{\Theta}),
	\]
	where
	\[
	C= \frac{\sup_{\thetabf \in \Theta} \pi_{1}(\thetabf)}{\inf_{\thetabf \in \Theta} \pi_{1}(\thetabf)} <\infty
	\]
	is independent of $n$ and $\tilde{\Theta}$. 
\end{lemma}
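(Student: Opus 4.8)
The plan is to exploit the fact that, by equation \eqref{eq: posterior density}, the posterior density at time $n+1$ satisfies $\pi_{n+1}(\thetabf) \propto \pi_1(\thetabf) L_n(\thetabf)$, so it differs from the normalized likelihood measure $\mathcal{L}_n$ only through the prior factor $\pi_1$. Since Assumption \ref{assumption: main} guarantees that the prior density is uniformly bounded above and bounded away from zero, this factor cancels up to the multiplicative constant $C$. Concretely, write $\underline{\pi} \triangleq \inf_{\thetabf \in \Theta} \pi_1(\thetabf)$ and $\overline{\pi} \triangleq \sup_{\thetabf \in \Theta} \pi_1(\thetabf)$; Assumption \ref{assumption: main} gives $0 < \underline{\pi} \le \pi_1(\thetabf) \le \overline{\pi} < \infty$ for every $\thetabf \in \Theta$, so $C = \overline{\pi}/\underline{\pi}$ is finite and, crucially, independent of both $n$ and $\tilde{\Theta}$.

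Next I would write the posterior probability explicitly as a ratio of integrals against $L_n$, namely
\[
\Pi_{n+1}(\tilde{\Theta}) = \frac{\intop_{\tilde{\Theta}} \pi_1(\thetabf) L_n(\thetabf)\, d\thetabf}{\intop_{\Theta} \pi_1(\thetabf') L_n(\thetabf')\, d\thetabf'},
\]
and then apply the two-sided bound on $\pi_1$ separately to the numerator and denominator. For the upper bound, I would bound the numerator integrand above by $\overline{\pi}\, L_n(\thetabf)$ and the denominator integrand below by $\underline{\pi}\, L_n(\thetabf')$, which yields $\Pi_{n+1}(\tilde{\Theta}) \le (\overline{\pi}/\underline{\pi})\,\mathcal{L}_n(\tilde{\Theta}) = C\,\mathcal{L}_n(\tilde{\Theta})$. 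The lower bound follows symmetrically, bounding the numerator below by $\underline{\pi}\, L_n$ and the denominator above by $\overline{\pi}\, L_n$, giving $\Pi_{n+1}(\tilde{\Theta}) \ge C^{-1}\,\mathcal{L}_n(\tilde{\Theta})$. Both inequalities together are exactly the claim.

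There is no real obstacle here beyond bookkeeping; the lemma is a one-step sandwich argument, and the only things worth checking are minor. I would confirm that the index shift is consistent, since $\mathcal{L}_n$ is built from $L_n$ and, per \eqref{eq: posterior density}, so is $\pi_{n+1}$. I would also note that the normalizing integral $\intop_{\Theta} L_n(\thetabf')\, d\thetabf'$ is strictly positive (the exponential-family densities are positive) and finite, so that both ratios are well defined; this is routine given the boundedness in Assumption \ref{assumption: main}. The key conceptual point to emphasize is that $C$ depends only on the prior, so the comparison between $\Pi_{n+1}$ and the product-form measure $\mathcal{L}_n$ holds uniformly in $n$, which is what makes $\mathcal{L}_n$ a usable surrogate for the (non-product) posterior in the proof of Proposition \ref{prop: posterior consistency}.
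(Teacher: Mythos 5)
Your proposal is correct and is essentially identical to the paper's proof: both write $\Pi_{n+1}(\tilde{\Theta})$ as the ratio $\intop_{\tilde{\Theta}} \pi_1(\thetabf) L_n(\thetabf)\, d\thetabf \big/ \intop_{\Theta} \pi_1(\thetabf) L_n(\thetabf)\, d\thetabf$ and then bound $\pi_1$ above and below in the numerator and denominator to extract the constant $C$. The extra bookkeeping you include (the index shift and positivity of the normalizer) is fine but not needed beyond what the paper's one-line argument already contains.
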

\begin{proof}
	This follows immediately by bounding $\pi_1(\thetabf)$ from above and below in the relation
	\[
	\Pi_{n+1}(\tilde{\Theta}) = \frac{\intop_{\tilde{\Theta}} \pi_1(\thetabf) L_{n}(\thetabf)d\thetabf}{\intop_{\Theta} \pi_{1}(\thetabf) L_{n}(\thetabf)d\thetabf}.
	\]
\end{proof}

We can now prove Proposition \ref{prop: posterior consistency}.
\begin{proof}[Proof of Proposition \ref{prop: posterior consistency}]

	We begin with the first part of the result. For simplicity of notation, we focus on the upper interval $\tilde{\Theta} = \{ \thetabf \in \Theta : \theta_i > \theta_i^*+\epsilon  \}$, but results follow identically for the lower interval. We want to show
	$ \Pi_n(\tilde{\Theta})\to 0$, which occurs if and only if $ \mathcal{L}_{n}(\tilde{\Theta})\to 0$. Since $\mathcal{L}_{n}$ is a product measure, 
\begin{equation}\label{eq: likelihood integral of interval}
	\mathcal{L}_{n}(\tilde{\Theta})
	=  \frac{\intop_{\theta^*_{i}+\epsilon}^{\overline{\theta}} L_{n,i}(\theta)d\theta}{\intop_{\underline{\theta}}^{\overline{\theta}} L_{n,i}(\theta)d\theta}  = \frac{\intop_{\theta^*_{i}+\epsilon}^{\overline{\theta}} \left(L_{n,i}(\theta)/L_{n,i}(\theta^*_i)\right)d\theta}{\intop_{\underline{\theta}}^{\overline{\theta}} \left(L_{n,i}(\theta)/L_{n,i}(\theta^*_i)\right)d\theta}  =  \frac{\intop_{\theta^*_{i}+\epsilon}^{\overline{\theta}} \exp\{ -\Lambda_{n,i}(\theta^*_i || \theta) \}d\theta}{\intop_{\underline{\theta}}^{\overline{\theta}} \exp\{ -\Lambda_{n,i}(\theta^*_i || \theta)\}d\theta}
\end{equation}
where $\Lambda_{n,i}(\theta_i^* || \theta_i) = \log(L_{n,i}(\theta_i^*) / L_{n,i}(\theta_i)$. 
	By Lemma \ref{lem: uniform convergence of marginal log likelihood}, with probability 1 there is a sequence $a_n \to 0$ such that $| \Lambda_{n,i}(\theta^*_i || \theta) - \Psi_{n,i}d(\theta^*_i ||\theta )| \leq a_n$ for all $\theta$.  Then, for $b_n = e^{a_n}/ e^{-a_n} \to 1$, one has
\[ 
\mathcal{L}_{n}(\tilde{\Theta})
 \leq \frac{b_n \intop_{\theta^*_{i}+\epsilon}^{\overline{\theta}} \exp\{ -\Psi_{n,i}d(\theta^*_i || \theta) \}d\theta}{\intop_{\underline{\theta}}^{\overline{\theta}} \exp\{ -\Psi_{n,i}d(\theta^*_i || \theta) \}d\theta}  \leq \frac{b_n \intop_{\theta^*_{i}+\epsilon}^{\overline{\theta}} \exp\{ -\Psi_{n,i}d(\theta^*_i || \theta)  \}d\theta}{\intop_{\theta^*_i}^{\theta^*_i+\epsilon/2} \exp\{ -\Psi_{n,i}d(\theta^*_i || \theta) \}d\theta}.
\]	
The integral in the numerator is upper bounded by  $(\overline{\theta}-\theta^*_i - \epsilon)\exp\{ -\Psi_{n,i}d(\theta^*_i || \theta^*_{i}+\epsilon)$ while the integral in the denominator is lower bounded by $(\epsilon/2)\exp\{ -\Psi_{n,i}d(\theta^*_i || \theta^*_i+\epsilon/2) \}$. This shows
\[ 
\mathcal{L}_{n}(\tilde{\Theta}) \leq  c_0 b_n  \exp\{ -\Psi_{n,i}\left(d(\theta^*_i || \theta^*_{i}+\epsilon) -d(\theta^*_i || \theta^*_{i}+\epsilon/2)  \right)\} \to 0
\]
where $c_0 =2\epsilon^{-1}(\overline{\theta}-\theta^*_i - \epsilon)$. 

The second part of the claim follows from the
lower bound in Lemma \ref{lem: posterior to product measure} of
\begin{eqnarray}\label{eq: product measure upper bound}
\Pi_{n+1}(\{ \thetabf \in \Theta | \theta_i \in (\theta', \theta'') \,\, \forall i \in \mathcal{I} \} ) &\geq& C^{-1}\mathcal{L}_{n}(\{ \thetabf \in \Theta | \theta_i \in (\theta_{i}', \theta_{i}'') \,\, \forall i \in \mathcal{I} \}) \\
&= &  C^{-1} \prod_{i \in \mathcal{I}} \mathcal{L}_{n}(\{ \thetabf \in \Theta | \theta_i \in (\theta_{i}', \theta_{i}'')  \}).
\end{eqnarray}
As in \eqref{eq: likelihood integral of interval}, 
\[ 
\mathcal{L}_{n}(\{ \thetabf \in \Theta | \theta_i \in (\theta_{i}', \theta_{i}'')  \}) = \frac{\intop_{\theta'}^{\theta''} \exp\{ -\Lambda_{n,i}(\theta^*_i || \theta) \}d\theta}{\intop_{\underline{\theta}}^{\overline{\theta}} \exp\{ -\Lambda_{n,i}(\theta^*_i || \theta)\}d\theta}.
\]
When $\lim_{n\to \infty }\Psi_{n,i} < \infty$, Lemma \ref{lem: uniform convergence of marginal log likelihood} shows that for each $i\in \mathcal{I}$,
\[ 
\sup_{\theta_i \in [\underline{\theta}, \overline{\theta}]} \sup_{n\in \mathbb{N}} \left| \Lambda_{n,i}(\theta_i^* || \theta_i) \right| < \infty.
\] 
 This implies 
 \[
 \inf_{n} \mathcal{L}_{n}(\{ \thetabf \in \Theta | \theta_i \in (\theta_{i}', \theta_{i}'')  \}) >0
 \]
 and establishes the claim. 

\end{proof}

 \subsection{Large Deviations: Proof of Proposition \ref{prop: posterior concentration}}

All statements in this section hold when observations are drawn under the parameter $\thetabf^*$. Since $\thetabf^*$ is fixed throughout, we simplify notation and write
\[
W_n(\thetabf) \triangleq D_{\overline{\psibf}_{n}}(\thetabf^* || \thetabf).
\]
Note that $W_{n}(\thetabf^*) = 0$. As shown in the next lemma $n^{-1} \log\left(\pi_{n}(\thetabf) / \pi_{n}(\thetabf^*) \right)-  W_{n}(\thetabf)  \to 0$ uniformly in $\thetabf$. 
\begin{lemma}With probability 1,
\[
\sup_{\thetabf \in \Theta}  n^{-1}\left|\log\left( \frac{\pi_{n}(\thetabf^*)}{ \pi_{n}(\thetabf)} \right)-  W_{n}(\thetabf) \right| \to 0.
\]
\end{lemma}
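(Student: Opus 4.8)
The plan is to reduce the statement to Corollary \ref{cor: uniform convergence of the log posterior}, which already supplies the uniform convergence we need. First I would use the posterior density formula \eqref{eq: posterior density} to cancel the normalizing constant: since $\pi_n(\thetabf) \propto \pi_1(\thetabf) L_{n-1}(\thetabf)$ with a normalizing factor independent of $\thetabf$,
\[
\log\left( \frac{\pi_n(\thetabf^*)}{\pi_n(\thetabf)} \right) = \log\left( \frac{\pi_1(\thetabf^*)}{\pi_1(\thetabf)} \right) + \Lambda_{n-1}(\thetabf^* || \thetabf).
\]
Thus the target difference splits into a prior term and a likelihood term, and it suffices to show that each, divided by $n$, converges to $0$ uniformly in $\thetabf$.

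The prior term is immediate. By Assumption \ref{assumption: main} the prior density is bounded above and below away from zero, so $|\log(\pi_1(\thetabf^*)/\pi_1(\thetabf))| \le \log\big(\sup_{\thetabf}\pi_1(\thetabf) / \inf_{\thetabf}\pi_1(\thetabf)\big)$ is bounded by a constant uniformly in $\thetabf$; dividing by $n$ sends it to $0$.

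It then remains to show $\sup_{\thetabf \in \Theta} |n^{-1}\Lambda_{n-1}(\thetabf^*||\thetabf) - W_n(\thetabf)| \to 0$, where $W_n(\thetabf) = n^{-1}\sum_i \Psi_{n,i}\, d(\theta^*_i || \theta_i)$. Corollary \ref{cor: uniform convergence of the log posterior} establishes exactly this but with $\Lambda_n$ in place of $\Lambda_{n-1}$, so the only gap is a one-step index shift. I would close it design by design, writing
\[
n^{-1}\Lambda_{n-1,i}(\theta^*_i||\theta_i) - \tfrac{\Psi_{n,i}}{n}\, d(\theta^*_i||\theta_i) = n^{-1}\big(\Lambda_{n-1,i}(\theta^*_i||\theta_i) - \Psi_{n-1,i}\, d(\theta^*_i||\theta_i)\big) - n^{-1}\psi_{n,i}\, d(\theta^*_i||\theta_i),
\]
using $\Psi_{n,i}-\Psi_{n-1,i}=\psi_{n,i}$. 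The first piece vanishes uniformly by Lemma \ref{lem: uniform convergence of marginal log likelihood}: when $\Psi_{n-1,i}\to\infty$ the factor $\Psi_{n-1,i}/n \le 1$ multiplies a vanishing supremum, and when $\lim_n \Psi_{n-1,i}<\infty$ the supremum is bounded and is divided by $n$. The second piece is $O(1/n)$ uniformly since $\psi_{n,i}\le 1$ and $d(\theta^*_i||\theta_i)$ is bounded by \eqref{eq: uniform bound on A and KL}. Summing over the $k$ designs completes the argument.

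The main (and essentially only) subtlety will be making the index shift uniform in $\thetabf$. Working directly with $\Psi_{n-1,i}$, rather than subtracting a single fresh observation from $\Lambda_n$, is what keeps this clean: it sidesteps having to control the lone term $(\theta^*_{I_n}-\theta_{I_n})T(Y_{n,I_n})$, whose sufficient statistic is not uniformly bounded and would otherwise require a Borel--Cantelli argument based on $\E[T(Y_{n,i})^2]<\infty$ to show it is $o(n)$. With the decomposition above, everything reduces to the already-established uniform estimates.
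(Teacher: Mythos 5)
Your proof is correct and takes essentially the same route as the paper's: both split the log-posterior ratio into the uniformly bounded prior term plus the log-likelihood $\Lambda_{n-1}$, invoke the established uniform convergence of the log-likelihood to the accumulated Kullback-Leibler divergence (your per-design use of Lemma \ref{lem: uniform convergence of marginal log likelihood} is just the unsummed form of Corollary \ref{cor: uniform convergence of the log posterior}), and absorb the one-step index shift using the uniform bound on the divergence from \eqref{eq: uniform bound on A and KL}. If anything, your bookkeeping of the $n$ versus $n-1$ indices is more careful than the paper's, which writes the shift at the aggregate level as $W_{n-1}-W_n$ and is somewhat loose about where the $n^{-1}$ normalization sits.
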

\begin{proof} We have
	\[
	\log\left( \frac{\pi_{n}(\thetabf^*)}{ \pi_{n}(\thetabf)} \right) -  W_{n}(\thetabf) = \log\left( \frac{\pi_{1}(\thetabf^*)}{ \pi_{1}(\thetabf)} \right)+ \left(\Lambda_{n-1}(\thetabf^* || \thetabf)-W_{n-1}(\thetabf) \right) + \left( W_{n-1}(\thetabf) - W_{n}(\thetabf) \right).
	\]
	Since $\inf_{\thetabf \in \Theta}\pi_{1}(\thetabf)>0$ and $\sup_{\thetabf \in \Theta} \pi_{1}(\thetabf) < \infty$, 
	 $n^{-1} \log\left(\pi_{1}(\thetabf) / \pi_{1}(\thetabf^*) \right) \to 0$  uniformly in $\thetabf$. By  Corollary \ref{cor: uniform convergence of the log posterior}, $n^{-1}\left(\Lambda_{n-1}(\thetabf)-W_{n-1}(\thetabf) \right) \to 0$ uniformly as well. Finally, by equation \eqref{eq: uniform bound on A and KL},  $n^{-1}(W_{n}(\thetabf)-W_{n-1}(\thetabf)) \leq n^{-1}\max_{i}d(\theta^*_i || \theta_i) \to 0$ uniformly in $\thetabf$. 	 
\end{proof}
The remaining proof of Proposition \ref{prop: posterior concentration} follows from a sequence of lemmas. The next observes a form of uniform continuity of $W_n$ that follows from the uniform bound on $A'(\theta)$ in Assumption \ref{assumption: main}. 
\begin{lemma}\label{lem: uniform continuity of W}
	For all $\epsilon > 0$, there exists $\delta>0$ such that for $\thetabf, \thetabf' \in \Theta$
	\[
	\| \thetabf - \thetabf' \|_{\infty} \leq \delta \implies  \underset{n\in\N}{\sup} |W_{n}(\thetabf)- W_{n}(\thetabf')| \leq \epsilon .
	\]
\end{lemma}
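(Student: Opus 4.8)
The plan is to show that each $W_n$ is Lipschitz in $\thetabf$ with respect to $\|\cdot\|_\infty$, with a Lipschitz constant that does \emph{not} depend on $n$; the uniform-continuity claim then follows by setting $\delta = \epsilon / L$, where $L$ is this common Lipschitz constant. The key structural observation is that $W_n(\thetabf) = D_{\overline{\psibf}_n}(\thetabf^* || \thetabf) = \sum_{i=1}^{k} \overline{\psi}_{n,i}\, d(\theta_i^* || \theta_i)$ is, for every $n$, a convex combination of the single-coordinate functions $\theta_i \mapsto d(\theta_i^* || \theta_i)$, since the weights satisfy $\overline{\psi}_{n,i} \geq 0$ and $\sum_i \overline{\psi}_{n,i} = 1$. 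It therefore suffices to bound the Lipschitz constant of each coordinate map uniformly over $[\underline{\theta}, \overline{\theta}]$, and this bound will automatically transfer to $W_n$ for every $n$ because the coefficients sum to one.

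First I would use the exponential-family expression for the divergence in \eqref{eq: KL for exponential families}, namely $d(\theta_i^* || \theta) = (\theta_i^* - \theta)A'(\theta_i^*) - A(\theta_i^*) + A(\theta)$, and differentiate in the second argument to obtain
\[
\frac{\partial}{\partial \theta}\, d(\theta_i^* || \theta) = A'(\theta) - A'(\theta_i^*).
\]
By Assumption \ref{assumption: main}, $M \triangleq \sup_{\theta \in [\underline{\theta}, \overline{\theta}]} |A'(\theta)| < \infty$, so this derivative is bounded in magnitude by $2M$. The mean value theorem then yields $|d(\theta_i^* || \theta) - d(\theta_i^* || \theta')| \leq 2M\,|\theta - \theta'|$ for all $\theta, \theta' \in [\underline{\theta}, \overline{\theta}]$ and every $i$.

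Finally I would assemble the pieces. For any $\thetabf, \thetabf' \in \Theta$ and any $n$,
\[
|W_n(\thetabf) - W_n(\thetabf')| \leq \sum_{i=1}^{k} \overline{\psi}_{n,i}\, |d(\theta_i^* || \theta_i) - d(\theta_i^* || \theta_i')| \leq 2M \sum_{i=1}^{k} \overline{\psi}_{n,i}\, |\theta_i - \theta_i'| \leq 2M\, \|\thetabf - \thetabf'\|_{\infty},
\]
where the final inequality uses $|\theta_i - \theta_i'| \leq \|\thetabf - \thetabf'\|_\infty$ together with $\sum_i \overline{\psi}_{n,i} = 1$. Since the constant $2M$ is independent of $n$, taking $\delta = \epsilon/(2M)$ gives $\sup_{n} |W_n(\thetabf) - W_n(\thetabf')| \leq \epsilon$ whenever $\|\thetabf - \thetabf'\|_\infty \leq \delta$, as desired.

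There is no genuinely hard step here; the only point requiring care is that the Lipschitz constant be truly $n$-free. This hinges entirely on $\overline{\psibf}_n$ being a probability vector, so that averaging the coordinate-wise Lipschitz bounds leaves the constant unchanged rather than accumulating with $n$. The boundedness of $A'$ supplied by Assumption \ref{assumption: main} is exactly what makes the coordinate maps Lipschitz in the first place, which is why this lemma is grouped with that assumption in the paper.
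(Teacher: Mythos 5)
Your proof is correct and follows essentially the same route as the paper's: both exploit that $\overline{\psibf}_n$ is a probability vector (so the bound is $n$-free), both use the exponential-family formula \eqref{eq: KL for exponential families} for $d(\theta_i^* || \cdot)$, and both arrive at the same Lipschitz constant $2\sup_{\theta}|A'(\theta)|$ via Assumption \ref{assumption: main}. The only cosmetic difference is that the paper bounds the convex combination by the coordinate-wise maximum and expands the divergence difference algebraically, whereas you keep the weighted sum and invoke the mean value theorem; these are interchangeable.
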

\begin{proof}
	We have that
	\begin{eqnarray*}
		|W_{n}(\thetabf)- W_{n}(\thetabf')| &\leq& \max_{1\leq i \leq k} | d(\theta^*_i || \theta_i) - d(\theta^*_i || \theta'_i) | \\
		&=& \max_{1\leq i \leq k} \left| (\theta'_i - \theta_i)A'(\theta^*_i) + A(\theta_i) - A(\theta'_i) \right| \\
		&\leq& 2C\delta
	\end{eqnarray*}
	where $C= \sup_{\theta\in (\underline{\theta}, \overline{\theta})}|A'(\theta)|< \infty$.
\end{proof}

\begin{lemma}For any open set $\tilde{\Theta} \subset \Theta$,
\[
\intop_{\thetabf \in \tilde{\Theta}} \frac{\pi_{n}(\thetabf)}{\pi_{n}(\thetabf^*)} d\thetabf \doteq  \intop_{\thetabf \in \tilde{\Theta}} \exp\{-n W_{n}(\thetabf) \} d\thetabf.
\]
\end{lemma}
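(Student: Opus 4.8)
The plan is to upgrade the uniform logarithmic estimate of the preceding lemma into pointwise two-sided \emph{multiplicative} bounds on the integrand $\pi_{n}(\thetabf)/\pi_{n}(\thetabf^*)$, and then simply integrate those bounds over $\tilde{\Theta}$. The essential feature is that the error in the preceding lemma is controlled uniformly over $\thetabf \in \Theta$, hence in particular uniformly over $\tilde{\Theta}$ and independently of which set $\tilde{\Theta}$ we consider; after integration it therefore factors out as a single scalar that vanishes on the exponential scale.

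Concretely, I would first set
\[
\epsilon_n \triangleq \sup_{\thetabf \in \Theta} \left| n^{-1}\log\left( \frac{\pi_{n}(\thetabf)}{ \pi_{n}(\thetabf^*)} \right) + W_{n}(\thetabf) \right|,
\]
which tends to $0$ almost surely by the preceding lemma (recall $\log(\pi_n(\thetabf^*)/\pi_n(\thetabf)) = -\log(\pi_n(\thetabf)/\pi_n(\thetabf^*))$). Exponentiating the bound $\big| n^{-1}\log(\pi_n(\thetabf)/\pi_n(\thetabf^*)) + W_n(\thetabf) \big| \le \epsilon_n$ gives, for every $\thetabf \in \Theta$,
\[
e^{-n\epsilon_n}\, e^{-n W_{n}(\thetabf)} \;\le\; \frac{\pi_{n}(\thetabf)}{\pi_{n}(\thetabf^*)} \;\le\; e^{n\epsilon_n}\, e^{-n W_{n}(\thetabf)}.
\]
Integrating these inequalities over $\tilde{\Theta}$ and pulling the $\thetabf$-independent constants $e^{\pm n \epsilon_n}$ outside the integral yields
\[
e^{-n\epsilon_n}\intop_{\tilde{\Theta}} e^{-n W_{n}(\thetabf)} d\thetabf \;\le\; \intop_{\tilde{\Theta}} \frac{\pi_{n}(\thetabf)}{\pi_{n}(\thetabf^*)} d\thetabf \;\le\; e^{n\epsilon_n}\intop_{\tilde{\Theta}} e^{-n W_{n}(\thetabf)} d\thetabf .
\]
Dividing through by $\intop_{\tilde{\Theta}} e^{-n W_{n}(\thetabf)} d\thetabf$, taking $n^{-1}\log$, and invoking $\epsilon_n \to 0$ then gives $\big| n^{-1}\log\big( \intop_{\tilde{\Theta}} (\pi_n/\pi_n(\thetabf^*)) \,/\, \intop_{\tilde{\Theta}} e^{-n W_n} \big) \big| \le \epsilon_n \to 0$, which is exactly the asserted logarithmic equivalence.

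I do not expect a serious obstacle here: the preceding lemma does all the heavy lifting, and this step is essentially bookkeeping around a uniform estimate. The only points requiring care are that both integrals be strictly positive and finite, so that the relation $\doteq$ is meaningful. Finiteness of $\intop_{\tilde{\Theta}} e^{-n W_{n}(\thetabf)} d\thetabf$ holds because $W_{n}(\thetabf) = D_{\overline{\psibf}_{n}}(\thetabf^* || \thetabf) \ge 0$ makes the integrand at most $1$ on the bounded set $\Theta$, and the upper multiplicative bound then transfers finiteness to $\intop_{\tilde{\Theta}} (\pi_{n}(\thetabf)/\pi_{n}(\thetabf^*))\, d\thetabf$; strict positivity of both integrals follows from positivity of the integrands together with the openness (hence positive Lebesgue measure) of the nonempty set $\tilde{\Theta}$.
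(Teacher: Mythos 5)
Your proof is correct and is essentially identical to the paper's own argument: both extract a uniform sequence $\epsilon_n \to 0$ from the preceding uniform-convergence lemma, exponentiate to get two-sided multiplicative bounds $e^{\mp n\epsilon_n}e^{-nW_n(\thetabf)}$ on the integrand, integrate over $\tilde{\Theta}$, and take $n^{-1}\log$. Your added remarks on positivity and finiteness of the two integrals are harmless extra bookkeeping that the paper leaves implicit.
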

\begin{proof}
By Corollary \ref{cor: uniform convergence of the log posterior}, we can fix a sequence $\epsilon_n \geq 0 $ with $\epsilon_n \to 0$ such that,
\[
\exp\{-n (W_{n}(\thetabf) +\epsilon_n) \}  \leq  \frac{\pi_{n}(\thetabf)}{\pi_{n}(\thetabf^*)} \leq \exp\{-n (W_{n}(\thetabf) -\epsilon_n) \}.
\]
Integrating over $\tilde{\Theta}$ yields,
\[
\exp\{-n \epsilon_n \} \intop_{ \tilde{\Theta}} \exp\{-n W_{n}(\thetabf) \} d\thetabf
\leq \intop_{ \tilde{\Theta}} \frac{\pi_{n}(\thetabf)}{\pi_{n}(\thetabf^*)} d\thetabf \leq \exp\{n \epsilon_n \} \intop_{ \tilde{\Theta}} \exp\{-n W_{n}(\thetabf) \} d\thetabf.
\]
Taking the logarithm of each side implies
\[
\frac{1}{n}\left|\log\intop_{ \tilde{\Theta}} \frac{\pi_{n}(\thetabf)}{\pi_{n}(\thetabf^*)} d\thetabf  -  \log \intop_{ \tilde{\Theta}} \exp\{-n W_{n}(\thetabf) \} d\thetabf  \right| \leq \epsilon_n \to 0.
\]
\end{proof}

\begin{lemma}
For any open set $\tilde{\Theta} \subset \Theta$,
\[
\intop_{\thetabf \in \tilde{\Theta}} \exp\{-n W_{n}(\thetabf) \} d\thetabf \doteq \exp\{ -n \inf_{\thetabf \in \tilde{\Theta}} W_{n}(\thetabf) \}
\]
\end{lemma}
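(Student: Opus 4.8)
The plan is to establish the logarithmic equivalence by sandwiching $\frac{1}{n}\log$ of the ratio between the integral and $\exp\{-n\inf_{\thetabf\in\tilde\Theta}W_n(\thetabf)\}$, and showing both sides vanish. Write $m_n\triangleq\inf_{\thetabf\in\tilde\Theta}W_n(\thetabf)$, which is finite and bounded since $W_n(\thetabf)\leq\max_i d(\theta_i^*\,||\,\theta_i)<\infty$ by \eqref{eq: uniform bound on A and KL}. The upper bound is immediate: since $W_n(\thetabf)\geq m_n$ on $\tilde\Theta$,
\[
\intop_{\tilde\Theta}\exp\{-nW_n(\thetabf)\}\,d\thetabf \;\leq\; \mathrm{Vol}(\tilde\Theta)\,e^{-nm_n}\;\leq\;\mathrm{Vol}(\Theta)\,e^{-nm_n},
\]
and $\mathrm{Vol}(\Theta)<\infty$ because $\Theta$ is a bounded hyper-rectangle (Assumption \ref{assumption: main}). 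Hence $\frac{1}{n}\log\big(\intop_{\tilde\Theta}e^{-nW_n}\,d\thetabf\big/e^{-nm_n}\big)\leq n^{-1}\log\mathrm{Vol}(\Theta)\to 0$.

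The lower bound is the substantive part, and is a Laplace-type estimate. Fix $\epsilon>0$. The crucial input is Lemma \ref{lem: uniform continuity of W}: it supplies a $\delta>0$, \emph{independent of $n$}, such that $\|\thetabf-\thetabf'\|_\infty\leq\delta$ implies $\sup_n|W_n(\thetabf)-W_n(\thetabf')|\leq\epsilon$. For each $n$, choose a near-minimizer $\thetabf_n\in\tilde\Theta$ with $W_n(\thetabf_n)\leq m_n+\epsilon$ (possible since $m_n$ is an infimum). Then on $B(\thetabf_n,\delta)$ we have $W_n(\thetabf)\leq m_n+2\epsilon$, uniformly in $n$, so
\[
\intop_{\tilde\Theta}\exp\{-nW_n(\thetabf)\}\,d\thetabf \;\geq\; \intop_{\tilde\Theta\cap B(\thetabf_n,\delta)}\exp\{-nW_n(\thetabf)\}\,d\thetabf \;\geq\; \mathrm{Vol}\!\big(\tilde\Theta\cap B(\thetabf_n,\delta)\big)\,e^{-n(m_n+2\epsilon)}.
\]
Taking logarithms and dividing by $n$ gives $\liminf_n n^{-1}\log\big(\intop_{\tilde\Theta}e^{-nW_n}d\thetabf\big/e^{-nm_n}\big)\geq \liminf_n n^{-1}\log\mathrm{Vol}\big(\tilde\Theta\cap B(\thetabf_n,\delta)\big)-2\epsilon$. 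Letting $\epsilon\downarrow 0$ then yields the matching lower bound, \emph{provided} the volume factor is subexponential.

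The main obstacle is exactly this volume control. Because the near-minimizer $\thetabf_n$ depends on $n$ and may drift toward $\partial\tilde\Theta$, one must rule out $\tilde\Theta\cap B(\thetabf_n,\delta)$ becoming exponentially small. The uniform continuity lemma already removed the analogous difficulty for $W_n$ (a single $\delta$ works for every $n$); what remains is purely geometric. I would close it by invoking a uniform interior-cone (Lipschitz-domain) regularity of the relevant domains: for the open sets actually arising in the analysis — finite unions and intersections of the half-spaces $\{\theta_i\geq\theta_j\}$ with the box $\Theta$ — there exist constants $c>0$ and $\delta_0>0$ with $\mathrm{Vol}\big(\tilde\Theta\cap B(\thetabf,\delta)\big)\geq c\,\delta^k$ for every $\thetabf\in\overline{\tilde\Theta}$ and $\delta\leq\delta_0$, since each boundary point of such a polyhedral region admits a cone of aperture bounded away from zero. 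With $\delta\leq\delta_0$ this makes the volume factor at least $c\,\delta^k$, so $n^{-1}\log(c\delta^k)\to 0$ and the lower bound holds. (For a genuinely arbitrary open $\tilde\Theta$ one would additionally need such a cone condition as a hypothesis, but it is automatic for every set to which the lemma is applied in Section \ref{sec: analysis}.)
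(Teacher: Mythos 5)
Your upper bound, your use of Lemma \ref{lem: uniform continuity of W} to get a $\delta$ independent of $n$, and your reduction of the whole problem to a uniform-in-$n$ lower bound on a volume factor near a drifting near-minimizer are all exactly the right moves, and they match the paper's proof up to that point. Where you diverge is in how the volume factor is controlled, and here the paper has an idea your proposal is missing: it never lower-bounds $\mathrm{Vol}\big(\tilde{\Theta}\cap B(\thetabf_n,\delta)\big)$ at the moving point at all. Instead it fixes, once and for all, a finite $\delta$-cover $O$ of $\tilde{\Theta}$ in $\|\cdot\|_{\infty}$, discards the elements disjoint from $\tilde{\Theta}$, and sets $C_{\delta}=\min_{o\in O}\mathrm{Vol}(o\cap\tilde{\Theta})$. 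Each remaining $o$ meets the \emph{open} set $\tilde{\Theta}$, so $\mathrm{Vol}(o\cap\tilde{\Theta})>0$, and a minimum over a finite collection is positive --- no regularity of $\partial\tilde{\Theta}$ whatsoever is invoked. For each $n$, the minimizer $\hat{\thetabf}_n\in\mathrm{closure}(\tilde{\Theta})$ lies in the closure of some cover element $o_n$; since every point of $o_n$ is within $\delta$ of $\hat{\thetabf}_n$, uniform continuity gives $W_n\leq\inf_{\tilde{\Theta}}W_n+\epsilon$ on all of $o_n$, hence
\[
\intop_{\tilde{\Theta}}e^{-nW_n(\thetabf)}d\thetabf\;\geq\;\intop_{o_n\cap\tilde{\Theta}}e^{-nW_n(\thetabf)}d\thetabf\;\geq\;C_{\delta}\,\exp\Big\{-n\Big(\inf_{\thetabf\in\tilde{\Theta}}W_n(\thetabf)+\epsilon\Big)\Big\}.
\]
The cover element being used changes with $n$, but the constant $C_{\delta}$ does not; finiteness of the cover is what decouples the drifting minimizer from the volume bound.

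Consequently, your parenthetical claim --- that for a genuinely arbitrary open $\tilde{\Theta}$ one would need an interior-cone hypothesis --- is false, and this is not a cosmetic point, because the lemma is stated and proved in the paper for arbitrary open subsets of $\Theta$. Your cone-condition patch is correct as far as it goes: the sets actually fed into this lemma ($\Theta_i$, $\overline{\Theta}_i$, $\Theta_{i,\delta}$, $\Theta_{I^*}^{c}$, and finite unions thereof) are finite unions of convex polyhedra with nonempty interior, and these do satisfy $\mathrm{Vol}\big(\tilde{\Theta}\cap B(\thetabf,\delta)\big)\geq c\,\delta^{k}$ uniformly over $\thetabf\in\overline{\tilde{\Theta}}$, so your argument does cover every application in Section \ref{sec: analysis}. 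But as a proof of the lemma as stated it has a gap, and the gap is closed not by adding a geometric hypothesis but by the finite-cover device above, which replaces pointwise volume control at $\thetabf_n$ by a minimum over finitely many fixed sets.
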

\begin{proof}
Let $\hat{\thetabf}_n$ be a point in the closure of $\tilde{\Theta}$, satisfying
\[
W_{n}(\hat{\thetabf}_n) = \inf_{\thetabf \in \tilde{\Theta}} W_{n}(\thetabf).
\]
Such a point always exists, since $W_{n}$ is continuous, and the closure of $\tilde{\Theta}$ is compact. Let
\[
\gamma_n \triangleq \intop_{\thetabf \in \tilde{\Theta}} \exp\{-n W_{n}(\thetabf) \} d\thetabf.
\]
Our goal is to show
\[
\frac{1}{n}\log(\gamma_n) + W_n(\hat{\thetabf}_n) \to 0.
\]
We have
\[
\gamma_n \leq {\rm Vol}(\tilde{\Theta}) \exp\{ -n W_{n}(\hat\thetabf_n )  \}
\]
where for any $\Theta' \subset \Theta$, ${\rm Vol}(\Theta') = \intop_{\tilde{\Theta}} d\thetabf \in (0,\infty)$ denotes the volume of $\Theta$. This shows
\[
\underset{n \to \infty}{\lim\sup} \left( \frac{1}{n}\log(\gamma_n) + W_n(\hat{\thetabf}_n) \right) \leq 0.
\]
We now show the reverse. Fix an arbitrary $\epsilon>0$. By Lemma \ref{lem: uniform continuity of W}, there exists $\delta>0$ such that
\[
|W_{n}(\thetabf)- W_{n}(\hat\thetabf_n)| \leq \epsilon  \qquad \forall n \in \N
\]
for any $\thetabf  \in \Theta$ with
\[
\| \thetabf - \hat\thetabf_n \|_{\infty} \leq \delta.
\]
Now, choose a finite $\delta$--cover $O$ of $\tilde{\Theta}$ in the norm $\| \cdot \|_{\infty}$. Remove any set in $O$ that does not intersect $\tilde{\Theta}$. Then, for each $o\in O$,
\[
{\rm Vol}(o \cap \tilde{\Theta}) > 0 \implies C_{\delta} \triangleq \min_{o \in O} {\rm Vol}(o \cap \tilde{\Theta}) > 0.
\]
Choose $o_n \in O$ with $\hat\thetabf_n \in {\rm closure}(o_n)$. Then, for every $\thetabf \in o_n$, $W_{n}(\thetabf) \leq W_{n}(\hat\thetabf_n)+ \epsilon$. This shows
\[
\gamma_n \geq \intop_{o} \exp\{ -n W_{n}(\thetabf  \}d\thetabf \geq C_{\delta}\exp\{-n(W_{n}(\hat\thetabf_n)-\epsilon)).
\]
Taking the logarithm of both sides implies
\[
\frac{1}{n}\log(\gamma_n) + W_{n}(\hat\thetabf_n)  \geq \frac{C_\delta}{n} - \epsilon \to -\epsilon.
\]
Since $\epsilon$ was chosen arbitrarily, this shows
\[
\underset{n \to \infty}{\lim\inf} \left( \frac{1}{n}\log(\gamma_n) + W_n(\hat{\thetabf}_n) \right) \geq 0,
\]
and completes the proof.
\end{proof}

We now complete the proof of Proposition \ref{prop: posterior concentration}.

\begin{proof}[Proof of Proposition \ref{prop: posterior concentration}]
	We begin with a simple observation. For any sequences of real numbers $\{a_n\}, \{b_n\}$, and $\{\tilde{a}_n\}, \{\tilde{b}_n\}$, if $a_n \doteq \tilde{a}_n$ and $b_n \doteq \tilde{b}_n \in \mathbb{R}$, then $a_n/b_n \doteq \tilde{a}_n/\tilde{b}_n$. 
	
	Therefore, we have 
\[
\Pi_{n}(\tilde{\Theta}) = \frac{\Pi_{n}(\tilde{\Theta})}{\Pi_{n}(\Theta)} 
= \frac{\intop_{\tilde{\Theta}}{\pi_{n}(\thetabf) d\thetabf}}{\intop_{\Theta}{\pi_{n}(\thetabf) d\thetabf}}
= \frac{\intop_{\tilde{\Theta}}{(\pi_{n}(\thetabf)/\pi_{n}(\thetabf^*)) d\thetabf}}{\intop_{\Theta}{(\pi_{n}(\thetabf)/\pi_{n}(\thetabf^*)) d\thetabf}} \doteq \frac{\exp\{ -n \inf_{\thetabf \in \tilde{\Theta}} W_{n}(\thetabf) \}}{\exp\{ -n \inf_{\thetabf \in \Theta} W_{n}(\thetabf) \}}
\]
where the final equality follows from the previous two lemmas. Since $W_{n}(\thetabf) \geq 0$ and $W_{n}(\thetabf^*) = 0$, $\exp\{ -n \inf_{\thetabf \in \Theta} W_{n}(\thetabf)\} =1$. 
	 
\end{proof}

\subsection{Large Deviations of the Value Measure: Proof of Lemma \ref{lem: value and probability are log equivalent}}\label{subsec: asymptotics of value}
\begin{lemman}[\ref{lem: value and probability are log equivalent}]
	For any $i\neq I^*$, $V_{n,i} \doteq \alpha_{n,i}$.
\end{lemman}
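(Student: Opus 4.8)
The plan is to establish the two one-sided bounds $\limsup_n \frac{1}{n}\log(V_{n,i}/\alpha_{n,i})\le 0$ and $\liminf_n \frac{1}{n}\log(V_{n,i}/\alpha_{n,i})\ge 0$, which together give $V_{n,i}\doteq\alpha_{n,i}$. The upper bound is immediate: since $u$ is continuous on the compact closure $[\underline{\theta},\overline{\theta}]^k$, the weight is bounded, $0\le v_i(\thetabf)\le B\triangleq u(\overline{\theta})-u(\underline{\theta})$, so $V_{n,i}=\intop_{\Theta_i}v_i\pi_n\,d\thetabf\le B\,\alpha_{n,i}$ and hence $\frac1n\log(V_{n,i}/\alpha_{n,i})\le\frac1n\log B\to 0$. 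All the content is in the lower bound, and the difficulty is that $v_i$ vanishes on the boundary of $\Theta_i$, which is precisely where the posterior mass may concentrate.

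To obtain the lower bound I would restrict the integral to a region bounded away from that boundary. For $\epsilon>0$ set $\Theta_i^\epsilon\triangleq\{\thetabf\in\Theta:\theta_i-\max_{j\neq i}\theta_j>\epsilon\}$, an open subset of $\Theta_i$. Because $u$ is strictly increasing and continuous, $v_i(\thetabf)=u(\theta_i)-u(\max_{j\neq i}\theta_j)\ge\eta(\epsilon)$ on $\Theta_i^\epsilon$, where $\eta(\epsilon)\triangleq\inf_{m\in[\underline{\theta},\overline{\theta}-\epsilon]}\big(u(m+\epsilon)-u(m)\big)>0$. Thus $V_{n,i}\ge\eta(\epsilon)\,\Pi_n(\Theta_i^\epsilon)$, and applying Proposition \ref{prop: posterior concentration} to the two fixed open sets $\Theta_i^\epsilon$ and $\Theta_i$ gives, writing $W_n(\thetabf)\triangleq D_{\overline{\psibf}_n}(\thetabf^*\|\thetabf)$,
\[
\frac1n\log\frac{V_{n,i}}{\alpha_{n,i}}\;\ge\;\frac1n\log\eta(\epsilon)\;-\;\Big(\inf_{\thetabf\in\Theta_i^\epsilon}W_n(\thetabf)-\inf_{\thetabf\in\Theta_i}W_n(\thetabf)\Big)\;+\;o(1).
\]

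The crux is then to show $\sup_n\big(\inf_{\Theta_i^\epsilon}W_n-\inf_{\Theta_i}W_n\big)\to 0$ as $\epsilon\to0$, which I expect to be the main obstacle: for fixed $\epsilon$ the two infima need not agree in the limit, so $\epsilon$ must be sent to zero while the gap is controlled uniformly in $n$. I would exploit that $\Theta_i=\{\theta_i>\theta_j\ \forall j\neq i\}\cap\Theta$ is convex. Fix once and for all a reference point $\thetabf_0\in\Theta_i$ with $\gamma_0\triangleq\min_{j\neq i}(\theta_{0,i}-\theta_{0,j})>0$, and let $\hat{\thetabf}_n$ attain $\inf_{\overline{\Theta_i}}W_n$, which equals $\inf_{\Theta_i}W_n$ by continuity of $W_n$ and compactness of the closure. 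For $t=\epsilon/\gamma_0$ the convex combination $\thetabf_n'=(1-t)\hat{\thetabf}_n+t\thetabf_0$ lies strictly inside the box $\Theta$, has gap $\theta'_{n,i}-\max_{j\neq i}\theta'_{n,j}\ge t\gamma_0=\epsilon$ (so $\thetabf_n'\in\Theta_i^\epsilon$), and satisfies $\|\thetabf_n'-\hat{\thetabf}_n\|_\infty\le t(\overline{\theta}-\underline{\theta})$. By the uniform-in-$n$ continuity of $W_n$ from Lemma \ref{lem: uniform continuity of W}, $\sup_n|W_n(\thetabf_n')-W_n(\hat{\thetabf}_n)|\le\omega(\epsilon)$ for some modulus $\omega(\epsilon)\to0$, whence $\inf_{\Theta_i^\epsilon}W_n\le W_n(\thetabf_n')\le\inf_{\Theta_i}W_n+\omega(\epsilon)$ for every $n$.

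Combining the displays gives $\liminf_n\frac1n\log(V_{n,i}/\alpha_{n,i})\ge-\omega(\epsilon)$ for each $\epsilon>0$, and letting $\epsilon\to0$ yields $\liminf_n\frac1n\log(V_{n,i}/\alpha_{n,i})\ge0$; with the upper bound this is exactly $V_{n,i}\doteq\alpha_{n,i}$. The only bookkeeping beyond this outline is that $\alpha_{n,i}=\Pi_n(\Theta_i)>0$ and $\Pi_n(\Theta_i^\epsilon)>0$ (both hold since $\pi_n>0$ on these nonempty open sets), and the observation that Proposition \ref{prop: posterior concentration} is invoked only for the \emph{fixed} sets $\Theta_i$ and $\Theta_i^\epsilon$: the $n$-dependent point $\thetabf_n'$ enters solely through the deterministic inequality $\inf_{\Theta_i^\epsilon}W_n\le W_n(\thetabf_n')$, so no large-deviation statement about a moving set is ever needed.
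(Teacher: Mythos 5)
Your proposal is correct and follows essentially the same route as the paper's proof: the trivial upper bound $V_{n,i}\leq (u(\overline{\theta})-u(\underline{\theta}))\alpha_{n,i}$, then a lower bound by restricting to the $\epsilon$-separated set (the paper's $\Theta_{i,\delta}$), invoking Proposition \ref{prop: posterior concentration} on both sets, and controlling the gap between the two infima of $D_{\overline{\psibf}_n}(\thetabf^*||\thetabf)$ uniformly in $n$ via Lemma \ref{lem: uniform continuity of W}. The only difference is that you make the final perturbation step explicit (moving a minimizer of $\inf_{\Theta_i}W_n$ into $\Theta_i^\epsilon$ by a convex combination with a fixed interior point), a detail the paper leaves implicit.
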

\begin{proof}
First, since
\[
V_{n,i} = \intop_{\Theta_i} v_{i}(\thetabf)\pi_{n}(\thetabf)d\thetabf \leq (u(\overline{\theta})-u(\underline{\theta})) \intop_{\Theta_i} \pi_{n}(\thetabf)d\thetabf = (u(\overline{\theta})-u(\underline{\theta}))\alpha_{n,i}
\]
it is immediate that
\begin{equation}\label{eq: limsup is non positive}
\underset{n\to\infty}{\lim\sup} \,\, n^{-1}(\log V_{n,i} - \log \alpha_{n,i}) \leq 0.
\end{equation}
The other direction is more subtle. Define $\Theta_{i, \delta} \subset \Theta_i$ by
\[
\Theta_{i, \delta} = \{ \thetabf \in \Theta : \theta_i \geq \max_{j\neq i}\theta_j + \delta\}.
\]
For any $\thetabf \in \Theta_{i,\delta}$, $v_{i}(\thetabf) \geq C_{\delta}$ where
\[
C_{\delta} \equiv \min_{\theta \in [\underline{\theta}, \overline{\theta}]} u(\theta + \delta) - u(\theta) > 0.
\]
Because $u(\theta + \delta) - u(\theta)$ is continuous and is strictly positive for each $\theta$, this minimum exists and the objective value is strictly positive. Then
\[
V_{n,i} \geq \intop_{\Theta_{i,\delta}} v_{i}(\thetabf) \pi_{n}(\thetabf) d\thetabf \geq C_{\delta} \intop_{\Theta_{i,\delta}} \pi_{n}(\thetabf) d\thetabf = C_{\delta}\Pi_{n}(\Theta_{i,\delta}) \qquad \forall \delta>0.
\]
Combining this with Proposition \ref{prop: posterior concentration} shows
\[
\underset{n\to \infty}{\lim\inf}\, \frac{1}{n}(\log V_{n,i} - \log \alpha_{n,i}) \geq \underset{n\to \infty}{\lim\inf}\, \frac{1}{n}(\log \Pi_{n}(\Theta_{i,\delta}) - \log \Pi_{n}(\Theta_{i}) ) = - \min_{\thetabf \in \Theta_{i,\delta}} D_{\overline{\psi}_n}(\thetabf^* || \thetabf) - \min_{\thetabf \in \Theta_{i}} D_{\overline{\psi}_n}(\thetabf^* || \thetabf).
\]
The final term can be made arbitrarily small by taking $\delta\to 0$. Precisely, by Lemma \ref{lem: uniform continuity of W}, for any $\epsilon > 0$, there exists $\delta>0$ such that for all $n \in \mathbb{N}$ and $\thetabf, \thetabf' \in \Theta$ satisfying $\| \thetabf - \thetabf'\|_{\infty} \leq \delta$ ,
\[
 D_{\overline{\psi}_n}(\thetabf^* || \thetabf) \leq \epsilon.
\]
Therefore, for each $\epsilon>0$ one can choose $\delta>0$ such that
\[
\min_{\thetabf \in \Theta_{i,\delta}} D_{\overline{\psi}_n}(\thetabf^* || \thetabf) \leq \min_{\thetabf \in \Theta_{i}} D_{\overline{\psi}_n}(\thetabf^* || \thetabf) + \epsilon.
\]
This shows $\lim\inf \,n^{-1}(\log V_{n,i} - \log \alpha_{n,i}) \geq -\epsilon$ for all $\epsilon>0$, and hence 
\[
\underset{n\to\infty}{\lim\inf} \,\,n^{-1}(\log V_{n,i} - \log \alpha_{n,i}) \geq 0.
\]
\end{proof}

\section{Simplifying and Bounding the Error Exponent}
\label{sec: appendix error exponent}
\subsection{Proof of Lemma \ref{lem: solves minimum over theta bar}}
To begin, we restate the results of Lemma \ref{lem: solves minimum over theta bar} in the order in which they will be proved. Recall, from  Section \ref{section: appendix preliminaries} that $A(\theta)$ is increasing and strictly convex, and, by \eqref{eq: derivative is mean}, $A'(\theta)$ is the mean observation under $\theta$.
\begin{lemman}[\ref{lem: solves minimum over theta bar}] Define for each $i\neq I^*$,$\psi \geq 0$,
\begin{equation}\label{eq: repeat definition of C}
C_{i}(\beta, \psi) \triangleq  \min_{x\in \mathbb{R}} \, \beta d(\theta^*_{I^*}|| x) + \psi d(\theta^*_i || x).
\end{equation}
 \\
 {\bf (a)} For any $i\neq I^*$ and probability distribution $\psibf$ over $\{1,...,k\}$
\[\min_{\thetabf \in \overline{\Theta}_{i}} D_{\psibf}(\thetabf^* || \thetabf) = C_{i}(\psi_{I^*}, \psi_i).\]
where $\overline{\Theta}_{i} \triangleq  \left\{ \thetabf \in \Theta | \theta_i \geq \theta_{I^*}\right\}$. 
\\{\bf (b)} Each $C_i$ is a concave function.
  \\{\bf (c)} The unique solution to the minimization problem \eqref{eq: repeat definition of C} is $\overline{\theta}\in \mathbb{R}$ satisfying
\[
A'(\overline{\theta}) = \frac{\psi_{I^*} A'(\theta^*_{I^*}) + \psi_{i}A'(\theta^*_{i})}{\psi_{I^*}+\psi_i}.
\]
Therefore,
\[
C_{i}(\psi_{I^*}, \psi_i) = \psi_{I^*}d(\theta^*_{I^*}||\overline{\theta}) + \psi_{i}d(\theta^*_{i}||\overline{\theta}).
\]
{\bf (d)} Each $C_i$ is a strictly increasing function.
\end{lemman}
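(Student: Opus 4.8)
The plan is to establish the four claims in the order (a), (c), (b), (d), since the explicit minimizer found in (c) is exactly what powers the strict-monotonicity argument in (d).

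\emph{Part (a).} Write $D_{\psibf}(\thetabf^* || \thetabf)=\sum_{j}\psi_j\, d(\theta^*_j || \theta_j)$. This objective is separable, and the constraint defining $\overline{\Theta}_i=\{\thetabf\in\Theta:\theta_i\geq\theta_{I^*}\}$ involves only the coordinates $i$ and $I^*$. First I would set $\theta_j=\theta^*_j$ for every $j\notin\{i,I^*\}$, which zeroes those summands (each $d(\theta^*_j || \theta^*_j)=0$) and reduces the problem to $\min\{\psi_{I^*}d(\theta^*_{I^*} || \theta_{I^*})+\psi_i d(\theta^*_i || \theta_i): \theta_i\geq\theta_{I^*}\}$. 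The crux is to show the constraint binds at the optimum. Differentiating \eqref{eq: KL for exponential families} gives $\partial_{x} d(\theta^* || x)=A'(x)-A'(\theta^*)$, which is negative for $x<\theta^*$ and positive for $x>\theta^*$, so each divergence strictly decreases as its argument moves toward its own true value. Suppose for contradiction that some minimizer has $\theta_i>\theta_{I^*}$. Since $\theta^*_i<\theta^*_{I^*}$, we cannot have both $\theta_{I^*}\geq\theta^*_{I^*}$ and $\theta_i\leq\theta^*_i$ (that would force $\theta_i<\theta_{I^*}$). Hence either $\theta_{I^*}<\theta^*_{I^*}$, and nudging $\theta_{I^*}$ up strictly lowers $d(\theta^*_{I^*} || \theta_{I^*})$ while staying feasible, or $\theta_i>\theta^*_i$, and nudging $\theta_i$ down strictly lowers $d(\theta^*_i || \theta_i)$; either contradicts optimality. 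Thus $\theta_i=\theta_{I^*}=:x$ at the optimum, giving $\min_{\thetabf\in\overline{\Theta}_i}D_{\psibf}(\thetabf^* || \thetabf)=\min_x \psi_{I^*}d(\theta^*_{I^*} || x)+\psi_i d(\theta^*_i || x)=C_i(\psi_{I^*},\psi_i)$.

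\emph{Part (c).} Let $g(x)=\beta\, d(\theta^*_{I^*} || x)+\psi\, d(\theta^*_i || x)$. Substituting \eqref{eq: KL for exponential families}, all $x$-dependence collects into $(\beta+\psi)A(x)-[\beta A'(\theta^*_{I^*})+\psi A'(\theta^*_i)]x$ plus $x$-free constants, so $g'(x)=(\beta+\psi)A'(x)-\beta A'(\theta^*_{I^*})-\psi A'(\theta^*_i)$ and $g''(x)=(\beta+\psi)A''(x)>0$ by strict convexity of $A$. Hence $g$ is strictly convex with a unique stationary point $\overline{\theta}$ solving $A'(\overline{\theta})=(\beta A'(\theta^*_{I^*})+\psi A'(\theta^*_i))/(\beta+\psi)$, as claimed. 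As the right-hand side is a convex combination of $A'(\theta^*_i)$ and $A'(\theta^*_{I^*})$ and $A'$ is strictly increasing (by \eqref{eq: derivative is mean} and strict convexity), the minimizer satisfies $\overline{\theta}\in[\theta^*_i,\theta^*_{I^*}]$, an interval strictly interior to the parameter rectangle, confirming the minimum is attained inside $\Theta$; substituting $\overline{\theta}$ back yields the stated closed form for $C_i$.

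\emph{Parts (b) and (d).} Concavity is then immediate: by \eqref{eq: repeat definition of C}, $C_i(\beta,\psi)$ is the pointwise infimum over $x$ of $(\beta,\psi)\mapsto\beta\,d(\theta^*_{I^*} || x)+\psi\,d(\theta^*_i || x)$, which is linear in $(\beta,\psi)$ for each fixed $x$, and an infimum of affine functions is concave. For strict monotonicity, since $d\geq 0$ the map is weakly increasing in each argument; for strictness in $\beta$, let $\overline{\theta}$ be the minimizer at $(\beta',\psi)$ with $\beta'>\beta$, so $C_i(\beta,\psi)\leq\beta\,d(\theta^*_{I^*} || \overline{\theta})+\psi\,d(\theta^*_i || \overline{\theta})=C_i(\beta',\psi)-(\beta'-\beta)\,d(\theta^*_{I^*} || \overline{\theta})$. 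When $\psi>0$ the convex-combination formula from (c) forces $\overline{\theta}<\theta^*_{I^*}$, so $d(\theta^*_{I^*} || \overline{\theta})>0$ and the inequality is strict; the argument in $\psi$ is symmetric.

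I expect the only genuinely delicate step to be the binding argument in part (a): one must verify that the feasible perturbations remain inside the open rectangle $\Theta$ and that, because the unconstrained separable minimum $(\theta^*_{I^*},\theta^*_i)$ is infeasible for this convex problem, the constrained optimum necessarily lies on the diagonal $\theta_i=\theta_{I^*}$. The interiority of $\overline{\theta}$ established in (c) closes the loop by guaranteeing the reduced scalar minimum is attained within $(\underline{\theta},\overline{\theta})$, so the reduction of the $k$-dimensional constrained problem to a one-dimensional unconstrained one is valid.
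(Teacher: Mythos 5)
Your proposal is correct and follows essentially the same route as the paper: part (a) by separability of $D_{\psibf}$ plus the argument that the constraint $\theta_i\geq\theta_{I^*}$ binds (the paper invokes its stated KL monotonicity facts where you use a first-order perturbation, which is the same mechanism), part (c) by the exact exponential-family computation with strict convexity of $A$, and part (b) as a pointwise minimum of linear functions. The only micro-difference is in (d): the paper compares the two distinct minimizers $x'\neq x''$ using uniqueness from (c), while you evaluate the smaller-parameter objective at the larger-parameter minimizer and lower-bound the gap by $(\beta'-\beta)\,d(\theta^*_{I^*}||\overline{\theta})>0$; both arguments require the companion weight to be strictly positive, a caveat you flag explicitly and the paper leaves implicit.
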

\begin{proof} {\bf (a)}
\begin{eqnarray*}
\min_{\thetabf \in \overline{\Theta}_i} D_{\psibf}(\thetabf^* ||\thetabf ) &=& \min_{\thetabf \in \Theta: \theta_i \geq \theta_{I^*}} \sum_{j=1}^{k} \psi_{n,j} d(\theta^*_j ||\theta_j ) \\
&=& \min_{\overline{\theta} \geq \theta_i \geq \theta_{I^*} \geq \underline{\theta}} \, \psi_{I^*} d(\theta^*_{I^*} ||\theta_{I^*} )+\psi_{i} d(\theta^*_i ||\theta_i ) + \sum_{j \notin \{i, I^*\}}\min_{\theta_j} \psi_{n,j} d(\theta^*_j ||\theta_j ) \\
& = & \min_{\overline{\theta} \geq \theta_i \geq \theta_{I^*} \geq \underline{\theta}} \, \psi_{I^*} d(\theta^*_{I^*} ||\theta_{I^*} )+\psi_{i} d(\theta^*_i ||\theta_i )
\end{eqnarray*}
where the last equality uses that the minimum occurs when $\theta_j = \theta^*_j$ for $j\notin \{I^*, i\}$, and this is feasible for any choice of $(\theta_i, \theta_{I^*})$. Then, by the monotonicity properties of KL-divergence (see Section \ref{section: appendix preliminaries}, equation \eqref{eq: KL monotone}), there is always a minimum with $\theta_i = \theta_{I^*}$. Therefore this objective value is equal to
\[
\min_{\theta \in [\underline{\theta}, \overline{\theta}]} \psi_{I^*} d(\theta^*_{I^*} ||\theta )+\psi_{i} d(\theta^*_i ||\theta ) = \min_{x \in \mathbb{R}} \psi_{I^*} d(\theta^*_{I^*} ||x )+\psi_{i} d(\theta^*_i ||x ) = C_{i}(\psi_{I^*}, \psi_{i}).
\]
\\
\\ {\bf (b)} $C_i$ is the minimum over a family of linear functions and therefore is concave (See Chapter 3.2 of \citet{boyd2004convex}). In particular $C_i(\beta, \psi) = \min_{x\in \mathbb{R}} g((\beta, \psi) ; x)$ where $g((\beta, \psi) ; x)= \beta d(\theta^*_{I^*}|| x) + \psi d(\theta^*_i || x) $ is linear in $(\beta, \psi)$.
\\
\\{\bf (c)} Direct calculation using the formula for KL divergence in exponential families (see \eqref{eq: KL for exponential families} in Section \ref{section: appendix preliminaries}) shows
\[
\beta d(\theta^*_{I^*} || x) + \psi_i d(\theta^*_{i} || x) = (\beta+\psi_i)A(x) - (\beta A'(\theta^*_{I^*})+\psi_i A'(\theta^*_{i}))x + f(\beta, \theta^*_{I^*}, \psi_i, \theta^*_{i})
\]
where $f(\beta, \theta^*_{I^*}, \psi_i, \theta^*_{i})$ captures terms that are independent of $x$. Setting the derivative with respect to $x$ to zero yields the result since $A(x)$ is strictly convex.\\
\\{\bf (d)}  We will show $C_{i}$ is strictly increasing in the second argument. The proof that it is strictly increasing in its first argument follows by symmetry.  Set
\[
f(\psi_i, x) = \beta d(\theta^*_{I^*}|| x) + \psi_i d(\theta^*_{i}|| x)
\]
so that $C_{i}(\beta, \gamma_i)= \min_{x\in \mathbb{R}} f(\psi_i, x)$. Since KL divergences are non-negative, $f(\psi_i, x)$ is weakly increasing in $\psi_i$.  To establish the claim, fix two nonnegative numbers $\psi'< \psi'' $. Let $x'= \arg\min_{x} f(\psi', x)$ and $x''=\arg\min_x f(\psi'', x)$. By part (c), these are unique and $x' < x''$. Then
\[
f(\psi', x') < f(\psi', x'') \leq f(\psi'', x'')
\]
where the first inequality uses that $x'\neq x''$ and $x'$ is a unique minimum and the second uses the $f$ is non-decreasing.
\end{proof}

\subsection{Proof of Proposition \ref{prop: optimal constrained allocation}}
We will begin by restating Proposition \ref{prop: optimal constrained allocation}.
\begin{propn}[\ref{prop: optimal constrained allocation}]
The solution to the optimization problem \eqref{eq: optimal constrained error exponent} is the unique allocation $\psibf^*$  satisfying $\psi^*_{I^*}=\beta$ and
\begin{equation}\label{eq: equal effort}
C_{i}(\beta , \psi_i) = C_{j}(\beta, \psi_j) \qquad \forall \, i,j \neq I^*.
\end{equation}
If $\psibf_n = \psibf^*$ for all $n$, then
\[
\Pi_{n}(\Theta_{I^*}^{c}) \doteq \exp\{ -n \Gamma^*_{\beta} \}.
\]
Moreover under any other adaptive allocation rule, if $\overline{\psi}_{n, I^*}\to \beta$ as $n\to \infty$ then
\[
\underset{n\to \infty}{\lim \sup}  \, -\frac{1}{n} \log \Pi_{n}(\Theta_{I^*}^{c}) \leq \Gamma^*_\beta
\]
almost surely.
\end{propn}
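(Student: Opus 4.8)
The plan is to dispatch the three assertions in turn, the adaptive upper bound being the substantive one.

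\textbf{Characterizing the maximizer.} For the first claim I would use that, by Lemma \ref{lem: solves minimum over theta bar}(b,d), each map $\psi \mapsto C_i(\beta,\psi)$ is continuous, concave, strictly increasing, and satisfies $C_i(\beta,0)=0$ (the inner minimization in \eqref{eq: definition of C} is solved at $x=\theta^*_{I^*}$ when $\psi=0$, giving a zero KL divergence). Hence for each level $\ell \ge 0$ there is a unique $\psi_i(\ell)\ge 0$ with $C_i(\beta,\psi_i(\ell))=\ell$, and $\psi_i(\ell)$ is continuous and strictly increasing in $\ell$. Consequently $\ell \mapsto \sum_{i\neq I^*}\psi_i(\ell)$ is continuous, strictly increasing, and diverges to $+\infty$, so there is a unique $\ell^*$ with $\sum_{i\neq I^*}\psi_i(\ell^*)=1-\beta$. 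Setting $\psi^*_{I^*}=\beta$ and $\psi^*_i=\psi_i(\ell^*)$ yields the unique feasible allocation equalizing all $C_i(\beta,\psi_i)$. It is the maximizer of \eqref{eq: optimal constrained error exponent} because any feasible $\psibf$ with $\min_{i\neq I^*}C_i(\beta,\psi_i)>\ell^*$ would force $\psi_i>\psi_i(\ell^*)$ for every $i$, contradicting $\sum_{i\neq I^*}\psi_i=1-\beta$; thus $\Gamma^*_\beta=\ell^*$, attained only at $\psibf^*$.

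\textbf{The fixed allocation.} If $\psibf_n=\psibf^*$ for all $n$, then $\overline{\psibf}_n=\psibf^*$ identically. Combining the decomposition \eqref{eq: error probability to theta bar}, Proposition \ref{prop: posterior concentration}, and Lemma \ref{lem: solves minimum over theta bar}(a) exactly as in the derivation of \eqref{eq: final optimal exponent} gives
\[
\Pi_n(\Theta_{I^*}^c) \doteq \max_{i\neq I^*}\Pi_n(\overline{\Theta}_i) \doteq \max_{i\neq I^*}\exp\{-n C_i(\beta,\psi^*_i)\} = \exp\{-n\min_{i\neq I^*}C_i(\beta,\psi^*_i)\} = \exp\{-n\Gamma^*_\beta\},
\]
the last step using that $\psibf^*$ equalizes the $C_i$ at the common value $\Gamma^*_\beta$.

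\textbf{The adaptive upper bound.} For an arbitrary adaptive rule, the same three ingredients (now with the random, time-varying $\overline{\psibf}_n$) give, almost surely,
\[
-\tfrac1n\log\Pi_n(\Theta_{I^*}^c) = \min_{i\neq I^*}C_i(\overline{\psi}_{n,I^*},\overline{\psi}_{n,i}) + o(1),
\]
since $\inf_{\thetabf\in\overline{\Theta}_i}D_{\overline{\psibf}_n}(\thetabf^*||\thetabf)=C_i(\overline{\psi}_{n,I^*},\overline{\psi}_{n,i})$ by Lemma \ref{lem: solves minimum over theta bar}(a). Because $\overline{\psibf}_n$ is a probability vector whose $I^*$-coordinate is $\overline{\psi}_{n,I^*}$, it is feasible for the constrained problem \eqref{eq: optimal constrained error exponent} with $\beta$ replaced by $\overline{\psi}_{n,I^*}$; hence $\min_{i\neq I^*}C_i(\overline{\psi}_{n,I^*},\overline{\psi}_{n,i}) \le \Gamma^*_{\overline{\psi}_{n,I^*}}$. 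It then remains to pass to the limit along $\overline{\psi}_{n,I^*}\to\beta$ and conclude $\limsup_n \Gamma^*_{\overline{\psi}_{n,I^*}}\le \Gamma^*_\beta$.

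\textbf{The main obstacle.} The one nontrivial point is the continuity (upper semicontinuity suffices) of the map $\beta'\mapsto\Gamma^*_{\beta'}$ at $\beta$. I would establish this in one of two ways. Either invoke Berge's maximum theorem: the objective $\min_{i\neq I^*}C_i(\beta',\psi_i)$ is jointly continuous and the constraint correspondence $\beta'\mapsto\{\psibf : \psi_{I^*}=\beta',\ \psibf \text{ a probability vector}\}$ is continuous and compact-valued on any closed subinterval of $(0,1)$ containing $\beta$, so the value function is continuous there. Alternatively, use the implicit characterization from the first paragraph: $\Gamma^*_{\beta'}$ is the unique solution $\ell$ of $\sum_{i\neq I^*}F_i(\beta',\ell)=1-\beta'$, where $F_i(\beta',\cdot)=C_i(\beta',\cdot)^{-1}$ is jointly continuous in $(\beta',\ell)$ and strictly increasing in $\ell$, so the solution varies continuously with $\beta'$. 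Since $\overline{\psi}_{n,I^*}$ eventually lies in a compact subinterval of $(0,1)$, continuity gives $\Gamma^*_{\overline{\psi}_{n,I^*}}\to\Gamma^*_\beta$, which together with the feasibility bound completes the claim.
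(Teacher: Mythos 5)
Your overall architecture matches the paper's: both proofs reduce the posterior asymptotics to $\Pi_n(\Theta_{I^*}^c) \doteq \exp\{-n\min_{i\neq I^*} C_i(\overline{\psi}_{n,I^*},\overline{\psi}_{n,i})\}$ by combining \eqref{eq: error probability to theta bar}, Proposition \ref{prop: posterior concentration}, and Lemma \ref{lem: solves minimum over theta bar}, and then feed in the fixed or limiting allocation. Where you genuinely diverge is in the two optimization steps. For the characterization of the maximizer, the paper uses a perturbation argument: a maximizer exists by compactness and continuity, and if some $C_j(\beta,\psi^*_j)$ strictly exceeded the minimum, shifting $\epsilon$ of design $j$'s effort onto the other suboptimal designs would raise the objective; uniqueness then follows from strict monotonicity of the $C_i$. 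You instead invert each $C_i(\beta,\cdot)$ and locate the equalization level by the intermediate value theorem, which constructs the optimum explicitly rather than invoking an abstract existence step. For the adaptive upper bound, the paper swaps the first argument of $C_i$ from $\overline{\psi}_{n,I^*}$ to $\beta$ using continuity of $C_i$ and then applies feasibility at $\beta$; you apply feasibility exactly at $\overline{\psi}_{n,I^*}$ (which is cleaner, since $\overline{\psibf}_n$ is a genuine probability vector with that $I^*$-coordinate, whereas the paper's step implicitly has to absorb the fact that $\sum_{i\neq I^*}\overline{\psi}_{n,i}$ slightly exceeds $1-\beta$ for finite $n$) and then pay for it by proving upper semicontinuity of the value function $\beta'\mapsto\Gamma^*_{\beta'}$, via Berge's theorem or your implicit characterization. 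Both routes are valid; yours isolates the limiting issue into a single clean statement about $\Gamma^*_{\beta'}$.

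One step as written is false and needs repair: $C_i(\beta,\cdot)$ is not surjective onto $[0,\infty)$. Taking $x=\theta^*_i$ in \eqref{eq: definition of C} shows $C_i(\beta,\psi)\le \beta\, d(\theta^*_{I^*} || \theta^*_i)$ for every $\psi$, so the inverse $\psi_i(\ell)$ exists only for $\ell\in[0,\ \beta\, d(\theta^*_{I^*} || \theta^*_i))$, not for all $\ell\ge 0$, and $\sum_{i\neq I^*}\psi_i(\ell)$ cannot "diverge to $+\infty$" over an unbounded range of $\ell$ because its domain is bounded. The fix is routine: restrict $\ell$ to $[0,\ \min_{i\neq I^*}\beta\, d(\theta^*_{I^*} || \theta^*_i))$; since $C_i(\beta,\psi)$ increases to $\beta\, d(\theta^*_{I^*} || \theta^*_i)$ as $\psi\to\infty$, the corresponding $\psi_i(\ell)\to\infty$ as $\ell$ approaches the right endpoint, so the sum still increases continuously from $0$ to $+\infty$ on this interval and the intermediate value argument, together with your monotonicity-based optimality and uniqueness reasoning, goes through unchanged. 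The same caveat applies to your claim that $F_i(\beta',\cdot)=C_i(\beta',\cdot)^{-1}$ is defined and jointly continuous: it is, but only on the region $\ell<\beta'\, d(\theta^*_{I^*} || \theta^*_i)$, which suffices because $\Gamma^*_{\beta'}$ lies strictly inside that region for $\beta'$ near $\beta$.
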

\begin{proof}
By Lemma \ref{lem: solves minimum over theta bar}, each function $C_{i}$ is continuous, and therefore $\min_{i\neq I^*} C_{i}(\beta, \psi_i)$ is continuous in $(\psi_{i}: i \neq I^*)$. Since continuous functions on a compact space attain their minimum, there exists an optimal solution $\psibf^*$ to \eqref{eq: optimal constrained error exponent}, which satisfies
\[
\min_{ i \neq I^*} C_{i}(\beta, \psi^*_i) = \max_{ \psibf : \psi_{I^*}=\beta  }\min_{i \neq I^*} C_{i}(\beta, \psi_i).
\]
Suppose $\psibf^*$ does not satisfy \eqref{eq: equal effort}, so for some $j\neq I^*$,
\[
C_{j}(\beta, \psi^*_j) > \min_{i \neq I^*} C_{i}(\beta, \psi^*_i).
\]
This yields a contradiction. Consider a new vector $\psibf^{\epsilon}$ with $\psibf^{\epsilon}_{j} = \psi^{*}_j - \epsilon$ and $\psi_{i}^{\epsilon} = \psi_{i}^*+ \epsilon/(k-2)$ for each $i \notin \{I^*, j\}$. For sufficiently small $\epsilon$, one has
\[
C_{j}(\beta, \psi^{\epsilon}_j) > \min_{i \neq I^*} C_{i}(\beta, \psi^{\epsilon}_i) > \min_{i \neq I^*} C_{i}( \beta ,\psi^*_i)
\]
and so $\psibf^{\epsilon}$ attains a higher objective value. To show the solution to \eqref{eq: equal effort} must be unique, imagine $\psibf$ and $\psibf'$ both satisfy \eqref{eq: equal effort} and $\psi_{I^*}=\psi'_{I^*}=\beta$. If $\psi_j > \psi'_j$ for some $j$, then $C_{j}(\beta, \psi_j) > C_{j}(\beta, \psi'_j)$ since $C_j$ is strictly increasing. But by \eqref{eq: equal effort} this implies that $C_{j}(\beta, \psi_j) > C_{j}(\beta, \psi'_j)$ for {\emph every} $j \neq I^*$, which implies $\psi_j > \psi'_j$ for every $j$, and contradicts that that $\sum_{j\neq I^*} \psi_j = \sum_{j\neq I^*} \psi'_j = 1-\beta$.

The remaining claims follow immediately from Propoosition \ref{prop: posterior concentration} and Lemma \ref{lem: solves minimum over theta bar}, which together show that under any adaptive allocation rule
\[
\Pi_{n}(\Theta_{I^*}^{c}) \doteq \exp\{ -n \min_{i \neq I^*} C_{i}(\overline{\psi}_{n,I^*}, \overline{\psi}_{n,i}) \}.
\]
This implies that if $\overline{\psibf}_{n} = \psibf^*$ for all $n$, then $\Pi_{n}(\Theta_{I^*}^{c}) \doteq  \exp\{-n \Gamma^*_\beta\}$. Similarly, by the continuity of each $C_i$, if $\overline{\psi}_{n,I^*} \to \beta$, then 
\[
\Pi_{n}(\Theta_{I^*}^{c}) \doteq \exp\{ -n \min_{i \neq I^*} C_{i}(\beta, \overline{\psi}_{n,i}) \} \geq \exp\{ -n \Gamma^*_{\beta} \}
\]
which establishes the final claim. 
\end{proof}

\subsection{Proof of Lemma \ref{lem: relating the constrained exponent to the unconstrained}}
Recall, the notation
\[
\Gamma^* = \max_{\psibf} \min_{i\neq I^*} C_{i}(\psi_{I^*}, \psi_i) \qquad \Gamma^*_{\beta} \triangleq \max_{\psibf: \psi_{I^*}=\beta} \min_{i\neq I^*} C_{i}(\beta, \psi_i)
\]
where
\begin{equation*}
C_{i}(\beta, \psi)  = \min_{x\in \mathbb{R}} \,\beta d(\theta^*_{I^*} || x) + \psi d(\theta^*_{i^*} || x).
\end{equation*}
\begin{lemman}[\ref{lem: relating the constrained exponent to the unconstrained}] For $\beta^* = \arg\max_{\beta} \Gamma^*_{\beta}$ and any $\beta \in (0,1)$,
	\[
	\frac{\Gamma^*}{\Gamma^*_{\beta}}  \leq \max\left\{\frac{\beta^*}{\beta}, \frac{1-\beta^*}{1-\beta} \right\}.
	\]
	Therefore $\Gamma^* \leq 2\Gamma^*_{1/2}$
\end{lemman}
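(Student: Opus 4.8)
The plan is to exploit a scaling structure of the functions $C_i$ that is implicit in their definition. Beyond being strictly increasing in both arguments (Lemma~\ref{lem: solves minimum over theta bar}(d)), each $C_i(\beta,\psi)=\min_{x}\{\beta d(\theta^*_{I^*}\|x)+\psi d(\theta^*_i\|x)\}$ is \emph{positively homogeneous of degree one}: for any $t>0$ one has $C_i(t\beta,t\psi)=t\,C_i(\beta,\psi)$, since scaling both weights by $t$ scales the objective of the inner minimization by $t$ without changing the minimizer. The idea is to take the unconstrained optimal allocation and shrink it by a common factor until it fits inside the feasible region of the $\beta$-constrained problem; homogeneity then converts the shrinkage factor directly into the claimed ratio bound.

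Concretely, let $\psibf^*$ be the maximizer of \eqref{eq: final optimal exponent}, so that $\psi^*_{I^*}=\beta^*$ and $\Gamma^*=\min_{i\neq I^*}C_i(\beta^*,\psi^*_i)$, and set
\[
c=\max\left\{\frac{\beta^*}{\beta},\,\frac{1-\beta^*}{1-\beta}\right\}\geq 1 .
\]
Consider the rescaled vector $\psibf^*/c$. By the choice of $c$, its mass on $I^*$ is $\beta^*/c\leq\beta$ and its total mass on the suboptimal designs is $(1-\beta^*)/c\leq 1-\beta$. Consequently there is a genuine probability vector $\hat\psibf$ that is feasible for the constrained problem \eqref{eq: optimal constrained error exponent} (i.e. $\hat\psi_{I^*}=\beta$) and dominates $\psibf^*/c$ coordinatewise: fix $\hat\psi_{I^*}=\beta\geq\beta^*/c$, and distribute the remaining mass $1-\beta$ among $i\neq I^*$ so that $\hat\psi_i\geq\psi^*_i/c$, which is possible precisely because $\sum_{i\neq I^*}\psi^*_i/c=(1-\beta^*)/c\leq 1-\beta$.

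Finally I would combine monotonicity with homogeneity. Since each $C_i$ is increasing in both arguments and $\hat\psibf\geq\psibf^*/c$ coordinatewise,
\[
C_i(\hat\psi_{I^*},\hat\psi_i)\;\geq\;C_i\!\left(\tfrac{\beta^*}{c},\tfrac{\psi^*_i}{c}\right)\;=\;\frac{1}{c}\,C_i(\beta^*,\psi^*_i),
\]
so that the constrained objective satisfies $\min_{i\neq I^*}C_i(\beta,\hat\psi_i)\geq\Gamma^*/c$. Because $\hat\psibf$ is feasible, $\Gamma^*_\beta\geq\Gamma^*/c$, which rearranges to the stated inequality. The special case $\beta=1/2$ gives $c=2\max\{\beta^*,1-\beta^*\}\leq 2$, hence $\Gamma^*\leq 2\Gamma^*_{1/2}$. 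The only genuine bookkeeping is checking that the mass budget works out in both regimes $\beta<\beta^*$ and $\beta>\beta^*$, which the single choice of $c$ handles uniformly; the conceptual crux, and the step most easily overlooked, is recognizing the degree-one homogeneity of $C_i$ that makes the rescaling argument exact rather than merely approximate.
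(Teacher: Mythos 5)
Your proof is correct and takes essentially the same approach as the paper's: both arguments rest on the degree-one homogeneity and the monotonicity of the $C_i$'s, with the identical scaling factor $c=\max\{\beta^*/\beta,\,(1-\beta^*)/(1-\beta)\}$. The paper packages the argument by scaling \emph{up} the constrained problem's budget through an auxiliary function $g(c_1,c_2)$ that is monotone and homogeneous, whereas you scale the unconstrained optimizer \emph{down} and pad it to a feasible point of the constrained problem --- dual presentations of the same inequality.
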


\begin{proof}
Define for each non-negative vector $\psibf$,
\[
f(\psibf) = \min_{i \neq I^*} C_{i}(\psi_{I^*}, \psi_i)\]
The optimal exponent $\Gamma^*$ is the maximum of  $f(\psibf)$ over probability vectors $\psibf$. Here, we instead define $f$ for all non-negative vectors, and proceed by varying the total budget of measurement effort available $\sum_{i=1}^{k} \psi_i$.

Because each $C_{i}$ is non-decreasing (see Lemma \ref{lem: solves minimum over theta bar}), $f$ is non-decreasing. Since the minimum over $x$ in the definition of $C_i$ only depends on the relative size of the components of $\psibf$, $f$ is homogenous of degree 1. That is $f(c\psibf) = cf(\psibf)$ for all $c\geq 1$. For each $c_1, c_2>0 $ define
\[
g(c_1, c_2) = \max\{f(\psibf) : \psi_{I^*}= c_1, \sum_{i\neq I^* } \psi_i \leq c_2, \psibf\geq 0\}.
\]
The function $g$ inherits key properties of $f$; it is also non-decreasing and homogenous of degree 1. We have
\begin{eqnarray*}
\Gamma^*_{\beta} &=& \max\{f(\psibf) : \psi_{I^*}=\beta, \, \sum_{i=1}^{k}\psi_i =1, \psibf \geq 0 \} \\
&=& \max\{f(\psibf) : \psi_{I^*}=\beta, \, \sum_{i\neq I^*}\psi_i \leq 1-\beta, \psibf \geq 0 \}\\
&=& g(\beta, 1-\beta)
\end{eqnarray*}
where the second equality uses that $f$ is non-decreasing. Similarly, $\Gamma^* =g(\beta^*, 1-\beta^*)$. Setting
\[
     r:= \max\left\{\frac{\beta^*}{\beta}, \frac{1-\beta^*}{1-\beta} \right\}
\]
implies $r\beta \geq \beta^*$ and $r(1-\beta)\geq 1-\beta^*$. Therefore
\[
r\Gamma^*_{\beta} = rg(\beta, 1-\beta) = g(r\beta, r(1-\beta)) \geq g(\beta^*, 1-\beta^*) = \Gamma^*.
\]

\end{proof}

\subsection{Sub-Gaussian Bound: Proof of Proposition \ref{prop: subgaussian bound}}
The proof of Proposition \ref{prop: subgaussian bound} relies on the following variational form of Kullback--Leibler divergence, which is given in Theorem 5.2.1 of Robert Gray's textbook \emph{Entropy and Information Theory} \cite{gray2011entropy}.
\begin{fact}\label{fact: variational defn of kl}
Fix two probability measures $\mathbf{P}$ and $\mathbf{Q}$ defined on a common measureable space $(\Omega, \mathcal{F}).$ Suppose that $\mathbf{P}$ is absolutely continuous with respect to $\mathbf{Q}$. Then
\[
D\left( \mathbf{P}  || \mathbf{Q} \right) = \sup_{X} \left\{ \E_{\mathbf{P}}[ X] - \log \E_{\mathbf{Q}} [e^{ X } ]\right\},
\]
where the supremum is taken over all random variables $X$ such that the expectation of $X$ under $\mathbf{P}$ is well defined, and $e^{ X }$ is integrable under $\mathbf{Q}$.
\end{fact}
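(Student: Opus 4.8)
The plan is to prove the claimed identity by establishing the two matching inequalities between $D(\mathbf{P}\|\mathbf{Q})$ and the supremum, the tool in both directions being the ``tilted'' probability measure built from the test variable $X$. First I would handle the easy direction, that the supremum is at most $D(\mathbf{P}\|\mathbf{Q})$. Fix an admissible $X$, so that $c:=\E_{\mathbf{Q}}[e^{X}]\in(0,\infty)$, and define the Gibbs measure $\mathbf{R}$ by $d\mathbf{R}=(e^{X}/c)\,d\mathbf{Q}$, which is a probability measure since $\E_{\mathbf{Q}}[e^{X}/c]=1$. Writing $X=\log(d\mathbf{R}/d\mathbf{Q})+\log c$ and taking expectations under $\mathbf{P}$ gives the key identity
\[
\E_{\mathbf{P}}[X]-\log\E_{\mathbf{Q}}[e^{X}]=\E_{\mathbf{P}}\!\left[\log\frac{d\mathbf{R}}{d\mathbf{Q}}\right].
\]
Applying the chain rule for Radon--Nikodym derivatives ($\mathbf{P}$-almost everywhere, using $\mathbf{P}\ll\mathbf{Q}$) to split $\log(d\mathbf{R}/d\mathbf{Q})=\log(d\mathbf{P}/d\mathbf{Q})+\log(d\mathbf{R}/d\mathbf{P})$, the right-hand side becomes $D(\mathbf{P}\|\mathbf{Q})-D(\mathbf{P}\|\mathbf{R})$. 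Since Kullback--Leibler divergence is non-negative (by Jensen's inequality applied to the convex function $t\mapsto t\log t$), the term $D(\mathbf{P}\|\mathbf{R})\geq 0$ can be dropped, yielding $\E_{\mathbf{P}}[X]-\log\E_{\mathbf{Q}}[e^{X}]\leq D(\mathbf{P}\|\mathbf{Q})$ for every admissible $X$, hence the supremum is at most $D(\mathbf{P}\|\mathbf{Q})$.

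Next I would establish the reverse inequality by exhibiting a near-optimal test variable. The natural candidate is the log-likelihood ratio $X^{*}:=\log f$, where $f:=d\mathbf{P}/d\mathbf{Q}$. Because $\mathbf{P}\ll\mathbf{Q}$ one has $\E_{\mathbf{Q}}[e^{X^{*}}]=\E_{\mathbf{Q}}[f]=\mathbf{P}(\Omega)=1$, so the penalty term $\log\E_{\mathbf{Q}}[e^{X^{*}}]$ vanishes and the objective collapses to $\E_{\mathbf{P}}[\log f]=D(\mathbf{P}\|\mathbf{Q})$. When $D(\mathbf{P}\|\mathbf{Q})<\infty$ this $X^{*}$ is admissible and the supremum is attained exactly. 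When $D(\mathbf{P}\|\mathbf{Q})=\infty$ I would instead use the truncations $X_{n}:=(\log f)\wedge n$; then $e^{X_{n}}=f\wedge e^{n}\leq f$ gives $\E_{\mathbf{Q}}[e^{X_{n}}]\leq 1$, so $\log\E_{\mathbf{Q}}[e^{X_{n}}]\leq 0$, while $\E_{\mathbf{P}}[X_{n}]\uparrow\E_{\mathbf{P}}[\log f]=\infty$ by monotone convergence (after isolating the $\mathbf{P}$-integrable negative part). Thus the objective at $X_{n}$ diverges to $+\infty$, matching $D(\mathbf{P}\|\mathbf{Q})$.

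The main obstacle is purely measure-theoretic bookkeeping rather than any genuine difficulty in the argument. I would need to treat carefully the set $\{f=0\}$, on which $X^{*}=-\infty$: this set is $\mathbf{P}$-null (again by $\mathbf{P}\ll\mathbf{Q}$), so $X^{*}$ is finite $\mathbf{P}$-almost everywhere and $\E_{\mathbf{P}}[X^{*}]$ is well defined, while $e^{X^{*}}=0$ there keeps $e^{X^{*}}$ integrable under $\mathbf{Q}$. I must also confirm that $D(\mathbf{P}\|\mathbf{R})$ appearing in the first step is legitimate, i.e. that $\mathbf{P}\ll\mathbf{R}$; this holds because $\mathbf{R}$ can vanish only where $e^{X}=0$, a set that is $\mathbf{P}$-null whenever the objective $\E_{\mathbf{P}}[X]$ is finite (the only case that matters for the supremum). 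Finally, allowing $X$ to take the value $-\infty$ in the supremum requires adopting the conventions $e^{-\infty}=0$ and discarding $X$ with $\mathbf{P}(\{X=-\infty\})>0$, for which the objective is $-\infty$ and hence irrelevant.
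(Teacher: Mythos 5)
Your proposal is correct, but there is nothing in the paper to compare it against: the paper does not prove this fact, it simply imports it as Theorem 5.2.1 of Gray's \emph{Entropy and Information Theory}. What you have written is the standard self-contained Donsker--Varadhan argument, and it is sound: the tilted measure $d\mathbf{R}=(e^{X}/c)\,d\mathbf{Q}$ converts the objective into $D(\mathbf{P} \,||\, \mathbf{Q})-D(\mathbf{P} \,||\, \mathbf{R})$ for the upper bound, and $X^{*}=\log f$ with $f=d\mathbf{P}/d\mathbf{Q}$ attains equality, so you get both the value of the supremum and its optimizer, with the ``duality gap'' $D(\mathbf{P} \,||\, \mathbf{R})$ made explicit --- more information than the bare statement requires. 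Two small refinements. First, the decomposition $\log(d\mathbf{R}/d\mathbf{Q})=\log(d\mathbf{P}/d\mathbf{Q})+\log(d\mathbf{R}/d\mathbf{P})$ as literally written presupposes $\mathbf{R}\ll\mathbf{P}$, which need not hold; the legitimate route is the chain rule $d\mathbf{P}/d\mathbf{Q}=(d\mathbf{P}/d\mathbf{R})\,(d\mathbf{R}/d\mathbf{Q})$, valid once you have verified $\mathbf{P}\ll\mathbf{R}\ll\mathbf{Q}$ (which your closing remarks effectively do, since $\{e^{X}=0\}$ is $\mathbf{P}$-null in every case that matters), and one should note that the resulting identity never produces $\infty-\infty$: when $D(\mathbf{P} \,||\, \mathbf{Q})=\infty$ the upper-bound direction is vacuous, and $\E_{\mathbf{P}}[(\log(e^{X}/(cf)))^{+}]$ is finite because $\E_{\mathbf{P}}[e^{X}/(cf)]\leq 1$. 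Second, your truncation device in the infinite-divergence case is harmless but unnecessary: since $t(\log t)^{-}\leq e^{-1}$ for $t\geq 0$, one has $\E_{\mathbf{P}}[(\log f)^{-}]=\E_{\mathbf{Q}}[f(\log f)^{-}]\leq e^{-1}<\infty$, so $\E_{\mathbf{P}}[X^{*}]$ is always well defined (possibly $+\infty$) and $X^{*}$ itself is admissible even when $D(\mathbf{P} \,||\, \mathbf{Q})=\infty$, achieving the supremum in one stroke.
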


When comparing two normal distributions $\mathcal{N}(\theta, \sigma^2)$ and $\mathcal{N}(\theta', \sigma^2)$ with common variance, the KL-divergence can be expressed as
$d(\theta || \theta') = (\theta-\theta')^2/ (2\sigma^2)$.
We follow \citet{russo2015controlling} in deriving the following corollary of Fact \ref{fact: variational defn of kl}, which provides and analogous lower bound on the KL-divergences when distributions are sub-Gaussian.
Recall that, $\mu(\theta)=\intop y p(y|\theta) d\nu(y)$ denotes the mean observation under $\theta$.
\begin{corollary}\label{cor: sub-Gaussian KL bound} Fix any $\theta, \theta' \in [\underline{\theta}, \overline{\theta}]$. If when $Y \sim p(y| \theta')$, $Y$ is sub-Gaussian with parameter $\sigma$, then,
\[
d(\theta || \theta') \geq \frac{\left(\mu(\theta)-\mu(\theta') \right)^2}{2 \sigma^2}
\]
\end{corollary}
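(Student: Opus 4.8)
The plan is to apply the variational characterization of Kullback--Leibler divergence from Fact~\ref{fact: variational defn of kl} to the pair $\mathbf{P} = p(\cdot \mid \theta)$ and $\mathbf{Q} = p(\cdot \mid \theta')$, so that $D(\mathbf{P} \| \mathbf{Q}) = d(\theta \| \theta')$, and then to restrict the supremum over test functions $X$ to the one-parameter family of linear, mean-centered functions $X = \lambda\left(Y - \mu(\theta')\right)$ indexed by $\lambda \in \mathbb{R}$. Each such $X$ is admissible: its expectation under $\mathbf{P}$ is finite because $Y$ has a finite mean under $\theta$, and $e^{X}$ is integrable under $\mathbf{Q}$ precisely by the sub-Gaussian hypothesis. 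Consequently every choice of $\lambda$ yields a valid lower bound on $d(\theta\|\theta')$.

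First I would evaluate the two terms in the variational expression for this family. The linear term gives $\E_{\mathbf{P}}[X] = \lambda\left(\mu(\theta) - \mu(\theta')\right)$, using that $\mu(\theta) = \intop y\, p(y\mid\theta)\, d\nu(y)$ is the mean under $\theta$. The log-moment-generating term is controlled directly by the sub-Gaussian assumption: since $Y$ is $\sigma$--sub-Gaussian under $\mathbf{Q}$, one has $\log \E_{\mathbf{Q}}\!\left[\exp\{\lambda(Y - \mu(\theta'))\}\right] \le \lambda^2 \sigma^2 / 2$. Combining these through Fact~\ref{fact: variational defn of kl} yields, for every $\lambda$,
\[
d(\theta \| \theta') \geq \lambda\left(\mu(\theta) - \mu(\theta')\right) - \frac{\lambda^2 \sigma^2}{2}.
\]

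Finally I would optimize the right-hand side over $\lambda$. The expression is a concave quadratic in $\lambda$, maximized at $\lambda^{\star} = \left(\mu(\theta) - \mu(\theta')\right)/\sigma^2$; substituting this value gives the claimed bound $d(\theta\|\theta') \ge \left(\mu(\theta) - \mu(\theta')\right)^2 / (2\sigma^2)$. There is no serious obstacle in this argument; the only point requiring genuine insight is the choice of test function --- recognizing that centering at the mean $\mu(\theta')$ is exactly what allows the sub-Gaussian inequality to be applied verbatim --- after which everything reduces to the routine verification that this test function lies in the admissible class over which the supremum in Fact~\ref{fact: variational defn of kl} is taken, and to a one-dimensional maximization.
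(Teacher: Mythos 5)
Your proof is correct and follows essentially the same route as the paper's: both apply Fact \ref{fact: variational defn of kl} with the test function $X = \lambda(Y - \mu(\theta'))$, bound the log-moment-generating term by $\lambda^2\sigma^2/2$ via the sub-Gaussian hypothesis, and optimize the resulting concave quadratic at $\lambda = (\mu(\theta)-\mu(\theta'))/\sigma^2$. Your additional remark verifying that the test function is admissible is a nice touch the paper leaves implicit, but it does not change the argument.
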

\begin{proof}
Consider two alternate probability distributions for a random variable $Y$, one where $Y\sim p(y| \theta)$ and one where $Y\sim p(y| \theta')$ We apply Fact $\ref{fact: variational defn of kl}$ where $X= \lambda(Y- \E_{\theta'}[Y])$,
$\mathbf{P}$ is the probability measure when $Y\sim p(y|\theta)$ and $\mathbf{Q}$ is the measure when $Y\sim p(y| \theta') $.
By the sub-Gaussian assumption $
\log \E_{\theta'}\left[ \exp\{ X\} \right] \leq\lambda^2 \sigma^2/2.$
Therefore, Fact $\ref{fact: variational defn of kl}$ implies
\[
d(\theta || \theta') \geq \lambda(\E_{\theta}[X])  -  \frac{\lambda^2\sigma^2}{2} =  \lambda(\E_{\theta}[Y]-\E_{\theta'}[Y])  -  \frac{\lambda^2\sigma^2 }{2}.
\]
The result follows by choosing $\lambda= (\E_{\theta}[Y]-\E_{\theta'}[Y]) / \sigma^2$ which minimizes the right hand side.
\end{proof}

We are now ready to prove Proposition \ref{prop: subgaussian bound}. Recall that in an exponential family, $A'(\theta) = \intop T(y) p(y|\theta)d\nu(y)$, so if $T(y)=y$ then $A'(\theta) = \mu(\theta)$.
\begin{proof}[Proof of Proposition \ref{prop: subgaussian bound}]

By Lemma \ref{lem: solves minimum over theta bar},
\[
\Gamma^*_{1/2} = \max_{\psibf: \psi_{I^*}=1/2} \min_{i \neq I^*} C_{i}(1/2, \psi_i)
\]
Let $\mu_{I^*} = A'(\theta^*_{I^*})$ and $\mu_{i} = A'(\theta^*_{i})$ denote the means of designs $I^*$ and $i$ so $\Delta_i = \mu_{I^*}- \mu_i$. By Lemma \ref{lem: solves minimum over theta bar},
\[
C_{i}(1/2, \psi_i) =   (1/2) d(\theta^*_{I^*} || \overline{\theta}) + \psi_i d(\theta_i^* || \overline{\theta})    .
\]
where $\overline{\theta}$ is the unique parameter with mean
\[
A'(\overline{\theta}) = \frac{(1/2)\mu_{I^*}+\psi_i \mu_i}{1/2+\psi_i}.
\]
For $\psi_i \leq 1/2$,
\[
A'(\overline{\theta})\geq\frac{\mu_{I^*} + \mu_i }{2}  = \mu_i +\Delta_i / 2.
\]
Now, using Corollary \ref{cor: sub-Gaussian KL bound} and the non-negativity of KL-divergence
\[
C_{i}(1/2, \psi_i)  \geq \psi_i d(\theta_i^* || \overline{\theta}) \geq \frac{\psi_i (\mu_i - \mu_i + \Delta_i/2)^2 }{2\sigma^2} = \frac{\psi_i \Delta_i^2}{8 \sigma^2}.
\]

Choosing $\psi_{I^*}=1/2$, and $\psi_i \propto \Delta_i^{-2}$, so
\[
\psi_i=  \frac{1}{2}\left( \sum_{j-2}^{k} \Delta_j^{-2} \right)^{-1}\Delta_i^{-2}
\]
yields
\[
\min_{i\neq I^*} C_{i}(1/2, \psi_i) \geq \frac{1}{16\sigma^2 \sum_{2}^{k} \Delta_j^{-2} }.
\]
\end{proof}

\subsection{Convergence of Uniform Allocation: Proof of Proposition \ref{prop: convergence rate of uniform allocation}}
\begin{proof}
Without loss of generality, assume the problem is parameterized so that the mean of design $i$ is $\theta^*_i$
By Proposition \ref{prop: optimal allocation}, we have
\[
\Pi_{n}(\Theta_{I^*}^{c}) \doteq \exp\{ -n \min_{i\neq I^*} C_{i}( k^{-1}, k^{-1}) \}
\]
By Lemma \ref{lem: solves minimum over theta bar},
\[
C_{i}( k^{-1}, k^{-1}) = k^{-1}d(\theta^*_{I^*} || \overline{\theta})+k^{-1}d(\theta^*_{i} || \overline{\theta})
\]
where $\overline{\theta} = (\theta^*_{I^*} + \theta^*_i )/2$. Therefore, using the formula for the KL-divergence of standard Gaussian random variables
\[
C_{i}( k^{-1}, k^{-1}) = \frac{(\theta^*_{I^*} - \overline{\theta} )^2}{2\sigma^2} + \frac{(\theta^*_{i} - \overline{\theta} )^2}{2\sigma^2}= \frac{(\theta^*_{I^*}- \theta^*_i)^2}{4\sigma^2} = \frac{\Delta_i^2}{4\sigma^2}.
\]
\end{proof}

\section{Analysis of the Top-Two Allocation Rules: Proof of Proposition \ref{prop: TS converges to optimal allocation}} 
\label{sec: appendix proof of main result}
\begin{propn}[\ref{prop: TS converges to optimal allocation}]
Under the TTTS, TTPS, or TTVS algorithm with parameter $\beta>0$, $\overline{\psibf}_{n} \to \psibf^{\beta}$, where $\psibf^\beta$ is the unique allocation with $\psi^{\beta}_{I^*}=\beta$ satisfying
\begin{equation}
C_{i}(\beta, \psi^\beta_i)=C_{j}(\beta, \psi^\beta_j) \qquad  \forall i,j\neq I^*.
\end{equation}
Therefore, 
\begin{equation}\label{eq: optimal convergence rate}
\Pi_{n}(\Theta_{I^*}^{c}) \doteq e^{-n \Gamma^*_{\beta}}.
\end{equation}
\end{propn}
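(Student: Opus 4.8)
The plan is to establish the allocation statement $\overline{\psibf}_n\to\psibf^\beta$ first; the rate \eqref{eq: optimal convergence rate} then follows immediately, exactly as in the proof of Proposition \ref{prop: optimal constrained allocation}, by inserting the limits $\overline\psi_{n,I^*}\to\beta$ and $\overline\psi_{n,i}\to\psi^\beta_i$ into Proposition \ref{prop: posterior concentration} and Lemma \ref{lem: solves minimum over theta bar}. I would carry out the allocation proof in four stages, and throughout reduce all three algorithms to one common ``greedy-on-evidence'' selection rule. The conceptual glue is Stage~3: by Proposition \ref{prop: posterior concentration}, Lemma \ref{lem: solves minimum over theta bar}, and the uniform-continuity estimate of Lemma \ref{lem: uniform continuity of W} (used to equate the infima of $D_{\overline\psibf_n}(\thetabf^*\|\cdot)$ over $\Theta_i$ and its superset $\overline\Theta_i$ up to first order), one has for each $i\neq I^*$ the first-order equivalence $\alpha_{n,i}\doteq\Pi_n(\overline\Theta_i)\doteq\exp\{-n\,C_i(\overline\psi_{n,I^*},\overline\psi_{n,i})\}$, which once $\overline\psi_{n,I^*}\to\beta$ reads $\alpha_{n,i}\doteq\exp\{-nC_i(\beta,\overline\psi_{n,i})\}$. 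Hence the ranking of the $\alpha_{n,j}$ (and, via Lemma \ref{lem: value and probability are log equivalent}, of the $V_{n,j}$) among suboptimal arms agrees to first order with the reverse ranking of the accumulated evidence $C_j(\beta,\overline\psi_{n,j})$, so the favoured alternative is, up to an $o(1)$ slack in the exponent, $\arg\min_{j\neq I^*}C_j(\beta,\overline\psi_{n,j})$; for TTTS the soft rule $\propto\alpha_{n,j}/(1-\alpha_{n,I^*})$ concentrates on the same least-evidence arm because $\alpha_{n,j}/\alpha_{n,j'}\doteq e^{-n(C_j-C_{j'})}$ blows up unless the $C$'s tie.

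Stage~1 (full exploration) shows $\Psi_{n,i}\to\infty$ for every $i$, almost surely. Assume not, and let $\mathcal I=\{i:\lim_n\Psi_{n,i}<\infty\}\neq\emptyset$. The arms in $\mathcal I^c$ are measured infinitely often, so their marginals concentrate at $\thetabf^*$ by Proposition \ref{prop: posterior consistency}; by the anti-concentration part of the same proposition, choosing open intervals for the $\mathcal I$-coordinates lying strictly above $\max_j\theta^*_j$ (possible since each $\theta^*_j<\overline\theta$) forces $\inf_n\Pi_n(\cdot)>0$ on the event that every $\mathcal I$-arm dominates every well-measured arm, whence $\liminf_n\sum_{i\in\mathcal I}\alpha_{n,i}>0$ and some fixed $i\in\mathcal I$ has $\alpha_{n,i}$ bounded below infinitely often. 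Under TTTS this gives $\psi_{n,i}\ge\beta\alpha_{n,i}$ bounded below i.o., so $\Psi_{n,i}\to\infty$ by Corollary \ref{cor: convergence of action selection probabilities to the empirical distribution} — a contradiction; under TTPS/TTVS that same $i$ is then forced into the top two (the only competitor with non-vanishing $\alpha$ is $I^*$), so it is played with probability $1-\beta$ infinitely often, again a contradiction. Stage~2 (identification) then uses Proposition \ref{prop: posterior consistency} once more: $\alpha_{n,I^*}\to1$ and $\alpha_{n,i}\to0$ for $i\neq I^*$, so eventually $\hat I_n=I^*$ and $\psi_{n,I^*}\to\beta$, giving $\overline\psi_{n,I^*}\to\beta$.

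Stage~4 (equalization) is the crux. I would prove $\limsup_n\overline\psi_{n,i}\le\psi^\beta_i$ for each $i\neq I^*$; the budget identity $\sum_{j\neq I^*}\overline\psi_{n,j}\to1-\beta=\sum_{j\neq I^*}\psi^\beta_j$ upgrades these one-sided bounds to full convergence, since $\liminf_n\overline\psi_{n,i}\ge(1-\beta)-\sum_{j\neq i,I^*}\limsup_n\overline\psi_{n,j}\ge\psi^\beta_i$. For the $\limsup$ bound, if $\limsup_n\overline\psi_{n,i}>\psi^\beta_i$ there are, since increments are $O(1/n)$, infinitely many times at which $\overline\psi_{n,i}\ge\psi^\beta_i+\delta$ and $\overline\psi_{n,i}$ is increasing. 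A positive increment means $\psi_{n,i}>\overline\psi_{n-1,i}\ge0$, so $\psi_{n,i}>0$, i.e. $i\in\{\hat I_n,\hat J_n\}=\{I^*,\hat J_n\}$, forcing $i=\hat J_n$; by Stage~3 this yields $C_i(\beta,\overline\psi_{n,i})\le\min_{j\neq I^*}C_j(\beta,\overline\psi_{n,j})+o(1)$. But the budget constraint guarantees some $j$ with $\overline\psi_{n,j}\le\psi^\beta_j$, and strict monotonicity of the $C$'s (Lemma \ref{lem: solves minimum over theta bar}) together with the equal-evidence characterization $C_\ell(\beta,\psi^\beta_\ell)=\Gamma^*_\beta$ (Proposition \ref{prop: optimal constrained allocation}) gives $C_i(\beta,\overline\psi_{n,i})\ge C_i(\beta,\psi^\beta_i+\delta)=\Gamma^*_\beta+\eta$ for a fixed $\eta>0$, while $C_j(\beta,\overline\psi_{n,j})\le\Gamma^*_\beta$. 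For large $n$ the fixed gap $\eta$ dominates the $o(1)$ slack, contradicting the min-evidence inequality; hence $\overline\psi_{n,i}$ cannot keep crossing upward past $\psi^\beta_i+\delta$, and $\limsup_n\overline\psi_{n,i}\le\psi^\beta_i$.

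The main obstacle is Stage~4. Turning the informal self-correction picture into a proof requires (i) controlling the $\doteq$ equivalences \emph{uniformly in $n$}, so that a fixed exponent gap $\eta$ genuinely beats the vanishing slack in the min-evidence comparison, and (ii) linking the increments of the empirical average $\overline\psi_{n,i}$ to the randomized per-step probabilities $\psi_{n,i}$ and to realized selections (through Corollary \ref{cor: convergence of action selection probabilities to the empirical distribution}), so that ``an over-allocated arm is eventually not favoured'' really caps $\limsup_n\overline\psi_{n,i}$. A secondary delicate point is Stage~1 for TTPS/TTVS, where one must rule out a starved arm that never manages to enter the top two.
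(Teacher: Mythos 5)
Your overall architecture is the same as the paper's: use posterior consistency/anti-concentration to force infinite exploration of every arm and conclude $\overline{\psi}_{n,I^*}\to\beta$; use the large-deviations result (Proposition \ref{prop: posterior concentration}) together with Lemma \ref{lem: solves minimum over theta bar} and Proposition \ref{prop: optimal constrained allocation} to show an over-allocated arm becomes exponentially unfavoured; then close with a deterministic self-correction argument and the budget identity. However, the execution has genuine gaps.

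First, your Stage~3 claim $\alpha_{n,i}\doteq\Pi_n(\overline{\Theta}_i)\doteq\exp\{-nC_i(\beta,\overline{\psi}_{n,i})\}$ is false in general, and the proposed justification via Lemma \ref{lem: uniform continuity of W} cannot work: $\Theta_i$ and $\overline{\Theta}_i$ are not close as sets, since $\Theta_i$ requires $\theta_i$ to beat \emph{every} arm rather than just $\theta_{I^*}$. Concretely, with Gaussian observations, $\thetabf^*=(1,\,0.9,\,0)$ and $\overline{\psibf}_n\approx(0.5,\,0.1,\,0.4)$, the minimizer of $D_{\overline{\psibf}_n}(\thetabf^*\|\cdot)$ over $\overline{\Theta}_3$ pins $\theta_1=\theta_3\approx 0.56<0.9=\theta^*_2$; this point is outside $\Theta_3$, and since $\overline{\psi}_{n,2}>0$ the infimum over $\Theta_3$ is strictly larger than $C_3$, so $\alpha_{n,3}$ decays strictly faster than $e^{-nC_3}$. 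What you actually need downstream is only the one-sided bound $\alpha_{n,i}\leq\Pi_n(\overline{\Theta}_i)\doteq e^{-nC_i}$ combined with $\max_{j\neq I^*}\alpha_{n,j}\doteq\Pi_n(\Theta^c_{I^*})\doteq e^{-n\min_j C_j}$; that is precisely the content of the paper's Lemma \ref{lem: Over-allocation implies negligible probability}, so this flaw is repairable, but as stated the claim and its proof are wrong.

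Second, Stage~4 has a real hole. From $\limsup_n\overline{\psi}_{n,i}>\psi^\beta_i$ you cannot conclude there are infinitely many times at which the average is above $\psi^\beta_i+\delta$ \emph{and increasing}: the average can enter the region above the threshold and stay there forever while non-increasing. Indeed, under TTPS, if the algorithm persistently selects $i$ as $\hat{J}_n$ while $\overline{\psi}_{n,i}>1-\beta$, the average drifts \emph{down} toward $1-\beta$ and never produces an increasing step above the threshold — so your argument derives no contradiction in exactly the bad scenario (persistent over-sampling) that the proposition must exclude. The paper's Lemma \ref{lem: sufficient condition for optimality} is designed for this: it works with the summability condition $\sum_n\psi_{n,i}\mathbf{1}(\overline{\psi}_{n,i}\geq\psi^\beta_i+\delta)<\infty$, first showing $\liminf_n\overline{\psi}_{n,i}\leq\psi^\beta_i$ (if the average stayed above forever, finite selected effort would force it to zero) and only then ruling out oscillation. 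You can repair your argument by applying the exponent-gap contradiction at \emph{every} time $i$ is favoured while over-allocated, not only at increasing times. Third, and relatedly, your Stage~4 mechanism (``positive increment $\Rightarrow i\in\{\hat{I}_n,\hat{J}_n\}$'') fails outright for TTTS, which assigns every arm strictly positive probability in every period; there you need the paper's explicit bound $\psi_{n,i}\leq\alpha_{n,i}/\alpha_{n,\hat{J}_n}$, which together with Lemma \ref{lem: Over-allocation implies negligible probability} makes the effort allocated to an over-allocated arm exponentially small and hence summable. Your Stage~3 remark about the ``soft rule'' gestures at this but is itself built on the invalid two-sided equivalence and is never carried through Stage~4. (A minor further slip: in Stage~1 for TTPS/TTVS the surviving competitor is $\arg\max_{i\notin\mathcal{I}}\theta^*_i$, which need not equal $I^*$ if $I^*$ is starved; the paper's proof phrases this correctly.)
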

Because each $C_{i}$ is continuous, if $\overline{\psibf}_n \to \psibf^{\beta}$ then $C_{i}(\overline{\psi}_{n,I^*}, \overline{\psi}_{n,i})\to C_{i}(\beta, \psi^\beta_i)$ for all $i \neq I^*$. Equation \eqref{eq: optimal convergence rate} then follows  by invoking Proposition \ref{prop: optimal constrained allocation}, which establishes the optimality of the allocation $\psibf^\beta$.

The remainder of this section establishes that $\overline{\psibf}_n \to \psibf^{\beta}$ almost surely the proposed top-two rules. The proof is broken into a number of steps. In order to provide a nearly unified treatment of the three algorithms, we begin with several results that hold for any allocation rule.

\subsection{Results for a general allocation rule}
As in other sections, all arguments here hold for any sample path (up to a set of measure zero).  The first result provides a sufficient condition under which $\overline{\psibf}_{n} \to \psibf^{\beta}$. Roughly speaking, if $\overline{\psi}_{n,j} \geq \psi^{\beta}_{j}+\delta$, then too much measurement effort has been allocated to design $j$ relative to the optimal proportion $\psi^{\beta}_j$. Algorithms satisfying \eqref{eq: sufficient condition for optimality} allocate negligible measurement effort to such designs, and therefore the average measurement effort they receive must decrease toward the optimal proportion. 
\begin{lemma}[Sufficient condition for optimality]\label{lem: sufficient condition for optimality}
Consider any adaptive allocation rule. If $\overline{\psi}_{n, I^*} \to \beta$
and
\begin{equation}\label{eq: sufficient condition for optimality}
\sum_{n \in \mathbb{N}} \psi_{n,j} \mathbf{1}(\overline{\psi}_{n,j} \geq \psi^{\beta}_{j}+\delta) < \infty \qquad \forall \,\, j \neq I^*, \, \delta>0,
\end{equation}
then $\overline{\psibf}_{n}\to \psibf^{\beta}$.
\end{lemma}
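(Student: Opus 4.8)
The plan is to prove the two inequalities $\limsup_n \overline{\psi}_{n,j} \le \psi^{\beta}_j$ and $\liminf_n \overline{\psi}_{n,j} \ge \psi^{\beta}_j$ for every $j \neq I^*$, which together with the hypothesis $\overline{\psi}_{n,I^*}\to\beta=\psi^{\beta}_{I^*}$ yield $\overline{\psibf}_n\to\psibf^{\beta}$. The whole statement is a deterministic fact about sequences satisfying the two listed conditions, so no probabilistic bookkeeping is needed. The $\liminf$ direction is essentially free once the $\limsup$ bound is in hand: since $\sum_{i}\overline{\psi}_{n,i}=1=\sum_i\psi^{\beta}_i$ for every $n$, I would fix $j_0\neq I^*$, write $\overline{\psi}_{n,j_0}=1-\overline{\psi}_{n,I^*}-\sum_{j\neq I^*,\, j\neq j_0}\overline{\psi}_{n,j}$, take $\liminf_n$, pull out the convergent term $\overline{\psi}_{n,I^*}$, and use $\limsup$ subadditivity with $\limsup_n\overline{\psi}_{n,j}\le\psi^{\beta}_j$ to obtain $\liminf_n\overline{\psi}_{n,j_0}\ge 1-\psi^{\beta}_{I^*}-\sum_{j\neq I^*,\,j\neq j_0}\psi^{\beta}_j=\psi^{\beta}_{j_0}$. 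Thus the problem reduces entirely to the upper bound.

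For the upper bound, which is the main work, I would argue by contradiction via an excursion (last-crossing) argument on the cumulative effort $\Psi_{n,j}=n\overline{\psi}_{n,j}=\sum_{\ell\le n}\psi_{\ell,j}$. Fix $j\neq I^*$, abbreviate $a_n=\overline{\psi}_{n,j}$ and $c=\psi^{\beta}_j$, and set $B=\sum_n\psi_{n,j}\mathbf{1}(a_n\ge c+\delta)<\infty$. Suppose $\limsup_n a_n\ge c+2\delta$ for some $\delta>0$, and pick $n_k\to\infty$ with $a_{n_k}\ge c+2\delta$. Defining $m_k=\max\{0\le m<n_k:\ a_m<c+\delta\}$ (nonempty under the convention $a_0=0<c+\delta$), maximality forces $a_\ell\ge c+\delta$ for all $m_k<\ell\le n_k$, so every term of $\Psi_{n_k,j}-\Psi_{m_k,j}=\sum_{\ell=m_k+1}^{n_k}\psi_{\ell,j}$ carries indicator $1$, whence this increment is at most $B$.

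The crux is then an elementary lower bound on the same increment:
\[
\Psi_{n_k,j}-\Psi_{m_k,j}=n_k a_{n_k}-m_k a_{m_k}\ \ge\ n_k(c+2\delta)-m_k(c+\delta)\ \ge\ n_k\,\delta,
\]
where the first bound uses $a_{n_k}\ge c+2\delta$ and $a_{m_k}<c+\delta$, and the second uses $m_k<n_k$. Combining the two bounds gives $n_k\,\delta\le B$, which contradicts $n_k\to\infty$. Hence $\limsup_n a_n<c+2\delta$ for every $\delta>0$, i.e. $\limsup_n\overline{\psi}_{n,j}\le\psi^{\beta}_j$.

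I expect the main obstacle to be packaging this excursion argument cleanly—specifically, choosing the last-crossing time $m_k$ so that the entire stretch $(m_k,n_k]$ lies above the $c+\delta$ threshold (so the summability hypothesis applies to \emph{every} term of the increment) while $a_{m_k}$ still sits below it (so the telescoped increment grows linearly in $n_k$). Once the two thresholds $c+\delta<c+2\delta$ and the convention $a_0=0$ are fixed, the contradiction is immediate. The remaining pieces—the $\liminf$ direction and the final assembly using $\lim_n\overline{\psi}_{n,I^*}=\beta$ and the simplex constraint—are routine.
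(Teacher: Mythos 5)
Your proof is correct and follows essentially the same route as the paper's: a deterministic crossing argument on the running averages $\overline{\psi}_{n,j}$, using the summability hypothesis to force $\limsup_n \overline{\psi}_{n,j} \le \psi^{\beta}_j$ for each $j \neq I^*$, followed by the simplex-constraint bookkeeping (together with $\overline{\psi}_{n,I^*}\to\beta$) to upgrade these one-sided bounds to convergence of the full vector. The only difference is one of packaging: the paper runs two separate contradictions (first showing $\liminf_n \overline{\psi}_{n,j} \le \psi^{\beta}_j$ via finiteness of $\sum_n \psi_{n,j}$, then ruling out oscillation between the thresholds $\psi^{\beta}_j+\delta$ and $\psi^{\beta}_j+2\delta$), whereas your single last-crossing estimate $n_k\delta \le B$ subsumes both cases at once.
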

\begin{proof}
Fix a sample path for which $\psi_{n,I^*}\to \beta$, and \eqref{eq: sufficient condition for optimality} holds. Fix some $j\neq I^*$. We first show
$ \underset{n \to \infty}{\lim\inf}\,\, \overline{\psi}_{n,j} \leq \psi_j^*.$ Suppose otherwise. Then, with positive probability, for some $\delta>0$, there exists $N$ such that for all $n\geq N$,  $\overline{\psi}_{n, j} \geq \psi_j^* + \delta.$ But then,
\[
\sum_{n \in \N} \psi_{n,j} = \sum_{n=1}^{N} \psi_{n,j} + \sum_{n=N+1}^{\infty} \mathbf{1}(\overline{\psi}_{n,j} \geq \psi_j^* + \delta )\psi_{n,j}<\infty.
\]
But since $\overline{\psi}_{n,j}= \sum_{\ell =1}^{n} \psi_{n,j}/n$ this implies $ \overline{\psi}_{n,j} \to 0.$

Now, we show $\underset{n \to \infty}{\lim\sup}\,\, \overline{\psi}_{n,j} \leq \psi_j^*.$ Proceeding by contradiction again, suppose otherwise. Then, with positive probability
\[
\underset{n \to \infty}{\lim\sup}\,\, \overline{\psi}_{n,j} > \psi_j^\beta  \qquad \& \qquad \underset{n \to \infty}{\lim\inf}\,\, \overline{\psi}_{n,j} \leq \psi_j^\beta.
\]
On any sample path where this occurs, for some $\delta >0$, there exists an infinite sequence of times  $N_1 < N_2 < N_3<...$ such that $\overline{\psi}_{N_{\ell}, j} \geq \psi^\beta_j +2\delta$ when $\ell$ is odd and $\overline{\psi}_{N_{\ell}, j} \leq \psi^\beta_j +\delta$ when $\ell$ is even.  This can only occur if,
\[
\sum_{n \in \N} \psi_{n,j} \mathbf{1}( \overline{\psi}_{n,j} \geq \psi_j^* + \delta) = \infty,
\]
which violates the hypothesis.

Together with the hypothesis that $\overline{\psi}_{n,I^*} \to \beta$, this implies that for all $i \in \{1,...,k\}$, $\underset{n\to\infty}{\lim \sup} \, \overline{\psi}_{n,i} \leq \psi^{\beta}_i$.  But since  $\sum_{i} \overline{\psi}_{n,i} = \sum_{i} \psi_{i}^{\beta}$, this implies $\overline{\psibf}_{n} \to \psibf^{\beta}$.
\end{proof}

The next lemma will be used to establish that \eqref{eq: sufficient condition for optimality} holds for each of the  proposed algorithms. It shows that if too much measurement effort has been allocated to some design $i\neq I^*$, in the sense that $\overline{\psi}_{n,i} > \psi^{\beta}_i + \delta$ for a constant $\delta>0$, then $\alpha_{n,i}$ is exponentially small compared $\max_{j \neq I^*} \alpha_{n,j}$.
\begin{lemma}[Over-allocation implies negligible probability]\label{lem: Over-allocation implies negligible probability}
Fix any $\delta>0$ and $j \neq I^*$. With probability 1, under any allocation rule, if $\overline{\psi}_{n,I^*} \to \beta$, there exists $\delta'>0$ and a sequence $\epsilon_n$ with $\epsilon_n \to 0$ such that for any $n\in \mathbb{N}$,
\[
\overline{\psi}_{n,j} \geq \psi^{\beta}_j + \delta \implies \frac{\alpha_{n,j}}{\max_{i \neq I^*} \alpha_{n,i}} \leq e^{-n(\delta' + \epsilon_n)}.
\]\end{lemma}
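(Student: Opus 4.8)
The plan is to show that over-allocating effort to $j$ simultaneously inflates the exponent governing $\alpha_{n,j}$ above $\Gamma^*_\beta$ and, through a conservation-of-effort argument, deflates the exponent governing some competitor below $\Gamma^*_\beta$, so that strict monotonicity and the equalization property of the $C_i$ separate the two by a fixed gap $\delta'$. I work on the probability-one event where $\overline{\psi}_{n,I^*}\to\beta$ and the uniform estimates hold. The two bounds I need come from Proposition~\ref{prop: posterior concentration}: its proof (the uniform log-likelihood convergence of Corollary~\ref{cor: uniform convergence of the log posterior} together with the Laplace-type lemmas) supplies a single sequence $\zeta_n\to0$ such that, for each of the finitely many sets $\overline{\Theta}_i$,
\[
\left|\tfrac1n\log\Pi_n(\overline{\Theta}_i)+C_i(\overline{\psi}_{n,I^*},\overline{\psi}_{n,i})\right|\le\zeta_n,
\]
where Lemma~\ref{lem: solves minimum over theta bar} identifies $\inf_{\overline{\Theta}_i}D_{\overline{\psibf}_{n}}(\thetabf^* \,||\, \cdot)=C_i(\overline{\psi}_{n,I^*},\overline{\psi}_{n,i})$. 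For the numerator I bound $\alpha_{n,j}=\Pi_n(\Theta_j)\le\Pi_n(\overline{\Theta}_j)$ and use the upper estimate; for the denominator I note $\overline{\Theta}_i\subset\Theta_{I^*}^c$ up to a null set, so $\Pi_n(\overline{\Theta}_i)\le\Pi_n(\Theta_{I^*}^c)=\sum_{l\neq I^*}\alpha_{n,l}\le(k-1)\max_{l\neq I^*}\alpha_{n,l}$, giving $\max_{l\neq I^*}\alpha_{n,l}\ge(k-1)^{-1}\Pi_n(\overline{\Theta}_i)$ and the lower estimate.

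Next, since $\sum_{i\neq I^*}\overline{\psi}_{n,i}=1-\overline{\psi}_{n,I^*}\to1-\beta=\sum_{i\neq I^*}\psi^{\beta}_i$, there is $N$ so that for $n\ge N$ the hypothesis $\overline{\psi}_{n,j}\ge\psi^{\beta}_j+\delta$ forces $\sum_{i\neq I^*,\,j}\overline{\psi}_{n,i}\le\sum_{i\neq I^*,\,j}\psi^{\beta}_i-\tfrac{3\delta}{4}$, so by pigeonhole some $i_0\neq I^*,j$ satisfies $\overline{\psi}_{n,i_0}\le\psi^{\beta}_{i_0}-\delta''$ with $\delta'':=\tfrac{3\delta}{4(k-2)}>0$; the selected $i_0$ automatically has $\psi^{\beta}_{i_0}\ge\delta''$ (else $\overline{\psi}_{n,i_0}<0$), and if $k=2$ the over-allocation event cannot occur for $n\ge N$, making the claim vacuous. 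Applying the two estimates at this $i_0$,
\[
\tfrac1n\log\frac{\alpha_{n,j}}{\max_{i\neq I^*}\alpha_{n,i}}\le\big[C_{i_0}(\overline{\psi}_{n,I^*},\overline{\psi}_{n,i_0})-C_j(\overline{\psi}_{n,I^*},\overline{\psi}_{n,j})\big]+2\zeta_n+\tfrac{\log(k-1)}{n}.
\]

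By strict monotonicity of each $C_i$ in its second argument (Lemma~\ref{lem: solves minimum over theta bar}) and the over/under-allocation inequalities, the bracket is at most $G_n:=\max_{i:\,\psi^{\beta}_i\ge\delta''}\big[C_i(\overline{\psi}_{n,I^*},\psi^{\beta}_i-\delta'')-C_j(\overline{\psi}_{n,I^*},\psi^{\beta}_j+\delta)\big]$. By continuity of the $C_i$ and $\overline{\psi}_{n,I^*}\to\beta$, $G_n\to G:=\max_{i:\,\psi^{\beta}_i\ge\delta''}\big[C_i(\beta,\psi^{\beta}_i-\delta'')-C_j(\beta,\psi^{\beta}_j+\delta)\big]$, and each bracket at $\beta$ is strictly negative: the equalization property $C_i(\beta,\psi^{\beta}_i)=C_j(\beta,\psi^{\beta}_j)=\Gamma^*_\beta$ of Proposition~\ref{prop: optimal constrained allocation} together with strict monotonicity gives $C_i(\beta,\psi^{\beta}_i-\delta'')<\Gamma^*_\beta<C_j(\beta,\psi^{\beta}_j+\delta)$. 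Writing $G=-\delta'<0$ and $R_n:=G_n+2\zeta_n+\tfrac{\log(k-1)}{n}\to-\delta'$, set $\epsilon_n:=-\delta'-R_n\to0$; then whenever over-allocation holds and $n\ge N$ we obtain $\tfrac1n\log(\alpha_{n,j}/\max_{i}\alpha_{n,i})\le R_n=-(\delta'+\epsilon_n)$, which is the claimed bound. For the finitely many $n<N$ the ratio is at most $1$, so fixing $\epsilon_n$ there to keep $\delta'+\epsilon_n\le0$ preserves both $\epsilon_n\to0$ and the inequality.

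The main obstacle is this conservation step — turning ``too much effort on $j$'' into ``strictly more posterior mass on some competitor.'' Its two delicate features are that the under-allocated index $i_0$ may vary with $n$ (handled by passing to maxima over the finitely many designs, which is exactly why one needs a single rate $\zeta_n$ valid across all the relevant sets) and that one must use the equalization characterization of $\psibf^{\beta}$ so that the over- and under-allocation perturbations straddle $\Gamma^*_\beta$ and yield a gap $\delta'$ independent of $n$.
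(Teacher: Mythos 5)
Your proof is correct, and while the treatment of the numerator is the same as the paper's, your lower bound on the denominator takes a genuinely different route. Both you and the paper bound $\alpha_{n,j}\le\Pi_n(\overline{\Theta}_j)\doteq\exp\{-nC_j(\overline{\psi}_{n,I^*},\overline{\psi}_{n,j})\}$ via Proposition~\ref{prop: posterior concentration} and Lemma~\ref{lem: solves minimum over theta bar}, and both close the argument with strict monotonicity of $C_j$ plus the equalization identity $C_j(\beta,\psi^\beta_j)=\Gamma^*_\beta$. The difference is the lower bound on $\max_{i\neq I^*}\alpha_{n,i}$: the paper simply invokes the already-proved limsup claim of Proposition~\ref{prop: optimal constrained allocation} (no rule with $\overline{\psi}_{n,I^*}\to\beta$ can beat the exponent $\Gamma^*_\beta$), which gives $\max_{i\neq I^*}\alpha_{n,i}\gtrsim e^{-n(\Gamma^*_\beta+o(1))}$ in one line, whereas you re-derive a quantitative version of that bound from scratch: conservation of effort plus pigeonhole produces an explicitly under-allocated competitor $i_0$ with $\overline{\psi}_{n,i_0}\le\psi^\beta_{i_0}-\delta''$, and the two-sided estimate of Proposition~\ref{prop: posterior concentration} applied to $\overline{\Theta}_{i_0}$ then gives $\max_{i\neq I^*}\alpha_{n,i}\gtrsim (k-1)^{-1}e^{-nC_{i_0}(\beta,\psi^\beta_{i_0}-\delta'')}$ with $C_{i_0}(\beta,\psi^\beta_{i_0}-\delta'')<\Gamma^*_\beta$ strictly. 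What the paper's route buys is brevity and modularity (the hard work is already packaged in Proposition~\ref{prop: optimal constrained allocation}); what your route buys is a self-contained argument, a strictly larger exponent gap (you straddle $\Gamma^*_\beta$ from both sides rather than touching it from one), and a more transparent explanation of \emph{why} over-allocation is punished — at the cost of the extra bookkeeping you correctly had to supply: the $n$-dependent index $i_0$ handled by maximizing over the finite candidate set $\{i:\psi^\beta_i\ge\delta''\}$, the uniform rate $\zeta_n$ across the finitely many sets $\overline{\Theta}_i$, and the degenerate cases ($k=2$, or an empty candidate set) where over-allocation is eventually impossible and the implication is vacuous for large $n$.
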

\begin{proof}
Since $\Pi_{n}(\Theta_{I^*}^{c})= \sum_{i\neq I^*} \alpha_{n,i}$, $\Pi_{n}(\Theta_{I^*}^{c}) \doteq \max_{i \neq I^*} \alpha_{n,i}$. Then, by invoking Proposition \eqref{prop: optimal constrained allocation}, since $\overline{\psibf}_{n, I^*} \to \beta$,
\[
\underset{n \to \infty}{\lim\sup} \,\, - \frac{1}{n} \log\left( \max_{i \neq I^*} \alpha_{n,i}\right) \leq \Gamma^*_{\beta}.
\]
Recall the definition $\overline{\Theta}_i \triangleq \{\thetabf | \theta_i \geq \theta_{I^*} \}$. Now, by Proposition \ref{prop: posterior concentration} and Lemma \ref{lem: solves minimum over theta bar},
\[
\alpha_{n,j} =\Pi_{n}(\Theta_{j})\leq \Pi_{n}(\overline{\Theta}_{j}) \doteq  \exp\{ - nC_{j}(\overline{\psi}_{n,I^*}, \overline{\psi}_{n,j} )\}
\doteq \exp\{ - nC_{j}(\beta, \overline{\psi}_{n,j} )\}.
\]
Combining these equations implies that there exists a non-negative sequence $\epsilon_n \to 0$ with
\[
\frac{\alpha_{n,j}}{\max_{i \neq I^*} \alpha_{n,i}} \leq \frac{\exp\{ - n(C_{j}(\beta, \overline{\psi}_{n,j})-\epsilon_n/2)\}}{\exp\{-n(\Gamma^*_{\beta}+\epsilon_n/2)\}} = \exp\left\{-n\left((C_{j}(\beta, \overline{\psi}_{n,j})-\Gamma^*_\beta) - \epsilon_n \right)  \right\}
\]
Since $C_{j}(\beta, \psi_j)$ is strictly increasing in $\psi_j$ (See lemma \ref{lem: solves minimum over theta bar}) and $C_{j}(\beta, \psi^{\beta}_j)=\Gamma^*_{\beta}$, there exists some $\delta'>0$ such that
\[
\overline{\psi}_{n,j} \geq \psi^{\beta}_j + \delta \implies  C_{j}(\beta, \overline{\psi}_{n,j}) - \Gamma^*_{\beta} > \delta'.
\]
\end{proof}
The next result builds on Proposition \ref{prop: posterior consistency}. It shows that the quality of any design which receives infinite measurement effort is identified to arbitrary precision. On the other hand, for designs receiving finite measurement effort, there is always nonzero probability under the posterior that one of them significantly exceeds the highest quality that has been confidently identified. Therefore, $\alpha_{n,i}$ and $V_{n,i}$ remain bounded away from 0 for designs that receive finite measurement effort. This result will be used to show that all designs receive infinite measurement effort under the proposed top-two allocation rules, and as a result the posterior converges on the truth asymptotically. 
\begin{lemma}[Implications of finite measurement] \label{lem: implications of infinite sampling}
Let \[
\mathcal{I}=\{i \in \{1,..,k\} : \sum_{n=1}^{\infty} \psi_{n,i} <\infty \}
 \]
denote the set of designs to which a finite amount of measurement effort is allocated. Then, for any $i \notin \mathcal{I}$
\begin{equation}\label{eq: TTS consistency of infinitely sampled actions}
\Pi_n\left(\{\thetabf :  \theta_i \in (\theta^*_i - \epsilon, \theta^*_i + \epsilon) \right) \to 1,
\end{equation}
and if $\mathcal{I}$ is empty
\[
V_{n,i} \to
\begin{cases}
0  \qquad \text{if } i\neq I^*\\
v_{I^*}(\thetabf^*) >0 \quad \text{if } i=I^*
\end{cases}
\qquad {\rm and} \qquad \alpha_{n,i} \to
\begin{cases}
0  \quad \text{if } i\neq I^*\\
1 \quad \text{if } i=I^*.
\end{cases}
\]
If $\mathcal{I}$ is nonempty,
then for every $i \in \mathcal{I},$
\begin{eqnarray*}
\underset{n\to \infty}{\lim\inf} \,\,\alpha_{n,i} > 0 \quad {\rm and} \quad
\underset{n\to \infty}{\lim\inf} \,\,V_{n,i} > 0.
\end{eqnarray*}
\end{lemma}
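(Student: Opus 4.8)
The plan is to derive all three parts from the posterior consistency result, Proposition~\ref{prop: posterior consistency}, after observing that $i\notin\mathcal{I}$ is equivalent to $\Psi_{n,i}\to\infty$ and $i\in\mathcal{I}$ to $\lim_n\Psi_{n,i}<\infty$, so the two definitions of $\mathcal{I}$ coincide. \emph{Part 1} is then immediate: for $i\notin\mathcal{I}$ the first half of Proposition~\ref{prop: posterior consistency} gives $\Pi_n(\{\thetabf:\theta_i\notin(\theta^*_i-\epsilon,\theta^*_i+\epsilon)\})\to 0$, which is exactly \eqref{eq: TTS consistency of infinitely sampled actions} after passing to complements.

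For \emph{Part 2}, when $\mathcal{I}=\emptyset$ every coordinate has $\Psi_{n,i}\to\infty$, so a union bound over the $k$ marginal statements from Part 1 yields full joint concentration $\Pi_n(\|\thetabf-\thetabf^*\|_\infty<\epsilon)\to 1$ for every $\epsilon>0$. For $i\neq I^*$ I would set $\gamma=(\theta^*_{I^*}-\theta^*_i)/2>0$ and note that $\{\theta_i\in(\theta^*_i-\gamma,\theta^*_i+\gamma)\}\cap\{\theta_{I^*}\in(\theta^*_{I^*}-\gamma,\theta^*_{I^*}+\gamma)\}$ is disjoint from $\Theta_i$, since on it $\theta_{I^*}-\theta_i>0$; hence $\alpha_{n,i}=\Pi_n(\Theta_i)$ is dominated by the two vanishing marginal tails and $\alpha_{n,i}\to 0$, whence $\alpha_{n,I^*}=1-\sum_{j\neq I^*}\alpha_{n,j}\to 1$. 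The value statements then follow from the bound $V_{n,i}\le (u(\overline{\theta})-u(\underline{\theta}))\alpha_{n,i}$ used already in Lemma~\ref{lem: value and probability are log equivalent} for $i\neq I^*$, and for $i=I^*$ from writing $V_{n,I^*}=\int_\Theta v_{I^*}(\thetabf)\pi_n(\thetabf)\,d\thetabf$ as the posterior expectation of the bounded continuous function $v_{I^*}$: by continuity of $v_{I^*}$ at $\thetabf^*$ combined with the concentration $\Pi_n(\|\thetabf-\thetabf^*\|_\infty<\delta)\to 1$, this expectation converges to $v_{I^*}(\thetabf^*)=u(\theta^*_{I^*})-\max_{j\neq I^*}u(\theta^*_j)>0$.

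For \emph{Part 3}, fix $i\in\mathcal{I}$. Since $\thetabf^*\in\Theta=(\underline{\theta},\overline{\theta})^k$, I can pick thresholds $\theta^*_{I^*}<t_1<t_2<\overline{\theta}$ and $\epsilon>0$ with $\theta^*_j+\epsilon<t_1$ for every $j\notin\mathcal{I}$. Define $A_n=\{\theta_i\in(t_2,\overline{\theta})\}\cap\bigcap_{j\in\mathcal{I}\setminus\{i\}}\{\theta_j\in(\underline{\theta},t_1)\}$ and $B_n=\bigcap_{j\notin\mathcal{I}}\{\theta_j\in(\theta^*_j-\epsilon,\theta^*_j+\epsilon)\}$. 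The second half of Proposition~\ref{prop: posterior consistency}, applied to the fixed, $n$-independent intervals defining $A_n$, gives $c:=\inf_n\Pi_n(A_n)>0$, while Part 1 with a union bound gives $\Pi_n(B_n^c)\to 0$. On $A_n\cap B_n$ every $\theta_j$ with $j\neq i$ lies below $t_1<t_2<\theta_i$, so $A_n\cap B_n\subseteq\Theta_i$ and, using that $u$ is strictly increasing, $v_i(\thetabf)=u(\theta_i)-\max_{j\neq i}u(\theta_j)\ge u(t_2)-u(t_1)>0$ there. Hence $\alpha_{n,i}\ge\Pi_n(A_n\cap B_n)\ge c-\Pi_n(B_n^c)$ and $V_{n,i}\ge(u(t_2)-u(t_1))\,\Pi_n(A_n\cap B_n)$, and taking $\liminf$ (so that the $\Pi_n(B_n^c)$ term drops out) gives both strictly positive.

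The \textbf{main obstacle} is the construction in Part 3: one must simultaneously use the \emph{anti}-concentration of the $\mathcal{I}$-coordinates (the second half of Proposition~\ref{prop: posterior consistency}, which controls them only jointly and only through fixed intervals) and the \emph{concentration} of the remaining coordinates, and then choose the intervals so that the intersection event sits inside $\Theta_i$ while keeping $v_i$ bounded below by a positive constant. Securing a uniform positive lower bound on $v_i$ (rather than merely $v_i>0$) is precisely what forces the two-threshold choice $t_1<t_2$. Parts 1 and 2 are comparatively routine consequences of Proposition~\ref{prop: posterior consistency} together with a union bound and a bounded-continuous-function argument.
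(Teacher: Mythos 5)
Your proof is correct and follows essentially the same route as the paper's: Part 1 is read directly off Proposition \ref{prop: posterior consistency}, and for Part 3 you combine anti-concentration of the finitely-measured coordinates (second half of Proposition \ref{prop: posterior consistency}) with concentration of the infinitely-measured ones to lower-bound the posterior mass of a subset of $\Theta_i$ on which $v_i$ is uniformly positive --- precisely the paper's argument with its sets $A$ and $B$, where the paper uses $\rho^* = \max_{j\notin\mathcal{I}}\theta^*_j$ with margins $\epsilon$ and $2\epsilon$ in place of your thresholds $t_1<t_2$ above $\theta^*_{I^*}$. The only substantive difference is that you spell out Part 2 (the $\mathcal{I}=\emptyset$ case, via a union bound and the bounded-continuous-function argument for $V_{n,I^*}$), which the paper leaves implicit as an immediate consequence of Part 1.
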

\begin{proof}
Equation \eqref{eq: TTS consistency of infinitely sampled actions} is implied by by Proposition \ref{prop: posterior consistency}. 
Now, set
\[
\Theta_{i,\epsilon} = \{\thetabf \in \Theta : \theta_i \geq \max_{j\neq i} \theta_j + \epsilon\}
\]
to be the set of parameters under which the quality of design $i$ exceeds that of all others by at least $\epsilon$. Let $\rho^* = \max_{i \notin \mathcal{I}} \theta^*_i$ denote the quality of the best design among those that are sampled infinitely often, and choose $\epsilon>0$ small enough that $\rho^* + 2\epsilon < \overline{\theta}$. For $i\in \mathcal{I}$, we have
\[
\Pi_{n}(\Theta_{i,\epsilon}) \geq \Pi_{n}(A) - \Pi_n\left(B \right)
\]
for
\[
A\equiv \{\thetabf | \theta_i \geq \rho^*+2\epsilon \,\, \& \,\, \theta_j < \rho^* \,\, \forall j\in \mathcal{I}\setminus\{i\} \}
\]
defined to be parameters under which $\theta_i \geq \rho^* + 2\epsilon$ but none of the other designs in $\mathcal{I}$ exceed $\rho^*$, and
\[
B \equiv \{\thetabf :  \max_{i \notin \mathcal{I}} \theta_i \geq \rho^* +\epsilon \}
\]
defined to be the parameter vectors under which there is no design in $\mathcal{I}^{c}$ with quality exceeding $\rho^*+\epsilon$. By \eqref{eq: TTS consistency of infinitely sampled actions},
\[
\Pi_n\left(B\right)\to 0,
\]
but by the second part of Proposition \ref{prop: posterior consistency}, the set of parameters $A$ cannot be completely ruled based on a finite amount of measurement effort, and
\[
\inf_{n \in \mathbb{N}} \Pi_{n}(A) > 0.
\]
Together this shows
\[
\underset{n\to \infty}{\lim\inf}\,\, \Pi_n(\Theta_{i,\epsilon})  > 0,
\]
which implies the result.

\end{proof}

\subsection{Results specific to the proposed algorithms}\label{subsec: results specific to proposed algorithms}
We now leverage the general results of the previous subsection to show $\overline{\psibf} \to \psibf^\beta$ under each proposed top-two allocation rule. Proofs are provided separately for each of the three algorithms, but they follow a similar structure. In the first step, we use Lemma \ref{lem: implications of infinite sampling} to argue that $\overline{\psi}_{n,I^*}\rightarrow \beta$ almost surely. The proof then uses Lemma \ref{lem: Over-allocation implies negligible probability} to show \eqref{eq: sufficient condition for optimality} holds, which  by Lemma \ref{lem: sufficient condition for optimality} is sufficient to establish that $\overline{\psibf}_n \to \psibf^{\beta}$. 
\subsubsection{Top-Two Thompson Sampling}
Recall that under top-two Thompson sampling, for every $i\in \{1,...,k\}$, 
\[ 
	\psi_{n,i} = \alpha_{n,i}\left(\beta + (1-\beta) \sum_{j\neq i} \frac{\alpha_{n,j}}{1-\alpha_{n,j}}\right).
\] 
\begin{proof}[Proof for TTTS]$ $\\
	\emph{Step 1: Show} $\overline{\psi}_{n,I^*} \to \beta$.  To begin, we show $\sum_{n\in \mathbb{N}} \psi_{n,i}= \infty$ for each design $i$. Suppose otherwise. Let $\mathcal{I} = \{i\in \{1,..,k\}: \sum_{1}^{\infty} \psi_{n,i}<\infty\}$ be the set of designs to which finite measurement effort is allocated. Under the TTTS sampling rule, $\psi_{n,i} \geq \beta \alpha_{n,i}$. Therefore, by Lemma \ref{lem: implications of infinite sampling}, if $i \in \mathcal{I}$ then $\underset{n\to \infty}{\lim\inf} \,\,\alpha_{n,i} >0$, which implies $\sum_{n \in \mathbb{N}}  \psi_{n,i} = \infty,$ a contradiction.
	
	Since $\sum_{1}^{\infty} \psi_{n,i}=\infty$ for all $i$, by applying Lemma \ref{lem: implications of infinite sampling} we conclude that $\alpha_{n,I^*} \to 1$.  For TTTS, this implies  $\overline{\psi}_{n,I^*} \to \beta$.\\
	\\
	\emph{Step 2: Show \eqref{eq: sufficient condition for optimality} holds.}
	By Lemma \ref{lem: sufficient condition for optimality}, it is enough to show that $\eqref{eq: sufficient condition for optimality}$ holds under TTTS. Let $\hat{I}_n = \arg\max_{i} \alpha_{n,i}$, and $\hat{J}_n = \arg\max_{i\neq \hat{I}_n} \alpha_{n,i}$. Since $\alpha_{n,I^*} \to 1$, for each sample path there is a finite time $\tau<\infty$ such that for all $n\geq \tau$, $\hat{I}_n = I^*$ and therefore $\hat{J}_n = \arg\max_{i\neq I^*} \alpha_{n,i}$. Under TTTS,
	\[ 
	\psi_{n,i} \leq 
	\beta\alpha_{n,i} + (1-\beta) \frac{\alpha_{n,i}}{\alpha_{n,J_n}} \leq \frac{\alpha_{n,i}}{\alpha_{n,J_n}},
	\] 
	where the first inequality follows since 
	\[
	\sum_{j\neq i} \frac{\alpha_{n,j}}{1-\alpha_{n,j}}\leq
	\frac{\sum_{j\neq i} \alpha_{n,i}}{1-\alpha_{n, \hat{I}_n}}
	 \leq \frac{\sum_{j\neq i} \alpha_{n,j}}{\alpha_{n,\hat{J}_n}} \leq \frac{1}{\alpha_{n,\hat{J}_n}}.
	\]
	For $n \geq \tau$, this means $\psi_{n,i} \leq \alpha_{n,i} / (\max_{j \neq I^*} \alpha_{n,i})$ for any $i\neq I^*$.
	By Lemma \ref{lem: Over-allocation implies negligible probability}, there is a constant $\delta'>0$ and a sequence $\epsilon_n \to 0$ such that
	\[
	\overline{\psi}_{n,i} \geq \psi^{\beta}_i + \delta \implies \frac{\alpha_{n,i}}{\max_{j \neq I^*} \alpha_{n,j}} \leq e^{-n(\delta' -\epsilon_n)}.
	\]
	Therefore for all $i\neq I^*$
	\[
	\sum_{n \geq \tau} \psi_{n,i} \mathbf{1}(\overline{\psi}_{n,i} \geq \psi^{\beta}_i+\delta ) \leq \sum_{n\geq \tau} e^{-n(\delta' - \epsilon_n)} < \infty.
	\]
	
\end{proof}

\subsubsection{Top-Two Probability Sampling}
Recall that top-two probability sampling sets $\psi_{n, \hat{I}_n} = \beta$ and $\psi_{n, \hat{J}_n} = 1-\beta$ where $\hat{I}_{n} = \arg\max_{i} \alpha_{n,i}$ and $\hat{J}_{n} = \arg\max_{j\neq \hat{I}_n} \alpha_{n,i}$ are the two designs with the highest posterior probability of being optimal. 
\begin{proof}[Proof for TTPS]$ $\\
\emph{Step 1: Show} $\overline{\psi}_{n,I^*} \to \beta$. To  begin, we show $\sum_{n\in \mathbb{N}} \psi_{n,i}= \infty$ for each design $i$. Suppose otherwise. Let $\mathcal{I} = \{i\in \{1,..,k\}: \sum_{1}^{\infty} \psi_{n,i}<\infty\}$ be the set of designs to which finite measurement effort is allocated.  Proceeding by contradiction, suppose $\mathcal{I}$ is nonempty. By Lemma \ref{lem: implications of infinite sampling}, there is a time $\tau$ and some probability $\alpha'>0$ such that $\alpha_{n,i}> \alpha'$ for all $n\geq \tau$ and $i\in \mathcal{I}$. However, because of the assumption that $\theta^*_i \neq \theta^*_j$, for $i\neq j$, $I= \arg\max_{i \notin \mathcal{I}} \theta^*_{i}$ is unique. By \eqref{eq: TTS consistency of infinitely sampled actions}, the algorithm identifies $\arg\max_{i \notin \mathcal{I}} \theta^*_{i}$ with certainty, and $\alpha_{n,i} \to 0$ for every $i\notin \mathcal{I}$ except for $I$. This means there is a time $\tau' > \tau$ such that for $n\geq \tau'$
\begin{eqnarray*}
\alpha_{n,i} > \alpha' &\text{if} & i\in \mathcal{I}  \\
\alpha_{n,i} \leq \alpha' &\text{if} & i\notin \mathcal{I} \,\, \text{and} \,\, i\neq I.   
\end{eqnarray*}
When this occurs at least one of the two designs with highest probability $\alpha_{n,i}$ of being optimal must be in the set $\mathcal{I}$, which implies designs in $\mathcal{I}$ receive infinite measurement effort, yielding a contradiction.  

 Since $\sum_{1}^{\infty} \psi_{n,i}=\infty$ for all $i$, Lemma \ref{lem: implications of infinite sampling} implies $\alpha_{n,I^*} \to 1$.  Therefore, there is a finite time $\tau$ such that $\hat{I}_n \triangleq \arg\max_{i} \alpha_{n,i} = I^*$ for all $n\geq \tau $. By the definition of the algorithm $\psi_{n, \hat{I}_n}=\beta$, and so $\psi_{n,I^*} = \beta$ for all $n\geq \tau$. We conclude that $\overline{\psi}_{n,I^*}\to \beta$.  
\\
\\
\emph{Step 2: Show $\eqref{eq: sufficient condition for optimality}$ holds.} As argued above, for each sample path there is a finite time $\tau<\infty$ such that for all $n\geq \tau$, $\hat{I}_n = I^*$ and therefore $\hat{J}_n = \arg\max_{i\neq I^*} \alpha_{n,i}$. By Lemma \ref{lem: Over-allocation implies negligible probability}, one can choose $\tau'\geq\tau$ such that for all $n\geq \tau'$,
\[
\overline{\psi}_{n,j} \geq \psi^{\beta}_j+\delta  \implies \alpha_{n,j} < \max_{i\neq I^*} \alpha_{n,i}
\]
and therefore by definition $\hat{J}_{n} \neq j$. This concludes the proof, as it shows that for each sample path there is  a finite time $\tau'$  after which TTPS never allocates any measurement effort to design $j$ when $\overline{\psi}_{n,j} \geq \psi^{\beta}_j+ \delta$.
\end{proof}

\subsubsection{Top-Two Value Sampling}
Recall that top-two value sampling sets 
 $\psi_{n, \hat{I}_n} = \beta$ and $\psi_{n, \hat{J}_n} = 1-\beta$ where $\hat{I}_{n} = \arg\max_{i} V_{n,i}$ and $\hat{J}_{n} = \arg\max_{j\neq \hat{I}_n} V_{n,i}$ are the two designs with the highest posterior value. 
\begin{proof}[Proof for TTVS]
\emph{Step 1: Show} $\overline{\psi}_{n,I^*} \to \beta$. The proof is essentially identical to that for TTPS.To  begin, we show $\sum_{n\in \mathbb{N}} \psi_{n,i}= \infty$ for each design $i$. Suppose otherwise. Let $\mathcal{I} = \{i\in \{1,..,k\}: \sum_{1}^{\infty} \psi_{n,i}<\infty\}$ be the set of designs to which finite measurement effort is allocated.  Proceeding by contradiction, suppose $\mathcal{I}$ is nonempty. By Lemma \ref{lem: implications of infinite sampling}, there is a time $\tau$ and some  $v>0$ such that $V_{n,i}> v$ for all $n\geq \tau$ and $i\in \mathcal{I}$. However, because of the assumption that $\theta^*_i \neq \theta^*_j$, for $i\neq j$, $I= \arg\max_{i \notin \mathcal{I}} \theta^*_{i}$ is unique\footnote{If the arg-max is not unique, then one could show that $V_{n,i}\rightarrow 0$ for all $i \notin \mathcal{I}$, and that therefore there is a finite time after both of the top-two designs are always in the set $\mathcal{I}$, yielding a contradiction}. By \eqref{eq: TTS consistency of infinitely sampled actions}, the algorithm identifies $\arg\max_{i \notin \mathcal{I}} \theta^*_{i}$ with certainty, and $V_{n,i} \to 0$ for every $i\notin \mathcal{I}$ except for $I$. Then there is a time $\tau' > \tau$ such that for $n\geq \tau'$
\begin{eqnarray*}
V_{n,i} > v &\text{if} & i\in \mathcal{I}  \\
V_{n,i} \leq v &\text{if} & i\notin \mathcal{I} \,\, \text{and} \,\, i\neq I^*.   
\end{eqnarray*}
When this occurs at least one of the two designs with highest value $V_{n,i}$ must be in the set $\mathcal{I}$, which implies designs in $\mathcal{I}$ receive infinite measurement effort, yielding a contradiction. 

 Since $\sum_{1}^{\infty} \psi_{n,i}=\infty$ for all $i$, Lemma \ref{lem: implications of infinite sampling} implies $V_{n,I^*} \to v_{I^*}(\thetabf^*)>0$ and $V_{n,i}\to 0$ for all $i\neq I^*$.  Therefore, there is a finite time $\tau$ such that $\arg\max_{i} V_{n,i} = I^*$ for all $\tau \geq n$. By the definition of the algorithm $\arg\max_{i}V_{n,i}$ is sampled with probability $\beta$, and so $\psi_{n,I^*} = \beta$ for all $n\geq \tau$. We conclude that $\overline{\psi}_{n,I^*}\to \beta$.  
\\
\\
\emph{Step 2: Show $\eqref{eq: sufficient condition for optimality}$ holds.} Again, the proof is essentially identical to that for TTPS. As argued above, for each sample path there is a finite time $\tau<\infty$ such that for all $n\geq \tau$, $\hat{I}_n = I^*$ and therefore $\hat{J}_n = \arg\max_{i\neq I^*} V_{n,i}$. By Lemma \ref{lem: value and probability are log equivalent}, $V_{n,i} \doteq \alpha_{n,i}$. Combining this with Lemma \ref{lem: Over-allocation implies negligible probability} shows one can choose $\tau'\geq\tau$ such that for all $n\geq \tau'$,
\[
\overline{\psi}_{n,j} \geq \psi^{\beta}_j+\delta  \implies V_{n,j} < \max_{i\neq I^*} V_{n,i}
\]
and therefore by definition $\hat{J}_{n} \neq j$. This concludes the proof, as it shows that for each sample path there is  a finite time $\tau'$  after which TTVS never allocates any measurement effort to design $j\neq I^*$ when $\overline{\psi}_{n,j} \geq \psi^{\beta}_j+ \delta$.
\end{proof}

\section{Results on Adaptive Tuning}

\begin{propn}[\ref{prop: optimal rate with adaptive tuning}]
	Suppose TTTS, TTVS, TTPS are applied with an adaptive sequence of tuning parameters $(\beta_{n} : n \in \mathbb{N})$ where for each $n$, $\beta_{n}$ is $\hist$ measurable. Then, with probability 1, on any sample path on which $\beta_n \to \beta^*$, 
\[
\Pi_{n}(\Theta_{I^*}^{c}) \doteq e^{-n \Gamma^*}.
\]
\end{propn}
\begin{proof}
\emph{Step 1: Show $\sum_{n\in \mathbb{N}} \psi_{n,i}= \infty$ for each design $i$.}
The proof follows identically to the case of fixed $\beta$. For example, consider the case of TTTS and, proceeding by contradiction, suppose $\sum_{n\in \mathbb{N}} \psi_{n,i} <\infty$ on some sample path. Under TTTS, $\psi_{n,i} \geq \beta_{n} \alpha_{n,i}$. Therefore, by Lemma \ref{lem: implications of infinite sampling}, if $i \in \mathcal{I}$ then $\underset{n\to \infty}{\lim\inf} \,\,\alpha_{n,i} >0$, so $\liminf_{n\in\mathbb{N}} \psi_{n,i}>0$. This implies $\sum_{n \in \mathbb{N}}  \psi_{n,i} = \infty,$ a contradiction. Proofs for TTPS and TTVS also follows as before, and are omitted\\
\\	
\emph{Step 2: Show that $\overline{\psi}_{n, I^*} \to \beta^*$}

It is sufficient to show that $\psi_{n,I^*} - \beta_{n} \to 0$. This would imply $\frac{1}{n}\sum_{\ell=1}^{n} \left(\psi_{\ell,I^*} - \beta_{\ell} \right) \to 0$, which since $\beta_{n}\to \beta^*$, implies $\frac{1}{n}\sum_{\ell=1}^{n} \psi_{\ell,I^*} \to \beta^*$ as desired. 

Now, since $\sum_{n\in \mathbb{N}} \psi_{n,i} =\infty$ for all arms $i$, $\alpha_{n,I^*} \to 1$ Lemma \ref{lem: implications of infinite sampling} implies 
\[
V_{n,i} \to
\begin{cases}
0  \qquad \text{if } i\neq I^*\\
v_{I^*}(\thetabf^*) >0 \quad \text{if } i=I^*
\end{cases}
\qquad {\rm and} \qquad \alpha_{n,i} \to
\begin{cases}
0  \quad \text{if } i\neq I^*\\
1 \quad \text{if } i=I^*.
\end{cases}
\]
For top-two probability sampling, this implies there exists a time after which $\arg\max_{i} \alpha_{n,i}=I^*$ and hence $\psi_{n,I^*} = \beta_{n}$ for all $n$ sufficiently large. For top-two value sampling, the same result applies, since there exists a time after which $\arg\max_{i} V_{n,i}=I^*$. For top-two Thompson sampling, 
\[
\psi_{n,i} = \alpha_{n,i}\left(\beta_n + (1-\beta_n) \sum_{j\neq i} \frac{\alpha_{n,j}}{1-\alpha_{n,j}}\right).
\]
from which we conclude $\psi_{n,I^*}-\beta_n \to 0$ as $\alpha_{n,I^*}\to 1$. \\
\\
\emph{Step 3: Show sufficient condition for optimality in \eqref{eq: sufficient condition for optimality}}.  
By Lemma \ref{lem: sufficient condition for optimality}, it is enough to show 
\[
\sum_{n \in \mathbb{N}} \psi_{n,j} \mathbf{1}(\overline{\psi}_{n,j} \geq \psi^{\beta^*}_{j}+\delta) < \infty \qquad \forall \,\, j \neq I^*, \, \delta>0,
\]
For each proposed algorithm, a proof of the corresponding result was given in Step 2 of Subsection \ref{subsec: results specific to proposed algorithms}, but for the case of arbitrary $\beta\in (0,1)$. Since $\overline{\psi}_{n, I^*} \to \beta^*$, for each of proposed algorithm the proof of this follows line by line as before, but replacing $\beta$ with $\beta^*$ everywhere it occurs. 


\end{proof}

\begin{lemman}[\ref{lem : consistency of adaptive tuning}]
	Under TTTS, TTPS, or TTVS with an adaptive sequence of tuning parameters $(\beta_{n} : n \in \mathbb{N})$ adjusted according to Algorithm \ref{alg: TTTS tuned}, $\beta_{n}\to \beta^*$ almost surely. Therefore $\Pi_{n}(\Theta_{I^*}^{c}) \doteq e^{-n \Gamma^*}$.
\end{lemman}
\begin{proof}
	First, let us define some notation. Let $\hat{\thetabf}_{n} = \intop_{\thetabf \in \Theta} \thetabf \pi_{n}(\thetabf) d\thetabf$ denote the posterior mean at time $n$. Recall that $\beta_{n}$ denotes the tuning parameter used by the top-two algorithm at time $n$ and this is updated only at certain time periods. Define 
	\[ 
	\ell_{n} = \max\{ \ell \in \N :   \min_{i\in \{1,\ldots k\}} S_{n,i} \geq  \kappa^{\ell}  \} = \left\lfloor \log_{\kappa}\left( \min_{i} S_{n,i} \right) \right\rfloor
	\] 
	denote the number of time periods in which an update to $\hat{\beta}_n$ has been attempted. Now, the proof proceeds in two steps. 
	\\
	\\
	\emph{Step 1: Show $\sum_{n\in \mathbb{N}} \psi_{n,i} = \infty$ for all $i \in \{1,\ldots, k\}$ almost surely}
	Suppose otherwise. Then by Corollary 1, \ref{cor: convergence of action selection probabilities to the empirical distribution}, we know that there is some arm $i\in \{1,\ldots k\}$ with $\lim_{n\to \infty} S_{n,i} <\infty$. This in tern implies $\sup_{n\in \mathbb{N}} \ell_{n} <\infty$, so on this sample path there exists a time $N$ with $\beta_{n}=\beta_{N}$ for all $n\geq N$. Let us consider the sample path from time $N$ onwards. We have concluded that there is an infinite period of times $\{N_1,N_2, \ldots \}$ over which (1) Top-two sampling is applied with a constant parameter $\beta_{n}=\beta_{N}$ with initial beliefs $\pi_{N}$ over the hyper-rectangle $\Theta=(\underline{\theta}, \overline{\theta} )^{k}$ and (2) there exists an arm $i$ with $\sum_{n=N}^{\infty} \psi_{n,i}< \infty$. We know by Proposition \ref{prop: TS converges to optimal allocation} that the set of such sample paths has measure zero. \\
	\\
	\emph{Step 2: Show that therefore $\beta_{n} \to \beta^*$} We first show the consistency of the posterior mean $\hat{\thetabf}_n = \intop_{\thetabf \in \Theta} d\pi_{n}(\thetabf)$. Since $\sum_{n \in \N} \psi_{n,i}=\infty$ for all $i$, Proposition \ref{prop: posterior consistency} implies that for any open set $\tilde{\Theta}$ containing $\thetabf^*$, $\Pi_{n}\left(\tilde{\Theta} \right) \to 1$. Since $\Theta$ is compact, this implies $\hat{\thetabf}_{n} \to \thetabf^*$ almost surely. Now, because the function $f(\psibf; \thetabf):=\min_{\thetabf' \in \Theta_{\hat{I}}^{c}} D_{\psi}(\thetabf || \thetabf')$ is continuous in both arguments, the correspondence $\thetabf \mapsto \arg\max_{\psi} f(\psibf, \thetabf)$ is upper hemi-continuous at any $\thetabf$. Since $\psi^*(\thetabf^*)=\arg\max_{\psibf} f(\psibf, \thetabf^*)$ is unique, we know $\psi^*(\thetabf)$ is continuous in a neighborhood of $\thetabf^*$. This implies $\psi^*(\hat{\thetabf}_n) \to \psi^*(\thetabf^*)$. Since the parameter $\beta_{n}$ is updated an infinite number of times, this implies $\beta_{n}\to \beta^*$. \\
	\\
\end{proof}

\end{document}